\pgfplotsset{compat=1.18}
\def\supp{{\rm supp}}
\def\cS{{\mathcal{S}}}
\providecommand{\norm}[1]{\vvvert#1\vvvert}
\newcommand{\bel}{\begin{eqnarray}\label}
\newcommand{\eel}{\end{eqnarray}}
\newcommand{\bes}{\begin{eqnarray*}}
\newcommand{\ees}{\end{eqnarray*}}
\def\real{{\mathbb{R}}}
\newcommand{\mc}{\mathcal}
\let\hat\widehat
\def\mid{\,|\,}
\def\eps{\epsilon}
\def\E{{\mathbb E}}
\def\supp{\mathop{\text{supp}\kern.2ex}}
\def\argmin{\mathop{\text{\rm arg\,min}}}
\def\given{{\,|\,}}
\def\supp{\mathop{\text{supp}}}
\def\v{{\mathbb{V}}}
\def\pp{{\mathbb{P}}}
\theoremstyle{plain}
\theoremstyle{plain}
\def \s{\mathcal{S}}
\def \a{\mathcal{A}}
\def \m{\mathcal{M}}
\def \r{\mathbb{R}}
\def \rl{\mathcal{R}}
\def \p{\mathbb{P}}
\def\RR{\mathsf{R}}
\def \V{\mathcal{V}}
\def\##1\#{\begin{align}#1\end{align}}
\def\$#1\${\begin{align*}#1\end{align*}}
\newcommand{\Bin}{{\rm Bin}}
\newcommand{\abs}[1]{{\left| #1 \right|}}
\newcommand{\paren}[1]{{\left( #1 \right)}}
\newcommand{\brac}[1]{{\left[ #1 \right]}}
\newcommand{\set}[1]{{\left\{ #1 \right\}}}
\newcommand{\sets}[1]{{\{ #1 \}}}
\newcommand{\Eqref}[1]{Eq.\eqref{#1}}
\newcommand{\defeq}{\mathrel{\mathop:}=}
\newcommand\footnoteref[1]{\protected@xdef\@thefnmark{\ref{#1}}\@footnotemark}
 \title{Is Inverse Reinforcement Learning Harder than \\ Standard Reinforcement Learning? A Theoretical Perspective}
\date{\today}
\author{Lei Zhao\thanks{University of Science and Technology of China. Email: \texttt{zl20071451@mail.ustc.edu.cn}.} \qquad Mengdi Wang\thanks{Princeton University. Email: \texttt{mengdiw@princeton.edu}.} \qquad Yu Bai\thanks{Salesforce AI Research. Email: \texttt{yu.bai@salesforce.com}.}}
\begin{document}

\maketitle

\def \parab{\overline{\para}}
\newcommand{\rew}{\mathcal{R}^{\msf{all}}}
\newcommand{\rewfb}{\mathcal{R}^{\msf{feas}}_{[-B,B]}}
\newcommand{\rewfone}{\mathcal{R}^{\msf{feas}}_{[-1,1]}}
\newcommand{\rewfth}{\mathcal{R}^{\msf{feas}}_{[-3H,3H]}}

\newcommand{\tO}{\widetilde{\cO}}

\def \Vf{V^{\pp,r}}
\def \Qf{Q^{\pp,r}}
\def \piE{\pi^{\msf{E}}}
\def \hatpiE{\hat{\pi}^{\msf{E}}}
\def \pA{\overline{\cA}}
\def \pV{\overline{\cV}}
\def \para{\Theta}
\def \rewm{\mathscr{R}}
\def \RR{\rewm}
\def \fedb{e}
\def \wellpi{\Delta}
\def \wellsupp{n^{\msf{E}}}
\def \valsupp{n^{\msf{val}}}
\newcommand{\dpi}{d^\pi}
\newcommand{\dpiTheta}{d^\pi}
\newcommand{\DpiTheta}{D^\pi_\para}
\newcommand{\dall}{d^{\sf all}}
\newcommand{\dallTheta}{d^{{\sf all}}}
\newcommand{\DallTheta}{D^{\sf all}_\Theta}
\def \cn{\log\mathcal{N}(\Theta; \eps/H)}
\def \cns{\log\mathcal{N}}
\def \dimt{\dim\paren{\para}}
\def \pib{\pi^{\msf{b}}}
\def \pival{\pi^{\msf{eval}}}
\def \dpival{d^{\pival}}
\def \Dpivalt{D^{\pival}_\Theta}
\def \DpiEt{D^{\piE}_\Theta}
\def \hRR{\widehat{\RR}}
\def \sRR{\RR^\star}
\def \hpp{\widehat{\pp}}
\def \DH{D^{\msf{H}}}
\def \DM{D^{\msf{M}}}
\def \Pidet{\Pi^{\mrm{determ}}}
\def \dpip{d^{\pp,\pi}}
\def \wellp{\eta}
\def \dt{\mrm{dim}\paren{\Theta}}
\def \hsa{[H]\times \cS\times \cA}

\newcommand{\lone}[1]{\norm{#1}_1} %
\newcommand{\ltwo}[1]{\norm{#1}_2} %
\newcommand{\lop}[1]{\norm{#1}_{\mathrm{op}}}
\newcommand{\lops}[1]{\|{#1}\|_{\mathrm{op}}}
\newcommand{\linf}[1]{\norm{#1}_\infty} %
\newcommand{\lzero}[1]{\norm{#1}_0} %
\newcommand{\dnorm}[1]{\norm{#1}_*} %
\newcommand{\lfro}[1]{\left\|{#1}\right\|_{\sf Fr}} %
\newcommand{\lnuc}[1]{\left\|{#1}\right\|_*} %
\newcommand{\matrixnorm}[1]{\left|\!\left|\!\left|{#1}
  \right|\!\right|\!\right|} %
\newcommand{\normbigg}[1]{\bigg\|{#1}\bigg\|} %
\newcommand{\normbig}[1]{\big\|{#1}\big\|} %
\newcommand{\normBig}[1]{\Big\|{#1}\Big\|} %
\newcommand{\lonebigg}[1]{\normbigg{#1}_1} %
\newcommand{\ltwobigg}[1]{\normbigg{#1}_2} %
\newcommand{\ltwobig}[1]{\normbig{#1}_2} %
\newcommand{\ltwoBig}[1]{\normBig{#1}_2} %
\newcommand{\linfbigg}[1]{\normbigg{#1}_\infty} %
\newcommand{\norms}[1]{\|{#1}\|} %
\newcommand{\matrixnorms}[1]{|\!|\!|{#1}|\!|\!|} %
\newcommand{\matrixnormbigg}[1]{\bigg|\!\bigg|\!\bigg|{#1}\bigg|\!\bigg|\!\bigg|} %
\newcommand{\lones}[1]{\norms{#1}_1} %
\newcommand{\ltwos}[1]{\norms{#1}_2} %
\newcommand{\ltwob}[1]{\big\|#1\big\|_2}
\newcommand{\linfs}[1]{\norms{#1}_\infty} %
\newcommand{\lzeros}[1]{\norms{#1}_0} %
\newcommand{\lambdamax}[1]{\lambda_{\max}({#1})}
\newcommand{\lambdamin}[1]{\lambda_{\min}({#1})}
\newcommand{\vmax}[1]{v_{\max}({#1})}
\newcommand{\vmin}[1]{v_{\min}({#1})}

\def \dis{{d}}
\def \cblue{}
\def \cblack{}
\newcommand{\online}{}
\newcommand{\bsh}{\textbackslash}
\newcommand{\va}{\V\times\a}
\def \dtheta{\mathrm{dim}\paren{\Theta}}
\def \Ctran{C^{\msf{tran}}}
\def \wCtran{C^{\msf{wtran}}}

\begin{abstract}

Inverse Reinforcement Learning (IRL)---the problem of learning reward functions from demonstrations of an \emph{expert policy}---plays a critical role in developing intelligent systems. While widely used in applications, theoretical understandings of IRL present unique challenges and remain less developed compared with standard RL. For example, it remains open how to do IRL efficiently in standard \emph{offline} settings with pre-collected data, where states are obtained from a \emph{behavior policy} (which could be the expert policy itself), and actions are sampled from the expert policy.

This paper provides the first line of results for efficient IRL in vanilla offline and online settings using polynomial samples and runtime. Our algorithms and analyses seamlessly adapt the pessimism principle commonly used in offline RL, and achieve IRL guarantees in stronger metrics than considered in existing work. We provide lower bounds showing that our sample complexities are nearly optimal. As an application, we also show that the learned rewards can \emph{transfer} to another target MDP with suitable guarantees when the target MDP satisfies certain similarity assumptions with the original (source) MDP.

\end{abstract}

\def \irlalg{\textsc{Inverse-Reinforcement-Learning DEC}}

\section{Introduction}\label{sec:inro}

Inverse Reinforcement Learning (IRL) aims to recover reward functions from demonstrations of an \emph{expert policy}~\citep{ng2000algorithms,abbeel2004apprenticeship}, in contrast to standard reinforcement learning which aims to learn optimal policies for a given reward function. IRL has applications in numerous domains such as robotics~\citep{argall2009survey,finn2016guided}, target-driven navigation tasks~\citep{ziebart2008maximum,sadigh2017active,kuderer2015learning,pan2020imitation,barnes2023massively}, game AI~\citep{ibarz2018reward,vinyals2019grandmaster}, and medical decision-making~\citep{woodworth2018preference,hantous2022detecting}. The learned reward functions in these applications are typically used for replicating the expert behaviors in similar or varying downstream environments. Broadly, the problem of learning reward functions from data is of rising importance beyond the scope of IRL, and is used in procedures such as Reinforcement Learning from Human Feedback (RLHF)~\citep{christiano2017deep} for aligning large language models~\citep{ouyang2022training,bai2022constitutional,openai2023gpt4,touvron2023llama}.

Despite the success of IRL in practical applications \citep{agarwal2019reinforcement,finn2016guided,sadigh2017active,kuderer2015learning,woodworth2018preference,wu2020efficient,ravichandar2020recent,vasquez2014inverse}, theoretical understanding is still in an early stage and presents several unique challenges, especially when compared with standard RL (finding optimal policy under a given reward) where the theory is more established. First, the solution is {\bf inherently non-unique} for \emph{any} IRL problem---For example, for any given expert policy, zero reward is always a feasible solution (making the expert policy optimal under this reward). A sensible definition of IRL would require not just recovering a single reward function but instead a \emph{set} of feasible rewards~\citep{metelli2021provably,lindner2023active}. Second, theoretical results for IRL is {\bf lacking even for some standard learning settings}, such as learning from an offline dataset of trajectories from the expert policy (akin to an imitation setting). Finally, as a more nuanced challenge (but related to both challenges above), so far there is {\bf no commonly agreed performance metric} for measuring the distance between the estimated reward set and the ground truth reward set. Existing performance metrics in the literature either require strong feedback such as a simulator~\citep{metelli2021provably,metelli2023theoretical}, or do not require the returned solution to be aware of the transition dynamics~\citet{lindner2023active} (see Section~\ref{sec:relationship} for a discussion). These challenges motivate the following open question:

\begin{center}
\textbf{Is IRL more difficult than standard RL?}
\end{center}

In this paper, we theoretically study IRL in standard episodic tabular Markov Decision Processes without Rewards (\MDPR{}'s) under vanilla offline and online learning settings. Our contributions can be summarized as follows.
\begin{itemize}[leftmargin=1.5em, itemsep=0pt, topsep=0pt]
\item The goal of IRL is to output a set of rewards that approximate the ground truth set of \emph{feasible} rewards, i.e. rewards under which the expert policy is optimal. We define new metrics for both reward functions and for IRL using the concept of \emph{reward mapping}, which can be viewed as a ``generating function'' of the (ground truth) set of feasible rewards (Section~\ref{sec:irl} \&~\ref{our_metric}). We show that our metrics are stronger / more appropriate than existing metrics in certain aspects (Section~\ref{sec:relationship}).
\item We show that any estimated reward that is similar in our metric and satisfies monotonicity with respect to the true reward admits an approximate planning/learning guarantee (Section~\ref{sec:planning}).

\item We design an algorithm, \RLPfull{} (\RLP{}) that performs IRL from any given offline demonstration dataset (Section~\ref{sec:offline learning}). Our algorithm returns an estimated reward mapping that is $\eps$-close in our metric and satisfies monotonicity, and requires a number of episodes that is polynomial in the size of the MDP as well as the single-policy concentrability coefficient between the \emph{evaluation policy} and the behavior policy that generated the states of the offline dataset. To our best knowledge, this is the first provably sample-efficient algorithm for IRL in the standard offline setting. 

Technically, the algorithm seamlessly adapts the pessimism principle from the offline RL literature to achieve the desired monotonicity and closeness conditions, demonstrating that IRL is ``not much harder than standard RL" in a certain sense.

\item We next design an algorithm \RLEfull{} (\RLE{}), which operates in a natural online setting where the learner can both actively explore the environment and query the expert policy, and achieves IRL guarantee in a stronger metric from polynomial samples (Section~\ref{sec:online learning}). Algorithm \RLE{} builds on a simple reduction to reward-free exploration~\citep{jin2020reward,li2023minimaxoptimal} and the \RLP{} algorithm. 

\item We establish sample complexity lower bounds for both the offline and online settings, showing that our upper bounds are nearly optimal up to a small factor (Section~\ref{sec:lower_bound_offline} \&~\ref{sec:lower_bound_online}).

\item We extend our results to a \emph{transfer learning} setting, where the learned reward mapping is transferred to and evaluated in a target \MDPR{} different from the source \MDPR{}. We provide guarantees for \RLP{} and \RLE{} under certain similarity assumptions between the source and target \MDPR{}s (Section~\ref{Sec:transfer learning} \& Appendix~\ref{appendix:proof for transfer learning}).
\end{itemize}

\subsection{Related work}

\textbf{Inverse reinforcement learning}~~
Inverse reinforcement learning (IRL) was first proposed by \cite{ng2000algorithms} and since then significantly developed in various follow-up approaches such as feature matching \citep{abbeel2004apprenticeship}, maximum margin \citep{ratliff2006maximum}, maximum entropy \citep{ziebart2008maximum}, relative entropy \citep{boularias2011relative}, and generative adversarial imitation learning \citep{ho2016generative}. Other notable approaches include Bayesian IRL 
\citep{ramachandran2007bayesian} which subsume IRL, and the reduction method \citep{brantley2019disagreement}. 

IRL has been successfully applied in many domains including target-driven navigation tasks \citep{ziebart2008maximum,sadigh2017active,kuderer2015learning,pan2020imitation}, robotics \citep{argall2009survey,finn2016guided,hadfield2016cooperative,kretzschmar2016socially,okal2016learning,kumar2023graph,jara2019theory}, medical decision-making \citep{woodworth2018preference,hantous2022detecting,gong2023federated,yu2019deep,chadi2022inverse}, and game AI 
\citep{finn2016guided,fu2017learning,qureshi2018adversarial,brown2019extrapolating}.

\textbf{Theoretical understandings of IRL}~~
Despite their successful applications, theoretical understandings of IRL are still in an early stage. Recently, \citet{metelli2021provably} pioneered the investigation of the sample complexity of IRL under the simulator (generative model) setting where the learner can directly query feedback from any (state, action) pair. This work was later extended by \citet{metelli2023theoretical}, who introduced a framework based on Hausdorff-based metrics for measuring distances between reward sets, examined relationships between different metrics, and provided corresponding lower bounds. However, their results critically rely on the simulator setting and do not generalize to more realistic offline/online learning settings. \citet{dexter2021inverse} also performed a theoretical analysis for IRL in the simulator setting with continuous states and discrete actions. 

The recent work of~\citet{lindner2023active} considers IRL in the online setting where the learner can interact with the \MDPR{} in an online fashion, which is closely related to our results for the online setting. Compared with our metric, their metric is defined for an estimated IRL problem (instead of an estimated reward set). Further, their metric does not effectively take into account the estimated transitions, which can lead to a family of counter-exmaples where the estimated IRL problem achieves perfect recovery under their metric, but the induced reward sets are actually far from the true feasible reward set in our metric (cf. Section~\ref{sec:relationship} for a detailed discussion). Our work improves upon the above works by introducing new performance metrics for IRL, and providing new algorithms for standard learning settings such as offline learning.

\textbf{Relationship with standard RL theory}~~
Our work builds upon various existing techniques from the sample-efficient RL literature to design our algorithms and establish our theoretical results. 
For the offline setting, our algorithm and analysis build upon the pessimism principle and the single-policy concentrability condition commonly used in offline RL~\citep{kidambi2020morel,jin2021pessimism,yu2020mopo,kumar2020conservative,rashidinejad2021bridging,xie2021policy,xie2022role}. For the online setting, we adapt the reward-free learning algorithm of~\citet{li2023minimaxoptimal} to find a policy that achieves a certain concentrability-like condition with respect to all policies.

We note theoretical results on imitation learning \citep{abbeel2004apprenticeship, ratliff2006maximum, ziebart2008maximum,levine2011nonlinear, fu2017learning, chang2021mitigating} and RLHF~\citep{zhu2023principled, zhu2023fine, wang2023rlhf, zhan2023provable}, which are related to but different from (and do not imply) our results. Additional related work is discussed in Appendix~\ref{app:additional-related}.

\section{Preliminaries}\label{Sec:prelim}

\textbf{Markov Decision Processes without Reward}~~
We consider episodic Markov Decision Processes without Reward (\MDPR), specified by $\mathcal{M} = (\mathcal{S}, \mathcal{A}, H, \pp)$, where $\s$ is the state space with $|\cS|=S$, $\a$ is the action space with $|\cA|=A$, $H$ is the horizon length, $\pp = \{\pp_h\}_{h\in[H]}$ where $\pp_h(\cdot|s, a) \in \Delta(\cS)$ is the transition probability at step $h$. Without loss of generality, we assume that the initial state is deterministically some $s_1\in\cS$.

\textbf{Reward functions}~~
A reward function $r :  [H]\times\s \times\a  \to [-1, 1]$ maps a state-action-time step triplet $(h,s,a)$ to a reward $r_h(s,a)$. Given an \MDPR{} $\m$ and a reward function $r$, we denote the MDP induced by $\cM$ and $r$ as $\m \cup r$. A policy $\pi = \{\pi_h(\cdot\given s)\}_{h\in[H],s\in\s}$, where $\pi_h:\cS\to \Delta(\cA)$ maps a state to an action distribution.

\textbf{Values and visitation distributions}~~
A policy $\pi=(\pi_h)_{h\in[H]}$, where each $\pi_h(\cdot|s)\in\Delta(\cA)$ for each $s\in\cS$. Let $\supp(\pi_h(\cdot|s))\defeq \sets{a: \pi_h(a|s)>0}$ denote the support set of $\pi_h(\cdot|s)$.
For any policy $\pi$ and any reward function $r$,
we define the value function $V^{\pi}_h(\cdot;r):\s\to \real$ at each time step $h \in [H]$ by the expected cumulative reward: $V^\pi_h(s;r)=\EE_{\pi}\sbr{\sum_{h'=h}^H r_{h'}(s_{h'},a_{h'})\given s_h=s }$, where $\EE_{\pi}$ denotes the expectation with respect to the random trajectory induced by $\pi$ in the \MDPR{}, that is, $(s_1, a_1, s_2, a_2, . . . , s_H, a_H )$, where $a_h \sim \pi_h(s_h), r_h = r_h(s_h, a_h), s_{h+1} \sim \p_h(\cdot\given s_h, a_h)$. Similarly, we denote the $Q$-function at time step $h$ as : $Q^\pi_h(s,a;r)=\EE_{\pi}\sbr{\sum_{h'=h}^H r_{h'}(s_{h'},a_{h'}) \given s_h=s, a_h=a }$. For any reward $r$, the corresponding advantage function $A^\pi_h(\cdot; r): \s\times\a \to \r$ is defined as $A_h^\pi(s,a;r) \defeq Q_h^\pi(s,a;r)-V^\pi_h(s;r)$ and we say a policy is an optimal policy of $\cM\cup r$ if $A_h^\pi(s,a;r)\leq 0$ holds for all $(h,s,a)\in [H]\times \cS\times \cA$\footnote{This definition of optimal policy requires $\pi$ to be optimal starting from any time step $h$ and state $s\in\cS$ (not necessarily visitable ones), which is stronger than the standard definition but is commonly adopted in the IRL literature~\citep{ng2000algorithms}.}. Additionally, we represent the set of all optimal policies for $\cM\cup r$ as $\Pi^\star_{\cM\cup r}$ and denote the set of all deterministic policies for $\cM\cup r$ as $\Pi^{\det}_{\cM\cup r}$.

We introduce $\dis^{\pi}_h$ 
  to denote the state(-action) visitation distributions associated with policy at time step $h \in [H]$:
  $\dis^{\pi}_h(s) := \p(s_h = s|\pi)$ and $\dis^{\pi}_h(s, a) := \p(s_h = s, a_h = a|\pi)$.
Lastly, we define the operators $\p_h$ and $\v_h$ by $[\p_hV_{h+1}](s, a)\defeq \mathbb{E}[V_{h+1}(s_{h+1})|s_h = s, a_h = a]$ and $[\v_hV_{h+1}](s, a)\defeq \mathsf{Var}[V_{h+1}(s_{h+1})|s_h = s, a_h = a]$ applying to any value function $V_{h+1}$ at time step $h+1$. In this paper, we will frequently employ $\widehat{\p}_h$ and $\widehat{\v}_h$ to represent empirical counterparts of these operators constructed based on estimated models. For any function $f:\cS\to\real$, define its infinity norm as $\|{f}\|_{\infty}\defeq \sup_{s\in\cS} \abs{f(s)}$ (and we define similarly for any $f:\cS\times\cA\to\real$).

\subsection{Inverse Reinforcement Learning}
\label{sec:irl}
An Inverse Reinforcement Learning (IRL) problem is denoted as a pair $(\m, \piE)$, where $\m$ is an \MDPR{} and $\pi^\msf{E}$ is a policy called the \emph{expert policy}. The goal of IRL is to interact with $(\m, \piE)$, and recover reward function $r$'s that are \emph{feasible} for $(\m, \piE)$, in the sense that $\piE$ an optimal policy for MDP $\m\cup r$. 

\paragraph{Reward mapping}
Noting that learning \emph{one} feasible reward function is trivial (the zero reward $r\equiv 0$ is feasible for any $\piE$), we consider the stronger goal of recovering the \emph{set} of all feasible rewards, which can be characterized by an explicit formula by the classical result of~\citet{ng2000algorithms}. Here we restate this result through the concept of a \emph{reward mapping}.

Let $\rew$ denote the set of all possible reward functions, and $\rewfb\defeq \sets{r\in\rew: r~\textrm{is feasible and}~|r|\le B}$ denote the set of all feasible rewards bounded by $B$ for any $B>0$. Let $\pV\defeq \pV_1\times\dots\times\pV_H$ and $\pA\defeq \pA_1\times\dots\times\pA_H$, where $\pV_h\defeq\set{V_h\in \real^{\cS}\mid\|V_h\|_{\infty}\leq H-h+1}$ and 
$\pA_h\defeq\set{A_h\in \real^{\cS\times \cA}_{\geq 0}\mid \|A_h\|_{\infty}\leq H-h+1}$ denote the set of all possible ``value functions'' and ``advantage functions'' respectively.

\begin{definition}[Reward mapping]
The (ground truth) \emph{reward mapping} $\RR^\star:\pV\times \pA\mapsto \rew$ of an IRL problem $(\m,\pi^\msf{E})$ is the mapping that maps any $(V,A)\in\pV\times\pA$ to the following reward function $r$:
\begin{align}
\label{def:rew-map}
& r_h(s,a) = [\RR^\star(V,A)]_h(s,a)\defeq 
-A_h(s, a) \\
& \quad \times \indic{a\notin \supp\paren{\piE_h(\cdot\given s) }} 
+ V_h(s)-[\p_h V_{h+1}](s,a), \nonumber
\end{align}
where we recall that $\p_h$ is the transition probability of $\m$ at step $h\in[H]$.
\end{definition}

With the definition of reward mapping ready, we now restate the classical result of~\citet{ng2000algorithms}, which shows that the reward mapping $\RR$ generates a set of rewards that is a superset of $\rewfone$---the set of all $[-1,1]$-bounded feasible rewards---by ranging over $(V,A)\in\pV\times\pA$.

\begin{lemma}[Reward mapping produces all bounded feasible rewards]
\label{lemma:metric_1}
The set of rewards $\RR^\star(\pV\times\pA)=\sets{\RR(V,A):(V,A)\in\pV\times\pA}$ induced by $\RR^\star$ satisfies
\begin{align}
    \rewfone \subseteq \RR^\star(\pV\times\pA) \subseteq \rewfth.
\end{align}
In words, $\RR^\star$ always produces feasible rewards bounded in $[-3H, 3H]$, and the set $\RR^\star(\pV\times\pA)$ contains (is a superset of) all $[-1,1]$-bounded feasible rewards.
\end{lemma}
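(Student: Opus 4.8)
The plan is to prove the two set inclusions separately, with a single computation doing most of the work. For any $(V,A)\in\pV\times\pA$ write $r\defeq\sRR(V,A)$; the engine of the whole argument is the identity $V^{\piE}_h(s;r)=V_h(s)$ for every $(h,s)$, i.e.\ the value of the expert under the produced reward equals the $V$ we plugged in. First I would establish this by backward induction on $h$ (with the convention $V_{H+1}\equiv 0$). Assuming $V^{\piE}_{h+1}(\cdot;r)=V_{h+1}(\cdot)$, note that for any $a\in\supp(\piE_h(\cdot\given s))$ the indicator in the definition of $\sRR$ vanishes, so $r_h(s,a)=V_h(s)-[\p_hV_{h+1}](s,a)$; adding $[\p_hV^{\piE}_{h+1}](s,a)=[\p_hV_{h+1}](s,a)$ gives $Q^{\piE}_h(s,a;r)=V_h(s)$ for every support action. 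Averaging $Q^{\piE}_h(s,\cdot;r)$ against $\piE_h(\cdot\given s)$ (which only charges support actions) then yields $V^{\piE}_h(s;r)=V_h(s)$, closing the induction.

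For the upper inclusion $\sRR(\pV\times\pA)\subseteq\rewfth$, I would use this identity to read off the advantage directly. For a general action $a$, $Q^{\piE}_h(s,a;r)=r_h(s,a)+[\p_hV_{h+1}](s,a)=V_h(s)-A_h(s,a)\indic{a\notin\supp(\piE_h(\cdot\given s))}$, so that $A^{\piE}_h(s,a;r)=Q^{\piE}_h(s,a;r)-V^{\piE}_h(s;r)=-A_h(s,a)\indic{a\notin\supp(\piE_h(\cdot\given s))}\le 0$, where the sign uses $A_h\ge 0$ on $\pA$. Hence $\piE$ is optimal for $\m\cup r$, i.e.\ $r$ is feasible. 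Boundedness follows from the triangle inequality, $|r_h(s,a)|\le\|A_h\|_\infty+\|V_h\|_\infty+\|V_{h+1}\|_\infty\le 3H$, using the norm constraints in the definitions of $\pV$ and $\pA$.

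For the lower inclusion $\rewfone\subseteq\sRR(\pV\times\pA)$, I would exhibit an explicit preimage of any feasible $r$ with $|r|\le 1$. The natural candidate is $V_h\defeq V^{\piE}_h(\cdot;r)$ and $A_h(s,a)\defeq -A^{\piE}_h(s,a;r)$: feasibility of $r$ gives $A^{\piE}_h\le 0$, hence $A_h\ge 0$, so the sign constraint of $\pA$ holds. It then remains to verify $\sRR(V,A)=r$, which I would check against the reward-mapping formula in two cases. On $\supp(\piE_h(\cdot\given s))$ the expert advantage vanishes, so $r_h(s,a)=V^{\piE}_h(s;r)-[\p_hV^{\piE}_{h+1}](s,a)=V_h(s)-[\p_hV_{h+1}](s,a)$, matching the formula; off the support the extra $-A_h\indic{\cdot}=A^{\piE}_h(s,a;r)$ term is exactly what restores $r_h(s,a)=Q^{\piE}_h(s,a;r)-[\p_hV_{h+1}](s,a)$.

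The main obstacle is the norm bookkeeping in the lower inclusion, specifically verifying that the reconstructed advantage $A_h=-A^{\piE}_h(\cdot;r)$ fits the size constraint defining $\pA_h$. Unlike the value functions, whose range is controlled directly by $|r|\le 1$, the reconstructed advantage is the gap between the optimal value at a state and the $Q$-value of an off-support action, which a priori is only bounded by twice the per-step value range; tracking this constant carefully (and matching it to the $[-3H,3H]$ bound coming out of the upper inclusion) is the delicate part. I expect the two inclusions to otherwise be routine consequences of the value identity and the Bellman equations.
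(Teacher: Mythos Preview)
The paper does not prove this lemma itself; it is presented as a restatement of \citet{ng2000algorithms} and no argument is given. Your plan is exactly the classical one---backward induction to show $V^{\piE}_h(\cdot;r)=V_h$, then read off $A^{\piE}_h(s,a;r)=-A_h(s,a)\indic{a\notin\supp(\piE_h(\cdot\given s))}\le 0$ for feasibility, and invert via $(V_h,A_h)=(V^{\piE}_h(\cdot;r),-A^{\piE}_h(\cdot,\cdot;r))$---so there is nothing to compare against, and the outline is correct.

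The obstacle you flag at the end is not merely delicate; it is a genuine imprecision in the paper's definitions. For $|r|\le 1$ one only gets $0\le -A^{\piE}_h(s,a;r)=V^{\piE}_h(s;r)-Q^{\piE}_h(s,a;r)\le 2(H-h+1)$, and the factor $2$ is attained: with $H=1$, a state $s$ having an off-support action $a'$, and $r_1\equiv +1$ on $\supp(\piE_1(\cdot\given s))$ while $r_1(s,a')=-1$, the reconstructed $A_1(s,a')=2$ violates the paper's constraint $\|A_1\|_\infty\le 1$. Hence $\rewfone\subseteq\sRR(\pV\times\pA)$ fails literally under the stated $\pA_h=\{A_h\ge 0:\|A_h\|_\infty\le H-h+1\}$; your argument goes through verbatim once $\pA_h$ is widened to $\|A_h\|_\infty\le 2(H-h+1)$, after which the upper inclusion becomes $\subseteq\mathcal{R}^{\msf{feas}}_{[-4H,4H]}$. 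Nothing downstream depends on this constant (the bonuses and covering numbers in the paper involve only the $V$ component), so you should flag it as a definitional slip rather than attempt to force the constant through.
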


As IRL is concerned precisely with the recovery of the set $\rewfone$, we consider the recovery of the reward mapping $\RR^\star$ itself as a natural learning goal---An accurate estimator $\hat{\RR}\approx \RR^\star$ guarantees $\hat{\RR}(V,A)\approx \RR^\star(V,A)$ for any $(V,A)\in\pV\times\pA$, and thus imply accurate estimation of $\RR^\star(\pV\times\pA)$ in precise ways which we specify in the sequel.

We will also consider recovering the reward mapping on a \emph{subset} $\para\subset\pV\times\pA$. We use the following standard definition of covering numbers to measure the capacity of such $\para$'s:

\begin{definition}[Covering number]
The \emph{$\eps$-covering number} of $\para\subset \pV\times\pA$ is defined as
\[
\textstyle
\mathcal{N}(\para; \eps)\defeq\max_{h\in [H]}\mathcal{N}(\pV^\para_h; \eps),
\]
where $\pV^\para_h\defeq \sets{V_h: (V,A)\in\Theta}$ denotes the restriction of $\para$ onto $\pV_h$, and $\mathcal{N}(\pV^\para_h; \eps)$ is the $\eps$-covering number of $\pV^\para_h$ in $\linfs{\cdot}$ norm.
\end{definition}
Note that $\log\mathcal{N}(\Theta; \eps) \le \min\set{ \log|\para|, \cO\paren{S\log(H/\eps)} }$ by combining the (trivial) bound for the finite case and the standard covering number bound for $\para=\pV\times\pA$~\citep{vershynin2018high}. In addition, the left-hand side may be much smaller than the right-hand side if $\para$ admits additional structure (for example, if $\pV^\para_h$ lies in a low-dimensional subspace of $\real^\cS$).

\section{Performance metrics for IRL}
\label{sec:metric}

\subsection{Metric for IRL}
\label{our_metric}

We now define our performance metric for IRL based on the recovery of reward mapping $\RR^\star$. Fixing any \MDPR{} $\m$, we begin by defining our \emph{base metric} $\dpi$ (indexed by a policy $\pi$) and $\dall$ between two rewards.

\begin{definition}[Base metric for rewards]
\label{def:metric_r}
We define the metric\footnote{\label{footnote:semimetric}Technically a semi-metric.} $\dpi$ (indexed by any policy $\pi$) between any pair of rewards $r,r'\in \rew$ as
\begin{align}
\label{dpi} 
\dpi\paren{r,r'}\defeq
\sup_{h\in[H]}\EE_{s_h\sim \pi} \abs{V^\pi_h(s_h;r)-V^\pi_h(s_h;{r'})}.
\end{align}
We further define $\dall\paren{r,r'}\defeq \sup_{\pi}\dpi\paren{r, r'}$.
\end{definition}

In words, metric $d^\pi$ compares the rewards $r$ and $r'$ when executing $\pi$. Concretely, ~\eqref{dpi} compares the difference in the value functions $V^\pi_h(\cdot; r)$ and $V^\pi_h(\cdot; r')$ averaged over the visitation distribution $s_h\sim \pi$, which is sensible for our learning settings as it takes into account the transition structure of $\m$ (compared with other existing metrics based the sup-distance over all states; cf. Section~\ref{sec:relationship}).
The stronger metric $\dall$ takes the supremum of $d^\pi$ over all policy $\pi$'s.

We now define our main metric $\DpiTheta$ for the recovery of reward mappings, which simply takes the supremum of $d^\pi$ between all pairs of rewards induced by the two reward mappings using the \emph{same} parameter $(V,A)\in\para$.

\begin{definition}[Metric for reward mappings]
\label{def:metric_rm}
Given any policy $\pi$ and any parameter set $\para$, we define the metric\footnoteref{footnote:semimetric} $\DpiTheta$ between any pair of reward mappings $\RR,\RR'$ as
\begin{align}
\label{Dpi} 
\DpiTheta\paren{\RR,\RR'}\defeq
\sup_{(V,A)\in \para} \dpi\paren{\RR\paren{V,A},\RR'\paren{V,A}}.
\end{align}
We further define $\DallTheta\paren{\RR,\RR'}\defeq \sup_\pi \DpiTheta\paren{\RR,\RR'}$.
\end{definition}
\eqref{Dpi} compares two reward mappings $\RR$ and $\RR'$ by measuring the distance between $\RR(V,A)$ and $\RR'(V,A)$ using our base metric and taking the sup over all $(V,A)\in\para$. Another common choice in the IRL literature is the Hausdorff distance (based on some base metric) between the two sets $\RR(\pV\times\pA)$ and $\RR'(\pV\times\pA)$~\citep{metelli2021provably,metelli2023theoretical,lindner2023active}. We show that~\eqref{Dpi} is always stronger than the Haussdorff distance in the sense that a metric of the form~\eqref{Dpi} is greater or equal to the Hausdorff distance regardless of the base metric (Lemma~\ref{lem:dm dh}), and the inequality can be strict for some base metric (Lemma~\ref{lem:h_metric_bad}).

\subsection{Implications for learning with estimated reward}
\label{sec:planning}

For IRL, a natural desire for a base metric between rewards is that, a small metric between $r$ and $\hat{r}$ should imply that learning (planning) using reward $\hat{r}$ in $\m$ should at most incur a small error when the true reward is $r$. The following result shows that our metric $\dpi$ satisfies such a desiderata. The proof can be found in Appendix~\ref{app:proof-mon_use_reward}.

\begin{proposition}[Planning with estimated reward]
\label{prop:mon_use_reward}
Given an \MDPR{} $\m$, let $r,\widehat{r}$ be a pair of rewards such that
\begin{enumerate}[topsep=0pt, itemsep=0pt, label=(\alph*)]
\item (Small $d^\pi$ on near-optimal policy) $d^{\pi}(r,\widehat{r})\leq \epsilon$ for some $\bar{\epsilon}$ near-optimal policy $\pi$ for MDP $\m\cup{r}$;
\item (Monotonicity) $\widehat{r}_h(s,a)\le r_h(s,a) $ for any $(h,s,a)\in[H]\times \cS\times \cA$.
\end{enumerate}
Then, letting $\widehat{\pi}$ be any $\epsilon'$ near-optimal policy for MDP $\m\cup \widehat{r}$, i.e, $V_1^\star(s_1; \widehat{r})-V_1^{\widehat{\pi}}(s_1; \widehat{r})\leq \epsilon'$, we have
\begin{align}
\label{eq: use_reward}
    V_1^\star(s_1; r)-V_1^{\widehat{\pi}}(s_1; r)\leq \epsilon+\epsilon'+2\bar{\epsilon},
\end{align}
i.e. $\hat{\pi}$ is also $(\epsilon+\epsilon'+2\bar{\epsilon})$ near-optimal for $\m\cup r$.
\end{proposition}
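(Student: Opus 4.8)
The plan is to control the target suboptimality gap $V_1^\star(s_1; r) - V_1^{\widehat{\pi}}(s_1; r)$ by a telescoping reward-mismatch decomposition in which assumption (a) is used to pass from $r$ to $\widehat{r}$ along the reference policy $\pi$, and the monotonicity in assumption (b) is used to pass back from $\widehat{r}$ to $r$ along $\widehat{\pi}$. First I would record the two elementary consequences of monotonicity. Since each value $V_h^\mu(s;r)=\E_\mu[\sum_{h'\ge h} r_{h'}(s_{h'},a_{h'})\mid s_h=s]$ is monotone in the reward entrywise, the pointwise inequality $\widehat{r}\le r$ implies $V_h^\mu(\cdot;\widehat{r})\le V_h^\mu(\cdot;r)$ for every policy $\mu$, and taking the maximum over $\mu$ gives $V_h^\star(\cdot;\widehat{r})\le V_h^\star(\cdot;r)$ as well.

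Applying the first monotonicity fact at $h=1$ with $\mu=\widehat{\pi}$ lets me replace the reward in the $\widehat{\pi}$-term, since $-V_1^{\widehat{\pi}}(s_1;r)\le -V_1^{\widehat{\pi}}(s_1;\widehat{r})$:
\[
V_1^\star(s_1; r) - V_1^{\widehat{\pi}}(s_1; r) \le V_1^\star(s_1; r) - V_1^{\widehat{\pi}}(s_1; \widehat{r}).
\]
I then insert $V_1^\star(s_1;\widehat{r})$ to split the right-hand side as $[V_1^\star(s_1; r) - V_1^\star(s_1; \widehat{r})] + [V_1^\star(s_1; \widehat{r}) - V_1^{\widehat{\pi}}(s_1; \widehat{r})]$, where the second bracket is at most $\epsilon'$ by the assumed near-optimality of $\widehat{\pi}$ for $\m\cup\widehat{r}$.

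For the first bracket I would insert the reference policy $\pi$ of assumption (a): bounding $V_1^\star(s_1; r)\le V_1^{\pi}(s_1; r)+\bar{\epsilon}$ by $\bar{\epsilon}$-near-optimality of $\pi$ for $\m\cup r$, and $V_1^\star(s_1;\widehat{r})\ge V_1^{\pi}(s_1;\widehat{r})$ because $V^\star$ is the maximum over policies, I obtain $V_1^\star(s_1; r)-V_1^\star(s_1;\widehat{r})\le \bar{\epsilon} + [V_1^{\pi}(s_1; r)-V_1^{\pi}(s_1;\widehat{r})]$. Because the initial state $s_1$ is deterministic, the $h=1$ term in the supremum defining $\dpi$ equals exactly $\abs{V_1^{\pi}(s_1;r)-V_1^{\pi}(s_1;\widehat{r})}$, so this residual is at most $\dpi(r,\widehat{r})\le\epsilon$. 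Combining the three pieces yields $V_1^\star(s_1;r)-V_1^{\widehat{\pi}}(s_1;r)\le \epsilon+\epsilon'+\bar{\epsilon}$, which is in fact slightly stronger than, and in particular implies, the stated bound $\epsilon+\epsilon'+2\bar{\epsilon}$.

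I do not anticipate a genuine obstacle, as this is a routine decomposition; the single point requiring care is the \emph{direction} in which monotonicity is applied. Assumption (a) only controls value differences along the one policy $\pi$, so without (b) the leftover term $V_1^{\widehat{\pi}}(s_1;\widehat{r})-V_1^{\widehat{\pi}}(s_1;r)$—which involves $\widehat{\pi}$ rather than $\pi$—would be uncontrolled, since it would require a bound on $d^{\widehat{\pi}}(r,\widehat{r})$ that we do not have. Monotonicity is exactly what lets us discard this term (it is $\le 0$) instead of having to estimate it, and this is the structural reason both conditions are needed.
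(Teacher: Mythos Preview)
Your argument is correct and uses the same three ingredients as the paper's proof---monotonicity $V_1^{\widehat{\pi}}(s_1;\widehat r)\le V_1^{\widehat{\pi}}(s_1;r)$, the $\epsilon'$-near-optimality of $\widehat\pi$ for $\widehat r$, and the $d^\pi$ bound along the reference policy $\pi$---so in spirit it is the same approach. The organization is different, however: the paper first establishes the chain $\epsilon'+\bar\epsilon+V_1^{\pi}(s_1;r)\ge \epsilon'+V_1^{\widehat\pi}(s_1;r)\ge \epsilon'+V_1^{\widehat\pi}(s_1;\widehat r)\ge V_1^{\pi}(s_1;\widehat r)$ and then uses triangle-inequality manipulations to bound $|V_1^{\pi}(s_1;r)-V_1^{\widehat\pi}(s_1;r)|$, whereas you telescope directly on the target gap $V_1^\star(s_1;r)-V_1^{\widehat\pi}(s_1;r)$ and insert $V_1^\star(s_1;\widehat r)$. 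Your route is cleaner, avoids the absolute-value gymnastics, and---as you noticed---actually proves the slightly stronger bound $\epsilon+\epsilon'+\bar\epsilon$; the extra $\bar\epsilon$ in the paper arises because it controls the two-sided quantity $|V_1^{\pi}-V_1^{\widehat\pi}|$ rather than the one-sided suboptimality gap directly.
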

Proposition~\ref{prop:mon_use_reward} ensures that any estimated reward $\hat{r}$ that satisfies (a) small $\DpiTheta$ and (b) monotonicity with respect to the true reward will incur a small error when used in planning. We emphasize that monotonicity is necessary in order for~\eqref{eq: use_reward} to hold, similar to how pessimism is necessary for near-optimal learning in offline bandits/RL~\citep{jin2021pessimism}. Throughout the rest of the paper, we focus on designing IRL algorithms that satisfy (a) \& (b). These guarantees can then directly yield planning/learning guarantees as corollaries by Proposition~\ref{prop:mon_use_reward}, and we will omit such statements.

\subsection{Relationship with existing metrics}
\label{sec:relationship}

Our metrics $\dpi$ and $\dall$ differ from several metrics for IRL used in existing theoretical work, which we discuss here. 

\citet{lindner2023active} measures the difference between two reward mappings implicitly by a metric $D^L$ (see~\eqref{eqn:metric_lindner}) between the two inducing IRL problems (the ground truth problem $(\m, \piE)$ and the estimated problem $(\hat{\m}, \hatpiE)$ returned by an algorithm). The following result shows that $D^L$ is weaker than our metric $\DallTheta$ in a strong sense.
\begin{theorem}[Relationship with $D^L$; informal]
\label{theorem:dl}
The metric $D^L$ defined in~\eqref{eqn:metric_lindner} satisfies the following:
\begin{enumerate}[topsep=0pt, itemsep=0pt, label=(\alph*)]
\item (Informal version of Prop.~\ref{prop:can do lind}) Under the same setting as Theorem~\ref{thm:online_main} (in which our algorithm \RLE{} achieves $\eps$ error in $\DallTheta$), \RLE{} also achieves $\eps$ error in $D^L$ with the same sample complexity therein.
\item (Informal version of Prop.\ref{prop:lindner}) Conversely, there exists a family of pairs of IRL problems which has distance 0 in the $D^L$ metric but distance $1$ in the $\DallTheta$ metric between the induced reward mappings.
\end{enumerate}
\end{theorem}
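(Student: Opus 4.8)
The two parts pull in opposite directions, so I would treat them separately; the common engine is the decomposition of the reward-mapping difference $\RR^\star(V,A)-\hat{\RR}(V,A)$ into a \emph{potential-shaping} piece $[\hat{\p}_h-\p_h]V_{h+1}$ and a \emph{policy/advantage} piece coming from the indicator term on $\supp(\piE)$ versus $\supp(\hatpiE)$. The plan for (a) is to show that, for the objects $\hat{\RR},\hat{\m},\hatpiE$ output by \RLE, one has $D^L\big((\m,\piE),(\hat{\m},\hatpiE)\big)\lesssim \DallTheta(\RR^\star,\hat{\RR})$, after which the claim is immediate: the high-probability event of Theorem~\ref{thm:online_main} on which $\DallTheta(\RR^\star,\hat{\RR})\le\eps$ is then also the event on which $D^L\le\eps$, so no extra samples are spent and the sample complexity is inherited verbatim (this is the content of Prop.~\ref{prop:can do lind}). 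To prove the inequality I would unpack \eqref{eqn:metric_lindner}, use Lemma~\ref{lem:dm dh} to reduce its Hausdorff structure to a matched-parameter comparison, and then bound the resulting base distance by $\sup_\pi\dpi=\dall$. The decisive structural fact I would exploit is that $D^L$ evaluates each feasible reward under its \emph{own} dynamics: for a potential-shaped reward $\RR^\star(V,0)$ a telescoping identity gives $V^\pi_h(s;\RR^\star(V,0))=V_h(s)$ for every $\pi$, and likewise for $\hat{\RR}(V,0)$ under $\hat{\p}$, so the potential piece cancels inside $D^L$ and only the advantage/support piece survives, which is in turn controlled by $\DallTheta$.

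For (b) I would turn exactly this cancellation against $D^L$ (this realizes Prop.~\ref{prop:lindner}). The construction fixes a single expert $\piE$ of full support, so $\indic{a\notin\supp(\piE_h(\cdot|s))}$ vanishes and every feasible reward is purely potential-shaped, $\RR^\star(V,0)_h(s,a)=V_h(s)-[\p_h V_{h+1}](s,a)$; I then take two \MDPR{}s sharing $(\cS,\cA,H,\piE)$ but with genuinely different transitions $\p\neq\hat{\p}$. By the telescoping identity, both problems assign value $V_h(s)$ to their respective potential-shaped rewards under their own dynamics, so the two feasible sets coincide under $D^L$ and $D^L=0$. To get $\DallTheta=1$ I compute the same quantity in the \emph{true} MDP, where the telescoping breaks: $V^\pi_h(s;\RR^\star(V,0))-V^\pi_h(s;\hat{\RR}(V,0))=\EE^{\p}_\pi\big[\sum_{h'\ge h}([\hat{\p}_{h'}-\p_{h'}]V_{h'+1})(s_{h'},a_{h'})\big]$, a residual the transition-blind $D^L$ never sees. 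I would then engineer a short chain \MDPR{} and a value function $V\in\pV$ so that under $\p$ the trajectory is steered toward a high-$V$ region and under $\hat{\p}$ toward a low-$V$ region, pushing the residual up to the claimed value $1$ while respecting the constraints defining $\pV_h$ and $\pA_h$.

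I expect (b) to be a clean construction once the telescoping observation is isolated; its only bookkeeping is ensuring the residual attains exactly the advertised constant within the norm budget. The genuine obstacle lies in (a): confirming from the precise form of \eqref{eqn:metric_lindner} that the reduction to a matched-parameter comparison and the subsequent base-metric domination both survive the own-dynamics evaluation of $D^L$, so that only the advantage/support term remains and is dominated by $\DallTheta$. If a fully deterministic domination fails when $A\neq0$ (because the own-dynamics expectations of the advantage term involve $\p$ and $\hat{\p}$ separately), the fallback is to prove (a) directly from \RLE's guarantees: its reward-free exploration phase controls $\hat{\p}\approx\p$ and its expert queries control $\hatpiE\approx\piE$, yielding $D^L\le\DallTheta+(\text{small transition/policy error})\le\eps$ on the same high-probability event.
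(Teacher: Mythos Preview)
Your plan for (b) works, but the paper takes a broader route. It proves that $D^L(\tau,\hat\tau)=0$ whenever $\piE=\hatpiE$, with no restriction on the supports: optimality of a policy $\pi$ in $\m\cup\RR^\tau(V,A)$ is characterized by the transition-free condition $A_h(s,a)\indic{a\notin\supp(\piE_h(\cdot|s))}=0$ on $\supp(\pi)$, so with matched experts the optimal-policy sets $\Pi^\star_{\m\cup r^\theta}$ and $\Pi^\star_{\hat\m\cup\hat r^\theta}$ coincide for every $\theta$; since $D^L$ compares $Q$-values of two \emph{optimal} policies evaluated in the \emph{same} MDP $\m\cup r$, it vanishes identically. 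Your full-support $\piE$ is the special case where both optimal sets equal all policies. The paper's concrete instance uses a deterministic $\piE$, transitions $\p_h(\cdot|s,a)=\delta_1$ versus $\hat\p_h(\cdot|s,a)=\delta_2$, and $\bar V_2(s)=\indic{s=1}$, giving the residual $|V_2(1)-V_2(2)|=1$ directly.

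Part (a), however, has a real gap. The inner quantity in $D^L$ is $\sup_{\hat\pi^\star\in\Pi^\star_{\hat\m\cup\hat r}}\max_a|Q_1^{\pi^\star}(s_1,a;\m\cup r)-Q_1^{\hat\pi^\star}(s_1,a;\m\cup r)|$: both $Q$-values are taken in $\m\cup r$, and the comparison is between optimal policies of two different problems, not between values of one policy under two rewards. This is not a base metric on reward pairs, so Lemma~\ref{lem:dm dh} does not apply, and there is no clean inequality $D^L\lesssim\DallTheta$ (your own part (b) would in fact contradict such a bound in the reverse direction, but the forward one is also unavailable). The paper does not attempt this reduction. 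Instead, on the concentration event of Theorem~\ref{thm:online_main} it shows the pointwise bound $C^\theta_h(s,a)\defeq b^\theta_h(s,a)+|A_h(s,a)|\cdot|\indic{a\notin\supp\piE_h}-\indic{a\notin\supp\hatpiE_h}|$ dominates both $|r^{\tau,\theta}_h-r^{\hat\tau,\theta}_h|$ and $|[(\p_h-\hat\p_h)V_{h+1}]|$, and then invokes \citet[Lemma~20]{lindner2023active} as a black box to obtain $D^L(\tau,\hat\tau)\lesssim\sup_\theta\sum_h\EE_{(s,a)\sim d_h^{\pi^L}}[C^\theta_h(s,a)]$ for a certain policy $\pi^L$. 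This sum has exactly the form already controlled in the proof of Theorem~\ref{thm:online_main}, and rerunning that argument with $\Theta=\pV\times\pA$ gives the claimed $\tO(H^4S^2A/\eps^2+H^2SA\eta/\eps)$. Your fallback is in the right spirit but still lacks the crucial step: without Lindner's Lemma~20 you have no way to pass from per-$(h,s,a)$ control of $|r-\hat r|$ and $|(\p-\hat\p)V|$ to control of the optimal-$Q$ gap inside $D^L$.
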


In a separate thread, the works of~\citet{metelli2021provably,metelli2023theoretical} consider IRL under access to a simulator. Their metric between two reward functions requires the induced value/Q functions to be close \emph{uniformly over all $(s,a)\in\cS\times\cA$} (cf. Appendix~\ref{appendix:m_b_rewards}), regardless of whether the state is visitable by a policy in this particular \MDPR{}), which is tailored to the simulator setting and does not applicable to the standard offline/online settings considered in this work. By contrast, our metrics $\dpi$ and $\dall$ measure the distance between the induced value functions \emph{averaged over visitation distributions}, which are more tractable for the offline/online settings.

\section{IRL in the offline setting}\label{sec:offline learning}

\begin{algorithm*}[t]
\caption{\RLPfull}\label{alg:RLPfull}
\small
\begin{algorithmic}[1]
\STATE\textbf{Input:} Dataset $\cD=\{(s_h^k, a_h^k, e_h^k)\}_{k=1,h=1}^{K, H}$, parameter set $\para\subset \pV\times \pA$, confidence level $\delta>0$, error tolerance $\epsilon>0$.
\FOR{$(h,s,a)\in [H]\times \cS\times \cA$}  \label{rlp:line_3}
 \STATE Compute the empirical transition kernel $\widehat{\p}_h$, the empirical expert policy $\widehat{\pi}^{\sf E}$ and the penalty term $b^{\theta}_h$ for all $\theta\in \para$ as follows:
  \begin{align}
&\widehat{\p}_h(s'\given s, a)=\frac{1}{{N}_h^b(s,a)\vee 1}\sum_{(s_h,a_h,s_{h+1})\in \cD} \indic{(s_h,a_h,s_{h+1})=(s,a,s')},   \label{rlp:eq_1}\\
  &\widehat{\pi}^{\msf{E}}_h(a\given s)=
 \begin{cases}
 \frac{1}{{N}^b_h(s)\vee 1}\cdot\sum_{(s_h,a_h,e_h)\in \cD}\indic{(s_h,e_h)=(s,a)}& \textrm{in option 1} ,\\
  \frac{1}{{N}^b_{h,1}(s)\vee 1}\cdot \sum_{(s_h,a_h,e_h)\in \cD}\indic{(s_h,a_h,e_h)=(s,a,1)} & \textrm{in option 2} ,
  \end{cases}\label{rlp:eq_2}\\
   &b^\theta_h(s,a) = 
    C\cdot\min\set{\sqrt{\frac{\cn\iota}{{N}_h^b(s,a)\vee 1}\brac{\widehat{\v}_h{V}_{h+1}}(s,a)}+
\frac{H\cn\iota}{{N}_h^b(s,a)\vee 1}+\frac{\epsilon}{H}\paren{1+\sqrt{\frac{\cn\iota}{{N}_h^b(s,a)\vee 1}}}, H},\label{rlp:eq_8}
\end{align}
where the visitation counts ${N}_h^b(s,a)\defeq\sum_{(s_h,a_h)\in \cD}\indic{(s_h,a_h)=(s,a)}$, 
${N}^b_h(s)\defeq\sum_{a\in \cA}{N}_h^b(s,a)$, ${N}^b_{h,1}(s)\defeq\sum_{(s_h,a_h,e_h)}\indic{(s_h,e_h)=(s,1)}$,  $\iota\defeq \log \paren{{HSA}/{\delta}}$ and $C>0$ is an absolute constant.
\ENDFOR
\STATE \textbf{Output}: Estimated reward mapping $\widehat{\RR}$ defined as follows: For all $(V,A)\in\para$,
\begin{align}
[\widehat{\RR}(V,A)]_h(s,a) \defeq
-A_h(s, a)\cdot \indic{a\notin {\rm supp}\paren{\widehat{\pi}^\msf{E}_h(\cdot|s)}} + V_h(s) - [\widehat{\p}_hV_{h+1}](s,a)-b^\theta_h(s,a).
\label{rlp:eq_4}
\end{align}

\end{algorithmic}
\end{algorithm*}

\subsection{Setting}
In the offline setting, the learner does not know $(\m, \piE)$, and only has access to a dataset $\mathcal{D}=\{(s_h^k, a_h^k, e_h^k)\}_{k=1,h=1}^{K, H}$ consisting of $K$ iid trajectories without reward from $\m$, where actions are obtained by executing some \emph{behavior policy} $\pib$ in $\m$: $a_h^k\sim \pib_h(\cdot|s_h^k)$ for all $(k,h)$, and the \emph{expert feedback} $e_h^k$'s are obtained from the expert policy $\piE$ using one of the following two options:
\begin{align}
\fedb_h^k=
\begin{cases}
     a_h^{\msf{E},k} \sim \pi^{\msf{E}}_h(\cdot|s_h^k)&\textrm{in option 1}, \\
    \indic{a_h^k \in {\rm supp}\paren{\pi^{\msf{E}}_h(\cdot|s_h^k)}}&\textrm{in option 2}.\label{def_feedback}
\end{cases}
\end{align}

Option $1$, where the learner directly observes an \emph{expert action} $a_h^{\msf{E},k}$, is the commonly employed setting in the IRL literature \citep{metelli2021provably,lindner2023active,metelli2023theoretical}. In the special case where $\pib=\piE$, we can take $a_h^{\msf{E},k}\defeq a_h^k$, i.e. no need for additional expert feedback when the behavior policy coincides with the expert policy. We also allow option 2, in which $e_h^k$ indicates whether $a_h^k$ ``is an expert action'' (belongs to the support of $\piE_h(\cdot|s)$). As we will see, both options suffice for performing IRL.

Additionally, for option 1, we require the following well-posedness assumption on the expert policy $\piE$.
\begin{assumption}[Well-posedness]\label{def:well_pose}
\label{ass:well_posed_policy}
For any $\Delta\in(0,1]$, we say policy $\piE$ is $\Delta$-well-posed if
\begin{align}
\min_{(h,s,a):\piE_h(a|s)\neq 0}\piE_h(a|s)\geq \wellpi.\label{eq:def_well_pose}
\end{align}
\end{assumption}
This assumption is also made by~\citet[Assumption D.1]{metelli2023theoretical}, and is necessary for ruling out the edge case where $\piE_h(a|s)$ is positive but extremely small for some action $a\in\cA$, in which case a large number of samples is required to determine $\indic{a\in\supp(\piE_h(\cdot|s))}$.

\subsection{Algorithm}

We now present our algorithm \RLPfull{} (\RLP{}; full description in Algorithm~\ref{alg:RLPfull}) for IRL in the offline setting. \RLP{} returns an estimated reward mapping $\hat{\RR}$ given any offline dataset $\cD$. At a high level, \RLP{} consists of two main steps:
\begin{itemize}[topsep=0pt, itemsep=0pt, leftmargin=1.5em]
\item (Empirical MDP) We estimate the transition probabilities $\pp_h$ and expert policy $\piE$ by standard empirical estimates $\hat{\pp}_h$ and $\hatpiE$, as in~\eqref{rlp:eq_1} and~\eqref{rlp:eq_2}.

\item (Pessimism) We compute a bonus function $b^{\theta}_h(s,a)$ for any $\theta=(V,A)\in \para$,  $(h,s,a)\in [H]\times \cS\times \cA$ as in~\eqref{rlp:eq_8}. The final estimated reward (and thus the reward mapping)~\eqref{rlp:eq_4} is defined by the empirical version of the ground truth reward~\eqref{def:rew-map} combined with the negative bonus $-b^\theta_h(s,a)$, for every parameter $(V,A)\in\para$.

The specific design of $b^\theta_h(s,a)$ is based on Bernstein's inequality, and ensures that with high probability, for all $(h,s,a,\theta)$ simultaneously,
\begin{align*}
& b^{\theta}_h(s,a)\geq A_h(s, a)\times \big| \indic{a\notin \supp\paren{\widehat{\pi}^{\sf E}_h(\cdot | s) }} \notag \\
&  - \indic{a\notin \supp\paren{{\pi}^{\sf E}_h(\cdot | s) }} \big| + \big|\big[\big( \widehat{\p}_h-\p \big) V_{h+1}\big](s,a) \big|.
\end{align*}
Combined with the form of the ground truth reward $\RR(V,A)$ in~\eqref{def:rew-map}, a standard pessimism argument ensures the monotonicity condition $[\hat{\RR}(V,A)]_h(s,a) \le [\RR(V,A)]_h(s,a)$ for all $(h,s,a)$ and all $(V,A)$.
\end{itemize}
Therefore, in Algorithm~\ref{alg:RLPfull}, the empirical estimates ensure that the estimated reward~\eqref{rlp:eq_4} is close to the ground truth reward (over $s_h\sim\cD$ or equivalently the behavior policy $\pib$), whereas the pessimism (negative bonus) ensures the monotonicity condition, both being desired properties for IRL as discussed in Section~\ref{our_metric}.

\subsection{Theoretical guarantee}

We now state our theoretical guarantee for Algorithm~\ref{alg:RLPfull}. To measure the quality of the recovered reward mappings, we will be considering the $\dpi$ and $\DpiTheta$ metric with $\pi=\pival$ being any given \emph{evaluation policy}. We assume that $\pival$ satisfies the standard single-policy concentrability condition with respect to the behavior policy $\pib$.

\begin{assumption}[Average form single-policy concentrability]
\label{ass:off_1}
We say $\pival$ satisfies $C^\star$-single-policy concentrability with respect to $\pib$ if (with the convention $0/0=0$)
\begin{align}
\label{eq:def_concentrability}
\frac{1}{HS} \sum_{h\in [H]}\sum_{(s,a)\in \cS\times \cA}\frac{d^{\pival}_h(s,a)}{d^{\pi^{\msf{b}}}_h(s,a)}\leq C^\star.
\end{align}
\end{assumption}
Assumption~\ref{ass:off_1} is standard in the offline RL literature~\citep{jin2021pessimism,rashidinejad2021bridging,xie2021policy}, though we remark that our~\eqref{eq:def_concentrability} only requires the \emph{average} form, instead of the worst-case form made in~\citep{rashidinejad2021bridging,xie2021policy} which requires the distribution ratio to be bounded for all $(h,s,a)$.

We are now ready to present the guarantee for \RLP{} (Algorithm~\ref{alg:RLPfull}). The proof can be found in Appendix~\ref{app:proof-offline_main}. 

\begin{theorem}[Sample complexity of \RLP{}]
\label{thm:offline_main}
Let $\pival$ be any policy that satisfies $C^\star$ single-policy concentrability  (Assumption~\ref{ass:off_1}) with respect to $\pib$. Assume that $\piE$ is $\Delta$-well-posed (Assumption~\ref{def:well_pose}) if we choose option 1 in~\eqref{def_feedback}. 

Then for both options, with probability at least $1-\delta$, \RLP{} (Algorithm~\ref{alg:RLPfull}) outputs a reward mapping $\hRR$ such that
\begin{align*}
    \Dpivalt\paren{\sRR,\hRR}\leq \epsilon, \brac{\hRR(V,A)}_{h}(s,a)\leq \brac{\sRR(V,A)}_{h}(s,a)
\end{align*}
for all $(V,A)\in \para$ and $(h,s,a)\in [H]\times \cS\times \cA$, as long as the number of episodes
\begin{align*}
   K \ge \widetilde{\cO}\paren{\frac{H^4SC^\star \cns}{\epsilon^2}+\frac{H^2SC^\star \eta}{\epsilon}}.
\end{align*}
Above, $\cns\defeq \cn$, $\eta\defeq \Delta^{-1}\indic{{\rm option}~1}$, and $\tO(\cdot)$ hides ${\rm polylog}\paren{H,S,A,1/\delta}$ factors.
\end{theorem}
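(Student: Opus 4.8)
The plan is to condition on a single high-probability good event, verify monotonicity pointwise, convert the pointwise gap into the averaged metric $\Dpivalt$ via a value-difference identity, and finally bound the resulting sum of bonuses by $\eps$ using Cauchy--Schwarz and the concentrability Assumption~\ref{ass:off_1}. First I would define the good event as the intersection of four sub-events, each holding with probability $\ge 1-\delta/4$ after a union bound over $(h,s,a)$ and over an $\para$-cover at resolution $\eps/(CH)$: (i) a Bernstein bound guaranteeing $\abs{[(\hat\p_h-\p_h)V_{h+1}](s,a)}\le b^\theta_h(s,a)$ for every $(V,A)\in\para$ and $(h,s,a)$, where the square-root and $H\cn\iota/(N_h^b\vee1)$ pieces of~\eqref{rlp:eq_8} dominate the Bernstein deviation and the $\eps/H$ piece absorbs the covering discretization; (ii) concentration of $\widehat{\v}_h$ around $\v_h$ so the empirical-variance bonus is valid; (iii) a multiplicative Chernoff bound yielding $N_h^b(s,a)\ge \tfrac12 K d^{\pib}_h(s,a)$ whenever $K d^{\pib}_h(s,a)\gtrsim \iota$; and (iv) exact support recovery $\supp(\hatpiE_h(\cdot\given s))=\supp(\piE_h(\cdot\given s))$ at every $(h,s)$ with $N_h^b(s)\gtrsim \iota/\Delta$ (for option 1 via well-posedness $\piE_h(a\given s)\ge\Delta$ and the tail $(1-\Delta)^{N_h^b(s)}$; for option 2 recovery is exact as soon as $N_h^b(s,a)\ge1$).

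\textbf{Monotonicity.} On this event, set $D_h(s,a)\defeq [\sRR(V,A)]_h(s,a)-[\hRR(V,A)]_h(s,a)$. Subtracting~\eqref{rlp:eq_4} from~\eqref{def:rew-map} gives $D_h(s,a)=A_h(s,a)\indic{\mathrm{FN}_h(s,a)}+[(\hat\p_h-\p_h)V_{h+1}](s,a)+b^\theta_h(s,a)$, where $\mathrm{FN}_h(s,a)$ denotes the false-negative event $a\in\supp(\piE_h(\cdot\given s))\setminus\supp(\hatpiE_h(\cdot\given s))$. Since $\piE_h(a\given s)>0$ never yields a false positive, the indicator captures only false negatives and the first term is $\ge0$ (using $A_h\ge0$); combined with $b^\theta_h\ge\abs{[(\hat\p_h-\p_h)V_{h+1}]}$, this gives $D_h(s,a)\ge0$ for all $(V,A)$ and $(h,s,a)$, which is exactly the claimed monotonicity. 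The same identity yields the pointwise upper bound $0\le D_h(s,a)\le 2b^\theta_h(s,a)+H\indic{\mathrm{FN}_h(s,a)}$ via $A_h\le H$.

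\textbf{From pointwise to metric.} Because $V^\pi_h(\cdot;r)$ is linear in $r$, we have $V^{\pival}_h(s_h;\sRR(V,A))-V^{\pival}_h(s_h;\hRR(V,A))=\EE_{\pival}[\sum_{h'=h}^H D_{h'}(s_{h'},a_{h'})\given s_h]\ge0$ by monotonicity. Averaging over $s_h\sim d^{\pival}_h$ and taking the supremum over $h$ (attained at $h=1$ since all $D_{h'}\ge0$) gives $\Dpivalt(\sRR,\hRR)=\sup_{(V,A)\in\para}\dpival(\sRR(V,A),\hRR(V,A))\le \sum_{h,s,a} d^{\pival}_h(s,a)\big[2b^\theta_h(s,a)+H\indic{\mathrm{FN}_h(s,a)}\big]$, so it remains to bound this sum by $\eps$.

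\textbf{Bounding the sum; the main obstacle.} I would split $b^\theta_h$ into its three pieces. For the leading term, substitute $N_h^b(s,a)\gtrsim K d^{\pib}_h(s,a)$ from sub-event (iii) and apply Cauchy--Schwarz over the measure $d^{\pival}_h(s,a)$, bounding it by $\big(\sum_{h,s,a} d^{\pival}_h \tfrac{\cn\iota}{N_h^b}\big)^{1/2}\big(\sum_{h,s,a} d^{\pival}_h\,\widehat{\v}_h V_{h+1}\big)^{1/2}$; the first factor is $\lesssim HSC^\star\cn\iota/K$ by Assumption~\ref{ass:off_1} (which gives $\sum_{h,s,a} d^{\pival}_h/d^{\pib}_h\le HSC^\star$), and the second is $\lesssim H^3$ since $\widehat{\v}_h V_{h+1}\le(H-h)^2$, yielding $\lesssim\sqrt{H^4 SC^\star \cn\iota/K}\le\eps/4$ once $K\gtrsim H^4 SC^\star\cn/\eps^2$. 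The $H\cn\iota/N_h^b$ piece and the clipped ($=H$) bonus on small-count pairs (where $d^{\pib}_h\lesssim\cn\iota/K$) each contribute $\lesssim H^2SC^\star\cn\iota/K$, which is absorbed into the main term whenever $\eps\lesssim H^2$; the $\eps/H$ piece contributes $\lesssim\eps$. The step I expect to be the main obstacle is the support term under only \emph{average} concentrability: on the good event a false negative forces $N_h^b(s)\lesssim\iota/\Delta$, hence $d^{\pib}_h(s)\lesssim\iota/(\Delta K)$, so $H\sum_{h,s}d^{\pival}_h(s)\indic{N_h^b(s)\lesssim\iota/\Delta}\lesssim H\tfrac{\iota}{\Delta K}\sum_{h,s}\tfrac{d^{\pival}_h(s)}{d^{\pib}_h(s)}$, and to charge the \emph{state}-level ratio against the \emph{state-action} budget I would use $\tfrac{d^{\pival}_h(s)}{d^{\pib}_h(s)}\le\max_a\tfrac{d^{\pival}_h(s,a)}{d^{\pib}_h(s,a)}\le\sum_a\tfrac{d^{\pival}_h(s,a)}{d^{\pib}_h(s,a)}$, giving $\lesssim H^2SC^\star\eta\iota/K$ and matching the second term of the claimed bound (with $\eta=\Delta^{-1}$ for option 1; for option 2 a false negative forces $N_h^b(s,a)=0$, already covered by the clipped small-count analysis, so $\eta=0$). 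The delicate bookkeeping is ensuring that this support slack and the clipped-bonus regime land inside the stated $1/\eps$ term rather than inflating the $1/\eps^2$ term.
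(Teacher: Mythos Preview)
Your proposal is correct and matches the paper's proof architecture: the paper's Lemma~\ref{lem:concentration_1} is your good event, Lemma~\ref{lem:offine_performance_decomposition} is your reduction of $\Dpivalt$ to $\sum_{h,s,a}d^{\pival}_h(s,a)\big[2b^\theta_h(s,a)+H\indic{\mathrm{FN}_h(s,a)}\big]$, and the subsequent bound splits $b^\theta_h$ into its pieces and applies Cauchy--Schwarz against the concentrability budget exactly as you describe. The one place you take an unnecessary detour is the support term you flagged as the main obstacle: the paper thresholds the false-negative indicator at the \emph{state-action} level, $\indic{d^{\pib}_h(s,a)<C_2\eta\iota/K}$, rather than at the state level, so the bound $2H\cdot\tfrac{C_2\eta\iota}{K}\sum_{h,s,a}\tfrac{d^{\pival}_h(s,a)}{d^{\pib}_h(s,a)}\lesssim \tfrac{C^\star H^2 S\eta\iota}{K}$ follows immediately from Assumption~\ref{ass:off_1} without your mediant-inequality step.
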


To our best knowledge, Theorem~\ref{thm:offline_main} provides the first theoretical guarantee for IRL under the standard offline setting, showing that \RLP{} achieves the desired monotonicity condition and small $\DpiTheta$ distance for any evaluation policy $\pival$ that satisfies single-policy concentrability with respect to $\pib$. For small enough $\epsilon$, the sample complexity (number of episodes required) scales as $\tO(H^4SC^\star\cns/\eps^2)$, which depends on the number of states $S$, the concentrability coefficient $C^\star$, as well as the log-covering number $\cns$ which always admits the bound $\cns\le \tO(S)$ in the worst case and may be smaller. 

Apart from the $\cns$ factor, this rate resembles that of standard offline RL under single-policy concentrability~\citep{rashidinejad2021bridging,xie2021policy}. This is no coincidence, as our algorithm and proof (for both the $\Dpivalt$ bound and the monotonicity condition) can be viewed as an adaptation of the pessimism technique for all rewards $(\RR(V,A))_{(V,A)\in\para}$ simultaneously, demonstrating that IRL is ``no harder than standard RL" in this setting. We remark that the $\Delta^{-1}$ factor brought by Assumption~\ref{def:well_pose} appears only in the $\tO(\eps^{-1})$ burn-in term in the rate when the feedback $\sets{e_h^k}_{k,h}$ in~\eqref{def_feedback} comes from option 1.

\paragraph{Result for $\pival=\piE$}
In the special case where $\pival=\piE$, we establish a slightly stronger result where we can improve over Theorem~\ref{thm:offline_main} by one $H$ factor ($H^4\to H^3$) in the main term. The proof uses the specific form of our Bernstein-like bonus~\eqref{rlp:eq_8} combined with a total variance argument~\citep{azar2017minimax,zhang2020almost,xie2021policy}, and can be found in Appendix~\ref{app:proof-pi=piE}.

\begin{theorem}[Improved sample complexity for $\pival=\piE$]
\label{coro:pi=piE}
Suppose $\pival=\piE$ which achieves $C^\star$ single-policy concentrability with respect to $\pib$ (Assumption~\ref{ass:off_1}), and in addition $\sup_{(h,s,a)\in \hsa} \abs{\brac{\sRR(V,A)}_h(s,a)}\leq 1$ for all $(V,A)\in {\para}$. Then under both options in~\eqref{def_feedback}, with probability at least $1-\delta$, \RLP{} (Algorithm~\ref{alg:RLPfull}) achieves the same guarantee as in Theorem~\ref{thm:offline_main} ($\Dpivalt(\RR^\star, \hat{\RR})\le \epsilon$ and monotonicity), as long as the number of episodes
\begin{align*}
   K \ge \widetilde{\cO}\paren{\frac{H^3SC^\star \cns}{\epsilon^2}+\frac{H^2SC^\star \paren{A+H\cns}}{\epsilon}}.
\end{align*}
\vspace{-1em}
\end{theorem}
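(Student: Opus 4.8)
The plan is to inherit the monotonicity conclusion verbatim from Theorem~\ref{thm:offline_main} (its proof of $[\hRR(V,A)]_h(s,a)\le[\sRR(V,A)]_h(s,a)$ uses only that the bonus dominates the transition fluctuation and that the empirically observed support $\supp(\hatpiE_h(\cdot|s))$ is deterministically contained in $\supp(\piE_h(\cdot|s))$, neither of which invokes $\pival=\piE$ nor boundedness of $\sRR(V,A)$), and to re-derive only the $\Dpivalt$ bound with a sharper, variance-aware analysis. Fixing $\theta=(V,A)\in\para$ and writing $r=\sRR(V,A)$, $\hat r=\hRR(V,A)$, monotonicity gives $r_h-\hat r_h\ge 0$, so the absolute value in $\dpi$ drops out and a telescoping/performance-difference computation under $\piE$ yields
\[
\sup_h \E_{s_h\sim\piE}\abs{V^\piE_h(s_h;r)-V^\piE_h(s_h;\hat r)} \le \sum_{h=1}^H \E_{(s,a)\sim d^{\piE}_h}\brac{r_h(s,a)-\hat r_h(s,a)}.
\]
Expanding $r_h-\hat r_h$ via~\eqref{def:rew-map} and~\eqref{rlp:eq_4} splits the summand into three pieces: the support-mismatch term $A_h(s,a)\indic{a\in\supp(\piE_h(\cdot|s))\setminus\supp(\hatpiE_h(\cdot|s))}$ (nonnegative by the support containment above), the transition fluctuation $[(\hat\p_h-\p_h)V_{h+1}](s,a)$, and the bonus $b^\theta_h(s,a)$. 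On the good event the bonus dominates the fluctuation, so the latter two pieces contribute at most $2\sum_h\E_{d^\piE_h}[b^\theta_h]$, and it remains to control the bonus sum and the support-mismatch sum separately.

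The crucial new ingredient, and the source of the $H^4\to H^3$ improvement, is the identity $V^\piE_h(\cdot;r)=V_h$ for every $(V,A)\in\para$, obtained by telescoping~\eqref{def:rew-map} along $\piE$-trajectories (on the support the advantage indicator vanishes and $r_{h'}=V_{h'}-\p_{h'}V_{h'+1}$, with $V_{H+1}\equiv0$). Hence $V_{h+1}$ is exactly the value-to-go of $\piE$ under $r$, so the law of total variance applies and gives $\sum_{h=1}^H \E_{d^\piE_h}[\v_h V_{h+1}] = \sum_{h=1}^H\E_{d^\piE_h}[\v_h V^\piE_{h+1}(\cdot;r)] \le \var_\piE\brac{\sum_{h}r_h(s_h,a_h)}\le H^2$, where the last step uses the hypothesis $\abs{\sRR(V,A)}\le 1$ (so the return lies in $[-H,H]$). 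Plugging the dominant term of the Bernstein bonus~\eqref{rlp:eq_8} into the sum, replacing the counts $N^b_h(s,a)$ by $\asymp K\,d^{\pib}_h(s,a)$ through the standard offline count-concentration lemma, and applying Cauchy--Schwarz twice (once over $(s,a)$ for fixed $h$, once over $h$) converts the variance term into
\[
\sqrt{\tfrac{\cns\iota}{K}}\cdot\sqrt{\textstyle\sum_{h,s,a}\tfrac{d^\piE_h(s,a)}{d^{\pib}_h(s,a)}}\cdot\sqrt{\textstyle\sum_h\E_{d^\piE_h}[\v_h V_{h+1}]}\;\lesssim\; \sqrt{\tfrac{H^3 S C^\star\cns\iota}{K}},
\]
using Assumption~\ref{ass:off_1} for the middle factor and the total-variance bound for the last; requiring this to be $\le\epsilon$ produces the $H^3SC^\star\cns/\epsilon^2$ main term. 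The lower-order bonus term $H\cns\iota/N^b_h$ together with the empirical-to-population variance conversion $\hat\v_h\to\v_h$ yield the $H\cns$ contribution to the burn-in, while the $\tfrac{\epsilon}{H}(1+\cdots)$ term of~\eqref{rlp:eq_8} sums (over $H$ steps, weighted by $d^\piE_h$) to $O(\epsilon)$ plus lower order.

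For the support-mismatch sum I would bound $A_h\le H$ and control $\sum_h\E_{d^\piE_h}[\indic{a\in\supp(\piE)\setminus\supp(\hatpiE)}]$ directly, which removes the need for well-posedness (Assumption~\ref{def:well_pose}) at the price of an $A$ factor. Taking expectation over the data, the probability of querying at $(h,s)$ an expert support action never observed is $\sum_{a}\piE_h(a|s)(1-\piE_h(a|s))^{m}$ with $m\asymp K\,d^{\pib}_h(s)$; the termwise bound $\max_{p\in[0,1]}p(1-p)^m\le 1/(em)$ gives order $A/(K\,d^{\pib}_h(s))$, which after weighting by $d^\piE_h$, summing, and converting via concentrability contributes order $H^2SC^\star A/K$, i.e. the $A$ term in the burn-in. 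Finally, all estimates must hold uniformly in $\theta\in\para$: I would fix an $(\epsilon/H)$-net of the $V$-coordinates at each step (log-size $\cns$), union bound over the net to validate the good event and the variance/count concentrations, and absorb discretization error into the $\epsilon/H$ slack already built into~\eqref{rlp:eq_8}. I expect the main obstacle to be executing the law-of-total-variance step rigorously while simultaneously handling the empirical-variance and count substitutions and maintaining uniformity over $\para$, since the total-variance bound is stated for the population variance of the true value-to-go whereas the bonus uses the empirical variance of the net's $V$-coordinate; the error terms from these substitutions must be shown not to reintroduce the extra factor of $H$ that the argument is designed to save.
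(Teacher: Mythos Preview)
Your proposal is correct and follows essentially the same route as the paper: the $H^4\to H^3$ saving comes exactly from recognizing $V^{\piE}_h(\cdot;r^\theta)=V_h$ (the paper derives this implicitly in the chain leading to~\eqref{eq:total_var}) and applying the law of total variance to bound $\sum_h \E_{d^{\piE}_h}[\v_h V_{h+1}]\le \var_{\piE}\big[\sum_h r^\theta_h(s_h,a_h)\big]\le H^2$, followed by Cauchy--Schwarz (the paper uses a single Cauchy--Schwarz over $(h,s,a)$ rather than two nested ones, but the outcome is the same). The remaining bonus pieces (I.b)--(I.d) and the uniformity over $\para$ are handled verbatim from the proof of Theorem~\ref{thm:offline_main}, so your anticipated obstacle---that the empirical-to-population variance and count substitutions might reintroduce an $H$---does not arise: those terms were already lower order there.

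The one substantive deviation is your support-mismatch argument. The paper does not use the missing-mass bound $p(1-p)^m\le 1/(em)$; instead it introduces a modified concentration event (Lemma~\ref{lem:concentration_11}, claim~(iii)) with $\bar d_h(s,a)=d^{\pib}_h(s)\,\piE_h(a|s)$ in option~1 (and $d^{\pib}_h(s,a)$ in option~2), on which $\indic{a\in\supp(\piE_h)\setminus\supp(\hatpiE_h)}\le \indic{\bar d_h(s,a)<C_2\iota/K}$; then the ratio bound $\sum_{h,s,a} d^{\piE}_h(s,a)/\bar d_h(s,a)\le C^\star HSA$ yields the $C^\star H^2SA/K$ burn-in directly with high probability and covers both feedback options uniformly. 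Your calculation reaches the same order but only \emph{in expectation over the dataset} and only treats option~1; as written it does not deliver the high-probability guarantee the theorem asserts, so you would need either a concentration step for the missing mass or the paper's event-based split to close this small gap.
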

Theorem~\ref{coro:pi=piE} no longer requires well-posedness of $\piE$ (Assumption~\ref{ass:well_posed_policy}) in option 1. This happens due to the assumed concentrability between $\piE(=\pival)$ and $\pib$, which can aid the learning of $\supp\paren{\piE_h(\cdot|s)}$ even without well-posedness.

\paragraph{IRL from full expert trajectories}
An important special case of Theorem~\ref{coro:pi=piE} is when $\pib$ further coincides with $\piE$. This represents a natural and clean setting where dataset $\cD$ consists of full trajectories drawn from the expert policy $\piE$, and our goal is to recover a reward mapping with a small $\DpiEt$. This case is covered by Theorem~\ref{coro:pi=piE} by taking $C^\star=1$ and admits a sample complexity $\tO(H^3S\cns/\epsilon^2)$.

\subsection{Lower bound}
\label{sec:lower_bound_offline}

We present an information-theoretic lower bound showing that the upper bound in Theorem~\ref{thm:offline_main} is nearly tight.
\begin{theorem}[Informal version of Theorem~\ref{thm:lower for offline}]
\label{thm:lower_offline_informal}
For any $(H,S,A,\epsilon)$ and any $C^\star\ge 1$, there exists a family of offline IRL problems where $\cD$ consists of $K$ episodes, $\pival$ satisfies $C^\star$-concentrability at most $C^\star$, $\Theta=\pV\times\pA$, and $\piE$ is $\Delta$ well-posed with $\Delta=1$, such that the following holds. 

Suppose any IRL algorithm achieves $\Dpivalt(\RR^\star, \hat{\RR})\le\epsilon$ for every problem in this family with probability at least $2/3$, then we must have $K\ge {\Omega}\paren{H^2SC^\star\min\set{S,A}/{\epsilon^2}}$.
\end{theorem}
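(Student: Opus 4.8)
The plan is to prove the lower bound by reduction to a collection of hypothesis-testing problems, following the standard recipe: exhibit a family of offline IRL instances that (i) are statistically nearly indistinguishable from $K$ episodes of offline data, yet (ii) have reward mappings that no single estimate can simultaneously $\epsilon$-approximate in $\Dpivalt$. The crucial structural observation driving the construction is the near-telescoping of $\sRR$: for any instance and any $(V,A)\in\para$, evaluating the true reward $\sRR(V,A)$ along $\pival$ makes the term $V_h(s)-[\p_h V_{h+1}](s,a)$ collapse (the transition cancels against the evaluation dynamics), so $V^\pival_h(\cdot;\sRR(V,A))$ depends essentially only on the parameter $V$ and not on the transition. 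A candidate output $\hRR$, however, is a fixed reward mapping evaluated under the \emph{true} dynamics, so the requirement $\dpival(\sRR(V,A),\hRR(V,A))\le\epsilon$ \emph{for all} $(V,A)$ forces $\hRR$ to reproduce $[\p_h V_{h+1}]$ on the $\pival$-visited state-actions for every $V$ --- i.e.\ to recover the transition kernels there in $\ell_1$, not merely a scalar value as in ordinary offline policy evaluation. This extra requirement is exactly what should produce the additional $\min\{S,A\}$ factor beyond the standard offline $H^2SC^\star/\epsilon^2$ rate.

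Next I would build the hard family. I plant information in the one-step transitions from a layer of $\Theta(S)$ states visited by $\pival$, perturbing each relevant next-state distribution by $\pm\delta$ with $\delta\asymp\epsilon/H$ and setting the downstream values (through the free parameter $V_{h+1}$, ranged over by the $\sup$ in $\Dpivalt$) to magnitude $\Theta(H)$, so that a mis-estimated transition produces a value gap $\asymp H\delta\asymp\epsilon$. Two regimes realize $\min\{S,A\}$: when $A\ge S$, each state uses a single relevant action whose next-state law is a hard distribution over $\Theta(S)$ symbols (so $\ell_1$-recovery to accuracy $\epsilon/H$ costs $\gtrsim SH^2/\epsilon^2$ samples per state); when $A<S$, each state spreads $\pival$ uniformly over its $A$ actions, each routed to a disjoint pair of sinks carrying an independent $\pm\delta$ bias bit, so there are $\Theta(A)$ independent bits per state and the $\sup$ over $V$ aligns their signs to prevent $1/A$ dilution. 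In both cases the construction embeds $\Theta(S\min\{S,A\})$ statistically-independent hard coordinates. I take $\pival$ supported on the expert actions (so well-posedness holds with $\wellpi=1$ and the advantage term is inert), and let $\pib$ under-sample the relevant state-actions by a factor $\asymp C^\star$, chosen so that the average-form concentrability of Assumption~\ref{ass:off_1} is met with equality at the target level.

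With the family fixed, the argument is information-theoretic. Distinguishing a $\pm\delta$ perturbation at a state-action visited $n=\Theta(Kd^{\pib}_h(s,a))$ times costs KL $\asymp n\delta^2$, which must be $\gtrsim 1$ for reliable testing; since $\epsilon$-correctness in $\Dpivalt$ forces resolving a constant fraction of all $\Theta(S\min\{S,A\})$ coordinates --- by the reduction of the first paragraph, a $\hRR$ that is $\epsilon$-good for instance $\nu$ evaluates, under a neighbor $\nu'$ differing in one coordinate, to a value off by $\gtrsim H\delta\gtrsim\epsilon$ from $\sRR_{\nu'}$ --- I would aggregate over coordinates using Assouad's lemma (or a local Fano bound / Paninski-type simplex argument in the $\ell_1$-distribution regime). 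Requiring the resulting minimax $\Dpivalt$-risk to fall below $\epsilon$ under the concentrability budget and the $\delta\asymp\epsilon/H$ calibration then yields $K\ge\Omega(H^2SC^\star\min\{S,A\}/\epsilon^2)$.

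The main obstacle is the interaction between the $\sup_{(V,A)\in\para}$ in the metric, the averaging over $s_h\sim\pival$ inside $\dpival$, and the fact that the true dynamics (hence the MDP defining the metric) themselves vary across the family. I must verify that a single $(V,A)$ can simultaneously highlight enough hard coordinates so that their errors accumulate additively rather than average out to an $O(1/S)$ quantity --- this is where disjoint sink blocks and sign-aligned choices of $V_{h+1}$ are essential --- while keeping the $\Theta(H)$-sized downstream values legal members of $\pV$. I also need to control the cross-instance effect carefully: because the perturbations are only $O(\delta)=O(\epsilon/H)$, evaluating a fixed $\hRR$ under $\m$ for two neighboring instances shifts values by $O(\epsilon)$, so the separation constant and the KL calibration must be chosen so the $\gtrsim\epsilon$ gap survives with room to spare. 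Handling this robustly, and cleanly unifying the $A\ge S$ and $A<S$ regimes into a single family, is the most delicate part.
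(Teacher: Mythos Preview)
Your high-level plan---perturb transitions, use the free $V$ parameter to magnify the perturbation into an $\Theta(H)$-sized value gap, then apply an information-theoretic lower bound---matches the paper's. But the construction you outline differs in an important way, and the paper's route sidesteps exactly the accumulation obstacle you flag in your last paragraph.

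The paper does \emph{not} spread $\pival$ over $\Theta(S)$ hard states. Instead it concentrates $\pival$ deterministically on a single hard state-action $(s_{i^\star},a_1)$ at one step, whose outgoing transition is a perturbed near-uniform law over $S$ absorbing sinks indexed by a vector $w\in\{\pm1\}^S$. The $S$ factor then comes from Fano over a packing $\overline{\cW}\subset\{\pm1\}^S$ of size $2^{\Omega(S)}$ with pairwise Hamming distance $\Omega(S)$ (so distinguishing two members costs $\Omega(S/(\epsilon')^2)$ samples at this one state-action, with $\epsilon'\asymp\epsilon/H$). The separation lemma chooses the bad $(V^{\mathrm{bad}},A^{\mathrm{bad}})$ \emph{tailored to the pair} $(w,v)$ being compared---setting $V^{\mathrm{bad}}_{H+2}(\bar s_i)\propto (w_i-v_i)$---rather than a single $(V,A)$ that works for all coordinates at once. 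The $\min\{S,A\}$ factor does not require a second regime: it comes from making $\pib$ first branch uniformly over $K=\min\{S,A\}$ arms (each needing a distinct state and a distinct action, hence the $\min$) before reaching $s_{i^\star}$, so that $d^{\pib}$ at the hard pair is $\frac{1}{C^\star K}$ while $d^{\pival}$ there is $1$. Crucially, this single extreme ratio $C^\star K$ still yields \emph{average} concentrability $O(C^\star)$, because the sum in Assumption~\ref{ass:off_1} is normalized by $HS$ and only $O(1)$ entries are large.

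What this buys: since $\pival$ hits the hard pair with probability $1$, there is no $1/S$ dilution to undo, and the cross-instance comparison (your ``metric varies with the instance'' worry) reduces to a short chain of triangle inequalities at that one point (the paper's Lemma~\ref{lem_low_off_1}). Your route of spreading $\pival$ and re-aligning via $\sup_{(V,A)}$ may also be workable---the disjoint-sink idea for $A<S$ is reasonable---but it is strictly more work: you must show the sup recovers the full $\Theta(S\min\{S,A\})$ signal despite the averaging in $d^{\pival}$, and do so uniformly across Assouad neighbors, whereas the paper only needs pairwise Fano. If you continue with your construction, the cleanest fix is to borrow the paper's trick: make $\pival$ deterministic onto one hard state-action and push both the $C^\star$ and the $\min\{S,A\}$ factors into $\pib$'s branching.
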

For $\Theta=\pV\times\pA$, the upper bound in Theorem~\ref{thm:offline_main} scales as $\tO(H^4S^2C^\star/\epsilon^2)$. Ignoring $H$ and polylogarithmic factors, Theorem~\ref{thm:lower_offline_informal} assert that this rate is tight for $S\le A$ (so that $\min\sets{S,A}=S$). The form of this $\min\sets{S,A}$ factor in Theorem~\ref{thm:lower_offline_informal} is due to certain technicalities in the hard instance construction; whether this can be improved to an $S$ factor would be an interesting question for future work.

\vspace{-.5em}
\section{IRL in the online setting}\label{sec:online learning}
\vspace{-.5em}

\begin{algorithm*}[t]
\caption{\RLEfull}\label{alg:RLEF}
\small
\begin{algorithmic}[1]
  \STATE \textbf{Input:}
  Parameter set $\para\subseteq \pV\times\pA$, confidence level $\delta>0$, error tolerance $\eps>0$, $N,K\in\mathbb{Z}_{\ge 0}$, threshold $\xi=c_{\xi}H^3S^3A^3\log \frac{10HSA}{\delta}$.
  \STATE Call Algorithm~\ref{alg:ex} to play in the environment for $NH$ episodes and obtain an explorative behavior policy $\pi^{\msf{b}}$.\label{line:rle_1}
  \STATE Collect a dataset $\cD=\{(s_h^k, a_h^k, e_h^k)\}_{k=1,h=1}^{K, H}$ by executing $\pi^{\msf{b}}$ in $\m$.\label{line:rle_2}
  \STATE Subsampling: subsample $\cD$ to obtain $\cD^{\textrm{trim}}$, such that for each $(h, s, a) \in [H]\times \s \times \a$, $\mathcal{D}^{\textrm{trim}}$ contains $\min\set{\widehat{N}_h^{\msf{b}}(s,a),N_h(s,a)}$ sample transitions randomly drawn from $\mathcal{D}$, where $\widehat{N}_h^\msf{b}(s,a)$ and  $N_h(s,a)$ are defined by%
  \vspace{-.5em}
  \begin{align}
  N_h(s,a)\defeq\sum_{k=1}^K \indic{(s^k_h,a^k_h)=(s,a)}\quad \widehat{N}_h^{\mathsf{b}}(s, a) \defeq\min\brac{\frac{K}{4}, \mathop{\mathbb{E}}_{\pi\sim \mu^{\msf{b}}}[\widehat{d}_{h}^{\pi}(s,a)] - \frac{K\xi}{8N} - 3\log\frac{10HSA}{\delta}}_+,\label{eq:def:hatN}
 \end{align}
  where $\widehat{d}_{h}^{\pi}(s,a)$ is specified in Algorithm~\ref{alg:ex}.\label{line:rle_3}
  \STATE Call \RLP{} (Algorithm~\ref{alg:RLPfull}) on dataset $\cD^{\textrm{trim}}$ with parameters $(\para, \delta/10, \epsilon/10)$ to compute the recovered reward mapping ${\hRR}$. \label{line:rle_4}
  \STATE \textbf{Output:} Estimated reward mapping ${\hRR}$.  
\end{algorithmic}
\end{algorithm*}

\subsection{Setting}

We now consider IRL in a natural online learning setting (also known as ``active exploration IRL"~\citep{lindner2023active}). In each episode, the learner interacts with the IRL problem $(\m, \piE)$ as follows: At each $h\in[H]$, the learner receives the state $s_h\in\cS$ and chooses their action $a_h\in\cA$ from an arbitrary policy. The environment then provides the expert feedback $e_h$ as in~\eqref{def_feedback} (from one of the two options) and transits to the next state $s_{h+1}\sim \pp_h(\cdot|s_h, a_h)$. This setting shares the same expert feedback model ($e_h$) with the offline setting, and differs in that the learner can interact with the environment, instead of learning from a fixed dataset pre-collected by some fixed behavior policy.

\subsection{Algorithm and guarantee}

Our algorithm~\RLEfull{} (\RLE{}; Algorithm~\ref{alg:RLEF}) performs IRL in the online setting by a simple reduction to reward-free learning and the \RLP{} algorithm. \RLE{} consists of two main steps: (1) Call a reward-free exploration subroutine (Algorithm~\ref{alg:ex}, building on the algorithm of~\citet{li2023minimaxoptimal}) to explore the environment $\m$ and obtain an explorative behavior policy $\pib$ (Line~\ref{line:rle_1}); (2) Collect $K$ episodes of data $\cD$ using $\pib$, subsample the data, and call the \RLP{} algorithm on the subsampled data $\cD^{\rm trim}$ to obtain the estimated reward mapping $\hat{\RR}$.

We now present the theoretical guarantee of \RLE{}. The proof can be found in Appendix~\ref{app:proof-online_main}.
\begin{theorem}[Sample complexity of \RLE]\label{thm:online_main}
Suppose $\piE$ is $\Delta$-well-posed (Assumption~\ref{def:well_pose}) when we receive feedback \textrm{in option $1$} of~\eqref{def_feedback}. Then for the online setting, for sufficiently small $\epsilon\le H^{-9}(SA)^{-6}$, with probability at least $1-\delta$, \RLE{} (Algorithm~\ref{alg:RLEF}) with $N=\tO(\sqrt{H^9S^7A^7K})$ outputs a reward mapping $\hRR$ such that
\begin{align*}
    \DallTheta\paren{\sRR,\hRR}\leq \epsilon, \brac{\hRR(V,A)}_{h}(s,a)\leq \brac{\sRR(V,A)}_{h}(s,a)
\end{align*}
for all $(V,A)\in \para$ and $(h,s,a)\in [H]\times \cS\times \cA$, as long as the total the number of episodes
\begin{align*}
   K + NH \ge \widetilde{\cO}\paren{\frac{H^4SA \cns}{\epsilon^2}+\frac{H^2SA \eta}{\epsilon}}.
\end{align*}
Above, $\cns\defeq \cn$, $\eta\defeq \Delta^{-1}\indic{{\rm option}~1}$, and $\tO(\cdot)$ hides ${\rm polylog}\paren{H,S,A,1/\delta}$ factors.
\end{theorem}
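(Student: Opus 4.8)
The plan is to prove Theorem~\ref{thm:online_main} by reducing \RLE{} to the offline guarantee of Theorem~\ref{thm:offline_main}. The only structural difference between the two algorithms is the data source: \RLE{} first runs reward-free exploration (Algorithm~\ref{alg:ex}) to obtain an explorative behavior policy $\pib$ together with occupancy estimates, then collects fresh trajectories and subsamples them into $\cD^{\textrm{trim}}$ before feeding it to \RLP{} in Line~\ref{line:rle_4}. The key point I would establish is that reward-free exploration buys \emph{uniform coverage}: the subsampled dataset behaves like an offline dataset whose behavior policy satisfies the average-form single-policy concentrability (Assumption~\ref{ass:off_1}) with $C^\star=\tO(A)$ \emph{simultaneously for every evaluation policy}. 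Granting this, Theorem~\ref{thm:offline_main} applied with $C^\star=\tO(A)$ yields $\Dpivalt(\sRR,\hRR)\le\eps$ for each fixed $\pival$ and the pointwise monotonicity $[\hRR(V,A)]_h(s,a)\le[\sRR(V,A)]_h(s,a)$. Since both the underlying high-probability event and the output $\hRR$ are independent of $\pival$, and the concentrability bound holds for every $\pi$, taking the supremum over $\pival$ upgrades the $\Dpivalt$ bound to $\DallTheta(\sRR,\hRR)\le\eps$, exactly the claimed guarantee.

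First I would invoke the reward-free exploration guarantee for Algorithm~\ref{alg:ex} (built on~\citet{li2023minimaxoptimal}). With $N=\tO(\sqrt{H^9S^7A^7K})$ exploration episodes, I would show that on a high-probability event the occupancy estimates $\widehat{d}_h^\pi$ are accurate, that $\widehat{N}_h^{\mathsf{b}}(s,a)\le N_h(s,a)$ so the subsampling in Line~\ref{line:rle_3} is feasible, and that the trimmed counts $\widehat{N}_h^{\mathsf{b}}(s,a)$ of~\eqref{eq:def:hatN} attain the near-optimal (minimax) coverage of the reachable state-action space once the burn-in terms $K\xi/(8N)$ and $3\log(10HSA/\delta)$ are absorbed. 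The subsampling is the standard device that converts adaptively collected exploration data into a dataset statistically equivalent to iid draws from $\pib$ but with per-$(s,a)$ counts deterministically lower bounded by $\widehat{N}_h^{\mathsf{b}}(s,a)$; this is precisely the form required by the Bernstein bonus~\eqref{rlp:eq_8} and the analysis of \RLP{}.

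Next I would translate the reward-free guarantee into the required concentrability. The minimax allocation underlying Algorithm~\ref{alg:ex}, combined with the subsampling, is designed so that the effective occupancy $d^{\textrm{eff}}_h(s,a)\defeq\widehat{N}_h^{\mathsf{b}}(s,a)/K$ satisfies the average-form bound $\frac{1}{HS}\sum_h\sum_{s,a}d_h^\pi(s,a)/d^{\textrm{eff}}_h(s,a)\le\tO(A)$ for \emph{every} policy $\pi$ simultaneously, i.e.\ Assumption~\ref{ass:off_1} with $C^\star=\tO(A)$ and $\pib$ replaced by the effective behavior of $\cD^{\textrm{trim}}$. Plugging $C^\star=\tO(A)$ into Theorem~\ref{thm:offline_main} gives the main term $K\gtrsim H^4SA\,\cns/\eps^2$ together with the burn-in $H^2SA\,\eta/\eps$ with $\eta=\Delta^{-1}\indic{\textrm{option }1}$, which is inherited verbatim from the offline result (this is where Assumption~\ref{def:well_pose} enters under option~1). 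Finally, the smallness condition $\eps\le H^{-9}(SA)^{-6}$ ensures that the exploration cost $NH=\tO(\sqrt{H^{11}S^7A^7K})$ is dominated by $K$, so the total budget is $K+NH=\tO(H^4SA\,\cns/\eps^2+H^2SA\,\eta/\eps)$.

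I expect the main obstacle to be the coverage step: establishing that Algorithm~\ref{alg:ex} attains the minimax coverage lower bound on $\widehat{N}_h^{\mathsf{b}}$ with the stated $N$, which requires carefully propagating the estimation error of $\widehat{d}_h^\pi$ into~\eqref{eq:def:hatN} and checking that the burn-in terms are negligible in the small-$\eps$ regime---this is the part genuinely imported from the reward-free literature rather than from \RLP{}. A secondary but important subtlety is that the supremum over \emph{all} (infinitely many) policies in $\DallTheta$ must be controlled without a union bound over policies: this works because the coverage bound holds pointwise in $(h,s,a)$, so on a single good event the \RLP{} analysis yields a bonus-weighted value-error bound $\dpi(\sRR(V,A),\hRR(V,A))\lesssim\sum_h\E_\pi[b_h^\theta]$ that is uniform over $\pi$ and $(V,A)$, and the uniform concentrability bounds its supremum over $\pi$ in one stroke.
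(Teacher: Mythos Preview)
Your high-level plan---reward-free exploration buys coverage, then run \RLP{}---is the same as the paper's, but the reduction you propose has a genuine gap at exactly the step you flag as ``the main obstacle.'' You want to verify Assumption~\ref{ass:off_1} with $C^\star=\tO(A)$ for the effective occupancy $d^{\textrm{eff}}_h(s,a)=\widehat{N}^{\mathsf b}_h(s,a)/K$ and then invoke Theorem~\ref{thm:offline_main} as a black box. This does not go through. By~\eqref{eq:def:hatN}, $\widehat{N}^{\mathsf b}_h(s,a)$ subtracts the burn-in $K\xi/(8N)+3\log(10HSA/\delta)$ and then truncates to zero, so $d^{\textrm{eff}}_h(s,a)=0$ on every $(s,a)$ whose estimated behavior occupancy is below the threshold---yet such $(s,a)$ can have $d^\pi_h(s,a)>0$ for some $\pi$, making the ratio in~\eqref{eq:def_concentrability} infinite. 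Replacing $d^{\textrm{eff}}$ by a regularized version such as $\E_{\pi'\sim\mu^{\mathsf b}}[\widehat d^{\pi'}_h(s,a)]+\tfrac{1}{KH}$ does not save the argument either: the reward-free guarantee~\eqref{eq:stopping_rule} controls $\sum_{h,s,a}\widehat d^\pi_h/(\E[\widehat d^{\pi'}_h]+\tfrac{1}{KH})$, but to pass from $\widehat d^\pi_h$ to $d^\pi_h$ one incurs the additive error $2e^\pi_h(s,a)+\xi/(4N)$ from Lemma~\ref{lemma:occupancy}, and dividing these by a denominator as small as $1/(KH)$ blows the sum up by a factor of $K$.

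The paper's proof avoids both issues by not attempting to establish Assumption~\ref{ass:off_1} at all. Instead it re-opens the \RLP{} analysis and splits the state-action space into a well-covered set $\cI_h=\{(s,a):\E_{\pi'\sim\mu^{\mathsf b}}[\widehat d^{\pi'}_h(s,a)]\ge \xi/N+\tO(\eta\iota/K)\}$ and its complement. On $\cI_h^{\mathrm c}$ the contribution to $d^\pi(r^\theta,\widehat r^\theta)$ is bounded directly by showing $d^\pi_h(s,a)$ itself is small there (via Lemma~\ref{lemma:occupancy} and the definition of $\cI_h$), which absorbs the additive error $e^\pi_h+\xi/N$ harmlessly. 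On $\cI_h$ one has $\widehat N^{\mathsf b}_h(s,a)\gtrsim K\,\E[\widehat d^{\pi'}_h]$, so the bonus-weighted sum is controlled by~\eqref{eq:stopping_rule} on the \emph{hatted} occupancies, never needing the true-occupancy ratio. Your last paragraph correctly notes that the final bound is a bonus-weighted sum uniform in $\pi$; what is missing is precisely this $\cI_h/\cI_h^{\mathrm c}$ split, without which the coverage-to-concentrability translation fails.
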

For small enough $\epsilon$, \RLE{} requires $\tO(H^4SA\cns/\eps^2)$ episdoes for finding $\RR$ with $\DallTheta(\RR^\star, \hat{\RR})\le\epsilon$.
Compared with the offline setting (Theorem~\ref{thm:offline_main}), the main differences here are that the metric is stronger ($\DallTheta$ versus $\Dpivalt$ therein), and that the concentrability coefficient $C^\star$ in the sample complexity is replaced with the number of actions $A$. This is because using online interaction, our reward-free exploration subroutine (Algorithm~\ref{alg:ex}) can find a policy $\pib$ that achieves a form of ``single-policy concentrability" $A$ with respect to any policy $\pi$; see~\eqref{eq:stopping_rule}.

To our best knowledge, the only existing work that studies IRL in the same online setting is \citet{lindner2023active}, who also achieve a sample complexity\footnote{Extracted from the proof of~\citet[Theorem 8]{lindner2023active} and taking into account the uniform convergence over $\pV\times\pA$ and dependence on $\eta=\Delta^{-1}\indic{{\rm option}~1}$; cf. Appendix~\ref{sub:lindner}.} of $\tO(H^4S^2A/\epsilon^2+H^2SA\eta/\epsilon)$ (for $\para=\pV\times\pA$) in their performance metric $D^L$ (cf.~\eqref{eqn:metric_lindner}). However, our metric $\DallTheta$ is stronger than their $D^L$ and avoids certain indistinguishability issues of theirs, as we have shown in Theorem~\ref{theorem:dl}.

\subsection{Lower bound}
\label{sec:lower_bound_online}

We also provide a lower bound for IRL in the online setting in the $\DallTheta$ metric. The rate of the lower bound is similar to Theorem~\ref{thm:lower_offline_informal}, and ensures that the rate in Theorem~\ref{thm:online_main} is tight up to $H$ and polylogarithmic factors when $S\le A$.

\begin{theorem}[Informal version of Theorem~\ref{thm:lower_bound_online}]
For any $(H,S,A,\epsilon)$, there exists a family of online IRL problems where $\Theta=\pV\times\pA$, and $\piE$ is $\Delta$ well-posed with $\Delta=1$, such that the following holds. Suppose any IRL algorithm achieves $\DallTheta(\RR^\star, \hat{\RR})\le\epsilon$ for every problem in this family with probability at least $2/3$, then we must have $K\ge {\Omega}\paren{H^3SA\min\set{S,A}/{\epsilon^2}}$.
\end{theorem}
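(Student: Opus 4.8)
The plan is an information-theoretic reduction. I will exhibit a family of online IRL instances $\set{(\m_\zeta, \piE)}_{\zeta}$, indexed by a sign vector $\zeta\in\sets{\pm1}^d$ with $d\asymp HSA\min\set{S,A}$, that share the same (deterministic, hence trivially $1$-well-posed) expert policy $\piE$ and have $\para=\pV\times\pA$, but whose transition kernels differ so that (i) the induced reward mappings are pairwise far in $\DallTheta$ and (ii) the instances are statistically hard to tell apart from the learner's interaction data. Concretely I use a layered \MDPR{} in which, at each relevant triple $(h,s,a)$, the next-state distribution $\pp_h(\cdot\given s,a)$ is a fixed base distribution tilted by $\pm\delta$ along $\min\set{S,A}$ orthogonal coordinate directions, the tilt signs being the corresponding entries of $\zeta$; a ``chain/waiting'' gadget is prepended so that any policy can steer into a given informative triple only with probability $\Theta(1/H)$ per episode, which is the source of the sharp $H$-dependence. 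Since $\piE$ is identical across the family, the support terms $\indic{a\notin\supp(\piE_h(\cdot\given s))}$ in the reward mapping~\eqref{def:rew-map} cancel between any two instances, so the only instance-dependent part of $\sRR(V,A)$ is the transition term $[\pp_h V_{h+1}](s,a)$.

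The first step is a separation lemma. For two sign vectors $\zeta\ne\zeta'$ and a common parameter $(V,A)$, the reward mappings differ at $(h,s,a)$ by exactly $[(\pp_h^{\zeta'}-\pp_h^{\zeta})V_{h+1}](s,a)$. Choosing $V_{h+1}\in\pV_{h+1}$ to be a large-magnitude value function (range $\asymp H$) aligned with the tilt directions where $\zeta,\zeta'$ disagree, and choosing $\pi$ to steer into the corresponding triples, I make $\dpi$---which accumulates the per-step reward gaps along $\pi$---of order $\delta$ times the value range times the reaching probability. Calibrating $\delta$ so that even a single coordinate flip changes the true reward mapping by more than $2\epsilon$ in $\DallTheta$, the triangle inequality for $\DallTheta$ (Definition~\ref{def:metric_rm}) shows that no single output $\hRR$ can be $\epsilon$-accurate for two instances differing in that coordinate. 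Because $\DallTheta$ is a supremum, accuracy $\le\epsilon$ in fact forces the learner to resolve \emph{every} coordinate correctly, reducing IRL recovery to simultaneously testing all $d$ signs of $\zeta$.

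The second step lower-bounds the samples needed for this testing problem via a coordinate-wise two-point argument (Le Cam / Assouad) over the $d$ coordinates. For a single coordinate flip the two instances differ only in one tilted transition, so---coupling the learner's (adaptive) action choices across the two instances---the Kullback--Leibler divergence between the laws of the observations (visited transitions together with the expert feedback) decomposes by the chain rule into per-visit terms of order $\delta^2$, summing to at most $O\paren{\delta^2\,\E[N_h(s,a)]}$ with $N_h(s,a)$ the number of visits. The decisive online ingredient is the per-episode budget: in each episode the learner collects exactly one transition per layer, so $\sum_{s,a}\E[N_h(s,a)]=K$ for every $h$ and hence $\sum_{h,s,a}\E[N_h(s,a)]=KH$ under \emph{any} exploration strategy; the $\Theta(1/H)$ reaching probability then caps the \emph{informative} visits per coordinate. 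Averaging the per-coordinate testing error over all $d\asymp HSA\min\set{S,A}$ coordinates and feeding in these budget constraints, the lower bound forces the learner to err on some instance unless $K$ exceeds $\Omega\paren{H^3 SA\min\set{S,A}/\epsilon^2}$ after substituting the calibrated value of $\delta$.

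The crux of the argument, and its main difficulty relative to the offline bound (Theorem~\ref{thm:lower_offline_informal}, whose fixed behavior policy pins down the sampling distribution), is handling the \emph{adaptivity} of online exploration: the learner may concentrate its queries arbitrarily, so the proof must couple the two hypotheses along identical action sequences, control the observation-law KL purely through the transition discrepancy, and then close the argument with the global episodic budget rather than a per-triple sample count---this is also where the extra factor of $H$ over the offline rate enters, through the $\Theta(1/H)$-diluted reaching probability. A secondary technical point is embedding $\min\set{S,A}$ independent tilt directions at each triple while keeping every $\pp_h^\zeta(\cdot\given s,a)$ a valid distribution, preserving the $\DallTheta$ separation simultaneously across all coordinates, and keeping $\piE$ exactly $1$-well-posed; the $\min\set{S,A}$ (rather than $S$) ceiling is the technical artifact of this embedding, mirroring the offline construction, and improving it to $S$ remains open.
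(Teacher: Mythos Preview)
Your plan matches the paper's proof in overall shape: both build a family of \MDPR{}s by tilting next-state distributions with sign vectors indexed over $[H]\times[K]\times[A]$ (with $K=\min\{S,A\}$), use a waiting state $s_{\mathrm{start}}$ so the learner can choose which layer $h$ to ``exit'' at but can hit at most one informative triple per episode, keep $\piE$ deterministic (hence $\Delta=1$), and then convert $\eps$-recovery in $\DallTheta$ into an identification problem bounded via information theory. The one structural difference is the tool: the paper applies Fano's inequality at each slot $a\in\mathcal I$ using a Gilbert--Varshamov packing $\overline{\mathcal W}\subset\{\pm1\}^S$ (so that any two tilts differ in $\Omega(S)$ coordinates and $\log|\overline{\mathcal W}|\asymp S$ supplies the $S$ factor), whereas you propose coordinate-wise Le Cam/Assouad over all $d\asymp HSA\min\{S,A\}$ signs. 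Either route can be made to work, but the packing is what lets the paper keep the sum-to-zero constraint on the tilt vectors (so each $\pp_h^{(\mathbf w)}$ is a valid distribution) without losing the $S$ factor; with pure single-coordinate flips you will need the pairing trick you allude to.

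The one step that is genuinely more delicate than your sketch suggests is the separation lemma. You write that ``the triangle inequality for $\DallTheta$'' shows $\hat{\RR}$ cannot be $\eps$-close to two different instances. But $\DallTheta$ is evaluated \emph{in the ground-truth \MDPR{}}---$\DallTheta(\RR^{(\mathbf w)},\hat\RR)$ uses visitation distributions and value functions under $\pp^{(\mathbf w)}$, while $\DallTheta(\RR^{(\mathbf v)},\hat\RR)$ uses those under $\pp^{(\mathbf v)}$---so there is no common metric in which to apply a triangle inequality directly. The paper's Lemma~G.2 resolves this by (i) fixing a bad parameter $(V^{\mathsf{bad}},A^{\mathsf{bad}})$, (ii) invoking Lemma~\ref{lemma:metric_1} to re-express the learner's reward $r=\hat\RR(V^{\mathsf{bad}},A^{\mathsf{bad}})$ via some $(V,A)$ and a greedy policy in $\m_{\mathbf w}$, and (iii) constructing two explicit test policies to compare value functions across the two \MDPR{}s, arriving at an \emph{asymmetric} bound $7\DallTheta(\RR^{(\mathbf w)},\hat\RR;\pp^{(\mathbf w)})+\DallTheta(\RR^{(\mathbf v)},\hat\RR;\pp^{(\mathbf v)})\ge H\eps'/16$ rather than a clean $1+1$ triangle inequality. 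Your sketch can be repaired by inserting a change-of-measure step (the two transitions differ by $O(\eps')$, so the induced $d^\pi$ metrics agree up to lower order), but this is precisely the work the paper's lemma does and should not be elided.
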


\vspace{-.5em}
\section{Transfer learning}\label{Sec:transfer learning}
As a further application, we consider a transfer learning setting, where rewards learned in a source \MDPR{} are transferred to a target \MDPR{} (possibly different from the source \MDPR). Inspired by the single-policy concentrability assumption, we define two concepts called \emph{weak-transferability} and \emph{transferability} (Definition~\ref{def:week-Transferability} \&~\ref{def:Transferability}) that measure the similarity between two \MDPR{}'s.

We show that when the target \MDPR{} exhibits a small {week-transferability} ({transferability}) with respect to the source \MDPR{}, our algorithms \RLP{} and \RLE{} can perform IRL with sample complexity polynomial in these transferability coefficients and other problem parameters (Theorem~\ref{thm:trans_offline} \&~\ref{thm:trans_online}), and provide guarantees for performing RL algorithms with the learned rewards in the target environments (Corollary~\ref{coro:perform_RL_offline} \&~\ref{coro:performing RL online}). We defer the detailed setups and results to Appendix~\ref{appendix:proof for transfer learning}.

\vspace{-.5em}
\section{Conclusion}
\label{sec:conclusion}
\vspace{-.5em}

This paper designs the first provably sample-efficient algorithm for inverse reinforcement learning (IRL) in the offline setting. Our algorithms and analyses seamlessly adapt the pessimism principle in standard offline RL, and we also extend it to an online setting by a simple reduction aided by reward-free exploration. We believe our work opens up many important questions, such as generalization to function approximation settings and empirical verifications.

 \bibliographystyle{ims}
 \bibliography{reference}

 \newpage 

 \appendix

\section{Additional related work}
\label{app:additional-related}

\paragraph{Imitation learning}
A closely related field to IRL is Imitation Learning, which focuses on learning policies from demonstrations, in contrast to IRL's emphasis on learning rewards from expert demonstrations \citep{bain1995framework, abbeel2004apprenticeship, ratliff2006maximum, ziebart2008maximum, pan2017agile, finn2016guided}. Imitation learning has been extensively studied in the active setting \citep{ross2011reduction, ross2014reinforcement, sun2017deeply}, and theoretical analyses for Imitation Learning have been provided by \citet{rajaraman2020toward, xu2020error,chang2021mitigating}. More recently, the concept of Representation Learning for Imitation Learning has gained considerable attention \citep{arora2020provable, nachum2021provable}.
While Imitation learning can be implemented by IRL \citep{abbeel2004apprenticeship, ratliff2006maximum, ziebart2008maximum}, it is important to note that IRL has wider capabilities than Imitation Learning since the rewards learned through IRL can be transferred across different environments \citep{levine2011nonlinear, fu2017learning}.

\paragraph{Reinforcement learning from human feedback}
Reinforcement Learning from Human Feedback (RLHF) bears a close relation to IRL, particularly because the process of learning rewards is a crucial aspect of both approaches \citep{zhu2023principled, zhu2023fine, wang2023rlhf, zhan2023provable}. RLHF has been successfully applied in various domains, including robotics \citep{jain2013learning, sadigh2017active, ding2023learning} and game playing \citep{ibarz2018reward}. Recently, RLHF has attracted considerable attention due to its remarkable capability to integrate human knowledge with large language models \citep{ouyang2022training, openai2023gpt4}. Furthermore, the theoretical foundations of RLHF have been extensively developed in both tabular and function approximation settings \citep{zhan2023provable, xu2020preference, pacchiano2021dueling, novoseller2020dueling, zhu2023principled, wang2023rlhf}.

\section{Technical tools}
\begin{lemma}[\citet{xie2021policy}]
  \label{lemma:binomial-concentration}
  Suppose $N\sim\Bin(n, p)$ where $n\ge 1$ and $p\in[0,1]$. Then with probability at least $1-\delta$, we have
  \begin{align*}
    \frac{p}{N\vee 1} \le \frac{8\log(1/\delta)}{n}.
  \end{align*}
\end{lemma}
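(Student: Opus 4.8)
The plan is to reduce the claimed bound to a standard lower-deviation estimate for the Binomial and then dispatch it with a multiplicative Chernoff bound after a case split. First I would rewrite the target inequality. Writing $\mu \defeq \mathbb{E}[N] = np$ and using $n \ge 1$ together with $\delta < 1$ (so that $\log(1/\delta) > 0$), the inequality $\frac{p}{N\vee 1} \le \frac{8\log(1/\delta)}{n}$ is equivalent, after multiplying through by $n(N\vee 1) > 0$, to the lower bound $N \vee 1 \ge \frac{\mu}{8\log(1/\delta)}$. Thus it suffices to show that this event holds with probability at least $1-\delta$.

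Next I would split into two regimes according to the size of the mean $\mu$ relative to $8\log(1/\delta)$. In the small-mean regime $\mu \le 8\log(1/\delta)$, the right-hand threshold satisfies $\frac{\mu}{8\log(1/\delta)} \le 1 \le N\vee 1$, so the event holds deterministically and nothing probabilistic is needed. In the large-mean regime $\mu > 8\log(1/\delta)$, I would invoke the multiplicative Chernoff lower-tail bound for a Binomial, $\mathbb{P}\paren{N \le (1-\gamma)\mu} \le \exp(-\gamma^2\mu/2)$ valid for $\gamma\in[0,1]$, with the choice $\gamma = 1/2$. This yields $\mathbb{P}(N < \mu/2) \le \exp(-\mu/8) \le \delta$, where the last step uses exactly the defining inequality $\mu > 8\log(1/\delta)$ of this regime. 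On the complementary event, which has probability at least $1-\delta$, we have $N \ge \mu/2$, hence $N\vee 1 \ge \mu/2 \ge \frac{\mu}{8\log(1/\delta)}$, the final comparison holding whenever $\log(1/\delta)\ge 1/4$, which covers the small-$\delta$ confidence regime used throughout the paper. Combining the two regimes gives the claim.

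The argument is entirely routine, so the ``main obstacle'' is really only bookkeeping, and there are two points deserving care. The first is aligning the constant: one must pick the Chernoff slack $\gamma$ (here $1/2$) so that the resulting exponent $\mu/8$ dominates $\log(1/\delta)$ precisely when the stated constant $8$ appears, and so that $\mu/2$ exceeds the target threshold $\frac{\mu}{8\log(1/\delta)}$. The second is the treatment of the truncation $N \vee 1$: routing it through the case split on $\mu$ ensures that a concentration bound is never applied in the degenerate small-mean regime where $N$ can genuinely equal $0$, and the bound is instead obtained deterministically there.
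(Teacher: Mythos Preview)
The paper does not supply its own proof of this lemma; it is simply quoted from \citet{xie2021policy} as a technical tool. Your argument---splitting on whether $\mu=np$ exceeds $8\log(1/\delta)$ and applying the multiplicative Chernoff lower-tail bound with $\gamma=1/2$ in the large-mean regime---is the standard one and is correct, subject to the mild restriction $\log(1/\delta)\ge 1/4$ that you already flag. That restriction is essentially intrinsic to the statement: taking $n=1$, $p=1$ forces $N=1$ deterministically and the inequality becomes $1\le 8\log(1/\delta)$, which fails for $\delta$ sufficiently close to $1$. So there is no gap in your reasoning; the lemma simply carries an implicit small-$\delta$ hypothesis that is harmless in every use the paper makes of it.
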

\begin{theorem}[\citet{metelli2023theoretical}]\label{thm:BHIneq}
Let $\mathbb{P}$ and $\mathbb{Q}$ be probability measures on the same measurable space $(\Omega, \mathcal{F})$, and let ${A} \in \mathcal{F}$ be an arbitrary event. Then,
\begin{align*}
	\mathbb{P}({A}) + \mathbb{Q}({A}^c) \ge \frac{1}{2} \exp \left( - D_{\textrm{KL}}(\mathbb{P},\mathbb{Q}) \right),
\end{align*}
where ${A}^c = \Omega \setminus \mathcal{A}$ is the complement of ${A}$.
\end{theorem}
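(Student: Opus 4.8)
The plan is to prove this as the classical Bretagnolle--Huber inequality, which I would establish through three reductions. First I would pass from the event-based left-hand side to an integral of the pointwise minimum of densities. Fix a common dominating measure $\mu$ (e.g.\ $\mu = (\mathbb{P}+\mathbb{Q})/2$) and let $p,q$ denote the densities of $\mathbb{P},\mathbb{Q}$ with respect to $\mu$. Then for any event $A$,
\[
\mathbb{P}(A) + \mathbb{Q}(A^c) = \int_A p\,d\mu + \int_{A^c} q\,d\mu \ge \int_A \min(p,q)\,d\mu + \int_{A^c}\min(p,q)\,d\mu = \int \min(p,q)\,d\mu,
\]
so it suffices to lower bound $\int \min(p,q)\,d\mu$ (which equals $1-\|\mathbb{P}-\mathbb{Q}\|_{\mathrm{TV}}$), uniformly in $A$.

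Next I would relate this minimum to the Bhattacharyya affinity $\int \sqrt{pq}\,d\mu$. Writing $\sqrt{pq} = \sqrt{\min(p,q)}\,\sqrt{\max(p,q)}$ and applying Cauchy--Schwarz,
\[
\Big(\int \sqrt{pq}\,d\mu\Big)^2 \le \Big(\int \min(p,q)\,d\mu\Big)\Big(\int \max(p,q)\,d\mu\Big) \le 2\int \min(p,q)\,d\mu,
\]
where the final step uses $\int \max(p,q)\,d\mu \le \int (p+q)\,d\mu = 2$. Rearranging gives $\int \min(p,q)\,d\mu \ge \tfrac12\big(\int\sqrt{pq}\,d\mu\big)^2$.

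Finally I would lower bound the affinity in terms of the KL divergence by Jensen's inequality applied to the (convex) exponential. Assuming $D_{\textrm{KL}}(\mathbb{P},\mathbb{Q})<\infty$ (otherwise the right-hand side is $0$ and the claim is trivial; note this finiteness also forces $\mathbb{P}\ll\mathbb{Q}$), write
\[
\int \sqrt{pq}\,d\mu = \mathbb{E}_{\mathbb{P}}\Big[\exp\Big(\tfrac12\log\tfrac{q}{p}\Big)\Big] \ge \exp\Big(\tfrac12\,\mathbb{E}_{\mathbb{P}}\big[\log\tfrac{q}{p}\big]\Big) = \exp\Big(-\tfrac12 D_{\textrm{KL}}(\mathbb{P},\mathbb{Q})\Big).
\]
Chaining the three displays yields $\mathbb{P}(A)+\mathbb{Q}(A^c) \ge \tfrac12\exp(-D_{\textrm{KL}}(\mathbb{P},\mathbb{Q}))$, as desired. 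The only delicate point — really the single obstacle — is the measure-theoretic bookkeeping: ensuring a common dominating measure, handling the sets $\{p=0\}$ and $\{q=0\}$ so that $\log(q/p)$ and the densities are well defined, and cleanly separating out the trivial $D_{\textrm{KL}}=\infty$ case. All three inequality steps are otherwise elementary (Cauchy--Schwarz and Jensen), so no further machinery is needed.
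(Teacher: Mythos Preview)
Your proof is correct and is exactly the standard proof of the Bretagnolle--Huber inequality. The paper itself does not prove this statement: it merely quotes it as a technical tool from \citet{metelli2023theoretical}, so there is no ``paper's own proof'' to compare against.
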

\begin{theorem}
[\citet{metelli2023theoretical}]\label{thm:Fano}
Let $\mathbb{P}_0,\mathbb{P}_1,\ldots,\mathbb{P}_M$ be probability measures on the same measurable space $(\Omega, \mathcal{F})$, and let ${A}_1,\dots,{A}_M \in \mathcal{F}$ be a partition of $\Omega$. Then,
\begin{align*}
	\frac{1}{M} \sum_{i=1}^M \mathbb{P}_i({A}_i^c) \ge 1 - \frac{ \frac{1}{M} \sum_{i=1}^M D_{\textrm{KL}}(\mathbb{P}_i,\mathbb{P}_0)  - \log 2}{\log M},
\end{align*}
where ${A}^c = \Omega \setminus {A}$ is the complement of ${A}$.
\end{theorem}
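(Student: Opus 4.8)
The plan is to prove this Fano-type inequality via the standard reduction from estimation error to mutual information, using $\mathbb{P}_0$ as a free reference measure in the final divergence bound. First I would cast the quantity $\frac{1}{M}\sum_{i} \mathbb{P}_i(A_i^c)$ as a Bayesian average error probability. Introduce an index $\Theta$ uniform on $\{1,\dots,M\}$ and a random element $X$ whose conditional law given $\Theta=i$ is $\mathbb{P}_i$; since the events $A_1,\dots,A_M$ partition $\Omega$, the test $\hat\Theta(X)\defeq i \iff X\in A_i$ is well defined, and its average misclassification probability is exactly $P_e\defeq \mathbb{P}(\hat\Theta\neq\Theta)=\frac{1}{M}\sum_{i=1}^M \mathbb{P}_i(A_i^c)$, i.e.\ the left-hand side.

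Second, I would establish the core entropy form of Fano's inequality, $H(\Theta\mid X)\le h(P_e)+P_e\log(M-1)$, where $H(\cdot\mid\cdot)$ denotes conditional Shannon entropy and $h$ the binary entropy. This follows from the chain rule applied to the error indicator $E\defeq \indic{\hat\Theta\neq\Theta}$: expanding $H(\Theta,E\mid X)$ two ways and using that $E$ is determined by $(\Theta, X)$ (so $H(E\mid\Theta,X)=0$) gives $H(\Theta\mid X)=H(E\mid X)+H(\Theta\mid E,X)$; bounding $H(E\mid X)\le H(E)=h(P_e)$ and noting that on the error event $\Theta$ ranges over at most $M-1$ values gives $H(\Theta\mid E,X)\le P_e\log(M-1)$. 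Since $\Theta$ is uniform, $H(\Theta\mid X)=\log M - I(\Theta;X)$, so this becomes a lower bound on $I(\Theta;X)$ in terms of $P_e$.

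Third---and this is the step that brings in $\mathbb{P}_0$---I would upper bound the mutual information by the average divergence to the reference $\mathbb{P}_0$. Writing $\bar{\mathbb{P}}\defeq \frac{1}{M}\sum_i\mathbb{P}_i$ for the mixture, mutual information under a uniform prior equals the average divergence to the mixture, $I(\Theta;X)=\frac{1}{M}\sum_i D_{\textrm{KL}}(\mathbb{P}_i,\bar{\mathbb{P}})$, and the identity
\begin{align*}
\frac{1}{M}\sum_{i=1}^M D_{\textrm{KL}}(\mathbb{P}_i,\mathbb{Q}) = I(\Theta;X) + D_{\textrm{KL}}(\bar{\mathbb{P}},\mathbb{Q})
\end{align*}
holds for every reference $\mathbb{Q}$ by a change-of-measure computation. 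Nonnegativity of KL then yields $I(\Theta;X)\le \frac{1}{M}\sum_i D_{\textrm{KL}}(\mathbb{P}_i,\mathbb{Q})$, and choosing $\mathbb{Q}=\mathbb{P}_0$ produces exactly the divergence term $\frac{1}{M}\sum_i D_{\textrm{KL}}(\mathbb{P}_i,\mathbb{P}_0)$ appearing in the statement. Combining the three steps, $\log M - \frac{1}{M}\sum_i D_{\textrm{KL}}(\mathbb{P}_i,\mathbb{P}_0)\le \log M - I(\Theta;X) = H(\Theta\mid X)\le \log 2 + P_e\log M$, and isolating $P_e$ gives the claimed bound, the additive constant arising solely from the bound $h(P_e)\le\log 2$ on the binary entropy of the error event.

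I do not expect a serious obstacle, since every ingredient is classical; the only care required is measure-theoretic. In the general (possibly non-dominated) setting of the statement, $X$ need not admit a density, so the identity in the third step and the finiteness of the entropies should be justified by taking the mixture $\bar{\mathbb{P}}$ as a dominating measure (each $\mathbb{P}_i \ll \bar{\mathbb{P}}$ automatically) and defining $I(\Theta;X)$ and all divergences through Radon--Nikodym derivatives; when $\frac{1}{M}\sum_i D_{\textrm{KL}}(\mathbb{P}_i,\mathbb{P}_0)=\infty$ the inequality is vacuous, so one may assume all divergences are finite. The mildly delicate point is therefore verifying the decomposition identity at this level of generality, which is a routine computation rather than a genuine difficulty.
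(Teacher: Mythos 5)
Your argument is the standard derivation of this Fano-type bound, and every step is sound: casting $\frac{1}{M}\sum_i \mathbb{P}_i(A_i^c)$ as the average error of the test induced by the partition under a uniform index $\Theta$, the entropy form of Fano via the error-indicator chain rule, and the replacement of $I(\Theta;X)$ by the average divergence to an arbitrary reference through the identity $\frac{1}{M}\sum_i D_{\textrm{KL}}(\mathbb{P}_i,\mathbb{Q}) = I(\Theta;X) + D_{\textrm{KL}}(\bar{\mathbb{P}},\mathbb{Q})$, specialized to $\mathbb{Q}=\mathbb{P}_0$. Note that the paper itself supplies no proof --- the statement is imported as a technical tool from the cited source --- so there is no in-paper argument to compare against; yours is the canonical derivation behind that result, and your measure-theoretic remarks (dominate by the mixture $\bar{\mathbb{P}}$; the claim is vacuous when the average KL is infinite) address the only delicate points correctly.

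There is, however, one concrete mismatch you should have flagged. Your chain $\log M - \frac{1}{M}\sum_i D_{\textrm{KL}}(\mathbb{P}_i,\mathbb{P}_0) \le H(\Theta\mid X) \le \log 2 + P_e\log M$ yields
\[
P_e \;\ge\; 1 - \frac{\frac{1}{M}\sum_{i=1}^M D_{\textrm{KL}}(\mathbb{P}_i,\mathbb{P}_0) \,+\, \log 2}{\log M},
\]
with $+\log 2$ in the numerator, whereas the statement as printed has $-\log 2$; so your closing claim that isolating $P_e$ "gives the claimed bound" is not literally true. In fact the printed statement is false as written: taking $\mathbb{P}_0=\mathbb{P}_1=\cdots=\mathbb{P}_M$ makes every divergence vanish, so the printed right-hand side equals $1+\log 2/\log M>1$, which no average of probabilities can attain. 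The sign is evidently a transcription slip in quoting the source (the classical result carries $+\log 2$), and it is harmless downstream: in the paper's lower-bound proofs the bound is immediately weakened to $\bar N \ge \bigl((1-\delta)\log|\overline{\mathcal{W}}|-\log 2\bigr)/\bigl(2(\epsilon')^2\bigr)$, which is exactly the consequence of the $+\log 2$ version you derived. So your proof is a correct proof of the true theorem; the one thing missing is the observation that the displayed inequality cannot hold with the minus sign and must be read with $+\log 2$.
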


\def \com{\msf{aug}}
\def \mymid{\given}
\section{ Useful algorithmic subroutines from prior works}
\label{appendix:prior work}

In this section, we give the algorithm procedures of finding behavior policy $\pib$ in Algorithm~\ref{alg:RLEF}. The algorithm procedures are directly quoted from \citet{li2023minimaxoptimal}, with slight
modification.
\subsection{\texorpdfstring{Algorithm: finding behavior policy $\pib$}{Algorithm: finding behavior policy pib}}

Algorithm~\ref{alg:ex}, a component of \citet[Algorithm 1]{li2023minimaxoptimal}, aims to identify a suitable behavior policy. This is achieved by estimating the occupancy distribution $\dis^{\pi}$, which is induced by any deterministic policy $\pi$, through a meticulously designed exploration strategy. At each stage $h$, Algorithm~\ref{alg:ex} invokes Algorithm procedure~\ref{alg:sub1} to compute an appropriate exploration policy, denoted as $\pi^{\mathsf{explore},h}$, and subsequently collects $N$ sample trajectories by executing $\pi^{\mathsf{explore},h}$.
These steps facilitate the estimation of the occupancy distribution $\dis^{\pi}_{h+1}$ for the next stage $h+1$. Finally, the behavior policy $\pib\sim \mu_{\sf b}$ is computed by invoking Algorithm~\ref{alg:sub2}.

\begin{algorithm}[h]
\caption{Subroutine for computing behavior policy \citep{li2023minimaxoptimal}}\label{alg:ex}
\begin{algorithmic}[1]
\STATE\textbf{Input:} state space $\mathcal{S}$, action space $\mathcal{A}$, horizon length $H$, initial state distribution $\rho$, target success probability $1-\delta$, 
	threshold $\xi = c_{\xi}H^3S^3A^3\log(HSA/\delta)$.\label{line:ex_1}
\STATE Draw $N$ i.i.d.~initial states $s_1^{n,0} \overset{\mathrm{i.i.d.}}{\sim} \rho $ $(1\leq n\leq N)$, and define the following functions
\vspace{-1.5ex}
	\begin{equation}
		\label{eq:hat-d-1-alg}
		\widehat{\dis}_1^{\pi}(s) = \frac{1}{N}\sum_{n=1}^{N}\ind\{s_1^{n,0}=s\}, \qquad \widehat{\dis}_1^\pi (s,a)=\widehat{\dis}_1^\pi (s) \cdot\pi_1(a|s)
	\vspace{-1ex}
    \end{equation}
    for any deterministic policy $\pi: [H]\times \mathcal{S}\to\Delta(\a)$ and any $(s,a)\in \mathcal{S} \times\mathcal{A}$.
\FOR{$ h = 1,...,H-1$}
	  \STATE Call Algorithm \ref{alg:sub1} to compute an exploration policy $\pi^{\mathsf{explore},h}$. 
		\STATE Draw $N$ independent trajectories $\{s_1^{n,h},a_1^{n,h},\dots,s_{h+1}^{n,h}\}_{1\leq n\leq N}$
		using policy $\pi^{\mathsf{explore},h}$ and compute
		\vspace{-1.5ex}
		\[
		\widehat{\p}_{h}(s^{'} | s, a) = 
		\frac{\indic{N_h(s, a) > \xi}}{\max\big\{ N_h(s, a), 1 \big\} }\sum_{n = 1}^N \indic{s_{h}^{n,h} = s, a_{h}^{n,h} = a, s_{h+1}^{n,h} = s^{'}},  
		\qquad \forall (s,a,s')\in \cS\times \a\times \s, 
		\vspace{-1ex}
		\]
		where $N_h(s, a) = \sum_{n = 1}^N \indic{ s_{h}^{n,h} = s, a_{h}^{n,h} = a }$.  
		\STATE For any deterministic policy $\pi: \mathcal{S}\times[H]\to\Delta(\mathcal{A})$ and any $(s,a)\in \s\times \a$, define
		\vspace{-1.5ex}
		\begin{equation}
			\label{eq:hat-d-h-alg}
			\widehat{\dis}_{h+1}^{\pi}(s) = \big\langle \widehat{\p}_{h}(s | \cdot, \cdot), \,
			\widehat{\dis}_{h}^{\pi} (\cdot, \cdot) \big\rangle,  \qquad  
			\widehat{\dis}_{h+1}^{\pi}(s,a) = \widehat{\dis}_{h+1}^{\pi}(s) \cdot \pi_{h+1}(a| s) .
			\vspace{-1ex}
		\end{equation}
  \ENDFOR
\STATE Call Algorithm~\ref{alg:sub2} to compute a behavior policy $\pi^{\mathsf{b}}$.
\STATE \textbf{Output:} the behavior policy $\pi^{\mathsf{b}}$.  
\end{algorithmic}
\end{algorithm}

We highlight that the behavior policy distribution $\mu_{\sf b}$ output by Algorithm~\ref{alg:ex} has following property \citet{li2023minimaxoptimal}
\cblue
\begin{align}
   \sum_{h\in [H]}\sum_{(s,a)\in \cS\times \cA} \frac{\widehat{\dis}^{\pi}_h(s,a)}{\EE_{\pi'\sim\mu_{\sf b}}\brac{\widehat{\dis}^{\pi'}_h(s,a)}}\lesssim HSA,\label{eq:stopping_rule}
\end{align}
\cblack
for any deterministic policy $\pi\in \Pi^{\sf det}$.

\subsection{\texorpdfstring{Subroutine: computing exploration policy $\pi^{\mathsf{explore},h}$}{Subroutine: computing exploration policy pi mathsfexplore,h}}

We proceed to describe Algorithm~\ref{alg:sub1}, originally proposed in \citet[Algorithm 3]{li2023minimaxoptimal}, which is designed to compute the desired exploration policy $\pi^{\mathsf{explore},h}$. At a high level, this algorithm calculates the exploration policy by approximately solving the subsequent optimization sub-problem, utilizing the Frank-Wolfe algorithm:

\cblue
\begin{align}
			\widehat{\mu}^{h}\approx\arg\max_{\mu\in\Delta(\Pi)}\sum_{(s,a)\in\cS\times\cA}\log\bigg[\frac{1}{KH}+ \mathop{\mathbb{E}}_{\pi\sim\mu}\big[\widehat{\dis}_{h}^{\pi}(s,a)\big]\bigg], 
			\label{defi:target-empirical-h}
		\end{align}
\cblack

\begin{algorithm}[h]
\caption{Subroutine for solving \Eqref{defi:target-empirical-h} \citep{li2023minimaxoptimal}.}\label{alg:sub1}
\begin{algorithmic}[1]
\STATE\textbf{Initialize: } $\mu^{(0)}=\delta_{\pi_{\mathsf{init}}}$ for an arbitrary policy $\pi_{\mathsf{init}}\in\Pi$, $T_{\max}=\lfloor50SA\log(KH)\rfloor$. 
\FOR{$ t = 0,1...,T_{\max}$}
	\STATE Compute the optimal deterministic policy $\pi^{(t),\mathsf{b}}$ of the MDP $\mathcal{M}^h_{\mathsf{b}}=(\s \cup \{s_{\mathsf{aug}}\},\a,H,\widehat{\p}^{\mathsf{aug}, h},r_{\mathsf{b}}^h)$, 
		where $r_{\mathsf{b}}^h$ is defined in \Eqref{eq:reward-function-mub-h}, and $\widehat{\p}^{\mathsf{aug}, h}$ is defined in \Eqref{eq:augmented-prob-kernel-h}; 
		let $\pi^{(t)}$ be the corresponding optimal deterministic policy 
		of $\pi^{(t),\mathsf{b}}$  in the original state space. 
		
	\STATE Compute 
	\vspace{-2ex}
	\[
	\alpha_{t}=\frac{\frac{1}{SA}g(\pi^{(t)},\widehat{\dis},\mu^{(t)})-1}{g(\pi^{(t)},\widehat{\dis},\mu^{(t)})-1}, \quad\text{where}\quad 
	g(\pi, \widehat{\dis}, \mu)=\sum_{(s,a)\in \s\times\a}\frac{\frac{1}{KH}+\widehat{\dis}_{h}^{\pi}(s,a)}{\frac{1}{KH}+\mathbb{E}_{\pi\sim\mu}[\widehat{\dis}_h^{\pi}(s,a)]}.
	\vspace{-2ex}			
	\]
	Here, $\widehat{d}_h^{\pi}(s,a)$ is computed via \Eqref{eq:hat-d-1-alg} for $h=1$, and \Eqref{eq:hat-d-h-alg} for $h\geq 2$.
	\vspace{0.2ex}
	\STATE If $g(\pi^{(t)},\widehat{\dis},\mu^{(t)})\leq 2SA$ then \textsf{exit for-loop}.  \label{line:stopping-subroutine-h}
	\STATE Update
	\vspace{-0.5ex}
	\[
	\mu^{(t+1)}=\left(1-\alpha_t\right)\mu^{(t)}+\alpha_t \ind_{\pi^{(t)}}.
	\vspace{-1ex}
	\]\\
	\ENDFOR
 \STATE \textbf{Output}:  the exploration policy $\pi^{\mathsf{explore},h}=\mathbb{E}_{\pi\sim \mu^{(t)}}[\pi]$ and the weight $\widehat{\mu}^h=\mu^{(t)}$. 
\end{algorithmic}
\end{algorithm}

 Here
 $\mathcal{M}_{\mathsf{b}}^h=(\cS \cup \{s^{\com}\},\cA,H,\widehat{\p}^{\com, h},r_{\mathsf{b}}^h)$, where $s_{\com}$ is an augmented state as before, and the reward function is chosen to be

 				\begin{align} \label{eq:reward-function-mub-h}
					r_{\mathsf{b},j}^{h}(s,a)=%
 \begin{cases}
 	\frac{1}{\frac{1}{KH}+\mathbb{E}_{\pi\sim\mu^{(t)}}\big[\widehat{d}_{h}^{\pi}(s,a)\big]}\in[0,KH],\quad & \text{if }(s,a,j)\in\cS\times\cA\times\{h\};\\
 0, & 
 	\text{if } s = s_{\com} \text{ or } j\neq h.  
 \end{cases}
 		\end{align}

 		In addition, the augmented probability transition kernel $\widehat{\p}^{\com, h}$ is constructed based on $\widehat{\p}$ as follows:
		\begin{subequations}
			\label{eq:augmented-prob-kernel-h}
 		\begin{align}
			\widehat{\p}^{\com, h}_{j}(s^{'}\mymid s,a) &=    \begin{cases}
 			\widehat{\p}_{j}(s^{'}\mymid s,a),
 			& 
			\text{if }  s^{'}\in \cS \\
 			1 - \sum_{s^{'}\in \cS} \widehat{\p}_{j}(s^{'}\mymid s,a), & \text{if }  s^{'} = s_{\com}
 			\end{cases}
 			\qquad 
 			&&\text{for all }(s,a,j)\in \cS\times \cA\times [h]; \\
 			\widehat{\p}^{\com, h}_{j}(s^{'}\mymid s,a) &= \ind(s^{'} = s_{\com})
 			&&\text{if } s = s_{\com} \text{ or } j > h.
 		\end{align}
 		\end{subequations}

\subsection{\texorpdfstring{Subroutine: computing final behavior policy $\pib$}{Subroutine: computing final behavior policy pib}}
We proceed to describe Algorithm~\ref{alg:sub2}, originally proposed in \citet[Algorithm 2]{li2023minimaxoptimal}, which is designed to compute the final behavior policy $\pib$ $\pi^{\mathsf{explore},h}$, based on the estimated occupancy distributions specified in Algorithm~\ref{alg:ex}. Algorithm~\ref{alg:sub2} follows a similar fashion of Algorithm~\ref{alg:sub1}. Algorithm~\ref{alg:sub2} computes the behavior policy by approximately solving the subsequent optimization sub-problem, utilizing the Frank-Wolfe algorithm: 
\begin{align}
	\widehat{\mu}^{\sf b}  
	\approx 
	\arg\max_{\mu\in\Delta(\Pi)}\left\{ \sum_{h=1}^{H}\sum_{(s,a)\in\cS\times\cA}\log\bigg[\frac{1}{KH}+\mathbb{E}_{\pi\sim\mu}\big[\widehat{\dis}_{h}^{\pi}(s,a)\big]\bigg]\right\} .
	\label{defi:target-empirical}
\end{align}

\begin{algorithm}[H]
\caption{Subroutine for solving \Eqref{defi:target-empirical}  \citep{li2023minimaxoptimal}.}\label{alg:sub2}
\begin{algorithmic}[1]
\STATE\textbf{Initialize: }$\mu^{(0)}_{\mathsf{b}}=\delta_{\pi_{\mathsf{init}}}$ for an arbitrary policy $\pi_{\mathsf{init}}\in\Pi$, $T_{\max}=\lfloor50SAH\log(KH)\rfloor$.
\FOR{$ t = 0,1...,T_{\max}$}
		\STATE Compute the optimal deterministic policy $\pi^{(t),\mathsf{b}}$  of the MDP $\mathcal{M}_{\mathsf{b}}=(\s \cup \{s_{\mathsf{aug}}\},\cA,H,\widehat{\p}^{\mathsf{aug}},r_{\mathsf{b}})$, 
		where $r_{\mathsf{b}}$ is defined in \Eqref{eq:reward-function-mub}, and $\widehat{\p}^{\mathsf{aug}}$ is defined in \Eqref{eq:augmented-prob-kernel}; 
		let $\pi^{(t)}$ be the corresponding optimal deterministic policy 
		of $\pi^{(t),\mathsf{b}}$  in the original state space. 
		\label{line:pit-alg:sub2}
		
		\STATE Compute 
		\vspace{-1.5ex}
		\[
		\alpha_{t}=\frac{\frac{1}{SAH}g(\pi^{(t)},\widehat{\dis},\mu^{(t)}_{\mathsf{b}})-1}{g(\pi^{(t)},\widehat{\dis},\mu^{(t)}_{\mathsf{b}})-1}, \quad\text{where}\quad 	g(\pi,\widehat{\dis},\mu)=\sum_{h=1}^{H}\sum_{(s,a)\in\s\times\a}\frac{\frac{1}{KH}+\widehat{\dis}_{h}^{\pi}(s,a)}{\frac{1}{KH}+\mathbb{E}_{\pi\sim\mu}\big[\widehat{\dis}_{h}^{\pi}(s,a)\big]}.		
		\vspace{-1ex}			
		\]
		Here, $\widehat{\dis}_h^{\pi}(s,a)$ is computed via \Eqref{eq:hat-d-1-alg} for $h=1$, and \Eqref{eq:hat-d-h-alg} for $h\geq 2$.
		
		\STATE If $g(\pi^{(t)},\widehat{\dis},\mu^{(t)}_{\mathsf{b}})\leq 2HSA$ then \textsf{exit for-loop}. \label{line:stopping-alg:sub2} 
		Update 
		
		\vspace{-0.5ex}
		\[
		\mu^{(t+1)}_{\mathsf{b}} = \left(1-\alpha_t\right)\mu^{(t)}_{\mathsf{b}}+\alpha_t \mathbf{1}_{\pi^{(t)}}.
		\vspace{-1ex}
		\]
\ENDFOR	
\STATE  \textbf{Output:} the behavior policy $\pi^{\mathsf{b}}=\mathbb{E}_{\pi\sim \mu^{(t)}_{\mathsf{b}}}[\pi]$ and the associated weight $\widehat{\mu}_b={\mu}^{(t)}_{\mathsf{b}}$.
\end{algorithmic}
\end{algorithm}

Here,
			$\mathcal{M}_{\mathsf{b}}=(\cS \cup \{s_{\com}\},\cA, H,\widehat{\p}^{\com},r_{\mathsf{b}})$, where $s_{\com}$ is an augmented state and the reward function is chosen to be
				\begin{align} \label{eq:reward-function-mub}
r_{\mathsf{b},h}(s,a)=%
\begin{cases}
	\frac{1}{\frac{1}{KH}+\mathbb{E}_{\pi\sim\mu^{(t)}_{\mathsf{b}}}\big[\widehat{\dis}_{h}^{\pi}(s,a)\big]}\in[0,KH],\quad & \text{if }(s,a,h)\in\cS\times\cA\times[H];\\
0, & \text{if }(s,a,h)\in\{s_{\com}\}\times\cA\times[H].
\end{cases}
		\end{align}
		In addition, the augmented probability transition kernel $\widehat{\pp}^{\com}$ is constructed based on $\widehat{\pp}$ as follows:
		\begin{subequations}
			\label{eq:augmented-prob-kernel}
		\begin{align}
			\widehat{\pp}^{\com}_{h}(s^{'}\mymid s,a) &=    \begin{cases}
			\widehat{\pp}_{h}(s^{'}\mymid s,a),
			& 
			\text{if }  s^{'}\in \cS \\
			1 - \sum_{s^{'}\in \cS} \widehat{\pp}_{h}(s^{'}\mymid s,a), & \text{if }  s^{'} = s_{\com}
			\end{cases}
			&&\text{for all }(s,a,h)\in \cS\times \cA\times [H]; \\
			\widehat{\pp}^{\com}_{h}(s^{'}\mymid s_{\com},a) &= \ind(s^{'} = s_{\com})
			&&\text{for all }(a,h)\in  \cA\times [H] .
		\end{align}
		\end{subequations}
		
		It's evident that the augmented state behaves as an absorbing state, associated with zero immediate rewards.

\section{Relationship with existing 
metrics}\label{appendix:comparison}

In Section~\ref{sub:lindner}, we discuss the online IRL performance metric $D^L$ proposed in \citet{lindner2023active}, where we show that \RLE{} is still efficient under this metric, yet $D^L$ fails to distinguish certain pairs of reward mappings (or reward sets) that exhibit large distances under our metric.
In Section~\ref{appendix:m_b_rewards}, we briefly discuss the existing IRL performance metric $d^G_{V^\star}$ used in the simulator setting \citep{metelli2021provably,metelli2023theoretical}. In Section~\ref{appendix:proof for metric}, we provide a comparative analysis of our mapping-based metric in relation to Hausdorff-based metrics which is widely adopted by previous work \citep{metelli2021provably,metelli2023theoretical,lindner2023active}.
All proofs for this section can be found in Appendix~\ref{appendix:proof-comparison}.

\def \rt{{\cR_{\tau}}}
\def\wrt{{\cR_{\widehat{\tau}}}}
\def \mr{{\cM\cup r}}
\def \wmr{{\widehat{\cM}\cup \widehat{r}}}
\subsection{Discussion of existing metric for online IRL}\label{sub:lindner}
\citet{lindner2023active} considers a performance metric between two IRL problems $\tau=(\cM,\piE)$ and $\widehat{\tau}=(\widehat{\cM},\widehat{\pi}^{\sf E})$ instead of two reward mappings (or reward sets). Their metric $D^L$ is defined as follows:
\begin{align}
\label{eqn:metric_lindner}
D^L(\tau,\widehat{\tau})\defeq   \max&\Bigg\{\sup_{r\in \cR_{\tau}}\inf_{\widehat{r}\in \wrt}\sup_{\widehat{\pi}^\star\in \Pi^\star_\wmr}\max_{a}\abs{Q^{\pi^\star}_{1}(s_1,a;\mr)-Q_1^{\widehat{\pi}^\star}(s_1,a;\mr)},\\
&~~\sup_{\widehat{r}\in \wrt}\inf_{{r}\in \rt}\sup_{{\pi}^\star\in \Pi^\star_\mr}\max_{a}\abs{Q^{\pi^\star}_{1}(s_1,a;\mr)-Q_1^{\widehat{\pi}^\star}(s_1,a;\mr)}\Bigg\},
\end{align}
where the $\cR_{\tau},\wrt$ the set of all feasible rewards set for IRL problems $\tau,\widehat{\tau}$, respectively, $\pi^\star\in \Pi^\star_{\mr}$, $\widehat{\pi}^\star\in \Pi^\star_{\wmr}$, and $Q^{\pi}_1(\cdot|\mr)$ represent the $Q$-function induced by $\mr$ and $\pi$.
Since metric $D^{L}$ is defined between two IRL problems, we can't directly compare $D^L$ with our metrics. However, we can prove that our algorithm \RLE{} is capable of achieving the goal of attaining a small $D^L$ error.
\begin{proposition}[\RLE{} achieves small $D^L$ error]
\label{prop:can do lind}
Denote the ground truth IRL problem as $\tau=(\cM,\piE)$. Let $\widehat{\pp}$ and $\widehat{\pi}^{\sf E}$ 
 be the estimated expert policy and the estimated transition constructed by \RLE{} (Algorithm~\ref{alg:RLEF}), respectively. Define $\widehat{\tau}=\paren{\widehat{\cM},\widehat{\pi}^{\sf E}}$, where $\widehat{\cM}$ be the \MDPR{} equipped with the transition $\widehat{\pp}$. Under the same assumptions and choice of parameters as in Theorem~\ref{thm:online_main},
 for the online setting with both options in~\eqref{def_feedback}, for sufficiently small $\epsilon\le H^{-9}(SA)^{-6}$, with probability at least $1-\delta$, we can ensure $D^L(\tau,\widehat{\tau})\leq \epsilon$, as long as the total the number of episodes
\begin{align*}
   K + NH \ge \widetilde{\cO}\paren{\frac{H^4S^2A}{\epsilon^2}+\frac{H^2SA \eta}{\epsilon}}.
\end{align*}
Above, $\eta\defeq \Delta^{-1}\indic{{\rm option}~1}$, and $\tO(\cdot)$ hides ${\rm polylog}\paren{H,S,A,1/\delta}$ factors.

\end{proposition}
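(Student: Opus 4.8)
The plan is to reduce the claim to the guarantees already established for \RLE{} in Theorem~\ref{thm:online_main}, run with the full parameter set $\para=\pV\times\pA$, so that $\cns=\log\mathcal{N}(\pV\times\pA;\epsilon/H)=\tO(S)$ and the $H^4SA\cns/\epsilon^2$ rate becomes the claimed $H^4S^2A/\epsilon^2$. From the run I would extract three byproducts, each holding with probability $1-\delta$: (i) the reward-mapping closeness $\DallTheta(\sRR,\hRR)\le\epsilon$ together with monotonicity; (ii) exact support recovery $\supp(\hatpiE_h(\cdot\given s))=\supp(\piE_h(\cdot\given s))$ at every state reachable under the explorative policy $\pib$ (this is where the $H^2SA\eta/\epsilon$ term, and in option~1 the $\Delta^{-1}$ factor, enter); and (iii) accuracy of $\widehat{\pp}$ in the visitation-weighted sense induced by $\pib$, together with the coverage guarantee~\eqref{eq:stopping_rule} of the reward-free exploration.

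For the first term of $D^L$ I would fix any $r\in\rt$; since the feasible sets in $D^L$ consist of bounded feasible rewards, Lemma~\ref{lemma:metric_1} lets me write $r=\sRR(V,A)$ for some $(V,A)\in\pV\times\pA$. I then select the matching $\widehat r=\RR^{\widehat\tau}(V,A)\in\wrt$, where $\RR^{\widehat\tau}$ is the exact, bonus-free reward mapping of $\widehat\tau=(\widehat{\cM},\hatpiE)$ obtained from~\eqref{def:rew-map} with $\widehat{\pp},\hatpiE$ in place of $\pp,\piE$; this $\widehat r$ is feasible for $\widehat\tau$ by construction. The key structural identity, which I would verify directly from~\eqref{def:rew-map} exactly as in the proof of Lemma~\ref{lemma:metric_1}, is that a reward produced by a reward mapping at $(V,A)$ has optimal value function equal to the component $V_h(\cdot)$ in its own \MDPR: thus $V^\star_h(\cdot;\mr)=V_h=V^\star_h(\cdot;\wmr)$, and any optimal $\widehat{\pi}^\star\in\Pi^\star_{\wmr}$ satisfies $V^{\widehat\pi^\star}_h(\cdot;\wmr)=V_h$. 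Because $\pi^\star$ is optimal for $\mr$, the quantity $Q^{\pi^\star}_1(s_1,a;\mr)$ is the optimal $Q$-value and is independent of the choice of $\pi^\star$, so the inner expression $\max_a\abs{Q^{\pi^\star}_1(s_1,a;\mr)-Q^{\widehat\pi^\star}_1(s_1,a;\mr)}$ reduces to a one-step expectation of the suboptimality $V_2(\cdot)-V^{\widehat\pi^\star}_2(\cdot;\mr)$; it therefore suffices to show that $\widehat\pi^\star$ is $\cO(\epsilon)$-near-optimal in the true problem $\mr$.

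To establish this I would apply the value-difference (simulation) lemma to the single policy $\widehat\pi^\star$ across $\wmr$ and $\mr$, writing $V^{\widehat\pi^\star}_h(\cdot;\mr)-V^{\widehat\pi^\star}_h(\cdot;\wmr)$ as the expected accumulation, along $\widehat\pi^\star$-trajectories under the true dynamics, of the summands $(r_{h'}-\widehat r_{h'})+[(\pp_{h'}-\widehat\pp_{h'})V_{h'+1}]$, using that $V^{\widehat\pi^\star}(\cdot;\wmr)=V$. Wherever the supports agree, $r_{h'}-\widehat r_{h'}=[(\widehat\pp_{h'}-\pp_{h'})V_{h'+1}]$ and the summand cancels \emph{exactly}; the only residual comes from states whose support was not certified, whose $\widehat\pi^\star$-visitation under the true dynamics is controlled by byproducts (ii)--(iii). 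The second term of $D^L$ is handled symmetrically, starting from $\widehat r\in\wrt$, writing $\widehat r=\RR^{\widehat\tau}(V,A)$, and choosing $r=\sRR(V,A)\in\rt$.

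I expect the main obstacle to be precisely this transfer step: the supremum in $D^L$ ranges over \emph{all} optimal policies $\widehat\pi^\star$ of the estimated MDP, and such a policy, being optimal under $\widehat{\pp}$ rather than $\pp$, could in the true dynamics be steered toward states where $\pib$ provides weak coverage and the support is not certified. Showing that the true-dynamics visitation of these uncertified states is $\cO(\epsilon)$ — thereby converting the exact cancellation into an $\epsilon$-approximate one uniformly over $\widehat\pi^\star$ — is where the transition-accuracy guarantee, and hence the $H^4S^2A/\epsilon^2$ sample count, is consumed, and is the step requiring the most care.
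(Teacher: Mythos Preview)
Your route is correct but genuinely different from the paper's. The paper does not unfold $D^L$ at all: it defines an error $C^\theta_h(s,a)=b^\theta_h(s,a)+\big|A_h(s,a)\cdot(\indic{a\in\supp(\piE_h(\cdot|s))}-\indic{a\in\supp(\hatpiE_h(\cdot|s))})\big|$, checks on the concentration event that $C^\theta_h$ dominates both $|r^{\tau,\theta}_h-\widehat r^{\,\widehat\tau,\theta}_h|$ and $|[(\pp_h-\widehat\pp_h)V_{h+1}]|$, and then invokes \citet[Lemma~20]{lindner2023active} as a black box to obtain $D^L(\tau,\widehat\tau)\lesssim\sup_\theta\sum_{h,s,a}d^{\pi^L}_h(s,a)\,C^\theta_h(s,a)$ for a specific policy $\pi^L$. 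The right-hand side is then bounded verbatim by the machinery of Theorem~\ref{thm:online_main}, which is where the $H^4S^2A/\epsilon^2$ term enters (through the bonus $b^\theta_h$). Your argument, by contrast, constructs the matching reward via the reward mapping, uses the structural identity $V^\star(\cdot;\mr)=V=V^\star(\cdot;\wmr)$, and applies the simulation lemma so that the transition discrepancies cancel \emph{exactly}; only the support-mismatch term survives. This is more transparent and avoids the external lemma.

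Two remarks on your write-up. First, your ``main obstacle'' is less severe than you suggest: the residual after cancellation is $\sum_h\E_{(s,a)\sim d^{\pp,\tilde\pi}_h}\big[|A_h(s,a)|\cdot|\indic{a\notin\supp(\piE_h)}-\indic{a\notin\supp(\hatpiE_h)}|\big]$ for a policy $\tilde\pi$ that plays the worst first action and then $\widehat\pi^\star$. The visitation here is under the \emph{true} dynamics $\pp$, so the uniform-over-policies bound on this indicator sum already established in the proof of Theorem~\ref{thm:online_main} (the term~(I) there) applies directly; no separate transfer from $\widehat\pp$ to $\pp$ is needed. Second, because of the exact cancellation, your argument in fact shows that only the support-error part of the Theorem~\ref{thm:online_main} analysis is consumed, not the transition bonus; byproduct~(i) is never used, and the $H^4S^2A/\epsilon^2$ budget is more than sufficient (it is what the paper spends via $b^\theta_h$ through Lindner's lemma, whereas your route sidesteps $b^\theta_h$ entirely).
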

To achieve $D^L(\tau, \widehat{\tau})\leq \epsilon$,
the sample complexity\footnote{The original sample complexity given in \citet{lindner2023active} is $\widetilde{O}\paren{\frac{H^4SA}{\epsilon^2}}$. This is because, in the proof presented by \citet{lindner2023active}, they didn't employ the uniform convergence argument. However, the uniform convergence result is necessary for proving the sample complexity of \citet[Algorithm~1]{lindner2023active}. As a result, an $S$ factor was lost in the main term, and the burn-in term $\widetilde{\cO}\paren{\frac{H^2SA\eta}{\epsilon}}$ was neglected in their paper.}
 of \citet[Algorithm~1]{lindner2023active} is 
\begin{align*}
\widetilde{\cO}\paren{\frac{H^4S^2A}{\epsilon^2}+\frac{H^2SA\eta}{\epsilon}}
,
\end{align*}
which exactly matches the sample complexity of \RLE{}.

On the other hand, the following proposition shows that $D^L$ cannot distinguish certain cases that our $\DallTheta$ metric can.

\def \rht{r^{\widehat{\tau},\theta}}
\def \rt{r^{\tau, \theta}}
\cblue

\begin{proposition}[Example of problems distinguishable by $\DallTheta$ but not $D^L$]
\label{prop:lindner}
Let $\para=\pV\times\pA$. There exist $\tau=(\cM,\piE)$ and $\widehat{\tau}=(\widehat{\cM},\widehat{\piE})$ such that $D^L(\tau, \widehat{\tau})=0$ but $\DallTheta(\RR^{\tau}, \RR^{\widehat{\tau}})\geq 1$ where $\RR^{\tau}$ and $\RR^{\widehat{\tau}}$ are reward mappings induced by $\tau$ and $\widehat{\tau}$ respectively using definition~\eqref{def:rew-map}. 

In fact, we also have $D^L(\tau,\widehat{\tau})=0$ whenever $\piE=\widehat{\pi}^{\sf E}$ (but $\m$ and $\hat{\m}$ may differ arbitrarily, which may induce arbitrary difference between $\RR^{\tau}$ and $\RR^{\widehat{\tau}}$).
\end{proposition}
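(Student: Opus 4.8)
The plan is to prove the second, stronger claim first --- that $D^L(\tau,\widehat{\tau})=0$ whenever $\piE = \widehat{\pi}^{\msf{E}}$ regardless of the transitions --- and then exhibit a concrete two-MDP instance that witnesses the separation $\DallTheta(\RR^\tau,\RR^{\widehat\tau})\ge 1$. The key observation driving the first claim is that $D^L$ in~\eqref{eqn:metric_lindner} compares $Q$-functions that are \emph{both evaluated in the same MDP} $\cM\cup r$ (note both $Q^{\pi^\star}_1$ and $Q_1^{\widehat\pi^\star}$ use $\mr$, not $\wmr$). Thus the estimated transition $\widehat{\pp}$ enters only through which policies qualify as optimal, i.e. through the sets $\Pi^\star_{\wmr}$. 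So to show $D^L=0$ it suffices to show that for every feasible $r\in\cR_\tau$ there is a feasible $\widehat r\in\cR_{\widehat\tau}$ (and vice versa) such that every $\widehat\pi^\star\in\Pi^\star_{\wmr}$ achieves the same $Q^{\widehat\pi^\star}_1(s_1,a;\mr)$ as the reference $\pi^\star$.

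\textbf{Reducing the first claim.} First I would fix $r\in\cR_\tau$; feasibility means $\piE$ is optimal for $\cM\cup r$. I would then choose $\widehat r$ so that $\piE$ is optimal for $\widehat{\cM}\cup\widehat r$ and moreover so that the optimal policy set $\Pi^\star_{\wmr}$ is contained in policies that, \emph{when evaluated back in $\cM\cup r$}, all yield the same first-step $Q$-values as $\piE$. The cleanest route uses the reward mapping itself: since $\piE=\widehat\pi^{\msf{E}}$, both $\RR^\tau$ and $\RR^{\widehat\tau}$ range over the same $(V,A)\in\pV\times\pA$ with the same support indicator $\indic{a\notin\supp(\piE_h(\cdot\given s))}$; only the transition-dependent term $V_h(s)-[\pp_h V_{h+1}](s,a)$ differs. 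The point is that under either reward mapping, the expert policy $\piE$ is optimal (this is exactly Lemma~\ref{lemma:metric_1}), and for a \emph{deterministic} expert the optimal $Q$-values at $s_1$ are pinned down by $\piE$ itself. So I would argue that the inner $\sup_{\widehat\pi^\star}\max_a|Q^{\pi^\star}_1(s_1,a;\mr)-Q_1^{\widehat\pi^\star}(s_1,a;\mr)|$ collapses: taking $\pi^\star=\widehat\pi^\star=\piE$ (a common optimal policy for both problems) makes the two $Q$-functions identical, both being $Q^{\piE}_1(\cdot;\mr)$, so each $\inf$ over $\widehat r$ is achieved at $0$. Symmetry handles the other term in the $\max$, giving $D^L(\tau,\widehat\tau)=0$.

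\textbf{The separating instance.} For the first claim I would pick the simplest witness: take $\cM$ and $\widehat\cM$ on, say, two states and one or two actions with a single nontrivial transition step, with $\piE=\widehat\pi^{\msf{E}}$ (e.g. deterministic), but with $\pp_1(\cdot\given s,a)$ and $\widehat{\pp}_1(\cdot\given s,a)$ differing substantially on some $(s,a)$. Then I would select a specific $(V,A)\in\pV\times\pA$ --- concretely a $V$ with $V_2$ taking values that differ across the two successor states reachable under $\pp_1$ versus $\widehat{\pp}_1$ --- so that $[\RR^\tau(V,A)]_h(s,a)-[\RR^{\widehat\tau}(V,A)]_h(s,a) = [(\widehat{\pp}_h-\pp_h)V_{h+1}](s,a)$ is bounded below by $1$ in absolute value at that $(s,a)$, and then choose the evaluating policy $\pi$ in $\DallTheta=\sup_\pi\Dpivalt$ to put mass on that $(s,a)$ so that $\dpi$ between the two rewards is at least $1$. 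Because $\DallTheta$ takes a supremum over all $\pi$, I only need one policy-and-one $(V,A)$ to realize the lower bound.

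\textbf{Anticipated obstacle.} The main subtlety is the first claim's handling of the optimal-policy sets: $\Pi^\star_{\wmr}$ could contain policies other than $\piE$ (ties), and $D^L$ takes a $\sup$ over all of them, so I must ensure \emph{every} $\widehat\pi^\star\in\Pi^\star_{\wmr}$ --- not just $\piE$ --- yields the matching $Q^{\widehat\pi^\star}_1(s_1,a;\mr)$. I expect this is where the footnote-level definition of optimality (advantage $\le 0$ everywhere, for all $h,s$) does the heavy lifting: the chosen $\widehat r$ should be designed so that $\piE$ is the \emph{unique} optimal policy for $\widehat\cM\cup\widehat r$, or so that all optimal policies agree on the support of $d^{\piE}$ and hence induce the same $Q_1(s_1,\cdot;\mr)$. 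Verifying uniqueness (or the weaker on-support agreement) in the constructed instance, and confirming that the chosen $\widehat r$ is genuinely feasible and bounded appropriately, is the step I would be most careful about; everything else is a direct computation with the explicit reward-mapping formula~\eqref{def:rew-map}.
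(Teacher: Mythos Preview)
Your two-step plan matches the paper's, and your separating-instance sketch is fine. But for the general claim---$D^L=0$ whenever $\piE=\widehat{\pi}^{\sf E}$, which you set out to prove first---you miss the key lemma and your proposed workarounds do not close the gap. You already note that with $\piE=\widehat{\pi}^{\sf E}$ the two reward mappings share the same indicator and differ only in the transition term $[\pp_hV_{h+1}]$. The paper exploits exactly this: given $r=\RR^\tau(V,A)$, choose $\widehat r=\RR^{\widehat\tau}(V,A)$ with the \emph{same} $(V,A)$, and compute directly that $\pi\in\Pi^\star_{\cM\cup r}$ if and only if $A_h(s,a)\cdot\indic{a\notin\supp(\piE_h(\cdot|s))}=0$ whenever $\pi_h(a|s)>0$---a condition that does not involve the transition at all. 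The identical condition characterizes $\Pi^\star_{\widehat\cM\cup\widehat r}$, so the two optimal-policy sets are \emph{equal}. Every $\widehat\pi^\star$ in the sup is then already optimal for $\cM\cup r$, giving $Q^{\widehat\pi^\star}_1(s_1,\cdot;\cM\cup r)=Q^{\pi^\star}_1(s_1,\cdot;\cM\cup r)$ with no further argument.

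Your fallbacks do not get there. Uniqueness of $\piE$ in $\Pi^\star_{\widehat\cM\cup\widehat r}$ cannot be enforced when $\piE$ is stochastic: any reweighting within $\supp(\piE_h(\cdot|s))$ remains optimal for any feasible $\widehat r$. And ``all optimal policies agree on the support of $d^{\piE}$ and hence induce the same $Q_1(s_1,\cdot;\cM\cup r)$'' is not sufficient as stated, because $Q^{\widehat\pi^\star}_1(s_1,a;\cM\cup r)$ depends on $V^{\widehat\pi^\star}_2$ at states reachable from $(s_1,a)$ under $\pp_1$, which for $a$ outside the expert's support need not lie in $\supp(d^{\piE})$. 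The correct statement is precisely the equality-of-optimal-sets observation above, which you were one step away from.
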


\subsection{Comparisons with existing metrics used in the simulator setting}\label{appendix:m_b_rewards}

\citet{metelli2023theoretical} consider the following metric
\begin{align}
 \label{eqn:metric-metelli}
    d^G_{V^\star}(r, \widehat{r}) =\max_{\widehat{\pi} \in \Pi^\star_{\m\cup\widehat{r}}} \max_{(h,s)\in [H]\times  \cS}\abs{V_h^\star(s; r) - V_h^{\widehat{\pi}}(s; r) }.
\end{align}
Notice the max over $(h,s)$ in~\eqref{eqn:metric-metelli}. In words, a small $d^G_{V^\star}(r, \widehat{r})$ requires $r$ and $\widehat{r}$ to induce similar value functions \emph{uniformly at all states}, which is achievable in their simulator setting and not achievable in standard offline/online settings where there may exist states that are not visitable at all by any policy in this particular \MDPR{}.

\subsection{Comparison with Hausdorff-based metrics}
\label{appendix:proof for metric}

Given a reward mapping $\RR:\pV\times \pA\mapsto \rew$, we say a reward set $\mc{R}\subset\rew$ is a feasible reward set induced by $\RR$, if $\mc{R}=\mathsf{image}(\RR)$.
For any given base metric $d$ between rewards, the Hausdorff (pre)metric $\DH$ which is given by
\[
\DH(\mathcal{R},\widehat{\mathcal{R}})\defeq
\max\big\{ \sup_{r\in \mathcal{R}}\inf_{\widehat{r}\in \widehat{\mathcal{R}}}d(r,\widehat{r}),\sup_{\widehat{r}\in \widehat{\mathcal{R}}}\inf_{r\in \mathcal{R}}d(r,\widehat{r}) \big\}.
\]
The works of \citet{metelli2021provably,metelli2023theoretical} consider finding an estimated feasible set $\widehat{\cR}$ that attains a small $\DH(\mathcal{R},\widehat{\mathcal{R}})$ using a certain base metric $d$.
\cblack

Different from our mapping-based metric (Definition~\ref{def:metric_rm}), the Hausdorff metric measures only the gap between the two sets $\mathcal{R}$ and $\widehat{\mathcal{R}}$, but cannot measure the gap between rewards for each parameter $(V, A)$ in a \emph{paired} fashion. Here we show that for any given base metric $d$, our mapping-based metric is stronger than the Hausdorff metric.
\begin{lemma}[$D^{\sf M}$ is stronger than $D^{\sf H}$]\label{lem:dm dh}
    Given an IRL problem $(\m,\pi^E)$ and a base metric $d:\rew\times \rew\mapsto \real_{\geq 0}$.
    We define the corresponding Hausdorff metric $\DH$  for any reward set pair $(\rl,\rl')$  by
    \begin{align*}
        \DH(\rl,{\rl'})\defeq 
        \max\set{ \sup_{r\in \rl}\inf_{{r}'\in {\rl'}}d(r,{r}'),\sup_{{r}'\in {\rl'}}\inf_{r\in \rl}d(r,{r}') },
    \end{align*}
    and the  mapping-based metric $\DM$ is defined for any reward mapping pair $(\RR,\RR')$  by
    \begin{align*}
        \DM(\RR,{\RR'})\defeq
        \sup_{V\in \pV,A\in \pA} d\paren{\RR(V,A),{\RR'}(V,A)},
    \end{align*}
   For any $(\RR, 
   \RR')$, let $\rl=\mathsf{image}(\RR)$ and $\rl'=\mathsf{image}(\RR')$, then we have
    \begin{align*}
     \DH(\rl,{\rl'})\leq \DM(\RR,{\RR'}).
    \end{align*}
\end{lemma}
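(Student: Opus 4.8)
The plan is to prove the inequality by a direct ``same-parameter pairing'' argument, exploiting the fact that every reward in $\rl$ (resp.\ $\rl'$) arises as the image of some parameter under $\RR$ (resp.\ $\RR'$). Since the Hausdorff metric $\DH$ is the maximum of two directed distances, I would bound each of them by $\DM(\RR,\RR')$ separately and then take the maximum.

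For the first directed distance $\sup_{r\in\rl}\inf_{r'\in\rl'} d(r,r')$, fix any $r\in\rl$. By definition of $\rl=\mathsf{image}(\RR)$, there exists $(V,A)\in\pV\times\pA$ with $r=\RR(V,A)$. The key point is that $\RR'(V,A)$ lies in $\rl'=\mathsf{image}(\RR')$, so it is an admissible competitor in the inner infimum; using this particular competitor gives $\inf_{r'\in\rl'}d(r,r')\le d(\RR(V,A),\RR'(V,A))$. The right-hand side is at most $\sup_{(V,A)\in\pV\times\pA} d(\RR(V,A),\RR'(V,A))=\DM(\RR,\RR')$ by definition of the mapping-based metric. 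As this holds for every $r\in\rl$, taking the supremum over $r$ yields $\sup_{r\in\rl}\inf_{r'\in\rl'}d(r,r')\le\DM(\RR,\RR')$.

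The second directed distance $\sup_{r'\in\rl'}\inf_{r\in\rl}d(r,r')$ is handled symmetrically: any $r'\in\rl'$ equals $\RR'(V,A)$ for some parameter $(V,A)$, and $\RR(V,A)\in\rl$ is a valid competitor for the inner infimum, so $\inf_{r\in\rl}d(r,r')\le d(\RR(V,A),\RR'(V,A))\le\DM(\RR,\RR')$. Note that the argument order $d(\RR(V,A),\RR'(V,A))$ already matches that in $\DM$, so no symmetry of $d$ is needed. Taking the maximum of the two directed bounds gives $\DH(\rl,\rl')\le\DM(\RR,\RR')$.

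I do not expect a genuine obstacle here; the only point to be careful about is that $\mathsf{image}$ guarantees the \emph{existence} of a preimage but not its uniqueness, so I must be content with choosing an arbitrary preimage $(V,A)$ and pairing it with the image of the \emph{same} parameter under the other mapping. This same-parameter pairing is precisely the feature that makes $\DM$ dominate $\DH$: $\DM$ measures distances between $\RR(V,A)$ and $\RR'(V,A)$ for matched parameters, whereas $\DH$ is free to match each reward in one set with its nearest neighbor in the other, which can only decrease the distance.
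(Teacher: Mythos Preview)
Your proposal is correct and follows essentially the same approach as the paper: both argue by writing each $r\in\rl$ as $\RR(V,A)$ and then bounding the inner infimum by the same-parameter competitor $\RR'(V,A)\in\rl'$, with the second directed distance handled symmetrically. Your remark that no symmetry of $d$ is needed is a nice extra observation not made explicit in the paper.
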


\cblue
We then present the following lemma which demonstrates that for some $d$, $\DM$ is \emph{strictly} stronger than $\DH$.

\begin{lemma}[$D^{\sf M}$ is stronger than $D^{\sf H}$]
\label{lem:h_metric_bad}
There exists a base metric $d$ defined on rewards such that for any IRL problem $(\cM,\piE)$, there exists another IRL problem $(\widehat{\cM},\widehat{\pi}^{\sf E})$ such that $D^{\msf{H}}(\cR^\star,\widehat{\cR})=0$, but $D^{\msf{M}}(\sRR,
\hRR)\geq 1/2$, where $\DH$ and $D^{\sf M}$ are the Hausdorff metric and mapping-based metric induced by $d$, respectively;  $\cR^\star$ and $\widehat{\cR}$ are the feasible sets of $(\cM,\piE)$ and $(\widehat{\cM},\widehat{\pi}^{\sf E})$, respectively; $\sRR$ and $\hRR$ are the reward mappings induced by $(\cM,\piE)$ and $(\widehat{\cM},\widehat{\pi}^{\sf E})$, respectively.
    
\end{lemma}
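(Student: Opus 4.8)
The plan is to exploit the structural gap between the two metrics: $\DH$ compares only the image sets $\cR^\star=\mathsf{image}(\sRR)$ and $\widehat{\cR}=\mathsf{image}(\hRR)$, whereas $\DM$ compares the \emph{paired} rewards $\sRR(V,A)$ and $\hRR(V,A)$ at each common parameter $(V,A)$. The key observation is that the reward mapping~\eqref{def:rew-map} is \emph{local} in the transitions: the kernel $\p_h(\cdot\mid s,a)$ enters $[\sRR(V,A)]_h(s,a)$ only through the single term $[\p_h V_{h+1}](s,a)$ and nowhere else. Hence perturbing the transition at one coordinate changes the mapping only at that coordinate, and I will take the base metric $d$ to read off precisely that coordinate.

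Concretely, given any $(\cM,\piE)$ (with $H\ge 2$ and at least two successor states), I fix a coordinate $(h_0,s_0,a_0)$ with $h_0\le H-1$ and build $\widehat{\cM}$ to agree with $\cM$ everywhere except that $\p_{h_0}(\cdot\mid s_0,a_0)$ is replaced by a kernel sending all mass to a different step-$(h_0{+}1)$ state; I keep $\widehat{\piE}=\piE$. As base metric I take the semimetric $d(r,r')\defeq\abs{r_{h_0}(s_0,a_0)-r'_{h_0}(s_0,a_0)}$, a legitimate base metric (symmetric, nonnegative, triangle inequality, $d(r,r)=0$), consistent with the paper's use of semimetrics, and let $\DM,\DH$ be induced by this $d$.

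For the $\DM$ lower bound I evaluate both mappings at a parameter $(V,A)$ whose $V_{h_0+1}$ equals $+(H-h_0)$ on the $\cM$-target and $-(H-h_0)$ on the $\widehat{\cM}$-target. Then $[\sRR(V,A)]_{h_0}(s_0,a_0)$ and $[\hRR(V,A)]_{h_0}(s_0,a_0)$ differ by exactly $\abs{[\p_{h_0}V_{h_0+1}](s_0,a_0)-[\widehat{\p}_{h_0}V_{h_0+1}](s_0,a_0)}=2(H-h_0)\ge 1$, so $\DM(\sRR,\hRR)\ge 1\ge 1/2$. For $\DH=0$ I show that the projections of $\cR^\star$ and $\widehat{\cR}$ onto coordinate $(h_0,s_0,a_0)$ are the \emph{same} interval: that projection is the range of $-A_{h_0}(s_0,a_0)\indic{a_0\notin\supp(\piE_{h_0}(\cdot\mid s_0))}+V_{h_0}(s_0)-[\p_{h_0}V_{h_0+1}](s_0,a_0)$ over $(V,A)\in\pV\times\pA$, and because $V_{h_0}(s_0)$, $V_{h_0+1}$, and the $A$-term range freely and independently, this range depends only on the box constraints of $\pV\times\pA$ (in particular $[\p_{h_0}V_{h_0+1}](s_0,a_0)$ always sweeps $[-(H-h_0),H-h_0]$ no matter the kernel), hence is identical for $\p$ and $\widehat{\p}$. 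Consequently every $r\in\cR^\star$ admits some $\widehat r\in\widehat{\cR}$ with $\widehat r_{h_0}(s_0,a_0)=r_{h_0}(s_0,a_0)$ and vice versa, giving $d$-distance $0$ and thus $\DH(\cR^\star,\widehat{\cR})=0$.

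The main obstacle is the apparent tension that a perturbation large enough to separate the mappings ($\DM\ge 1/2$) might also separate the image sets ($\DH>0$), since changing a transition alters both. The crux of the resolution is that, at the perturbed coordinate, the free value-function parameters $V_{h_0}(s_0)$ and $V_{h_0+1}$ absorb the transition-induced shift, so the coordinate's \emph{projection} is transition-independent even though the paired comparison at a fixed $(V,A)$ still detects the shift; isolating exactly this coordinate is why $d$ is chosen as above. Regarding the quantifier, the same recipe --- perturb one transition and let $d$ read that coordinate --- applies to every given problem possessing a state with at least two reachable successors at some $h_0\le H-1$.
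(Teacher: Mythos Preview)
Your approach is essentially the paper's: fix the coordinate semimetric $d(r,r')=\abs{r_{h_0}(s_0,a_0)-r'_{h_0}(s_0,a_0)}$, set $\widehat{\piE}=\piE$, and perturb only $\p_{h_0}(\cdot\mid s_0,a_0)$ to a Dirac $\delta_{s^\star}$; your $\DH=0$ argument via equality of the coordinate projections is exactly the mechanism the paper exploits (it writes out the matching $(V',A')$ explicitly, but the content is identical).

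There is, however, one genuine imprecision in your $\DM$ lower bound. Your phrasing ``the $\cM$-target'' and the claimed equality
\[
\bigl|[\p_{h_0}V_{h_0+1}](s_0,a_0)-[\widehat{\p}_{h_0}V_{h_0+1}](s_0,a_0)\bigr|=2(H-h_0)
\]
implicitly assume that $\p_{h_0}(\cdot\mid s_0,a_0)$ is itself a Dirac disjoint from $s^\star$ (or at least that $\p_{h_0}(s^\star\mid s_0,a_0)=0$). For an arbitrary $(\cM,\piE)$ this need not hold: if the original kernel has full support you cannot pick $s^\star$ outside it, and with your $V_{h_0+1}$ the difference becomes $2(H-h_0)\bigl(1-\p_{h_0}(s^\star\mid s_0,a_0)\bigr)$, which may be arbitrarily small if you are unlucky in choosing $s^\star$. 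The paper fixes this by pigeonhole: take $s^\star=\argmin_s \p_{h_0}(s\mid s_0,a_0)$, so that $\p_{h_0}(s^\star\mid s_0,a_0)\le 1/S\le 1/2$, and then the difference is at least $(H-h_0)\ge 1$ (the paper uses the even simpler test $V_{h_0+1}(s^\star)=H-h_0$ and zero elsewhere, yielding $(H-h_0)(1-\p_{h_0}(s^\star\mid s_0,a_0))\ge (H-h_0)/2\ge 1/2$). With this one-line fix your argument goes through.
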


 \cblack

\subsection{Proofs for Section~\ref{appendix:comparison}}
\label{appendix:proof-comparison}

\begin{proof}[Proof of Proposition~\ref{prop:can do lind}]
For any $\theta=(V,A)$, 
we first define the error $C^\theta_{h}(s,a)$ as follows:
    \begin{align}
        C^\theta_h(s,a)\defeq b^\theta_h(s,a)+\abs{A_{h}(s,a)\cdot\paren{\indic{a\in \supp\paren{\piE_h(\cdot|s)}}-\indic{a\in \supp\paren{\widehat{\pi}^{\sf E}_h(\cdot|s)}}}},
    \end{align}
    where $b^\theta_h(s,a)$ is defined in \Eqref{rlp:eq_8}.
    Let $\RR^{\tau}$ and $\RR^{\widehat{\tau}}$ be the ground truth reward mappings induced by $\tau$ and $\widehat{\tau}$. We consider the concentration event $\cE$ defined in Lemma~\ref{lem:concentration_2}.
   Conditioning on $\cE$, we next prove the following result:
    \begin{align}
       C^\theta_h(s,a) \geq \max\set{\abs{\rt_h(s,a)-\rht(s,a)}, \abs{\brac{\paren{\pp_h-\widehat{\pp}_h}V_{h+1}}(s,a)}},\label{eq:linp_claim_0}
    \end{align}
    holds for any $\theta\in \pV\times \pA$ and any $(h,s,a)\in [H]\times \cS\times \cA$,
    where $\rt=\RR^{\tau}(V,A)$, $\rht=\RR^{\widehat{\tau}}(V,A)$.

 By Lemma~\ref{lem:concentration_2}, under event $\cE$, we directly have 
\begin{align}
    C^\theta_h(s,a)\geq  b^\theta_h(s,a)\geq \abs{\brac{\paren{\pp_h-\widehat{\pp}_h}V_{h+1}}(s,a)},\label{eq:linp_claim_1}
\end{align}
holds for any $\theta\in \pV\times \pA$ and any $(h,s,a)\in [H]\times \cS\times \cA$. 
For the second part of \Eqref{eq:linp_claim_0}, by definition of $\rt$ and $\rht$, we have 
\begin{align}
    &\abs{\rt_h(s,a)-\rht(s,a)}
    =\Big|-A_{h}(s,a)\cdot\indic{a\notin \supp\paren{\piE_h(\cdot|s)}}+V_{h}(s)-\brac{\p_hV_{h+1}}(s,a)\notag\\
     &+A_{h}(s,a)\cdot\indic{a\notin \supp\paren{\widehat{\pi}^{\sf E}_h(\cdot|s)}}-V_h(s)+\brac{\widehat{\p}_hV_{h+1}}(s,a)\Big|\notag\\
     &\leq \abs{A_{h}(s,a)\cdot \paren{\indic{a\notin \supp\paren{\piE_h(\cdot|s)}}-\indic{a\notin \supp\paren{\widehat{\pi}^{\sf E}_h(\cdot|s)}}}}+\abs{\paren{\pp_h-\widehat{\pp}_h}V_{h+1}(s,a)}\notag\\
     &\leq \abs{A_{h}(s,a)\cdot \paren{\indic{a\notin \supp\paren{\piE_h(\cdot|s)}}-\indic{a\notin \supp\paren{\widehat{\pi}^{\sf E}_h(\cdot|s)}}}}+b^\theta_h(s,a)\notag\\
     &=C^\theta_h(s,a)\label{eq:linp_claim_2}
     ,
\end{align}
where the last line is by Lemma~\ref{lem:concentration_2}.
Combining \Eqref{eq:linp_claim_1} and \Eqref{eq:linp_claim_2}, we compelete the proof for \Eqref{eq:linp_claim_0}.
When \Eqref{eq:linp_claim_0} holds, by \citet[Lemma~20]{lindner2023active}, there exists a policy $\pi^{L}$ (see the proof of \citet[Lemma~20]{lindner2023active} for the construction of $\pi^L$) such that
\begin{align}
    D^L(\tau,\widehat{\tau})&\lesssim \sup_{\theta\in \pV\times \pA}\sum_{h\in [H]}\sum_{s,a}d^{\pi^L}_h(s,a)\cdot C^\theta_h(s,a)\notag\\
    &=\sup_{\theta\in \pV\times \pA}\sum_{h\in [H]}\sum_{s,a}d^{\pi^L}_h(s,a)\cdot \abs{A_{h}(s,a)\cdot \paren{\indic{a\notin \supp\paren{\piE_h(\cdot|s)}}-\indic{a\notin \supp\paren{\widehat{\pi}^{\sf E}_h(\cdot|s)}}}}\notag\\
    &+\sup_{\theta\in \pV\times \pA}\sum_{h\in [H]}\sum_{s,a}d^{\pi^L}_h(s,a)\cdot b^\theta_h(s,a),\label{eq:linp_end_1}
\end{align}
where $\set{d^{\pi^L}_h(\cdot)}_{h\in [H]}$ is the state-action visitation distribution induced by $\pp$ and $\pi^L$. 
Following the proof of Theorem~\ref{thm:online_main}, we can prove that under $\cE$
\begin{align}
    &\sup_{\theta\in \pV\times \pA}\sum_{h\in [H]}\sum_{s,a}d^{\pi^L}_h(s,a)\cdot \abs{A_{h}(s,a)\cdot \paren{\indic{a\notin \supp\paren{\piE_h(\cdot|s)}}-\indic{a\notin \supp\paren{\widehat{\pi}^{\sf E}_h(\cdot|s)}}}}\lesssim \epsilon,\notag\\
    &\sup_{\theta\in \pV\times \pA}\sum_{h\in [H]}\sum_{s,a}d^{\pi^L}_h(s,a)\cdot b^\theta_h(s,a)\lesssim \epsilon,\label{eq:linp_end_2}
\end{align}
hold, provided that
\begin{align*}
K\geq\widetilde{\cO}\paren{\frac{H^4S^2A}{\epsilon^2}+\frac{H^2SA\eta}{\epsilon}}, \quad KH \geq N \geq \widetilde{\cO}\paren{{\sqrt{H^9S^7A^7K}}}.
\end{align*}
Combining \Eqref{eq:linp_end_1} and \Eqref{eq:linp_end_2}, we complete the proof.

\end{proof}

\begin{proof}[Proof of Proposition~\ref{prop:lindner}]
To begin with,
we prove a stronger result: 
  for any $\tau=(\cM,\piE)$ and any $\widehat{\tau}=(\widehat{\cM},\widehat{\pi}^{\sf E})$, if $\piE=\widehat{\pi}^{\sf E}$, then $D^L(\tau,\widehat{\tau})=0$.
  
    Let $\RR^{\tau}$ and $\RR^{\widehat{\tau}}$ be the reward mappings induced by $\tau$ and $\widehat{\tau}$, respectively. For any $\theta=(V,A)$, we define $\rt=\RR^{\tau}(V,A)$ and $\rht=\RR^{\widehat{\tau}}(V,A)$.
    By the construction of reward mappings and the definition of optimal policies, we have that
 $\pi\in \Pi^\star_{\cM\cup r}$ is equivalent to 
 \begin{align}
    A_h(s,a)\cdot \indic{\piE_h(a|s)=0}=0, \qquad \forall (h,s,a)~\textrm{s.t.}~\pi_h(a|s)\neq 0.
\end{align}
Similarly, $\pi\in \Pi^\star_{\widehat{\cM}\cup \rht}$ is equivalent to 
\begin{align}
    A_h(s,a)\cdot \indic{\widehat{\pi}^{\sf E}_h(a|s)=0}= A_h(s,a)\cdot \indic{\piE_h(a|s)=0}=0, \qquad \forall (h,s,a)~\textrm{s.t.}~\pi_h(a|s)\neq 0.
\end{align}
Hence, we can conclude that $\Pi^\star_{{\cM}\cup {r}^{\theta}}=\Pi^\star_{\widehat{\cM}\cup \rht}$.
Notice that $\rt=\set{\rt\mid \theta=(V,A)}$ and $\rt=\set{\rht\mid \theta=(V,A)}$, we then have
\begin{align}\label{eq:lineder}
&\sup_{r\in \cR_{\tau}}\inf_{\widehat{r}\in \wrt}\sup_{\widehat{\pi}^\star\in \Pi^\star_\wmr}\max_{a}\abs{Q^{\pi^\star}_{1}(s_1,a;\mr)-Q_1^{\widehat{\pi}^\star}(s_1,a;\mr)}\notag\\
&=
\sup_{\theta\in \para}\inf_{\theta'\in \para}\sup_{\widehat{\pi}^\star\in \Pi^\star_{\widehat{\cM}\cup {r}^{\widehat{\tau},\theta'}}}\max_{a}\abs{Q^{\pi^\star}_{1}(s_1,a;\cM\cup \rt)-Q_1^{\widehat{\pi}^\star}(s_1,a;\cM\cup \rt)}\notag\\
&=\sup_{\theta\in \para}\sup_{\widehat{\pi}^\star\in \Pi^\star_{\widehat{\cM}\cup \rht}}\max_{a}\abs{Q^{\pi^\star}_{1}(s_1,a;\cM\cup \rt)-Q_1^{\widehat{\pi}^\star}(s_1,a;\cM\cup \rt)}=0,
\end{align}
where the last line is due to $\Pi^\star_{{\cM}\cup {r}^{\theta}}=\Pi^\star_{\widehat{\cM}\cup \rht}$.
Follow the same proof of \Eqref{eq:lineder}, we have
\begin{align}\label{eq:lineder_1}
    \sup_{\widehat{r}\in \wrt}\inf_{{r}\in R_\tau}\sup_{{\pi}^\star\in \Pi^\star_\mr}\max_{a}\abs{Q^{\pi^\star}_{1}(s_1,a;\mr)-Q_1^{\widehat{\pi}^\star}(s_1,a;\mr)}=0.
\end{align}
Combining \Eqref{eq:lineder} and \Eqref{eq:lineder_1}, we conclude that $D^L(\tau,\widehat{\tau})=0$.

We then construct $\tau$ and $\widehat{\tau}$, respectively. We set $\cS=\set{1,2,\ldots, S}, \cA=\set{1,2,\ldots,A}$, and $H\geq 2$. Design the transitions $\pp$ and $\widehat{\pp}$ as follows:  
\begin{align}
    \pp_h(1|s,a)=1, \qquad \widehat{\pp}_h(2|s,a)=1\qquad \forall (h,s,a)\in [H]\times \cS\times \cA.
\end{align}
Let $\cM=(\cS, \cA, H, \pp)$, $\widehat{\cM}=(\cS, \cA, H, \widehat{\pp})$.
Define $\piE$ and $(\bar{V},\bar{A})\in \pV\times \pA$ by
\begin{align}
    &\piE_h(1|s)=1, \qquad\forall (h,s)\in [H]\times \cS\times \cA\notag\\
    &\bar{V}_h(s)=\indic{s=1},\qquad \bar{A}\equiv \mathbf{0}, \qquad\forall h\in [H].
\end{align}
Set $\tau=(\cM, \piE)$ and $\widehat{\tau}=(\widehat{\cM}, \piE)$. By the result we proved at first, we have
\begin{align}
    D^L(\tau, \widehat{
    \tau
    })=0.
\end{align}
Let $1$ be the initial state i.e., $\pp(s_1=1)=1$.
By definition of $\DpiTheta$, we obtain that 
\begin{align}
    \DallTheta(\RR^\tau, \RR^{\widehat{\tau}})\geq d^{\pi}\paren{\RR^\tau\paren{\bar{V},\bar{A}}, \RR^{\widehat{\tau}}\paren{\bar{V},\bar{A}}}
    \geq \EE_{\pi}\brac{\abs{V^\pi_2\paren{s;\RR^\tau\paren{\bar{V},\bar{A}}}-V^\pi_2\paren{s;\RR^{\widehat{\tau}}\paren{\bar{V},\bar{A}}}}}
    =\abs{V_2(1)-V_2(2)}=1,
\end{align}
where the send last equality is due to the construction of $\pp$ and $\widehat{\pp}$.
\end{proof}

\begin{proof}[Proof of Lemma~\ref{lem:dm dh}]
    Since $\rl$ and $\rl'$ are induced by $\RR$ and $\RR'$, then for any $r\in \rl$ and $r'\in \rl$, there exist $V$, $V'\in \pV$, $A$, $A\in \pA$ such that
    \[
    r=\RR(V,A), \qquad r'=\RR'(V',A').
    \]
    Then, we have
    \begin{align*}
         \sup_{r\in \rl}\inf_{{r'}\in {\rl'}}d(r,{r'})&
         =\sup_{V\in \pV,A\in \pA}\inf_{V'\in \pV,A'\in \pA}d(\RR(V,A),\RR'(V',A'))\\
         &\leq \sup_{V\in \pV,A\in \pA}d(\RR(V,A),\RR'(V,A))=\DM(\RR,{\RR'}).
    \end{align*}
    Similarly, we obtain 
    \[
    \sup_{{r}'\in {\rl'}}\inf_{r\in \rl}d(r,{r}')\leq \DM(\RR,{\RR'}).
    \]
    Hence, we conclude that 
    \[
     \DH(\rl,{\rl'})\leq \DM(\RR,{\RR'}).
    \]
\end{proof}

\begin{proof}[Proof of Lemma~\ref{lem:h_metric_bad}]
Fix a $(\bar{s},\bar{a})\in \cS\times \cA$, we define metric $d$ by
\begin{align}
    d(r,r')\defeq \abs{r_1(\bar{s},\bar{a})-r'_1(\bar{s},\bar{a})}.
\end{align}
Give an IRL  problem $(\cM,\piE)$,
let $\pp$ be the transition dynamics of $(\cM,\piE)$. 
Let $s^\star\defeq\argmin_{s\in \cS} \pp_1(s|\bar{s},\bar{a})$. By 
the Pigeonhole Principle, we have $\pp_1(s^\star|\bar{s},\bar{a})\leq 1/S\leq 1/2$. We construct transition $\pp'$ by
\begin{align}
    \pp'_1(s^\star|\bar{s},\bar{a})=1.
\end{align}
Let $\widehat{\cM}=(\cS,\cA,H,\pp')$, $\widehat{\pi}^{\sf E}=\piE$, and $\widehat{\RR}$ be the reward mapping induced by $(\widehat{\cM},\widehat{\pi}^{\sf E})$. 
For any $(V,A)\in \bar{\cV}\times \bar{\cA}$, we define $({V}',A')\in \bar{\cV}\times \bar{\cA}$ by 
\begin{align}
    \begin{cases}
        V'_2(s^\star)=\brac{\pp_1 V_2}(\bar{s}, \bar{a})&{}\\
       V'_h(s)=V_h(s)&(h,s)\neq (2, s^\star), 
    \end{cases}
    ~~\text{and}~~ A'=A.
\end{align}
Then we have
\begin{align}\label{eq:lem_c_5_1}
d\paren{\RR(V,A), \hRR(V',A')}&=\abs{\brac{\RR(V,A)}_1(\bar{s}, \bar{a})-\brac{\widehat{\RR}(V,A)}_1(\bar{s}, \bar{a})}\\
&=\Big|-A_1(\bar{s}, \bar{a})\cdot \indic{\bar{a}\in \supp\paren{\piE_1(\cdot|\bar{s})}}+V_1(s)-\brac{\pp_1 V_2}(\bar{s},\bar{a})\notag\\
&-\set{-A'_1(\bar{s}, \bar{a})\cdot \indic{\bar{a}\in \supp\paren{\widehat{\pi}^{\sf E}_1(\cdot|\bar{s})}}+V'_1(s)-\brac{\pp'_1 V'_2}(\bar{s},\bar{a})}\Big|\notag\\
&=\abs{\brac{\pp'_1 V'_2}(\bar{s},\bar{a})-\brac{\pp_1 V_2}(\bar{s},\bar{a})}\notag\\
&=\abs{V'_2(s^\star)-\brac{\pp_1 V_2}(\bar{s},\bar{a})}=0
\end{align}
On the other hand, for any $(V',A')\in\bar{\cV}\times \bar{\cA}$, we set $(V,A)\in\bar{\cV}\times \bar{\cA}$ by
\begin{align}
    \begin{cases}
        V_2(s)=V_2'(s^\star), & s\in \cS,\\
        V_h(s)=V'_h(s)&h\neq 2, \\
    \end{cases} 
\end{align}
which implies that 
\begin{align}
\brac{\pp_1 V_2}(\bar{s},\bar{a})=V'_2(s^\star)=\brac{\pp'_1 V'_2}(\bar{s},\bar{a}).
\end{align}

Hence, we have
\begin{align}\label{eq:lem_c_5_2}
d\paren{\RR(V,A), \hRR(V',A')}&=\abs{\brac{\RR(V,A)}_1(\bar{s}, \bar{a})-\brac{\hRR(V,A)}_1(\bar{s}, \bar{a})}\\
&=\Big|-A_1(\bar{s}, \bar{a})\cdot \indic{\bar{a}\in \supp\paren{\piE_1(\cdot|\bar{s})}}+V_1(s)-\brac{\pp_1 V_2}(\bar{s},\bar{a})\notag\\
&-\set{-A'_1(\bar{s}, \bar{a})\cdot \indic{\bar{a}\in \supp\paren{\widehat{\pi}^{\sf E}_1(\cdot|\bar{s})}}+V'_1(s)-\brac{\pp'_1 V'_2}(\bar{s},\bar{a})}\Big|\notag\\
&=\abs{\brac{\pp'_1 V'_2}(\bar{s},\bar{a})-\brac{\pp_1 V_2}(\bar{s},\bar{a})}=0
\end{align}
Combining \Eqref{eq:lem_c_5_1} and \Eqref{eq:lem_c_5_2}, we have
$\DH(\cR,\widehat{\cR})=0$. 

Next, we lower bound $\DM(\RR,\widehat{\RR})$. First, we define a parameter $(\widetilde{V},\widetilde{A})\in \bar{\cV}\times \bar{\cA}$ as follows:
\begin{align}
\begin{cases}
 \widetilde{V}_2(s^\star)=H-1,&{}\\
 \widetilde{V}_h(s)=0, &(h,s)\neq (2,s^\star),
\end{cases}
\qquad \widetilde{A}\equiv \mathbf{0}.
\end{align}
Then we have 
\begin{align}
    \DM(\RR,\widehat{\RR})&\geq d\paren{\RR(\widetilde{V}, \widetilde{A}),\hRR(\widetilde{V}, \widetilde{A})}=\abs{\brac{\pp'_1 \widetilde{V}_2}(\bar{s},\bar{a})-\brac{\pp_1 \widetilde{V}_2}(\bar{s},\bar{a})}\notag\\
    &=\abs{\paren{H-1}\paren{\pp_1(s^\star|\bar{s},\bar{a})-1}}\geq \frac{H-1}{2}\geq \frac{1}{2},
\end{align}
where the last line is due to $\pp_1(s^\star|\bar{s},\bar{a})\leq 1/2$.

\end{proof}

\subsection{Proof of Proposition~\ref{prop:mon_use_reward}}
\label{app:proof-mon_use_reward}

\begin{proof}[Proof of Proposition~\ref{prop:mon_use_reward}]
    Since $\widehat{\pi}$ is an $\epsilon$-optiaml policy in $\cM\cup{\widehat{r}}$, we have
    \begin{align}\label{eq_prop_mon_1}
        \epsilon'+V^{\widehat{\pi}}(s_1;\widehat{r})\ge V^{{\pi}}(s_1;\widehat{r}).
    \end{align}
    In the same way, ${\pi}$ is an $\bar{\epsilon}$-optiaml policy in $\cM\cup{{r}}$, and therefore, we obtain that
    \begin{align}\label{eq_prop_mon_2}
        \bar{\epsilon}+V^{{\pi}}(s_1;{r})\ge V^{{\widehat{\pi}}}(s_1;{r}).
    \end{align}
    And by $\widehat{r}_h(s,a)\le r_h(s,a)$ for all $(h,s,a)\in [H]\times\cS\times \cA$, we have 
    \begin{align}\label{eq_prop_mon_3}
        V^{{\pi}}(s_1;{r})\geq V^{{\pi}}(s_1;\widehat{r}),\qquad  V^{\widehat{\pi}}(s_1;{r})\geq V^{\widehat{\pi}}(s_1;\widehat{r}).
    \end{align}
    Combining \Eqref{eq_prop_mon_1}, \Eqref{eq_prop_mon_2} and \Eqref{eq_prop_mon_3}, we conclude that
    \begin{align}\label{eq_prop_mon_4}
        \epsilon'+\bar{\epsilon}+V^{{\pi}}(s_1;{r})\geq \epsilon'+V^{\widehat{\pi}}(s_1;{r})\geq \epsilon'+V^{\widehat{\pi}}(s_1;\widehat{r})\geq V^{{\pi}}(s_1;\widehat{r}).
    \end{align}
    Hence, we have
    \begin{align}
        |V^{{\pi}}(s_1;{r})-V^{\widehat{\pi}}(s_1;{r})|&\leq |\epsilon'+\bar{\epsilon}+V^{{\pi}}(s_1;{r})-\paren{\epsilon'+V^{\widehat{\pi}}(s_1;{r})}|+\bar{\epsilon}\notag\\
        &\leq |\epsilon'+\bar{\epsilon}+V^{{\pi}}(s_1;{r})-V^{{\pi}}(s_1;\widehat{r})|+\bar{\epsilon}\notag\\
        &\leq 2\bar{\epsilon}+\epsilon'+|V^{{\pi}}(s_1;{r})-V^{{\pi}}(s_1;\widehat{r})|\leq \epsilon+\epsilon'+2\bar{\epsilon},
    \end{align}
    where the first and last line is by triangle inequality and the second is by \Eqref{eq_prop_mon_4}.
\end{proof}

\cblack

\def \wellp{\eta}
\def \dt{\mrm{dim}\paren{\Theta}}
\def \hsa{[H]\times \cS\times \cA}

\section{Proofs for Section ~\ref{sec:offline learning}}\label{appendix:proof for offline learning}

\subsection{Some lemmas}

\begin{lemma}[Concentration event]\label{lem:concentration_1}
    Under the assumption of Theorem~\ref{thm:offline_main}, there exists absolute constant $C_1,~C_2$ such that the concentration event $\mc{E}$ holds with probability at least $1-\delta$, where
    \begin{align}
          \mc{E} \defeq \Bigg\{
          \text{(i):} &~  \abs{\brac{(\p_h-\widehat{\p}_h) V_{h+1}}(s, a)}\leq 
          b^\theta_h(s,a)~~~\forall~\theta=(V,A)\in  \para,~ (h,s,a)\in \hsa,
       \\
       \text{(ii):} &~ \frac{1}{{N}_{h}(s, a)\vee 1} \le \frac{C_1\iota}{Kd^{\pib}_h(s, a)} ~~~\forall 
      (h,s,a)\in[H]\times\mc{S}\times\mc{A},
        \\\text{(iii):} &~ {N}^e_{h}(s,a)\geq 1 ~~~\forall (s,a)\in \cS\times \cA~\textrm{s.t.}~ d_h^{\pib}(s,a)\geq \frac{ C_2\wellp\iota}{ K}, a\in \supp\paren{\piE_h(\cdot|s)}
         \Bigg\},
    \end{align}
    where
    $b_h(s,a)$ is defined in \Eqref{rlp:eq_8}, $C^\star$ is specified in Definition \ref{ass:off_1}, and ${N}^e_{h}(s,a),\eta$ are given by
    \[
    {N}^e_{h}(s,a)\defeq
    \begin{cases}
        \sum_{(s_h,a_h, e_h)\in \cD}\indic{(s_h, e_h)=(s,a)}\qquad &\textrm{in option $1$},\\
        N^b_h(s,a)\qquad &\textrm{in option $2$},
    \end{cases}
  \qquad
   \wellp\defeq
   \begin{cases}
       \frac{1}{\wellpi}\qquad &\textrm{in option $1$},\\
        1\qquad &\textrm{in option $2$.}
   \end{cases}
    \]

\end{lemma}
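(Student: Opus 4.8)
The plan is to prove the three statements (i)--(iii) separately, each on an event of probability at least $1-\delta/3$, and then conclude by a union bound. Parts (ii) and (iii) are routine consequences of binomial tail bounds, so the substantive work is part (i), which requires a uniform Bernstein argument over the entire parameter set $\para$ together with careful bookkeeping to match the bonus $b^\theta_h$ in~\eqref{rlp:eq_8}.

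For part (ii), I would note that for each fixed $(h,s,a)$ the count $N_h(s,a)=N_h^b(s,a)$ is a sum of $K$ i.i.d.\ indicators across trajectories, so $N_h^b(s,a)\sim\Bin(K,d^{\pib}_h(s,a))$. Applying Lemma~\ref{lemma:binomial-concentration} with $n=K$, $p=d^{\pib}_h(s,a)$ and failure probability $\delta/(3HSA)$, then union bounding over $\hsa$, yields $\tfrac{1}{N_h^b(s,a)\vee 1}\le \tfrac{C_1\iota}{K d^{\pib}_h(s,a)}$ simultaneously, which is exactly (ii). For part (iii), the key observation is that in both options $\E[N_h^e(s,a)]\ge C_2\iota$ whenever the stated lower bound on $d^{\pib}_h(s,a)$ holds: in option~2, $N_h^e(s,a)=N_h^b(s,a)\sim\Bin(K,d^{\pib}_h(s,a))$ and $\wellp=1$, so $\E[N_h^e(s,a)]=Kd^{\pib}_h(s,a)\ge C_2\iota$; in option~1, $N_h^e(s,a)\sim\Bin(K,d^{\pib}_h(s)\piE_h(a\mid s))$, and using $d^{\pib}_h(s)\ge d^{\pib}_h(s,a)$, the well-posedness bound $\piE_h(a\mid s)\ge\wellpi$ for $a\in\supp(\piE_h(\cdot\mid s))$, and $\wellp=1/\wellpi$, we again get $\E[N_h^e(s,a)]\ge Kd^{\pib}_h(s,a)\wellpi\ge C_2\iota$. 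In either case $\Pr[N_h^e(s,a)=0]=(1-p)^K\le\exp(-Kp)\le\exp(-C_2\iota)$, so choosing $C_2$ large and union bounding over $\hsa$ keeps the failure probability at most $\delta/3$.

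For part (i) the bonus must dominate $\abs{[(\p_h-\widehat{\p}_h)V_{h+1}](s,a)}$ uniformly over all $\theta=(V,A)\in\para$. Since the components $V_{h+1}$ range over the possibly infinite set $\pV^\para_{h+1}$, I would first build an $(\epsilon/H)$-net $\mathcal{C}_{h+1}$ of $\pV^\para_{h+1}$ in $\linfs{\cdot}$ with $\log\abs{\mathcal{C}_{h+1}}\le\cns$. Conditioning on the visitation counts (so that the next-state samples at each $(s,a)$ are i.i.d.\ from $\p_h(\cdot\mid s,a)$), Bernstein's inequality applied to each fixed net point $\tilde{V}$ and each $(h,s,a)$, followed by a union bound over $\mathcal{C}_{h+1}\times\hsa$, gives a bound of order $\sqrt{[\v_h\tilde{V}](s,a)\,\cns\iota/(N_h^b(s,a)\vee 1)}+H\cns\iota/(N_h^b(s,a)\vee 1)$ for all net points at once. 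For a general $V_{h+1}$ I would take the nearest $\tilde{V}$ and control the gap via $\abs{[(\p_h-\widehat{\p}_h)(V_{h+1}-\tilde{V})](s,a)}\le 2\linfs{V_{h+1}-\tilde{V}}\le 2\epsilon/H$, which produces the additive $\tfrac{\epsilon}{H}$ piece of the bonus.

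The remaining and most delicate ingredient is converting the population-variance bound into the empirical-variance form of~\eqref{rlp:eq_8}. Here I would invoke the standard comparison $[\v_h\tilde{V}](s,a)\lesssim [\widehat{\v}_h\tilde{V}](s,a)+H^2\cns\iota/(N_h^b(s,a)\vee 1)$ (as in \citet{azar2017minimax,xie2021policy}), which upon taking square roots contributes the lower-order $H\cns\iota/(N\vee 1)$ term, and the Minkowski-type standard-deviation estimate $\bigabs{\sqrt{[\widehat{\v}_h\tilde{V}](s,a)}-\sqrt{[\widehat{\v}_h V_{h+1}](s,a)}}\le\sqrt{[\widehat{\v}_h(\tilde{V}-V_{h+1})](s,a)}\le\linfs{\tilde{V}-V_{h+1}}\le\epsilon/H$, which upon multiplying by $\sqrt{\cns\iota/(N\vee 1)}$ yields exactly the cross term $\tfrac{\epsilon}{H}\sqrt{\cns\iota/(N\vee 1)}$. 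Combining these via $\sqrt{a+b}\le\sqrt{a}+\sqrt{b}$ recovers all three pieces of $b^\theta_h$: the leading $\sqrt{\widehat{\v}_h V_{h+1}\cdot\cns\iota/(N\vee 1)}$, the term $H\cns\iota/(N\vee 1)$, and $\tfrac{\epsilon}{H}\paren{1+\sqrt{\cns\iota/(N\vee 1)}}$. I expect this variance-translation bookkeeping---ensuring every covering error and variance-comparison error lands in the correct term of the bonus---to be the principal obstacle, since the concentration inputs themselves (binomial tails and Bernstein) are entirely standard.
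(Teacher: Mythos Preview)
Your proposal is correct and matches the paper's argument essentially verbatim for parts (ii) and (iii). For part (i), you take a slightly different but equally valid route: you apply the standard (population-variance) Bernstein inequality at each net point and then invoke a population-to-empirical comparison $[\v_h\tilde V](s,a)\lesssim[\widehat{\v}_h\tilde V](s,a)+H^2\cns\iota/(N_h^b(s,a)\vee 1)$ to land on the empirical-variance form of the bonus. The paper instead applies the \emph{empirical} Bernstein inequality of \citet[Theorem~4]{maurer2009empirical} directly at each net point, which yields the empirical variance $\widehat{\v}_h\tilde V_{h+1}$ from the outset and eliminates the conversion step; after that, its covering and triangle-inequality passage from $\tilde V$ to $V$ (including the $\sqrt{a+b}\le\sqrt{a}+\sqrt{b}$ split and the $\sqrt{[\widehat{\v}_h(\tilde V-V)](s,a)}\le\epsilon/H$ estimate) is identical to yours. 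Both approaches produce exactly the three-term bonus in~\eqref{rlp:eq_8} up to the absolute constant $C$, so the difference is purely one of bookkeeping economy---the empirical-Bernstein route is one step shorter, while your route keeps the underlying concentration input more elementary.
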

\begin{proof}
When ${N}_{h}(s, a)=0$, then $\widehat{\p}_{h}(\cdot|s, a)=0$, as a result, claim (i) holds trivially.
We then consider the case where ${N}_{h}(s, a)\geq 1$. 
For any $h\in [H]$, we define $\mathcal{N}_{\epsilon,h}$ as an $\epsilon/H$-net with respect to $\|\cdot\|_{\infty}$ norm for $\pV^{\Theta}_h$. By definition of $  \mathcal{N}(\Theta; \eps/H)$, we have
\[
\log\abs{\mathcal{N}_{\epsilon,h}}\leq  \log\mathcal{N}(\Theta; \eps/H).
\]

For fixed $\widetilde{V}_{h+1}\in \cN_{\eps,h+1}$, $(h,s,a)\in [H]\times\cS\times \cA$, by the empirical Bernstein inequality \citep[Theorem 4]{maurer2009empirical}, there exists some absolute constant $c>0$ such that

 	\begin{align*}
 		\abs{\brac{(\p_h-\widehat{\p}_h) \widetilde{V}_{h+1}}(s, a)} &\leq  \sqrt{\frac{c}{{N}_h^b(s,a)\vee 1}\brac{\widehat{\v}_h\widetilde{V}_{h+1}}(s,a)\log\frac{3HSA\cdot\abs{\cN_{\eps,h+1}}}{\delta}}\notag\\
   &+\frac{cH}{{N}_h^b(s,a)\vee 1}\log\frac{3HSA\cdot\abs{\mathcal{N}_{\epsilon,h+1}}}{\delta}\\
   &\lesssim  \sqrt{\frac{c\cn\iota}{{N}_h^b(s,a)\vee 1}\brac{\widehat{\v}_h\widetilde{V}_{h+1}}(s,a)}+\frac{cH\cn\iota}{{N}_h^b(s,a)\vee 1}
 \end{align*}
	with probability at least $1-\delta/(3HSA\abs{\cN_{\epsilon,h}})$. Here $\lesssim$ hides absolute constants. 
	Taking the union bound over all $\widetilde{V}_{h+1}\in\mathcal{N}_{\epsilon,h+1}$ and $(h,s,a)\in \hsa$,
	we know that with probability at least $1-\delta/3$, 
	\[
		\abs{\brac{(\p_h-\widehat{\p}_h) \widetilde{V}_{h+1}}(s, a)} \lesssim \sqrt{\frac{c\cn\iota}{{N}_h^b(s,a)\vee 1}\brac{\widehat{\v}_h\widetilde{V}_{h+1}}(s,a)}+\frac{cH\cn\iota}{{N}_h^b(s,a)\vee 1}
	\]
\def \tV{\widetilde{V}}	
	holds simultaneously for all $\widetilde{V}\in\mathcal{N}_{\epsilon,h}$ and $(h,s,a)\in \hsa$.

   For any $(V,A)\in \para$ and $h\in [H]$, there exists a $\tV_{h}\in \pV^{\Theta}_h $ such that $\Vert V_h-\widetilde{V}_h\Vert_{\infty}\leq \epsilon/H$. Denote $(\tV_1,\ldots,\tV_H)$ as $\tV$.
   By applying the triangle inequality, we deduce that
	\def \lab{eq:good_event_online}
\begin{align}
 & \abs{\brac{(\p_h-\widehat{\p}_h) V_{h+1}}(s, a)} \leq\left|\brac{\big(\widehat{\p}_{h}-\p_{h}\big){\tV_{h+1}}}(s,a)\right|+2\big\Vert\widetilde{V}-V\big\Vert_{\infty}\notag\\
 & \lesssim \sqrt{\frac{c\cn\iota}{{N}_h^b(s,a)\vee 1}\brac{\widehat{\v}_h\widetilde{V}_{h+1}}(s,a)}+\frac{cH\cn\iota}{{N}_h^b(s,a)\vee 1}+\frac{\epsilon}{H}\notag\\
 &\leq \sqrt{\frac{c\cn\iota}{{N}_h^b(s,a)\vee 1}\brac{\widehat{\v}_h{V}_{h+1}}(s,a)}+
 \sqrt{\frac{c\cn\iota}{{N}_h^b(s,a)\vee 1}\brac{\widehat{\v}_h\paren{\tV_{h+1}-{V}_{h+1}}}(s,a)}\notag\\
 &+
 \frac{cH\cn\iota}{{N}_h^b(s,a)\vee 1}+\frac{\epsilon}{H}\notag\\
 &\leq\sqrt{\frac{c\cn\iota}{{N}_h^b(s,a)\vee 1}\brac{\widehat{\v}_h{V}_{h+1}}(s,a)}+
 \sqrt{\frac{c\cn\iota\epsilon^2}{H^2\cdot{N}_h^b(s,a)\vee 1}}+
 \frac{cH\cn\iota}{{N}_h^b(s,a)\vee 1}+\frac{\epsilon}{H}\notag\\
 &= \sqrt{\frac{c\cn\iota}{{N}_h^b(s,a)\vee 1}\brac{\widehat{\v}_h{V}_{h+1}}(s,a)}+
 \frac{cH\cn\iota}{{N}_h^b(s,a)\vee 1}+\frac{\epsilon}{H}\paren{1+\sqrt{\frac{c\cn\iota}{{N}_h^b(s,a)\vee 1}}}
 \label{eq:good_event_online_1}
\end{align}
holds with probability at least $1-\delta/3$ for all $\theta=(V,A)\in \para$ and $(h,s,a)\in \hsa$. Here, 
the second inequality is by $\sqrt{a+b}\leq \sqrt{a}+\sqrt{b}$ and 
the last inequality in is {by $\brac{\widehat{\v}_h\paren{\tV_{h+1}-{V}_{h+1}}}(s,a)\leq \frac{\epsilon^2}{H^2}$.} On the other hand, by $\abs{V_h}_\infty\leq H-h+1$, 
we obtain that
\begin{align}
\abs{\brac{\paren{\p_h-\widehat{\p}_h}V_{h+1}}(s,a)}\leq 2(H-h+1)\leq 2H,\label{\lab_2}
\end{align}
for all $V\in \para$ and $(h,s,a)\in \hsa$.
Recall that, $b^\theta_h(s,a)$ is given by
\begin{align}
b^\theta_h(s,a)&=C\cdot\min\Bigg\{\sqrt{\frac{\cn\iota}{{N}_h^b(s,a)\vee 1}\brac{\widehat{\v}_h{V}_{h+1}}(s,a)}+
 \frac{H\cn\iota}{{N}_h^b(s,a)\vee 1}\notag\\
& +\frac{\epsilon}{H}\paren{1+\sqrt{\frac{\cn\iota}{{N}_h^b(s,a)\vee 1}}}, H\Bigg\},\label{eq:def_b_theta}
\end{align}
for some absolute constant $C$.
Combining \Eqref{\lab_1} and \Eqref{\lab_2}, it turns out that Claims (ii) holds.\\

For claim (ii), notice that ${N}_h(s, a)\sim \Bin(K, d^{\pib}_h(s, a))$. Applying Lemma~\ref{lemma:binomial-concentration} yields that
\begin{align*}
    \frac{1}{N_h(s, a)\vee 1} \le \frac{8}{K \cdot d^{\pib}_h(s_h, a_h^E)}\cdot \log(\frac{3HSA}{\delta}) \le \frac{C_1\iota}{Kd^{\pib}_h(s,a)}
\end{align*} for some absolute constant $C_1$,
with probability at least $1-\delta/(3HSA)$. Taking the union bound yields claim (ii) over all $(h,s, a)$ with probability at least $1-\delta/3$.\\

For claim (iii), in option 2, for any $(h,s,a)\in [H]\times\cS\times \cA$ such that $a\in \supp\paren{\piE_h\paren{\cdot|s}}$ and $d_h^{\pib}(s,a)\geq \frac{ C_2\wellp\iota}{ K}$, we have ${N}^e_{h}(s, a)\sim \Bin\paren{K, d^{\pib}_h(s, a)\cdot\piE_h(a|s)}$.
By direct computing, we obtain that
\begin{align*}
\p\brac{{N}^e_{h}(s, a)=0}&=(1-d^{\pib}_h(s, a)\cdot\piE_h(a|s))^{K}\leq \paren{1-\wellpi\cdot d^{\pib}_h(s, a)}^{K}\\
&= \brac{1-\paren{\frac{\delta}{3HSA}}^{1/K}+\paren{\frac{\delta}{3HSA}}^{1/K}-\wellpi\cdot d^{\pib}_h(s, a)}^{K}\\
&\leq \brac{\paren{\frac{\delta}{3HSA}}^{1/K}+\underbrace{1-\paren{\frac{\delta}{3HSA}}^{1/K}-\wellpi\cdot d^{\pib}_h(s, a)}_{\leq 0}}^{K}\\ 
&\leq \paren{\frac{\delta}{3HSA}}^{1/K\cdot K}=\frac{\delta}{3HSA},
\end{align*}

where the second line follows from the well-posedness condition: $\piE(a|s)\geq \wellpi$ and the last inequality is valid since 
\begin{align*}
    1-\paren{\frac{\delta}{3HSA}}^{1/K}=1-\exp(-\frac{1}{K}\log\frac{\delta}{3HSA})\leq -\frac{\widetilde{C}_2}{K}\log\frac{\delta}{3HSA}\leq \frac{{C}_2\iota}{K}\leq \wellpi\cdot d^{\pib}_h(s, a),
\end{align*}
where $\widetilde{C}_2$ and $C_2$ are absolute constants and the last inequality comes from $d_h^{\pib}({s,a})\geq  \frac{ C_2\wellp\iota}{K}=\frac{ C_2\iota}{\wellpi\cdot K}$. Hence, it holds that
\begin{align*}
    {N}^e_h(s,a)\geq 1,
\end{align*}
with probability at least $1-\delta/(3HSA)$.
Taking the union bound over all $(h,s,a)\in \hsa$ yields that
\[
{N}^e_h(s,a)\geq 1
\]
holds with probability at least $1-\delta/3$
for all $(s,a)\in \cS\times \cA~\textrm{s.t.}~d_h^{\pib}(s,a)\geq \frac{ C_2\wellp\iota}{ K}, a\in \supp\paren{\piE_h(\cdot|s)}$,
which implies that claim (iii) holds.

In option $1$, notice that $\pp\brac{{N}^e_h(s,a)=0}=\paren{1-d^{\pib(s,a)}_h(s,a)}^K$, with a similar argument, we can prove the claim (iii) in option $1$.

Further, we can conclude that the concentration event $\cE$ holds with probability at least $1-\delta$.
\end{proof}

\begin{lemma}[Performance decomposition for \RLE]\label{lem:offine_performance_decomposition}
 For any $\theta=(V,A)\in \para$, let $r^\theta=\sRR(V,A)$ and $\widehat{r}^\theta=\hRR(V,A)$, where $\sRR$ is the ground truth reward mapping and $\hRR$ is the estimated reward mapping outputted by \RLP{}. On the event defined in Lemma~\ref{lem:concentration_1}, for any $\theta\in \para$ and any $h\in [H]$, we have
  \begin{align*}
      \dpival\paren{r^\theta,\widehat{r}^\theta}\lesssim \frac{C^\star H^2S\eta\iota}{K}+\sum_{h\in[H]}\sum_{(s,a)\in \s\times\a}d^{\pival}_{h}(s,a)b^\theta_h(s,a),
  \end{align*}
  where $C^\star$ is defined in Assumption~\ref{ass:off_1} and $\eta$ is specified in Lemma~\ref{lem:concentration_1}
\end{lemma}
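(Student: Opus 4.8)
The plan is to reduce the semimetric $\dpival(r^\theta,\widehat{r}^\theta)$ to a visitation-weighted sum of the pointwise reward gap, and then bound that gap termwise. Write $g_h(s,a)\defeq r^\theta_h(s,a)-\widehat{r}^\theta_h(s,a)$. Since $V^\pi_h(\cdot\,;r)$ is linear in $r$, for every state $s$ we have $V^\pival_h(s;r^\theta)-V^\pival_h(s;\widehat{r}^\theta)=\E_{\pival}\big[\sum_{h'=h}^H g_{h'}(s_{h'},a_{h'})\mid s_h=s\big]$. Applying Jensen's inequality to move the absolute value inside the expectation and then averaging over $s_h\sim d^{\pival}_h$ gives, for each fixed $h$, $\E_{s_h\sim\pival}|V^\pival_h(s_h;r^\theta)-V^\pival_h(s_h;\widehat{r}^\theta)|\le\sum_{h'=h}^H\sum_{(s,a)}d^{\pival}_{h'}(s,a)|g_{h'}(s,a)|$. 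Since the summands are nonnegative, taking the supremum over $h$ shows it suffices to bound $\sum_{h}\sum_{(s,a)}d^{\pival}_h(s,a)|g_h(s,a)|$.

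Next I would expand $g_h$ using~\eqref{def:rew-map} and~\eqref{rlp:eq_4}: the $V_h(s)$ terms cancel, leaving $g_h(s,a)=-A_h(s,a)[\indic{a\notin\supp(\piE_h(\cdot|s))}-\indic{a\notin\supp(\widehat{\pi}^{\sf E}_h(\cdot|s))}]+[(\widehat{\p}_h-\p_h)V_{h+1}](s,a)+b^\theta_h(s,a)$. By the triangle inequality together with $0\le A_h(s,a)\le H-h+1\le H$, we obtain $|g_h(s,a)|\le H\cdot|\indic{a\notin\supp(\piE_h(\cdot|s))}-\indic{a\notin\supp(\widehat{\pi}^{\sf E}_h(\cdot|s))}|+|[(\widehat{\p}_h-\p_h)V_{h+1}](s,a)|+b^\theta_h(s,a)$. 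On the event $\mc{E}$, claim (i) of Lemma~\ref{lem:concentration_1} bounds the transition term by $b^\theta_h(s,a)$, so the last two terms together contribute at most $2\sum_h\sum_{(s,a)}d^{\pival}_h(s,a)b^\theta_h(s,a)$, which is exactly the second term of the target bound (the factor $2$ is absorbed by $\lesssim$).

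Controlling the support-mismatch term is the main obstacle. The first observation is that in both feedback options the empirical expert support is contained in the true one: if $a\notin\supp(\piE_h(\cdot|s))$ then the expert never plays $a$ at $s$, so the counts defining $\widehat{\pi}^{\sf E}_h(a|s)$ in~\eqref{rlp:eq_2} vanish and hence $a\notin\supp(\widehat{\pi}^{\sf E}_h(\cdot|s))$. Consequently the indicator difference can be nonzero only on the event $\{a\in\supp(\piE_h(\cdot|s)),\,a\notin\supp(\widehat{\pi}^{\sf E}_h(\cdot|s))\}$, i.e.\ genuine expert actions that we failed to observe. By claim (iii) of Lemma~\ref{lem:concentration_1}, on $\mc{E}$ this forces $d^{\pib}_h(s,a)<C_2\eta\iota/K$. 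Therefore the support term is at most $H\sum_h\sum_{(s,a)}d^{\pival}_h(s,a)\indic{d^{\pib}_h(s,a)<C_2\eta\iota/K}$.

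Finally I would convert this into the first term through the concentrability coefficient. On the relevant set, write $d^{\pival}_h(s,a)=\frac{d^{\pival}_h(s,a)}{d^{\pib}_h(s,a)}\,d^{\pib}_h(s,a)<\frac{C_2\eta\iota}{K}\cdot\frac{d^{\pival}_h(s,a)}{d^{\pib}_h(s,a)}$, using the convention $0/0=0$ and noting that any $(s,a)$ with $d^{\pib}_h(s,a)=0$ must have $d^{\pival}_h(s,a)=0$ by finiteness of the concentrability coefficient, hence contributes nothing. Summing over $(h,s,a)$ and invoking the average single-policy concentrability of Assumption~\ref{ass:off_1}, namely $\sum_h\sum_{(s,a)}d^{\pival}_h(s,a)/d^{\pib}_h(s,a)\le HSC^\star$, bounds the support term by $H\cdot\frac{C_2\eta\iota}{K}\cdot HSC^\star=\frac{C_2H^2SC^\star\eta\iota}{K}$. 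Adding this to the bonus term from the second paragraph and absorbing absolute constants into $\lesssim$ yields the claimed inequality.
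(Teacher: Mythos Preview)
Your proposal is correct and follows essentially the same approach as the paper's proof: reduce $\dpival$ to a visitation-weighted sum of the pointwise reward error, split that error into the support-mismatch and transition parts, control the latter via claim~(i) of Lemma~\ref{lem:concentration_1}, and handle the former by combining the inclusion $\supp(\widehat{\pi}^{\sf E}_h(\cdot|s))\subset\supp(\piE_h(\cdot|s))$ with claim~(iii) and the concentrability assumption. The ordering and level of detail differ only cosmetically; in fact you are slightly more careful than the paper about the degenerate case $d^{\pib}_h(s,a)=0$.
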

\begin{proof}
Fix a tuple $(h, s, a)\in \hsa$.
When $a\notin\supp\paren{\piE_h(\cdot|s)}$, by definition of ${N}^e_h(s,a)$, we have ${N}^e_h(s,a)= 0$  By construction of $\widehat{\pi}^{\sf E}_h(a|s)$ in Algorithm~\ref{alg:RLPfull}, we deduce that $\widehat{\pi}^{\sf E}_h(a|s)=0$, and therefore 
\[
\abs{\indic{a\notin \supp{\widehat{\pi}^{\sf E}_h(\cdot|s)}}-\indic{a\notin\supp\paren{\piE_h(\cdot|s)}}}= \abs{0-0}=0.
\]
\def \oiat{\frac{C_2\wellp\iota}{K}}
When $a\in\supp\paren{\piE_h(\cdot|s)}$ and $d^{\pib}_h(s,a)< \frac{C_2\wellp\iota}{K}$, then 
\[
\abs{\indic{a\notin \supp\paren{\widehat{\pi}^{\sf E}_h(\cdot|s)}}-\indic{a\notin\supp\paren{\piE_h(\cdot|s)}}}\leq 2.
\]

If $a\in\supp\paren{\piE_h(\cdot|s)}$ and $d^{\pib}_h(s,a)\geq \oiat$, then by concentration event $\mc{E}$ (iii), ${N}^e_h(s,a)\geq 1$ which implies that $\widehat{\pi}^{\msf{E}}_h(a|s)>0$. Hence, we obtain that
\[
\abs{\indic{a\notin \supp\paren{\widehat{\pi}^{\sf E}_h(\cdot|s)}}-\indic{a\notin\supp\paren{\piE_h(\cdot|s)}}}= \abs{1-1}=0.
\]
Thus we can conclude that
\begin{align}   
\abs{\indic{a\notin \supp\paren{\widehat{\pi}^{\sf E}_h(\cdot|s)}}-\indic{a\notin\supp\paren{\piE_h(\cdot|s)}}}\leq 2\cdot \indic{d^{\pib}_h(s,a)<\oiat, a\in \supp\paren{\piE_h(\cdot|s)}}.\label{eq:offline_decomposition_indic}
\end{align}
We then bound the $\abs{\brac{r^\theta_{h}-\widehat{r}^\theta_{h}}(s,a)}$ for all $(h,s,a)\in \hsa$:
\begin{align}
    &\abs{\brac{r^\theta_{h}-\widehat{r}^\theta_{h}}(s,a)}=\Big|-A_{h}(s,a)\cdot\indic{a\notin \supp\paren{\piE_h(\cdot|s)}}+V_{h}(s)-\brac{\p_hV_{h+1}}(s,a)\notag\\
    &~+A_{h}(s,a)\cdot\indic{a\notin \supp\paren{\widehat{\pi}^{\sf E}_h(\cdot|s)}}-V_h(s)+\brac{\widehat{\p}_hV_{h+1}}(s,a)+b^\theta_h(s,a)\Big|\notag\\
    &\leq A_{h}(s,a)\cdot\abs{\indic{a\notin \supp\paren{\widehat{\pi}^{\sf E}_h(\cdot|s)}}-\indic{a\notin\supp\paren{\piE_h(\cdot|s)}}}+\abs{\brac{(\p_h-\widehat{\p}_h)V_{h+1}}(s,a)}+b^\theta_h(s,a)\notag\\
    &\leq 2H\cdot \indic{d^{\pib}_h(s,a)<\oiat, a\in \supp\paren{\piE_h(\cdot|s)}}+\abs{\brac{(\p_h-\widehat{\p}_h)V_{h+1}}(s,a)}+b^\theta_h(s,a)\notag\\
    &\leq  2H\cdot \indic{d^{\pib}_h(s,a)<\oiat, a\in \supp\paren{\piE_h(\cdot|s)}}+2b^\theta_h(s,a),
    \label{eq:offline_decomposition_bound_r}
\end{align}
where the second line follows from the triangle inequality, the third line comes from \Eqref{eq:offline_decomposition_indic}, the second last line follows from $\|A_h\|_\infty\leq H$, the last line comes from the concentration event $\mc{E}$ (i). Finally, we give the bound of $ \EE_{\pival}|V_{h}^{\pival}(s;r^\theta)-V_{h}^{\pival}(s;\widehat{r}^\theta)|$. 
By definition of the $V$ function, we have
\begin{align*}
&\EE_{\pival}\left|V_{h}^{\pival}(s;r^\theta)-V_{h}^{\pival}(s;\widehat{r}^\theta)\right|=\sum_{s\in \s}d_h^{\pival}(s)\cdot \left|V_{h}^{\pival}(s;r^\theta)-V_{h}^{\pival}(s;\widehat{r}^\theta)\right|\\
&= \sum_{s'\in \s}d^{\pival}_h(s')\cdot\left|\sum_{h'\geq h}\sum_{(s,a)\in \s\times\a}d^{\pival}_{h'}(s_{h'}=s,a_{h'}=a|s_h=s')\cdot\brac{r^\theta_{h'}-\widehat{r}^\theta_{h'}}(s,a) \right|\\
&\leq \sum_{h'\geq h}\sum_{(s,a)\in \s\times\a}\set{\sum_{s\in \s}d^{\pival}_h(s)\cdot d^{\pival}_{h'}(s,a|s_h=s)}\cdot\abs{\brac{r^\theta_{h'}-\widehat{r}^\theta_{h'}}(s,a)} \\
&\overset{(i)}{\leq} \sum_{h'\geq h}\sum_{(s,a)\in \s\times\a}d^{\pival}_{h'}(s,a)\cdot\abs{\brac{r^\theta_{h'}-\widehat{r}^\theta_{h'}}(s,a)} \\
&\overset{(ii)}{\leq}\sum_{h'\geq h}\sum_{(s,a)\in \s\times\a}d^{\pival}_{h'}(s,a)\cdot\brac{2H\cdot \indic{d^{\pib}_h(s,a)<\oiat, a\in \supp\paren{\piE_h(\cdot|s)}}+2b^\theta_h(s,a)}\\ 
&\leq \sum_{h\in [H]}\sum_{(s,a)\in \s\times\a}\frac{2Hd^{\pival}_{h}(s,a)}{d^{\pib}_{h}(s,a)} d^{\pib}_{h}(s,a)\cdot \indic{d^{\pib}_h(s,a)<\oiat, a\in \supp\paren{\piE_h(\cdot|s)}}\\
&+\sum_{h\geq 1}\sum_{(s,a)\in \s\times\a}2d^{\pival}_{h}(s,a)b^\theta_h(s,a)\\
&\leq 2H\cdot\oiat\cdot\sum_{h\in [H]}\sum_{(s,a)\in \s\times\a}\frac{d^{\pival}_{h}(s,a)}{d^{\pib}_{h}(s,a)}
+\sum_{h\geq 1}\sum_{(s,a)\in \s\times\a}2d^{\pival}_{h}(s,a)b^\theta_h(s,a)\\
&\overset{(iii)}{\lesssim} \frac{C^\star H^2S\eta\iota}{K}+\sum_{h\in[H]}\sum_{(s,a)\in \s\times\a}d^{\pival}_{h}(s,a)b^\theta_h(s,a),
\end{align*}
where $d^{\pival}_{h'}(s_{h'}=s,a_{h'}=a|s_h=s)=\pp_h(s_{h'}=s,a_{h'}=a|s_h=s)$, (i) is due to $d^{\pival}_{h'}(s,a)=\sum_{s\in \s}d^{\pival}_h(s)\cdot d^{\pival}_{h'}(s_{h'}=s,a_{h'}=a|s_h=s)$, (ii) follows from \Eqref{eq:offline_decomposition_bound_r} and (iii) comes from definition of $C^\star$-concentrability. This completes the proof.
\end{proof}

\subsection{Proof of Theorem \ref{thm:offline_main}}
\label{app:proof-offline_main}

\begin{proof}
    By Lemma~\ref{lem:offine_performance_decomposition}, we have
    \begin{align}
D^{\pival}_\para(\sRR,{\hRR})\lesssim\frac{C^\star H^2S\eta\iota}{K}+\underbrace{\sup_{\theta\in \para}\sum_{h\in[H]}\sum_{(s,a)\in \s\times\a}d^{\pival}_{h}(s,a)b^\theta_h(s,a)}_{(\mathrm{I})}\label{offline_main_d_decom}.
    \end{align}
  
   Plugging \Eqref{eq:def_b_theta} into \Eqref{offline_main_d_decom}, we obtain that
    \begin{align}
        (\mathrm{I})&=\sum_{h\in [H]}\sum_{(s,a)\in \s\times\a}d^{\pival}_{h}(s,a)b_h(s,a)\notag\\
        &\lesssim \sum_{h\in [H]}\sum_{(s,a)\in \s\times\a}d^{\pival}_{h}(s,a)\cdot \Bigg\{\sqrt{\frac{\cn\iota}{{N}_h^b(s,a)\vee 1}\brac{\widehat{\v}_h{V}_{h+1}}(s,a)}\\
        &+
 \frac{H\cn\iota}{{N}_h^b(s,a)\vee 1}+\frac{\epsilon}{H}\paren{1+\sqrt{\frac{\cn\iota}{{N}_h^b(s,a)\vee 1}}}\Bigg\}\notag\\
        &\leq \underbrace{\sum_{h\in [H]}\sum_{(s,a)\in \s\times\a}d^{\pival}_{h}(s,a)\cdot \sqrt{\frac{\cn\iota}{{N}_h^b(s,a)\vee 1}\brac{{\v}_h{V}_{h+1}}(s,a)}}_{(\mathrm{I.a})}\notag\\
        &+\underbrace{\sum_{h\in [H]}\sum_{(s,a)\in \s\times\a}d^{\pival}_{h}(s,a)\cdot \sqrt{\frac{\cn\iota}{{N}_h^b(s,a)\vee 1}\brac{\paren{\widehat{\v}_h-\v_h}{V}_{h+1}}(s,a)}}_{(\mathrm{I.b})}\notag\\
        &+
 \underbrace{\sum_{h\in [H]}\sum_{(s,a)\in \s\times\a}d^{\pival}_{h}(s,a)\cdot\frac{H\cn\iota}{{N}_h^b(s,a)\vee 1}}_{(\mathrm{I.c})}\notag\\
 &+\underbrace{\epsilon\cdot\sum_{h\in [H]}\sum_{(s,a)\in \s\times\a}d^{\pival}_{h}(s,a)\cdot\paren{\frac{1}{H}+\sqrt{\frac{\cn\iota}{H^2\cdot{N}_h^b(s,a)\vee 1}}}}_{(\mathrm{I.d})}
   \label{offline_main_term_I_decom}   
   \end{align}
   where the last inequality comes from the triangle inequality.
We study the four terms separately.\\
For the term (I.a), on the concentration event $\mc{E}$, we have

\begin{align}
    &(\mathrm{I.a})=\sum_{h\in [H]}\sum_{(s,a)\in \s\times\a}d^{\pival}_{h}(s,a)\cdot \sqrt{\frac{\cn\iota}{{N}_h^b(s,a)\vee 1}\brac{{\v}_h{V}_{h+1}}(s,a)}\notag\\
    &~\lesssim \sum_{h\in [H]}\sum_{(s,a)\in \s\times\a}d^{\pival}_{h}(s,a)\cdot \sqrt{\frac{\cn\iota}{Kd^{\pib}(s,a)}\brac{{\v}_h{V}_{h+1}}(s,a)}
    \notag\\
    &~\leq \sqrt{\frac{H^2\cn\iota}{K}}\cdot\sum_{h\in [H]}\sum_{(s,a)\in \s\times\a}\sqrt{d^{\pival}_{h}(s,a)}\cdot \sqrt{\frac{d^{\pival}_{h}(s,a)}{d^{\pib}_{h}(s,a)}}
    \notag\\
     &~\leq\sqrt{\frac{H^2\cn\iota}{K}}\cdot \sqrt{\sum_{h\in [H]}\sum_{(s,a)\in \s\times\a}\frac{d^{\pival}_{h}(s,a)}{d^{\pib}_h(s,a)}}\cdot\sqrt{\sum_{h\in [H]}\sum_{(s,a)\in \s\times\a}d^{\pival}_{h}(s,a)}
     \tag{by Cauchy-Schwarz inequality}
     \\
     &~\leq\sqrt{\frac{C^\star H^4S\cn\iota}{K}}\label{eq:offline_main_term_I_a},
    \end{align}
    where the second line comes from concentration event $\mc{E}$(ii), the third line is valid since $\|V_{h+1}\|_{\infty}\leq H$ and the last is by thw definition of $C^\star$-concentrability.

Next, we study the term (I.b).
For any $(h,s,a)$, we have
\begin{align}
& \abs{\brac{\paren{\widehat{\v}_h-\v_h}V_{h+1}}\paren{s,a}}
\notag
\\
&= \brac{(\widehat{\p}_{h} {V}_{h+1})^2 - (\widehat{\p}_{h} V_{h+1})^2 - \paren{\p_{h} ({V}_{h+1})^2 - (\p_{h} {V}_{h+1})^2}}(s,a)
\notag
\\
&\leq  \abs{\brac{(\widehat{\p}_{h} - \p_{h})({V}_{h+1})^2}(s,a)} + \abs{\brac{(\widehat{\p}_{h} + \p_{h}){V}_{h+1} \cdot (\widehat{\p}_{h} - \p_{h}){V}_{h+1}}(s,a)}
\notag
\\
&\leq  2H\abs{\brac{(\widehat{\pp}_h-\pp_h)(V_{h+1})}(s,a)} + 2 H \abs{\brac{(\widehat{\p}_{h} - \p_{h}){V}_{h+1}}(s,a)}
\notag
\\
&\lesssim~  c \sqrt{\frac{H^4 \iota}{{N}_h^b(s,a) \vee 1}},\label{eq:spvar2popvar}
\end{align}
\def \dtt{\mrm{dim}^{1/2}\paren{\para}}
where the second last inequality is by $\|{V_{h+1}}\|_{\infty}\le H$ and the last inequality follows from the Azuma-Hoeffding inequality. By applying \Eqref{eq:spvar2popvar}, we can obtain the bound for the term (I.b):
\begin{align}
    &(\mathrm{I.b})=\sum_{h\in [H]}\sum_{(s,a)\in \s\times\a}d^{\pival}_{h}(s,a)\cdot \sqrt{\frac{\cn\iota}{{N}_h^b(s,a)\vee 1}\brac{\paren{\widehat{\v}_h-\v_h}{V}_{h+1}}(s,a)}
\notag
\\
&~\leq\sum_{h\in [H]}\sum_{(s,a)\in \s\times\a}d^{\pival}_{h}(s,a)\cdot \sqrt{\frac{\cn\iota}{{N}_h^b(s,a)\vee 1}\cdot \sqrt{\frac{H^4 \iota}{\widehat{N}_h^b(s,a) \vee 1}}}\notag
\\
&~=\paren{\cn}^{1/2}\cdot\sum_{h\in [H]}\sum_{(s,a)\in \s\times\a}d^{\pival}_{h}(s,a)\cdot 
\frac{H\iota^{3/4}}{\set{{N}_{h}^{{b}}\left(s,a\right)\vee 1}^{3/4}}
\notag
\\
&~\leq
\underbrace{\paren{\cn}^{1/2}\cdot\sum_{h\in [H]}\sum_{(s,a)\in \s\times\a}d^{\pival}_{h}(s,a)\cdot 
\sqrt{\frac{1}{{N}_{h}^{{b}}\left(s,a\right)\vee 1}}}_{(\mathrm{I.b.1})}\notag\\
&+
\underbrace{\paren{\cn}^{1/2}\cdot\sum_{h\in [H]}\sum_{(s,a)\in \s\times\a}d^{\pival}_{h}(s,a)\cdot 
{\frac{H^2\iota^{3/2}}{{N}_{h}^{{b}}\left(s,a\right)\vee 1}}}_{(\mathrm{I.b.2})},
\end{align}
where the last line is from AM-GM inequality.
For the term (I.b.1), on the concentration event $\mc{E}$, we have
\begin{align}
&(\mathrm{I.b.1})=\paren{\cn}^{1/2}\cdot\sum_{h\in [H]}\sum_{(s,a)\in \s\times\a}d^{\pival}_{h}(s,a)\cdot 
\sqrt{\frac{1}{{N}_{h}^{{b}}\left(s,a\right)\vee 1}}
\notag
\\
 &~\leq \paren{\cn}^{1/2}\cdot\sum_{h\in [H]}\sum_{(s,a)\in \s\times\a}\sqrt{d^{\pival}_{h}(s,a)}\cdot 
 \sqrt{\frac{d^{\pival}_{h}(s,a)}{Kd^{\pib}_h(s,a)}}
 \notag
 \\
 &~\leq\paren{\cn}^{1/2}\cdot\sqrt{\frac{1}{K}}\cdot\sqrt{\sum_{h\in [H]}\sum_{(s,a)\in \s\times\a}\frac{d^{\pival}_{h}(s,a)}{d^{\pib}_h(s,a)}}\cdot\sqrt{\sum_{h\in [H]}\sum_{(s,a)\in \s\times\a}d^{\pival}_{h}(s,a)}
 \notag
 \\
 &~\leq \sqrt{\frac{C^\star HS\cn}{K}}\cdot\sqrt{\sum_{h\in [H]}\sum_{(s,a)\in \s\times\a}d^{\pival}_{h}(s,a)}\notag\\
 &~=\sqrt{\frac{C^\star H^2S\cn}{K}}
 \label{eq:offline_main_term_I_a_1},
\end{align}
where the second line is directly from concentration event $\mc{E}(ii)$, the third line follows from Cauchy-Schwarz inequality and the second last line comes from the definition of $C^\star$-concentrability.
For the term (I.b.2), on the concentration event $\mc{E}$, we obtain
\begin{align}
   &(\mathrm{I.b.2})= \paren{\cn}^{1/2}\cdot\sum_{h\in [H]}\sum_{(s,a)\in \s\times\a}d^{\pival}_{h}(s,a)\cdot 
{\frac{H^2\iota^{3/2}}{{N}_{h}^{{b}}\left(s,a\right)\vee 1}}\notag\\
&~\leq \paren{\cn}^{1/2}\cdot\sum_{h\in [H]}\sum_{(s,a)\in \s\times\a}d^{\pival}_{h}(s,a)\cdot 
{\frac{H^2\iota^{5/2}}{Kd^{\pib}_h(s,a)}}\notag
 \\
 &~\leq \paren{\cn}^{1/2}\cdot{\frac{H^2\iota^{5/2}}{K}\sum_{h\in [H]}\sum_{(s,a)\in \s\times\a} 
{\frac{d^{\pival}_{h}(s,a)}{d^{\pib}_h(s,a)}}}
\notag
 \\
&~= 
{\frac{C^\star H^3S\cn\iota^{5/2}}{K}}
\label{eq:offline_main_term_I_a_2},
\end{align}
where the second line comes from concentration event $\mc{E}$\text{(ii)}, the third line follows from the definition of $C^\star$-concentrability.\\
Combining \Eqref{eq:offline_main_term_I_a_1} and \Eqref{eq:offline_main_term_I_a_2}, the term (I.b) can be bounded as follows:
\begin{align}
   (\mathrm{I.b})\lesssim \sqrt{\frac{C^\star H^2S\cn}{K}}+{\frac{C^\star H^3S\cn\iota^{5/2}}{K}}.\label{eq:offline_main_term_I_b}
\end{align}
For the term (I.c), observe that
\[
(\mathrm{I.c})=(\mathrm{I.b.2})/(H\iota^{3/2}).
\]
Hence, by \Eqref{eq:offline_main_term_I_a_2}, we deduce that
\begin{align}
   (\mathrm{I.c})\leq {\frac{C^\star H^2S\cn\iota}{K}}
    \label{eq:offline_main_term_I_c}
\end{align}
For the term (I.d),
\begin{align}
   (\mathrm{I.d})&=\epsilon\cdot\sum_{h\in [H]}\sum_{(s,a)\in \s\times\a}d^{\pival}_{h}(s,a)\cdot\paren{\frac{1}{H}+\sqrt{\frac{\cn\iota}{H^2\cdot{N}_h^b(s,a)\vee 1}}} \notag\\
   &=\epsilon+\epsilon\cdot \sum_{h\in [H]}\sum_{(s,a)\in \s\times\a}d^{\pival}_{h}(s,a)\cdot\sqrt{\frac{\cn\iota}{H^2\cdot{N}_h^b(s,a)\vee 1}}\notag\\
   &=\epsilon+\epsilon\cdot \sum_{h\in [H]}\sum_{(s,a)\in \s\times\a}d^{\pival}_{h}(s,a)\cdot\sqrt{\frac{\cn\iota}{H^2Kd^{\pib}_h(s,a)}}\notag\\
   &=\epsilon+\epsilon\sqrt{\frac{\cn\iota}{H^2K}}\cdot \sum_{h\in [H]}\sum_{(s,a)\in \s\times\a}\sqrt{d^{\pival}_{h}(s,a)}\sqrt{\frac{d^{\pival}_{h}(s,a)}{d^{\pib}_{h}(s,a)}}\notag\\
   &\leq \epsilon+\epsilon\sqrt{\frac{\cn\iota}{H^2K}}\sqrt{\sum_{h\in [H]}\sum_{(s,a)\in \s\times\a}{d^{\pival}_{h}(s,a)}}\cdot\sqrt{\sum_{h\in [H]}\sum_{(s,a)\in \s\times\a}\frac{d^{\pival}_{h}(s,a)}{d^{\pib}_{h}(s,a)}}\notag\\
   &\leq \epsilon\cdot(1+\sqrt{\frac{C^\star S\cn\iota}{K}}) ,\label{eq:offline_main_term_I_d} 
   \end{align}
   where the second last line is by Cauchy-Schwarz inequality and the last line is by definition of $C^\star$-concentrablity.\\
   Combining \Eqref{eq:offline_main_term_I_a}, \Eqref{eq:offline_main_term_I_b}, \Eqref{eq:offline_main_term_I_c} and \Eqref{eq:offline_main_term_I_d}, we deduce that
   \begin{align*}
  (\mathrm{I})&\lesssim (\mathrm{I.a})+(\mathrm{I.b})+(\mathrm{I.c})+(\mathrm{I.d})\notag
   \\
   &\lesssim   \sqrt{\frac{C^\star H^4S\cn\iota}{K}}+\sqrt{\frac{C^\star H^2S\cn}{K}}+{\frac{C^\star H^3S\cn\iota^{5/2}}{K}}\\
   &+{\frac{C^\star H^2S\cn\iota}{K}}
   +\epsilon\cdot(1+\sqrt{\frac{C^\star S\cn\iota}{K}})\\
   &\lesssim \sqrt{\frac{C^\star H^4S\cn\iota}{K}}+{\frac{C^\star H^3S\cn\iota^{5/2}}{K}}+\epsilon.
   \end{align*}
   Finally, plugging into \Eqref{offline_main_d_decom}, the final bound is given by
   \begin{align*}
   D^{\pival}_\para(\sRR,{\hRR})&\lesssim \frac{C^\star H^2S\eta\iota}{K}+\sqrt{\frac{C^\star H^4S\cn\iota}{K}}+{\frac{C^\star H^3S\cn\iota^{5/2}}{K}}+\epsilon\\
   \end{align*}
   The right-hand-side is upper bounded by $2\eps$ as long as
   \[
   K\geq\widetilde{\cO}\paren{\frac{C^\star H^4S\cn}{\epsilon^2}+\frac{C^\star H^2S \eta}{\epsilon}}.
   \]
   Here $poly \log\paren{H,S,A,1/\delta}$ are omitted.

\end{proof}
\subsection{Proof of Theorem~\ref{coro:pi=piE}}
\label{app:proof-pi=piE}
\def \bd{\bar{d}}

In this section, we will consider the case that $\pival=\piE$. We first introduce the following concentration event which is slightly different from the concentration event defined in Lemma~\ref{lem:concentration_1}.
\begin{lemma}[Concentration event ]\label{lem:concentration_11}
    Under the setting of Theorem~\ref{thm:offline_main}, there exists an absolute constant $C_1,~C_2$ such that the concentration event $\mc{E}$ holds with probability at least $1-\delta$, where
    \begin{align}
          \mc{E} \defeq \Bigg\{
          \text{(i):} &~  \abs{\brac{(\p_h-\widehat{\p}_h) V_{h+1}}(s, a)}\leq 
          b^\theta_h(s,a)~~~\forall~\theta=(V,A)\in  \para,~ (h,s,a)\in \hsa,
       \\
       \text{(ii):} &~ \frac{1}{{N}_{h}(s, a)\vee 1} \le \frac{C_1\iota}{Kd^{\pib}_h(s, a)}, ~~~\forall 
      (h,s,a)\in[H]\times\mc{S}\times\mc{A},
        \\\text{(iii):} &~ {N}^e_{h}(s,a)\geq 1 ~~~\forall (h,s,a)\in \cS\times \cA~\textrm{s.t.}~ \bd_h(s,a)\geq \frac{ C_2\iota}{ K}, a\in \supp\paren{\piE_h(\cdot|s)}
         \Bigg\},
    \end{align}
    where
    $b^\theta_h(s,a)$ is defined in \Eqref{rlp:eq_8}, $C^\star$ is specified in Definition~\ref{ass:off_1}, and ${N}^e_{h}(s)$ is given by
    \begin{align}
    &{N}^e_{h}(s,a)\defeq
    \begin{cases}
        \sum_{(s_h,a_h, e_h)\in \cD}\indic{(s_h, e_h)=(s,a)}\qquad &\textrm{in option~} 1,\\
        N^b_h(s,a)\qquad &\textrm{in option}~2,
    \end{cases}\notag\\
   &\bd_h(s,a)\defeq
   \begin{cases}
      {d^{\pib}_h(s)\cdot \piE_h(a|s)}\qquad &\textrm{in option~} 1,\\
        d^{\pib}_h(s,a)\qquad &\textrm{in option~}2. 
   \end{cases}
   \end{align}

\end{lemma}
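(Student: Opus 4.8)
The plan is to reuse the proof of Lemma~\ref{lem:concentration_1} almost verbatim for claims (i) and (ii), which are stated identically, and to supply a short new argument for claim (iii), whose only change is that the threshold is phrased directly in terms of $\bd_h$ rather than $d^{\pib}_h$ rescaled by the well-posedness factor $\wellp$.

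For claim (i) I would reproduce the empirical-Bernstein chain leading to \eqref{eq:good_event_online_1}: fix an $\eps/H$-net $\mathcal{N}_{\eps,h}$ of $\pV^{\para}_h$ in $\linfs{\cdot}$ with $\log\abs{\mathcal{N}_{\eps,h}}\le\cn$, apply the empirical Bernstein inequality \citep[Theorem 4]{maurer2009empirical} to each fixed $\widetilde{V}_{h+1}\in\mathcal{N}_{\eps,h+1}$, union bound over the net and over $\hsa$, and pass to all $\theta=(V,A)\in\para$ by the triangle inequality; the resulting bound is exactly the bonus $b^\theta_h(s,a)$ of \eqref{rlp:eq_8} and holds with probability at least $1-\delta/3$. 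For claim (ii), since $N_h(s,a)\sim\Bin(K,d^{\pib}_h(s,a))$, I would apply Lemma~\ref{lemma:binomial-concentration} and union bound over $\hsa$, again obtaining probability at least $1-\delta/3$.

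The one genuinely new piece is claim (iii), and the key observation is that under \emph{both} feedback options the count $N^e_h(s,a)$ is a $\Bin(K,\bd_h(s,a))$ variable. In option 2 this is immediate, as $N^e_h(s,a)=N^b_h(s,a)\sim\Bin(K,d^{\pib}_h(s,a))$. In option 1, the per-trajectory probability of the event $(s_h,e_h)=(s,a)$ equals $d^{\pib}_h(s)\cdot\piE_h(a|s)$, because the expert label $e_h=a_h^{\msf{E}}$ is drawn from $\piE_h(\cdot\given s_h)$ independently of the behavior action, so $N^e_h(s,a)\sim\Bin(K,d^{\pib}_h(s)\piE_h(a|s))$; by the definition of $\bd_h$ both cases reduce to $N^e_h(s,a)\sim\Bin(K,\bd_h(s,a))$. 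Hence $\mathbb{P}[N^e_h(s,a)=0]=(1-\bd_h(s,a))^K$, which is at most $\delta/(3HSA)$ once $\bd_h(s,a)\ge 1-(\delta/(3HSA))^{1/K}$; since $1-(\delta/(3HSA))^{1/K}\le C_2\iota/K$ for a suitable absolute constant $C_2$ (via $1-e^{-x}\le x$), the stated threshold $\bd_h(s,a)\ge C_2\iota/K$ suffices, now \emph{without} any $\wellp$ factor. A union bound over $\hsa$ gives claim (iii) with probability at least $1-\delta/3$, and a final union bound over the three events yields $\mathbb{P}[\mc{E}]\ge 1-\delta$.

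The step to get right---a modeling subtlety rather than a real obstacle---is the distribution of $N^e_h(s,a)$ in option 1: recognizing that its per-trial success probability is $d^{\pib}_h(s)\piE_h(a|s)$ is precisely what lets us fold $\piE_h(a|s)$ into $\bd_h$ and thereby dispense with Assumption~\ref{def:well_pose}. This is exactly what makes the refined event usable for the $\pival=\piE$ guarantee (Theorem~\ref{coro:pi=piE}): in that regime the assumed concentrability between $\piE$ and $\pib$ already controls the $(h,s,a)$ for which $\bd_h$ is non-negligible, so no separate lower bound $\piE_h(a|s)\ge\wellpi$ is required.
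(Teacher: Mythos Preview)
Your proposal is correct and follows essentially the same route as the paper: claims (i) and (ii) are handled by deferring to the proof of Lemma~\ref{lem:concentration_1}, and claim (iii) is proved by observing that $N^e_h(s,a)\sim\Bin(K,\bd_h(s,a))$ under both options and bounding $(1-\bd_h(s,a))^K$ via the inequality $1-(\delta/(3HSA))^{1/K}\le C_2\iota/K$. If anything, your explicit justification of the per-trial success probability $d^{\pib}_h(s)\piE_h(a|s)$ in option~1 is slightly more detailed than the paper's, which simply asserts the Binomial distribution.
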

\begin{proof}
    Repeating the arguments in the proof of Lemma~\ref{lem:concentration_1}, we can prove that claim (i), (ii) holds with probability at least $1-\frac{2\delta}{3}$.\\
    For claim (iii), for any $(h,s,a)\in [H]\times \cS\times \cA$ such that $a\in \supp\paren{\piE_h{\cdot|s}}$ and $\bd_h(s,a)\geq \frac{ C_2\iota}{ K}$, ${N}^e_{h}(s, a)\sim \Bin\paren{K, \bd_h(s, a)}$.
By direct computing, we obtain that
\begin{align*}
\p\brac{{N}^e_{h}(s, a)=0}&=\paren{1-\bd_h(s, a)}^{K}
= \brac{1-\paren{\frac{\delta}{3HSA}}^{1/K}+\paren{\frac{\delta}{3HSA}}^{1/K}-\bd_h(s, a)}^{K}\\
&\leq \brac{\paren{\frac{\delta}{3HSA}}^{1/K}+\underbrace{1-\paren{\frac{\delta}{3HSA}}^{1/K}-\bd_h(s, a)}_{\leq 0}}^{K}\\ 
&\leq \paren{\frac{\delta}{3HSA}}^{1/K\cdot K}=\frac{\delta}{3HSA},
\end{align*}

where the last inequality is valid since 
\begin{align*}
    1-\paren{\frac{\delta}{3HSA}}^{1/K}=1-\exp(-\frac{1}{K}\log\frac{\delta}{3HSA})\leq -\frac{\widetilde{C}_2}{K}\log\frac{\delta}{3HSA}\leq \frac{{C}_2\iota}{K}\leq \bd_h(s,a),
\end{align*}
where $\widetilde{C}_2$ and $C_2$ are absolute constants. Hence, it holds that
\begin{align*}
    {N}^e_h(s,a)\geq 1,
\end{align*}
with probability at least $1-\delta/(3HSA)$.
Taking the union bound over all $(h,s,a)\in \hsa$ yields that
\[
{N}^e_h(s,a)\geq 1
\]
holds with probability at least $1-\delta/3$
for all $(h,s,a)\in \cS\times \cA~\textrm{s.t.}~\bd_h(s,a)\geq \frac{ C_2\iota}{ K}, a\in \supp\paren{\piE_h(\cdot|s)}$,
which implies that claim (iii) holds. 
Further, we conclude that the concentration event $\cE$ holds with probability at least $1-\delta$.
\end{proof}
\begin{proof}[Proof of Corollary~\ref{coro:pi=piE}]
Recall that $r^\theta=\sRR(V,A)$ and $\widehat{r}^\theta=\hRR(V,A)$ for any $\theta=(V,A)\in \para$.
    When $\pival=\piE$, repeating the arguments in Lemma~\ref{lem:offine_performance_decomposition}, we have following decomposition:
    \begin{align*}
        \dpival\paren{r^\theta,\widehat{r}^\theta}&\leq 2H\cdot\sum_{h\in [H]}\sum_{(s,a)\in \s\times\a} d^{\piE}_h(s,a)\cdot \indic{\bd_h(s,a)<\frac{C_2\iota}{K},a\in \supp\paren{
        \piE_h(\cdot|s)
        }}\\
        &+\sum_{h\in [H]}\sum_{(s,a)\in \s\times\a}2d^{\pival}_{h}(s,a)b^\theta_h(s,a)\notag\\
        &\leq 2H\cdot\sum_{h\in [H]}\sum_{(s,a)\in \s\times\a} \frac{d^{\piE}_h(s,a)}{\bd_h(s,a)}\cdot\bd_h(s,a)\cdot \indic{\bd_h(s,a)<\frac{C_2\iota}{K},a\in \supp\paren{
        \piE_h(\cdot|s)
        }}\\
        &+\sum_{h\in [H]}\sum_{(s,a)\in \s\times\a}2d^{\pival}_{h}(s,a)b^\theta_h(s,a)\notag\\
        &\lesssim \frac{H\iota}{K}\cdot \sum_{h\in [H]}\sum_{(s,a)\in \s\times\a} \frac{d^{\piE}_h(s,a)}{\bd_h(s,a)}+\sum_{h\in [H]}\sum_{(s,a)\in \s\times\a}d^{\pival}_{h}(s,a)b^\theta_h(s,a)\notag\\
        &\leq \frac{C^\star H^2SA\iota}{K}+\sum_{h\in [H]}\sum_{(s,a)\in \s\times\a}d^{\pival}_{h}(s,a)b^\theta_h(s,a).
    \end{align*}
    where the second last line is valid since
    \begin{align*}
        \sum_{h\in [H]}\sum_{(s,a)\in \s\times\a} \frac{d^{\piE}_h(s,a)}{\bd_h(s,a)}&=\sum_{h\in [H]}\sum_{(s,a)\in \s\times\a} \frac{d^{\piE}_h(s)\cdot \piE_h(a|s)}{d^{\pib}_h(s)\cdot \piE_h(a|s)}\notag\\
        &=A\cdot \sum_{h\in [H]} \sum_{s\in \s} \frac{d^{\piE}_h(s)}{d^{\pib}_h(s)}\notag\\
        &=A\cdot  \sum_{h\in [H]}\sum_{s\in \s} \frac{\sum_{a\in \cA}d^{\piE}_h(s,a)}{\sum_{a\in \cA}d^{\pib}_h(s,a)}\notag\\
        &\leq A\cdot  \sum_{h\in [H]}\sum_{s\in \s} \max_{a\in \cA}\frac{d^{\piE}_h(s,a)}{d^{\pib}_h(s,a)}\notag\\
        &\leq A\cdot \sum_{h\in [H]}\sum_{(s,a)\in \s\times\a}\frac{d^{\piE}_h(s,a)}{d^{\pib}_h(s,a)}\notag\\
        &\leq C^\star HSA.
    \end{align*}

Similar as \Eqref{offline_main_d_decom}, we can decompose 
$D^{\pival}_\para(\sRR,{\hRR})$ as follows:
\begin{align}
    D^{\piE}_\para(\sRR,{\hRR})\lesssim \frac{C^\star H^2S\eta\iota}{K}+\underbrace{\sup_{\theta\in \para}\sum_{(s,a)\in \s\times\a}d^{\pival}_{h}(s,a)b^\theta_h(s,a)}_{\text{(I)}}\label{offline_coro_d_decom}
\end{align}
We can decompose terms (I) into four terms (I.a), (I.b), (I.c), and (I.d) as in  \Eqref{offline_main_term_I_decom}.  Since we don't use claim (iii) in the proof of bounding (I.b), (I.c), and (I.d), \Eqref{eq:offline_main_term_I_b}, \Eqref{eq:offline_main_term_I_c} and \Eqref{eq:offline_main_term_I_d} still holds on the concentration event $\cE$ defined in Lemma~\ref{lem:concentration_11}. In the following, we will prove an improved bound of the term (I.a):
\begin{align}
       (\mathrm{I.a})&=\sum_{h\in [H]}\sum_{(s,a)\in \s\times\a}d^{\piE}_{h}(s,a)\cdot 
\sqrt{\frac{\cn\iota}{{N}_h^b(s,a)\vee 1}\brac{{\v}_h{V}_{h+1}}(s,a)}\notag\\
    &\leq \sum_{h\in [H]}\sum_{(s,a)\in \s\times\a}d^{\piE}_{h}(s,a)\cdot 
\sqrt{\frac{\cn\iota}{Kd^{\pib}(s,a)}\brac{{\v}_h{V}_{h+1}}(s,a)}
    \notag\\
    &= \sqrt{\frac{\cn\iota}{K}}\cdot\sum_{h\in [H]}\sum_{(s,a)\in \s\times\a}\sqrt{d^{\piE}_{h}(s,a)\cdot\brac{{\v}_h{V}_{h+1}}(s,a)}\cdot \sqrt{\frac{d^{\piE}_{h}(s,a)}{d^{\pib}_{h}(s,a)}}\notag\\
    &\leq \sqrt{\frac{\cn\iota}{K}}\cdot \sqrt{\sum_{h\in [H]}\sum_{(s,a)\in \s\times\a}d^{\piE}_{h}(s,a)\cdot\brac{{\v}_h{V}_{h+1}}(s,a)}\cdot \sqrt{\sum_{h\in [H]}\sum_{(s,a)\in \s\times\a}\frac{d^{\piE}_{h}(s,a)}{d^{\pib}_{h}(s,a)}}\notag\\
    & \sqrt{\frac{C^\star HS\cn\iota}{K}}\cdot\sqrt{\sum_{h\in [H]}\sum_{(s,a)\in \s\times\a}d^{\piE}_{h}(s,a)\cdot\brac{{\v}_h{V}_{h+1}}(s,a)} 
     \label{eq:offline_coro_term_I_b_exp_pre}
    \end{align}
    We then give a sharp bound of $\sum_{h\in [H]}\sum_{(s,a)\in \s\times\a}{d^{\piE}_{h}(s,a)}\cdot \brac{{\v}_h{V}_{h+1}}\left(s,a\right)$. 
    \begin{align}
 &\sum_{h\in [H]}\sum_{(s,a)\in \s\times\a}{d^{\piE}_{h}(s,a)}\cdot \brac{{\v}_h{V}_{h+1}}\left(s,a\right)\notag\\
 &= \sum_{h = 1}^{H} \E_{\piE} \left[\Var_{\piE}\left[  V_{h + 1}(s_{h + 1}) \middle| s_h, a_h \right]\right]
\notag\\
&\overset{\text{(i)}}{=} \sum_{h = 1}^{H} \E_{{\piE}} \left[\E\left[  \left(V_{h + 1}(s_{h + 1})+A_h(s_h,a_h)\cdot \indic{a_h\notin \supp\paren{\piE(\cdot|s_h)}} + r^\theta_h(s_h,a_h) -  V_h(s_h)\right)^2 \middle| s_h, a_h \right]\right]\notag\\
&{=}\sum_{h = 1}^{H} \EE_{{\piE}} \left[ \left(V_{h + 1}(s_{h + 1}) + A_h(s_h,a_h)\cdot \indic{a_h\notin \supp\paren{\piE(\cdot|s_h)}}+r^\theta_h(s_h,a_h) -  V_h(s_h)\right)^2 \right]
\notag\\
 &\overset{\text{(ii)}}{=} \sum_{h = 1}^{H} \EE_{{\piE}} \left[ \left(V_{h + 1}(s_{h + 1}) + r^\theta_h(s_h,a_h) -  V_h(s_h)\right)^2 \right]
\notag\\
&=  \E_{{\piE}} \left[ \paren{\sum_{h = 1}^{H}\left(V_{h + 1}(s_{h + 1}) + r^\theta_h(s_h,a_h) -  V_h(s_h)\right)}^2 \right]
\notag\\
& + 2 {\sum_{1 \leq h < h' \leq H} \EE_{{\pi^E}} \left[ \left(V_{h + 1}(s_{h + 1}) + r^\theta_h(s_h,a_h) -  V_h(s_h)\right) \cdot \left( V_{h' + 1}(s_{h' + 1}) + r^\theta(s_h',a_h') -  V_h(s_h')\right) \right]}
\notag\\
&\overset{\text{(iii)}}{=}  \EE_{{\piE}} \left[ \left(\sum_{h = 1}^{H} \left(V_{h + 1}(s_{h + 1}) + r^\theta_h(s_h,a_h) -  V_h(s_h)\right)\right)^2 \right]
\notag\\
&=  \E_{{\piE}} \left[ \left(\sum_{h = 1}^{H} r^\theta_h(s_h,a_h) + \sum_{h = 1}^{H} \left(V_{h + 1}(s_{h + 1}) -  V_h(s_h)\right)\right)^2 \right]
\notag\\
&=  \EE_{{\piE}} \left[ \left(\sum_{h = 1}^{H} r^\theta_h(s_h,a_h)  -  V_1(s_1)\right)^2 \right]
\notag\\
&\overset{\text{(iv)}}{= } \Var_{{\piE}}\paren{ \sum_{h=1}^H r^\theta_h(s_h, a_h) } \le H^2.\label{eq:total_var},
\end{align}
where (i) is by definition of reward mapping $r^{\theta}_h(s,a)=-A_h(s,a)\cdot\indic{a\in \supp\paren{\piE_h(\cdot|s)}}+V_h(s)-\brac{\pp_hV_{h+1}}(s,a)$, (ii) comes from
\begin{align*}
   \indic{a_h\notin \supp\paren{\piE_h(\cdot|s_h)}}=0
\end{align*}
for any $(s_h,a_h)\in \supp\paren{d^{\piE}_h(\cdot)}$,
(iii) is valid since 
\begin{align}
&\EE_{{\pi^E}} \left[ \left(V_{h + 1}(s_{h + 1}) + r^\theta_h(s_h,a_h) -  V_h(s_h)\right) \cdot \left( V_{h' + 1}(s_{h' + 1}) + r^\theta(s_h',a_h') -  V_h(s_h')\right) \right]\notag\\
&=
\EE_{\piE}\brac{\left(V_{h + 1}(s_{h + 1}) + r^\theta_h(s_h,a_h) -  V_h(s_h)\right)\EE_{d^{\piE}}[V_{h' + 1}(s_{h' + 1}) -  V_{h'}(s_{h'}) + r^\theta_{h'}(s_{h'},a_{h'}) | \mc{F}_{h+1}]} = 0,
\end{align}
($\cF_{h+1}$)
and
(iv) is by $\para\in \parab$.
Plugging \Eqref{eq:total_var} into \Eqref{eq:offline_coro_term_I_b_exp_pre}, we deduce that
\begin{align}
(\mathrm{I.a})\leq \sqrt{\frac{C^\star H^3S\cn\iota}{K}} \label{eq:offline_coro_term_I_b}.
\end{align}
Combining \Eqref{eq:offline_coro_term_I_b}, \Eqref{eq:offline_main_term_I_b}, \Eqref{eq:offline_main_term_I_c} and \Eqref{eq:offline_main_term_I_d}, we have

\begin{align*}
   (\mathrm{I})&\lesssim (\mathrm{I.a})+(\mathrm{I.b})+(\mathrm{I.c})+(\mathrm{I.d})\notag
   \\
   &\lesssim   \sqrt{\frac{C^\star H^3S\cn\iota}{K}}+\sqrt{\frac{C^\star H^2S\cn}{K}}+{\frac{C^\star H^3S\cn\iota^{5/2}}{K}}\\
   &+{\frac{C^\star H^2S\cn\iota}{K}}
   +\epsilon\cdot(1+\epsilon\sqrt{\frac{C^\star S\cn\iota}{K}})\\
   &\lesssim \sqrt{\frac{C^\star H^3S\cn\iota}{K}}+{\frac{C^\star H^3S\cn\iota^{5/2}}{K}}+\epsilon.
   \end{align*}
  Pligging into \Eqref{offline_coro_d_decom}, the final bound is given by
   \begin{align*}
   D^{\pival}_\para(\sRR,{\hRR})&\lesssim \frac{C^\star H^2SA\iota}{K}+\sqrt{\frac{C^\star H^3S\cn\iota}{K}}+{\frac{C^\star H^3S\cn\iota^{5/2}}{K}}+\epsilon\\
   \end{align*}
   The right-hand-side is upper bounded by $2\eps$ as long as
   \[
   K\geq\widetilde{\cO}\paren{\frac{C^\star H^3S\cn}{\epsilon^2}+\frac{C^\star H^2SA}{\epsilon}}.
   \]
   Here $poly \log\paren{H,S,A,1/\delta}$ are omitted.

\end{proof}

\cblue
\subsection{Framework for offline inverse reinforcement learning}\label{app:frmaework_offline}
\begin{algorithm}[t]
\caption{\textsc{Framework for offline inverse reinforcement learning}}\label{alg:framework_rlp}
 \small
 \begin{algorithmic}[1]
  \STATE\textbf{Input:} Dataset $\cD$ collected by executing $\pi^{\msf{b}}$ in $\m$.
  \STATE Recover the transition dynamics  
  $\widehat{\pp}:[H]\times\cS\times \cA\to\Delta{\cS}$ and expert policy $\widehat{\pi}^\msf{E}=\set{\widehat{\pi}^\msf{E}_h:\cS\times \Delta(\cS)}$ and design the bonus $b:[H]\times\cS\times \cA\times \para\to \real_{\geq 0}$ .
      \STATE Compute $\widehat{\RR}$ by
      \begin{align}\label{eq:fram_exp}
      [\widehat{\RR}(V,A)]_h(s,a)=
      -A_h(s, a)\cdot \indic{a\notin {\rm supp}\paren{\widehat{\pi}^\msf{E}_h(\cdot|s)}} + V_h(s) - [\widehat{\p}_hV_{h+1}](s,a)-b^\theta_h(s,a)
   \end{align}
   \vspace{-.5em}
 \STATE \textbf{Output}: Estimated reward mapping $\widehat{\RR}$.
\end{algorithmic}
 \end{algorithm}
\paragraph{Pessimism} As shown in \Eqref{eq:fram_exp},  that estimator reward mapping involves a penalty term $b^\theta_h(s,a)$. The reason for introducing the penalty term $b^\theta_h(s,a)$ is to ensure that our reward satisfies the monotonicity condition: $\brac{\hRR(V,A)}_h(s,a)\leq \brac{\hRR(V, A)}_h(s, a)$, which is crucial for the guarantee of the performance of RL algorithms with learned rewards, as demonstrated in Proposition \ref{prop:mon_use_reward} and Corollary~\ref{coro:perform_RL_offline}.

\begin{condition}
\label{ass:frmwork} 
With probability at least $1-\delta$, we have
      $
          \sup_{(V,A)\in \para}\abs{\brac{(\p_h-\widehat{\p}_h) V_{h+1}}(s, a)}\leq 
          b^\theta_h(s,a)
      $
      and
$\supp\paren{\widehat{\pi}^{\msf{E}}_h(\cdot|s)}\subset \supp\paren{{\pi}^{\msf{E}}_h(\cdot|s)}$ for all $(h,s)\in [H]\times\cS $ and all $(V,A)\in \para$. 
\end{condition}

 \begin{theorem}[Learning bound for Algorithm~\ref{alg:framework_rlp}]\label{thm:framwork}
    Suppose that Condition~\ref{ass:frmwork} holds. With probability at least $1-\delta$, we have $\brac{\hRR(V,A)}_h(s,a)\leq \brac{\sRR(V, A)}_h(s, a)$ for all $(h,s,a)\in [H]\times \cS\times \cA$, and 
\begin{align}
\Dpivalt\paren{\sRR,\hRR}\leq \sup_{\theta\in \para} \Bigg\{&H\cdot\sum_{h\in [H]} \EE_{(s,a)\sim d_h^{\pival}}\brac{\indic{a\in \supp\paren{\piE_h(\cdot|s)},a\notin \supp\paren{\widehat{\pi}^{\msf{E}}_h(\cdot|s)}}}\notag\\
&+2\sum_{h\in [H]}\EE_{(s,a)\sim d_h^{\pival}}\brac{b^\theta_h(s,a)}\Bigg\}.
\end{align}
 \end{theorem}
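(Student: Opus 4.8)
The plan is to condition throughout on the event of Condition~\ref{ass:frmwork}, which holds with probability at least $1-\delta$; on this event all the claims become deterministic. Fix an arbitrary $\theta=(V,A)\in\para$ and write $r^\theta\defeq\sRR(V,A)$ and $\widehat{r}^\theta\defeq\hRR(V,A)$. The first step is to subtract the defining formulas~\eqref{def:rew-map} and~\eqref{eq:fram_exp}, giving the pointwise identity
\begin{align*}
r^\theta_h(s,a)-\widehat{r}^\theta_h(s,a)&=A_h(s,a)\paren{\indic{a\notin\supp\paren{\widehat{\pi}^{\msf{E}}_h(\cdot|s)}}-\indic{a\notin\supp\paren{\piE_h(\cdot|s)}}}\\
&\quad+\brac{(\widehat{\p}_h-\p_h)V_{h+1}}(s,a)+b^\theta_h(s,a).
\end{align*}
All subsequent work is bookkeeping on the three terms of this identity.

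For the monotonicity claim I would show each summand is nonnegative. The support inclusion $\supp\paren{\widehat{\pi}^{\msf{E}}_h(\cdot|s)}\subset\supp\paren{\piE_h(\cdot|s)}$ in Condition~\ref{ass:frmwork} implies $\indic{a\notin\supp\paren{\widehat{\pi}^{\msf{E}}_h(\cdot|s)}}\ge\indic{a\notin\supp\paren{\piE_h(\cdot|s)}}$, and since every $A\in\pA$ has nonnegative entries ($A_h\ge 0$), the first term is $\ge 0$; the bonus-dominance part of Condition~\ref{ass:frmwork}, namely $\abs{\brac{(\p_h-\widehat{\p}_h)V_{h+1}}(s,a)}\le b^\theta_h(s,a)$, forces the sum of the last two terms to be $\ge 0$. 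Hence $r^\theta_h(s,a)\ge\widehat{r}^\theta_h(s,a)$ for all $(h,s,a)\in\hsa$ and all $\theta\in\para$, which is exactly $\brac{\hRR(V,A)}_h(s,a)\le\brac{\sRR(V,A)}_h(s,a)$.

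For the metric bound, the same identity together with $\|A_h\|_\infty\le H-h+1\le H$ and the bonus dominance yields the pointwise estimate
\begin{align*}
\abs{r^\theta_h(s,a)-\widehat{r}^\theta_h(s,a)}\le H\cdot\indic{a\in\supp\paren{\piE_h(\cdot|s)},\,a\notin\supp\paren{\widehat{\pi}^{\msf{E}}_h(\cdot|s)}}+2b^\theta_h(s,a),
\end{align*}
where I use that the support inclusion makes the indicator difference itself an indicator, equal to $1$ exactly when $a$ is expert-feasible but estimated-infeasible. Next I would invoke the value-difference decomposition: because $V^\pi_h(\cdot;r)$ is linear in $r$ and the trajectory law under $\pival$ does not depend on the reward, $V^{\pival}_h(s;r^\theta)-V^{\pival}_h(s;\widehat{r}^\theta)=\EE_{\pival}\brac{\sum_{h'\ge h}(r^\theta_{h'}-\widehat{r}^\theta_{h'})(s_{h'},a_{h'})\mid s_h=s}$. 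Taking absolute values, averaging over $s_h\sim d^{\pival}_h$, and marginalizing the conditional occupancy into $d^{\pival}_{h'}$ collapses the $\sup_h$ in the definition of $\dpival$ into the sum $\sum_{h'\in[H]}\EE_{(s,a)\sim d^{\pival}_{h'}}\abs{(r^\theta_{h'}-\widehat{r}^\theta_{h'})(s,a)}$. Substituting the pointwise estimate and finally taking the supremum over $\theta\in\para$ produces exactly the claimed bound.

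The argument is essentially deterministic once Condition~\ref{ass:frmwork} is in force, so there is no single hard obstacle; the two points requiring care are (i) the \emph{direction} of the indicator inequality---the estimated support being the \emph{smaller} set is precisely what both forces the correct sign for monotonicity and lets the indicator difference be rewritten as a single indicator---and (ii) the marginalization step, where one must verify that drawing $s_h\sim d^{\pival}_h$ and rolling out $\pival$ reproduces the marginal occupancy $d^{\pival}_{h'}$ at each later step, so that the nested expectations collapse correctly. This decomposition mirrors the one carried out for the concrete Bernstein bonus in Lemma~\ref{lem:offine_performance_decomposition}.
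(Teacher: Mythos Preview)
Your proposal is correct and follows essentially the same approach as the paper's own proof: subtract the two reward-mapping formulas, use the support inclusion and bonus dominance from Condition~\ref{ass:frmwork} to obtain both the monotonicity and the pointwise estimate $\abs{r^\theta_h-\widehat{r}^\theta_h}\le H\cdot\indic{a\in\supp(\piE_h(\cdot|s)),\,a\notin\supp(\widehat{\pi}^{\msf{E}}_h(\cdot|s))}+2b^\theta_h(s,a)$, and then collapse the value difference into a sum over occupancy measures exactly as in the proof of Lemma~\ref{lem:offine_performance_decomposition}. The two points you flag as requiring care (the direction of the support inclusion and the marginalization of the conditional occupancy) are precisely the ones the paper handles, and you handle them correctly.
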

 \begin{proof}
    When 
      $
          \sup_{(V,A)\in \para}\abs{\brac{(\p_h-\widehat{\p}_h) V_{h+1}}(s, a)}\leq 
          b^\theta_h(s,a)
      $
      and
$\supp{\widehat{\pi}^{\msf{E}}_h(\cdot|s)}\subset \supp{{\pi}^{\msf{E}}_h(\cdot|s)}$ holds for all $(h,s)\in [H]\times\cS $ and all $(V,A)\in \para$ hold, we have 
\begin{align}
&\brac{\hRR(V,h)}_h(s,a)-\brac{\sRR(V,h)}_h(s,a)\notag\\
&
=-A_h(s, a)\cdot \brac{\indic{a\notin {\rm supp}\paren{\widehat{\pi}^\msf{E}_h(\cdot|s)}}- \indic{a\notin {\rm supp}\paren{{\pi}^\msf{E}_h(\cdot|s)}}}-[\paren{\widehat{\p}_h-{\p}_h}V_{h+1}](s,a)-b_h^\theta(s,a)\notag\\
&=\underbrace{-A_h(s, a)\cdot {\indic{a\in \supp\paren{\piE_h(\cdot|s)},a\notin \supp\paren{\widehat{\pi}^{\msf{E}}_h(\cdot|s)}}}}_{\leq 0}
\underbrace{-[\paren{\widehat{\p}_h-{\p}_h}V_{h+1}](s,a)-b_h^\theta(s,a)}_{\leq 0}\leq 0,
\end{align}
where the second line is by $\supp\paren{\widehat{\pi}^{\msf{E}}_h(\cdot|s)}\subset \supp\paren{{\pi}^{\msf{E}}_h(\cdot|s)}$ and $\sup_{(V,A)\in \para}\abs{\brac{(\p_h-\widehat{\p}_h) V_{h+1}}(s, a)}\leq 
          b^\theta_h(s,a)$.
Further, by triangle inequality, we obtain that
\begin{align}\label{eq:framwork_1}
    &\abs{\brac{\hRR(V,h)}_h(s,a)-\brac{\sRR(V,h)}_h(s,a)}\notag\\
    &\leq A_h(s, a)\cdot {\indic{a\in \supp\paren{\piE_h(\cdot|s)},a\notin \supp\paren{\widehat{\pi}^{\msf{E}}_h(\cdot|s)}}}+
{\abs{[\paren{\widehat{\p}_h-{\p}_h}V_{h+1}](s,a)}+b_h^\theta(s,a)}\notag\\
&\leq H\cdot{\indic{a\in \supp\paren{\piE_h(\cdot|s)},a\notin \supp\paren{\widehat{\pi}^{\msf{E}}_h(\cdot|s)}}}+2b_h^\theta(s,a),
\end{align}
where the last line is due to $A_h(s,a)\leq H$ and $\abs{[\paren{\widehat{\p}_h-{\p}_h}V_{h+1}](s,a)}\leq b_h^\theta(s,a)$.
Similar to the proof of Lemma~\ref{lem:offine_performance_decomposition}, we have
\begin{align}\label{eq:framwork_2}
\dpival\paren{\hRR(V,A),\sRR(V,A)}\leq \sum_{h\in [H]}\EE_{(s,a)\sim d_h^{\pival}}\brac{\abs{\brac{\hRR(V,h)}_h(s,a)-\brac{\sRR(V,h)}_h(s,a)}}.
\end{align}
Combining \Eqref{eq:framwork_1} and \Eqref{eq:framwork_2}, we obtain that
\begin{align}
    \dpival\paren{\hRR(V,A),\sRR(V,A)}&\leq H\cdot\sum_{h\in [H]} \EE_{(s,a)\sim d_h^{\pival}}\brac{\indic{a\in \supp\paren{\piE_h(\cdot|s)},a\notin \supp\paren{\widehat{\pi}^{\msf{E}}_h(\cdot|s)}}}\notag\\
    &+2\sum_{h\in [H]}\EE_{(s,a)\sim d_h^{\pival}}\brac{b^\theta_h(s,a)}.
\end{align}
By the definition of $\Dpivalt$: $\Dpivalt\paren{\sRR,\hRR}=\sup_{\theta\in \para}\dpival\paren{\hRR(V,A),\sRR(V,A)}$, we complete the proof.
 \end{proof}
 By Theorem~\ref{thm:framwork},
 all we need to do is design $b_h^\theta$ and learn $\widehat{\pp},\widehat{\pi}^{\msf{E}}$ from the data to satisfy Condition~\ref{ass:frmwork}, thereby obtaining an IRL algorithm. The crux of the problem lies in the design of $b_h^\theta,\widehat{\pp}$ and $\widehat{\pi}^{\msf{E}}$. In \RLP{}, we employ the pessimism technique from offline RL, and the construction of $b_h^\theta$ and $\widehat{\pi}^{\msf{E}}$ using pessimism in \RLP{} satisfies Condition~\ref{ass:frmwork}, as illustrated in the proof of Theorem~\ref{thm:offline_main}.

\cblack

\section{Proofs for Section \ref{sec:online learning}}\label{appendix:proof for online learning}
\subsection{Full description of \RLEfull}
We propose a meta-algorithm, named \RLEfull{} (\RLE). The pseudocode of \RLE{} is presented in Algorithm~\ref{alg:RLEF}, where the algorithm contains the
following three main components:
\begin{itemize}
    \item Exploring the unknown environment: This segment involves computing a desired behavior policy $\pib=\E_{\pi\sim\mu^{\sf b}}\brac{\pi}$, which takes the form of a finite mixture of deterministic policies. To achieve this, we need to collect $NH$ episodes of samples.
  We then execute this policy to gather a total of $K$ episodes worth of samples. Our exploration approach is based on leveraging the exploration scheme outlined in \citet[Algorithm 1]{li2023minimaxoptimal}. A comprehensive description of this exploration method is postponed and will be provided in Section~\ref{appendix:prior work}. 
    \item Subsampling: For the sake of theoretical simplicity, we apply subsampling. For each $(h,s,a)\in \hsa$, we populate the new dataset with $\min\set{\widehat{N}_h^{\msf{b}}(s,a),N_h(s, a)}$ sample transitions. Here, $\widehat{N}_h^{\msf{b}}(s,a)$, as defined in \Eqref{eq:def:hatN}, acts as a lower bound on the total number of visits to $(h,s, a)$ among these $K$ sample episodes, with high probability. 
  \item Computing estimated reward mapping: With the previously collected dataset at hand,  we then utilize the offline IRL algorithm \RLP{} to compute the desired reward mapping.
\end{itemize}

We remark that our algorithm \RLE{} follows a similar approach to that of \citet[Algorithm 1]{li2023minimaxoptimal}. We begin by computing a desired behavior policy $\pib$, then proceed to collect data, and finally compute results through the invocation of an offline algorithm. In contrast to the offline setting, we have the flexibility to select the desired behavior. In the following, we will observe that the behavior policy $\pib$ exhibits concentrability with \emph{any} deterministic policy, as shown in \Eqref{eq:stopping_rule}. This property enables us to achieve our learning goal within the online setting.

\subsection{Proof of Theorem~\ref{thm:online_main}}
\label{app:proof-online_main}

\begin{lemma}[\citet{li2023minimaxoptimal}] \label{lemma:occupancy}
	Recall that $\xi=c_{\xi}H^3S^3A^3 \log \frac{10HSA}{\delta}$ for some large enough constant $c_{\xi}>0$ (see line~\ref{line:ex_1} in Algorithm~\ref{alg:ex}). Then,
	with probability at least $1-\delta$,  
	the estimated occupancy distributions specified in \Eqref{eq:hat-d-1-alg} and \eqref{eq:hat-d-h-alg} of Algorithm~\ref{alg:ex} satisfy
	\begin{align}
		\frac{1}{2}\widehat{d}_{h}^{\pi}(s, a) - \frac{\xi}{4N} \le d_{h}^{\pi}(s, a) \le 2\widehat{d}_{h}^{\pi}(s, a) + 2e_{h}^{\pi}(s, a) + \frac{\xi}{4N} \label{eq:d-error}
	\end{align}
	simultaneously for all $(h,s,a)\in[H]\times \mathcal{S}\times\mathcal{A} $ and all deterministic Markov policy $\pi\in\Pi^{\det}$, 
	provided that 
	\begin{align}
		KH \geq N \geq C_N\sqrt{H^9S^7A^7K}\log\frac{10HSA}{\delta} 
		\qquad \text{and}  \qquad 
		K\geq C_K HSA
		\label{eq:N-K-lower-bound-lemma2}
	\end{align}
	for some large enough constants $C_N,C_K>0$,  
	where, $\{e_{h}^{\pi}(s, a)\in \real_+\}$ satisfies that
	\begin{align}
		\sum_{(s, a)\in \cS\times \cA} e_{h}^{\pi}(s, a) \leq \frac{2SA}{K} + \frac{13SAH\xi}{N}  \lesssim \sqrt{\frac{SA}{HK}}
		\qquad \forall h\in [H],\pi\in \Pi^{\det}\label{eq:e-sum-zeta}
	\end{align}
\end{lemma}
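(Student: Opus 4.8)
The plan is to prove the two-sided bound \eqref{eq:d-error} by induction on $h$, exploiting one structural observation that sidesteps the naive union bound over the $(A^S)^H$ deterministic policies: the thresholded empirical kernels $\widehat{\p}_h$ produced in Algorithm~\ref{alg:ex} are functions of the exploration data alone and do \emph{not} depend on $\pi$. I would therefore first define a single good event $\mathcal{G}$, on which every $\widehat{\p}_h$ is accurate in the sense that for each $(s,a)$ with $N_h(s,a) > \xi$ one has a Bernstein/multiplicative control of $\widehat{\p}_h(\cdot|s,a) - \p_h(\cdot|s,a)$, and the visit counts $N_h(s,a)$ track $N \cdot d_h^{\pi^{\mathsf{explore},h}}(s,a)$. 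Since $\mathcal{G}$ involves only the data, a union bound over the $HSA$ state--action--step triples (not over policies) shows $\p(\mathcal{G}) \ge 1-\delta$. The recursion is then carried out deterministically on $\mathcal{G}$, yielding the claim for all $\pi \in \Pi^{\det}$ at once.

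For the base case $h=1$, $d_1^\pi(s) = \rho(s)$ is independent of $\pi$ and $\widehat{d}_1^\pi(s)$ is the empirical frequency of $N$ i.i.d.\ draws from $\rho$; a multiplicative Chernoff bound with a union over the $S$ states gives $\tfrac12 \widehat{d}_1^\pi(s) - \tfrac{\xi}{4N} \le d_1^\pi(s) \le 2\widehat{d}_1^\pi(s) + \tfrac{\xi}{4N}$, and multiplying through by $\pi_1(a|s)$ produces the state--action version. For the inductive step I would write
\[
\widehat{d}_{h+1}^\pi(s) - d_{h+1}^\pi(s) = \sum_{s',a'} \big(\widehat{\p}_h(s|s',a') - \p_h(s|s',a')\big) d_h^\pi(s',a') + \sum_{s',a'} \widehat{\p}_h(s|s',a') \big(\widehat{d}_h^\pi(s',a') - d_h^\pi(s',a')\big),
\]
splitting the sum over $(s',a')$ according to whether $N_h(s',a')$ exceeds $\xi$. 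On well-sampled pairs the first term is controlled by the kernel concentration on $\mathcal{G}$; on under-sampled pairs $\widehat{\p}_h \equiv 0$, so the true mass $\p_h(s|s',a') d_h^\pi(s',a')$ routed through them is declared to be the unexplored mass $e_h^\pi$. Feeding the inductive hypothesis into the second term and choosing $c_\xi$ large enough to keep the multiplicative factor pinned at $2$, the additive error accumulates as a geometric-type sum dominated by $\xi/N$, giving \eqref{eq:d-error} at step $h+1$.

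The crux, and the step I expect to be the main obstacle, is the uniform control of the unexplored mass \eqref{eq:e-sum-zeta}: one must bound $\sum_{s,a} e_h^\pi(s,a)$ for \emph{every} $\pi$ at once. Here I would invoke the coverage guarantee built into the exploration subroutine (Algorithm~\ref{alg:sub1}), whose stopping rule $g(\pi^{(t)},\widehat{d},\mu^{(t)}) \le 2SA$ (line~\ref{line:stopping-subroutine-h}) and log-barrier objective~\eqref{defi:target-empirical-h} ensure that $\pi^{\mathsf{explore},h}$ places, up to the $\tfrac{1}{KH}$ regularization floor, an $\Omega(1/SA)$ fraction of the occupancy of any policy on every reachable $(s,a)$. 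Consequently a pair can be under-sampled (count $\le \xi$) only if its occupancy under every $\pi$ is itself small, and summing the residual masses over the at most $SA$ such pairs---together with the per-pair sampling slack $\xi/N$ and the $1/K$ regularization floor---yields $\tfrac{2SA}{K} + \tfrac{13SAH\xi}{N}$. Finally, substituting $\xi = c_\xi H^3 S^3 A^3 \log\tfrac{10HSA}{\delta}$ and the hypothesis $N \ge C_N \sqrt{H^9 S^7 A^7 K}\,\log\tfrac{10HSA}{\delta}$ is exactly what forces $\tfrac{13SAH\xi}{N} \lesssim \sqrt{SA/(HK)}$ and renders the Bernstein remainders negligible, closing the induction; this reproduces the reward-free exploration analysis of \citet{li2023minimaxoptimal}.
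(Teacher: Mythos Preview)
The paper does not give its own proof of this lemma: it is stated verbatim as a result of \citet{li2023minimaxoptimal} and used as a black box (the only additional argument the paper supplies is the one-paragraph extension from deterministic to stochastic policies by averaging over $\mu\in\Delta(\Pi^{\det})$). So there is no in-paper proof to compare your proposal against.

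That said, your sketch is a faithful outline of the argument in \citet{li2023minimaxoptimal}. The two structural ideas you highlight---(i) that $\widehat{\p}_h$ depends only on the exploration data and not on $\pi$, so a single good event $\mathcal{G}$ with a union bound over $HSA$ triples suffices, and (ii) that the Frank--Wolfe stopping rule in Algorithm~\ref{alg:sub1} guarantees $\Omega(1/SA)$ relative coverage, which forces under-sampled pairs to have small occupancy under \emph{every} $\pi$---are exactly the mechanisms that drive the original proof. The recursive error propagation and the arithmetic that turns the conditions on $N$ and $K$ into the final $\sqrt{SA/(HK)}$ bound are also as in the source. If you were asked to supply a proof here, the right thing to do is simply cite \citet{li2023minimaxoptimal}, as the paper does.
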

Notice that \Eqref{eq:d-error} only holds for $\pi\in \Pi^{\det}$, however, we will show a similar result also holds for any stochastic policy. \\
For any stochastic policy $\pi=\EE_{\pi'\sim\mu}[\pi']$ ($\mu\in \Delta(\Pi^{\det})$), the visitation distribution $\set{d^{\pi}_h}$ can be expressed as
\begin{align*}
    d^{\pi}_h(s,a)=\EE_{\pi'\sim\mu}\brac{{d}^{\pi'}_h(s,a)},\qquad\forall (h,s,a)\in [H]\times\cS\times \cA,
\end{align*}
.
We can define $\widehat{d}^{\pi}$ as
\[
\widehat{d}^{\pi}_h(s,a)=\EE_{\pi'\sim\mu}\brac{\widehat{d}^{\pi'}_h(s,a)},\qquad\forall (h,s,a)\in [H]\times\cS\times\cA,
\]
where $\set{d^{\pi'}_h}$ are the estimated occupancy distributions in Algorithm~\ref{alg:ex}.

By \Eqref{eq:d-error}, we have
\begin{align}
    &\frac{1}{2}\widehat{d}_{h}^{\pi}(s, a) - \frac{\xi}{4N} \le d_{h}^{\pi}(s, a)=\EE_{\pi'\sim\mu}\brac{d_{h}^{\pi'}(s, a)} \le 2\widehat{d}_{h}^{\pi}(s, a) + 2\EE_{\pi'\sim\mu}\brac{e_{h}^{\pi'}(s, a)} + \frac{\xi}{4N}\notag\\
    &\frac{1}{2}\widehat{d}_{h}^{\pi}(s, a) - \frac{\xi}{4N} \le d_{h}^{\pi}(s, a)=\EE_{\pi'\sim\mu}\brac{d_{h}^{\pi'}(s, a)} \le 2\widehat{d}_{h}^{\pi}(s, a) + 2\EE_{\pi'\sim\mu}\brac{e_{h}^{\pi'}(s, a)} + \frac{\xi}{4N}\tag{$e_{h}^{\pi}(s, a)\defeq\EE_{\pi'\sim\mu}\brac{e_{h}^{\pi'}(s, a)}$}.
\end{align}
 We also have
\[
\sum_{(s, a)\in \cS\times \cA} e_{h}^{\pi}(s, a)=\sum_{(s, a)\in \cS\times \cA}\EE_{\pi'\sim\mu}\brac{e_{h}^{\pi'}(s, a)} \leq \frac{2SA}{K} + \frac{13SAH\xi}{N}  \lesssim \sqrt{\frac{SA}{HK}},
\]
provided \Eqref{eq:N-K-lower-bound-lemma2}.

Different from previous sections, we set $\iota=\log \frac{10HSA}{\delta}$.
\begin{lemma}[Concentration event]\label{lem:concentration_2}
    Suppose \Eqref{eq:N-K-lower-bound-lemma2}. Under the setting of Theorem~\ref{thm:online_main}, there exists an absolute constants $C_1$, $C_2\geq 2$ such that the concentration event $\mc{E}$ holds with probability at least $1-\delta$, where
    \begin{align*}
          \mc{E} \defeq \Bigg\{
          \text{(i):} &~  \abs{\brac{(\p_h-\widehat{\p}_h) V_{h+1}}(s, a)}\leq 
          b^\theta_h(s,a)~~~\forall~\theta=(V,A)\in  \para,~ (h,s,a)\in \hsa,
       \\
       \text{(ii):} &~ \frac{1}{2}\widehat{d}_{h}^{\pi}(s, a) - \frac{\xi}{4N} \le d_{h}^{\pi}(s, a) \le 2\widehat{d}_{h}^{\pi}(s, a) + 2e_{h}^{\pi}(s, a) + \frac{\xi}{4N}  ~~~\forall 
      (h,s,a)\in[H]\times\mc{S}\times\mc{A},\pi \in \Pi, 
        \\
        \text{(iii):} &~ \widehat{N}_{h}^{b}(s, a)\leq {N}_{h}^{b}(s, a)~~~\forall 
(h,s,a)\in[H]\times\mc{S}\times\mc{A},\\
\text{(iv):} &~ 
\widehat{N}^e_{h}(s,a)\geq 1 ~~~\forall (s,a)\in \cS\times \cA~\textrm{s.t.}~ 
\widehat{N}^b_h(s,a)\geq \max\set{{C_2\wellp\iota},1}\Bigg\}
    \end{align*}
    where $b^\theta_h(s,a)$ is defined in \Eqref{rlp:eq_8}, ${N}^{ b}_h(s,a)$ $\widehat{N}^{ b}_h(s,a)$ is defined in \Eqref{eq:def:hatN}, $\wellp$ are specified in Lemma~\ref{lem:concentration_1} and $\widehat{N}^e_{h}(s,a)$ is given by 
    \[
     \widehat{N}^e_{h}(s,a)\defeq
     \begin{cases}
         \sum_{(s_h,a_h, e_h)\in \cD^{\sf trim}}\indic{(s_h, e_h)=(s,a)}\qquad &\textrm{in option $1$},\\
         \widehat{N}^b_h(s,a)\qquad &\textrm{in option $2$},
     \end{cases}
  \]
\end{lemma}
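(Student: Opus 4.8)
The plan is to establish the four claims (i)--(iv) separately, each on an event of probability at least $1-\delta/4$, and then take a union bound. Throughout I would condition on the $NH$ exploration episodes of Algorithm~\ref{alg:ex}; this fixes the estimated occupancies $\widehat d_h^\pi$ and hence makes the thresholds $\widehat N_h^{\mathsf b}(s,a)$ of \eqref{eq:def:hatN} deterministic, leaving only the randomness of the $K$ episodes collected by $\pib$ and of the subsampling. Claim~(i) is identical in form to Lemma~\ref{lem:concentration_1}(i), and I would reproduce that argument. The single new point is that the transitions fed to \RLP{} live in $\cD^{\sf trim}$, obtained by uniformly subsampling the i.i.d.\ dataset $\cD$; thus, conditioned on the retained count $\min\{\widehat N_h^{\mathsf b}(s,a),N_h(s,a)\}$, the samples at each $(h,s,a)$ are still i.i.d.\ draws from $\p_h(\cdot\mid s,a)$. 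The empirical Bernstein inequality of \citet{maurer2009empirical} therefore applies conditionally, and combining it with a covering net of $\pV^\para_h$ of cardinality $\exp(\cn)$, a union bound over the net and over $\hsa$, and the triangle inequality to pass from the net to all of $\para$ recovers exactly the bonus $b^\theta_h$ of \eqref{rlp:eq_8}.

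Claim~(ii) is the two-sided occupancy estimate \eqref{eq:d-error} of Lemma~\ref{lemma:occupancy}, promoted from $\Pi^{\det}$ to every $\pi\in\Pi$. For this I would use that any Markov policy's occupancy decomposes as $d_h^\pi=\EE_{\pi'\sim\mu}[d_h^{\pi'}]$ over deterministic policies, define $\widehat d_h^\pi$ and $e_h^\pi$ as the matching $\mu$-averages, and take expectations of \eqref{eq:d-error}--\eqref{eq:e-sum-zeta}; this is the computation already carried out just before the lemma, and it holds with probability $\ge 1-\delta/4$ under \eqref{eq:N-K-lower-bound-lemma2}. For claim~(iii) I would combine (ii) with a lower-tail concentration bound. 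Conditioned on $\pib$, $N_h(s,a)\sim\Bin(K,d_h^{\pib}(s,a))$, so a multiplicative Chernoff bound yields $N_h(s,a)\ge\tfrac12 K\,d_h^{\pib}(s,a)-O(\iota)$ with probability $\ge1-\delta/(4HSA)$ (the bound being vacuous, hence trivially true, when its right-hand side is negative). Substituting the lower estimate $d_h^{\pib}(s,a)\ge\tfrac12\widehat d_h^{\pib}(s,a)-\tfrac{\xi}{4N}$ from (ii), where $\widehat d_h^{\pib}(s,a)=\EE_{\pi\sim\mu^{\mathsf b}}[\widehat d_h^\pi(s,a)]$, the offsets $\tfrac{K\xi}{8N}$ and the logarithmic term in \eqref{eq:def:hatN} are designed precisely so that $N_h(s,a)\ge\widehat N_h^{\mathsf b}(s,a)$; a union bound over $\hsa$ then gives (iii).

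Claim~(iv), which I read as also requiring $a\in\supp(\piE_h(\cdot\mid s))$ in line with Lemma~\ref{lem:concentration_1}(iii) (otherwise $\widehat N_h^e(s,a)\equiv0$ and the claim is false), is immediate in option~2 since there $\widehat N_h^e=\widehat N_h^{\mathsf b}\ge1$. In option~1 I would work on the event that (iii) holds, so that $\cD^{\sf trim}$ retains exactly $\widehat N_h^{\mathsf b}(s,a)$ transitions at $(h,s,a)$; because the subsampling is independent of the expert labels, the expert actions attached to these transitions are i.i.d.\ $\piE_h(\cdot\mid s)$, each equal to $a$ with probability $\piE_h(a\mid s)\ge\Delta$ by well-posedness (Assumption~\ref{def:well_pose}). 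Hence $\p\brac{\widehat N_h^e(s,a)=0}\le(1-\Delta)^{\widehat N_h^{\mathsf b}(s,a)}\le\exp(-\Delta\,\widehat N_h^{\mathsf b}(s,a))\le\exp(-C_2\iota)\le\delta/(4HSA)$ whenever $\widehat N_h^{\mathsf b}(s,a)\ge C_2\wellp\iota=C_2\iota/\Delta$, and a union bound over $\hsa$ gives (iv). A final union bound over (i)--(iv) yields $\p\brac{\cE}\ge1-\delta$.

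The main obstacle I expect is claim~(iii): it has no counterpart in the offline Lemma~\ref{lem:concentration_1}, and its role is exactly to convert the random per-$(s,a)$ sample counts of $\cD^{\sf trim}$ into the deterministic quantities $\widehat N_h^{\mathsf b}$ on which all the ``single-policy concentrability $A$'' bookkeeping in the proof of Theorem~\ref{thm:online_main} relies. Making the constants of the Chernoff lower tail line up with the $\tfrac{K\xi}{8N}$ and logarithmic offsets of \eqref{eq:def:hatN}, and correctly chaining (ii)$\Rightarrow$(iii)$\Rightarrow$(iv) through the subsampling (so that the count conditioned on in (iv) is genuinely $\widehat N_h^{\mathsf b}$), is the delicate part; everything else is a faithful transcription of Lemma~\ref{lem:concentration_1} and of \citet{li2023minimaxoptimal}.
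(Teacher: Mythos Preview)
Your proposal is correct and follows essentially the same route as the paper: claims (i), (ii), and (iv) are handled exactly as you describe (reuse of Lemma~\ref{lem:concentration_1}(i), Lemma~\ref{lemma:occupancy} plus mixture averaging, and the $(1-\Delta)^{\widehat N_h^{\mathsf b}}$ bound respectively), and for (iii) the paper simply cites the proof of \citet[Theorem~2]{li2023minimaxoptimal}, which is precisely the Chernoff-plus-(ii) argument you spell out. Your observation that (iv) must implicitly carry the restriction $a\in\supp(\piE_h(\cdot\mid s))$ is also correct and is used (without being stated) in the paper's own proof.
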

\begin{proof}

 First, we observe that Claim (i) can be proved to hold with probability at least $1-\delta/10$ by repeating a similar argument as in Lemma~\ref{lem:concentration_1}. By Lemma~\ref{lemma:occupancy}, Claim (ii) holds with probability at least $1-\delta/10$. Claim (iii) has been shown to hold with probability at least $1-\delta/10$ in the proof of \citet[Theorem 2]{li2023minimaxoptimal}.

Next, we focus on (iv).
 For claim (iv), in option 1, we have
 \begin{align*}
     \pp\paren{\widehat{N}^e_h(s,a)=0}=\paren{1-\piE_h(a|s)}^{\widehat{N}^b_h(s,a)}
     &\leq \exp\paren{\widehat{N}^b_h(s,a)\log\paren{1-\wellp}}\\
     &\leq \exp\paren{\log\frac{\delta}{4HSA}}
     =\frac{\delta}{4HSA},
 \end{align*}
 for all $(h,s,a)\in \hsa$. The last line is valid since
 \[
 \widehat{N}^b_h(s,a)\log\paren{1-\wellp}\leq C_2\log\frac{\delta}{HSA}\cdot\frac{\log\paren{1-\wellp}}{\wellp}\leq \log\frac{\delta}{4HSA},
 \]
holds for sufficiently large constant $C_2$.
 In option 2, we have
 \[
 \widehat{N}^e_h(s,a)=\widehat{N}^b_h(s,a)\geq \max\set{C_2\wellp\iota,1}\ge 1,
 \]
 for all $(h,s,a)\in \hsa$.
This completes the proof.

\end{proof}
\subsection{Proof of Theorem~\ref{thm:online_main}}
Define 
\begin{align}
\mc{I}_h=\set{(s,a)\in \s\times \a\mid\EE_{\pi'\sim\mu_{\sf b}}\brac{\widehat{d}^{\pi'}_h(s,a)}\ge \frac{\xi}{N}+\frac{4\paren{C_2\eta+3}\iota}{K}},\label{eq:def_ICH}
\end{align}
for all $h\in [H]$.
Then for $(s,a)\in \mc{I}_h$, we have
\begin{align}
   \widehat{N}^b_h(s,a)\geq \frac{K}{4}\EE_{\pi'\sim\mu_{\sf b}}\brac{\widehat{d}^{\pi'}_h(s,a)}-\frac{K\xi}{8N}-3\iota
    \geq C_2\wellp\iota.
\end{align}
By concentration event $\cE$ (iv), we have
\[
\widehat{N}^e_h(s,a)\geq 1,
\]
By construction of $\widehat{\pi}^{\sf E}$ in Algorithm~\ref{alg:RLPfull}, we deduce that
\begin{align}
    \abs{\indic{a\in \supp\paren{\piE_{h}(\cdot|s)}}-\indic{a\in \supp\paren{\widehat{\pi}^{\sf E}_{h}(\cdot|s)}}}=0.
\end{align}
for all $(s,a)\in \cI_h$.\\
With $\cI_h$ at hand, we can decompose the $d^{\pi}\paren{r^{\theta},\widehat{r}^\theta}$ for any $\pi$ and $\theta\in \Theta$ as follows:
\begin{align}\label{eq:off_upper_bound_6}
    &d^{\pi}\paren{r_h^{\theta},\widehat{r}_h^\theta}\leq \sum_{(h,s,a)\in \hsa}d^{\pi}_h(s,a)\cdot\abs{r_h^{\theta}(s,a)-\widehat{r}_h^\theta(s,a)}\notag\\
    &\leq \underbrace{\sum_{h\in[H]}\sum_{(s,a)\notin \mc{I}_h}d^{\pi}_h(s,a)\cdot \abs{r^{\theta}_h(s,a)-\widehat{r}^\theta_h(s,a)}}_{\mathrm{(\mathrm{I})}}
    +\underbrace{\sum_{h\in[H]}\sum_{(s,a)\in \mc{I}_h}d^{\pi}_h(s,a)\cdot \abs{r^{\theta}(s,a)-\widehat{r}_h^\theta(s,a)}}_{(\mathrm{II})},
\end{align}
where the first line follows the same argument in the proof of Lemma~\ref{lem:offine_performance_decomposition}.
We then study the terms (I) and (II) separately. 
For the term (I), by the construction of Algorithm~\ref{alg:RLPfull}, we obtain that

\begin{align}
    &{(\mathrm{I})}=\sum_{h\in[H]}\sum_{(s,a)\notin \mc{I}_h}d^{\pi}_h(s,a)\cdot \abs{r^{\theta}(s,a)-\widehat{r}^\theta(s,a)}\notag\\
    &=\sum_{h\in[H]}\sum_{(s,a)\notin \mc{I}_h}d^{\pi}_h(s,a)\cdot|-A_{h}(s,a)\paren{\indic{a\in \supp\paren{\piE_h(\cdot|s)}}-\indic{a\in \supp\paren{\widehat{\pi}^{\sf E}_h(\cdot|s)}}}
    -\brac{\paren{\p_h-\widehat{\p}_h}V_{h+1}}(s,a)
    -b^\theta_h(s,a)|\notag\\
    &\leq  \sum_{h\in[H]}\sum_{(s,a)\notin \mc{I}_h}d^{\pi}_h(s,a)\cdot\Big\{\abs{{A_{h}(s,a)\cdot\paren{\indic{a\in \supp\paren{\widehat{\pi}^{\sf E}_h(\cdot|s)}}-\cdot\indic{a\in\supp\paren{\piE_h(\cdot|s)}}}}}\notag\\
    &+\abs{\brac{(\p_h-\widehat{\p}_h)V_{h+1}}(s,a)}+b^\theta_h(s,a)\Big\}\tag{by triangle inequality}\\
    &\overset{\text{(i)}}{\lesssim}H\cdot \sum_{h\in[H]}\sum_{(s,a)\notin \mc{I}_h}d^{\pi}_h(s,a)\notag\\
    &\overset{\text{(ii)}}{\lesssim}
    H\cdot\sum_{h\in[H]}\sum_{(s,a)\notin \mc{I}_h}\paren{2\widehat{d}_{h}^{\pi}(s, a) + 2e_{h}^{\pi}(s, a) + \frac{\xi}{4N}}\notag\\
    &\overset{\text{(iii)}}{\lesssim}H\cdot\sum_{h\in[H]}\sum_{(s,a)\notin \mc{I}_h}\widehat{d}^{\pi}_h(s,a)+
    \frac{\xi H^2SA}{N}+\sqrt{\frac{HSA}{K}}\notag\\
    &=H\cdot\sum_{h\in[H]}\sum_{(s,a)\notin \mc{I}_h}\frac{\widehat{d}^{\pi}_h(s,a)}{\E_{\pi'\sim \mu^{\sf b}}\brac{\widehat{d}_h^{\pi'}(s,a)}+\frac{1}{KH}}\cdot\paren{\E_{\pi'\sim \mu^{\sf b}}\brac{\widehat{d}_h^{\pi'}(s,a)}+\frac{1}{KH}}+
    \frac{\xi H^2SA}{ N}+\sqrt{\frac{HSA}{K}}\notag\\
    &\overset{\text{(iv)}}{\lesssim} \paren{\frac{\xi H}{N}+\frac{4H\paren{C_2\eta+3}\iota}{K}+\frac{1}{K}}\sum_{h\in[H]}\sum_{(s,a)\notin \mc{I}_h}\frac{\widehat{d}^{\pi}_h(s,a)}{\E_{\pi'\sim \mu^{\sf b}}\brac{\widehat{d}_h^{\pi'}(s,a)}+\frac{1}{KH}}+
    \frac{\xi H^2SA}{N}+\sqrt{\frac{HSA}{K}}\notag\\
    &\lesssim \paren{\frac{H\xi }{N}+\frac{4H\paren{C_2\eta+3}\iota}{K}+\frac{1}{K}}\cdot HSA+
    \frac{\xi H^2SA}{ N}+\sqrt{\frac{HSA}{K}}\notag\\
    &\asymp \frac{\xi H^2SA}{N}+\frac{H^2SA\eta\iota}{K}+\frac{HSA}{K}+\sqrt{\frac{HSA}{K}},\label{eq:off_upper_bound_7}
\end{align}
where \text{(i)} is by $\|A_h\|_{\infty}$, $\|V_{h+1}\|_{\infty},b^\theta_h(s,a)\leq H$, (ii) comes from concentration $\cE$(ii), (iii) comes from \Eqref{eq:d-error}, and (iv) is by definition of $\cI_h$.
For the term (I), conditioning on the concentration event $\mc{E}$, we have
\begin{align}
    (\mathrm{II})
    &=\sum_{h\in[H]}\sum_{(s,a)\in \mc{I}_h}d^{\pi}_h(s,a)\cdot \abs{r^\theta_h(s,a)-\widehat{r}^\theta_h(s,a)}\notag\\
    &\leq \sum_{h\in[H]}\sum_{(s,a)\in \mc{I}_h}d^{\pi}_h(s,a)\cdot\abs{\brac{(\p_h-\widehat{\p}_h) V_{h+1}}(s, a)-b^\theta_h(s,a)}\notag\\
    &\leq 2\sum_{h\in[H]}\sum_{(s,a)\in \mc{I}_h}d^{\pi}_h(s,a)\cdot{b^\theta_h(s,a)}\notag\\
    &\leq \sum_{h\in[H]}\sum_{(s,a)\in \mc{I}_h}\paren{4\widehat{d}_{h}^{\pi}(s, a) + 4e_{h}^{\pi}(s, a) + \frac{\xi}{2N}}\cdot b^\theta_h(s,a)\notag\\
    &\lesssim \sum_{h\in[H]}\sum_{(s,a)\in \mc{I}_h}\widehat{d}_{h}^{\pi}(s, a)\cdot b^\theta_h(s,a) +
    H\cdot\sum_{h\in[H]}\sum_{(s,a)\in \mc{I}_h}\paren{\frac{\xi}{N}+e_{h}^{\pi}(s, a)}\notag\\
    &\lesssim  \frac{\xi H^2SA}{N}+\sqrt{\frac{HSA}{K}}+\sum_{h\in[H]}\sum_{(s,a)\in \mc{I}_h}{\widehat{d}_{h}^{\pi}(s, a)\cdot b^{\theta}_h(s,a)},\label{eq:off_upper_bound_8}
\end{align}

where the second line is by construction of Algorithm~\ref{alg:RLPfull}, the second last line is by $b^\theta_h(s,a)\lesssim H$, the last follows from \eqref{eq:d-error}. Further, we decompose the second term of \Eqref{eq:off_upper_bound_8} for any $\theta\in \para$ by
\begin{align}
    &\sum_{h\in[H]}\sum_{(s,a)\in \mc{I}_h}{\widehat{d}_{h}^{\pi}(s, a)\cdot b^{\theta}_h(s,a)}\notag\\
    &=\sum_{h\in[H]}\sum_{(s,a)\in \mc{I}_h}\widehat{d}_{h}^{\pi}(s, a)\cdot\min\Bigg\{\sqrt{\frac{\cn\iota}{\widehat{N}_h^b(s,a)\vee 1}\brac{\widehat{\v}_h{V}_{h+1}}(s,a)}+
 \frac{H\cn\iota}{\widehat{N}_h^b(s,a)\vee 1}\notag\\
 &+\frac{\epsilon}{H}\paren{1+\sqrt{\frac{\cn\iota}{\widehat{N}_h^b(s,a)\vee 1}}}, H\Bigg\}\notag\\
    &\overset{\text{(i)}}{\leq} \sum_{h\in[H]}\sum_{(s,a)\in \mc{I}_h}\widehat{d}_{h}^{\pi}(s, a)\cdot\Bigg\{\min\set{\sqrt{\frac{\cn\iota}{\widehat{N}_h^b(s,a)\vee 1}\brac{\widehat{\v}_h{V}_{h+1}}(s,a)}, H}\notag\\
    &+
 \frac{H\cn\iota}{\widehat{N}_h^b(s,a)\vee 1}+\frac{\epsilon}{H}\paren{1+\sqrt{\frac{\cn\iota}{\widehat{N}_h^b(s,a)\vee 1}}}\Bigg\}\notag\\
&\overset{\text{(ii)}}{\leq}  \sum_{h\in[H]}\sum_{(s,a)\in \mc{I}_h}\widehat{d}_{h}^{\pi}(s, a)\cdot\Bigg\{{\sqrt{\frac{\cn\iota\brac{\widehat{\v}_h{V}_{h+1}}(s,a)+H}{\widehat{N}_h^b(s,a)\vee 1+1/H}}}+
 \frac{H\cn\iota}{\widehat{N}_h^b(s,a)\vee 1}\notag\\
 &+\frac{\epsilon}{H}\paren{1+\sqrt{\frac{\cn\iota}{\widehat{N}_h^b(s,a)\vee 1}}}\Bigg\}\notag\\
&\overset{\text{(iii)}}{=}\underbrace{\sum_{h\in[H]}\sum_{(s,a)\in \mc{I}_h}\widehat{d}_{h}^{\pi}(s, a)\cdot{\sqrt{\frac{\cn\iota\brac{\widehat{\v}_h{V}_{h+1}}(s,a)+H}{K\EE_{\pi'\sim\mu^{\sf b}}\brac{\widehat{d}^{\pi'}_h\paren{s,a}}+1/H}}}}_{(\mathrm{II.a})}\notag\\
& +
\underbrace{\sum_{h\in[H]}\sum_{(s,a)\in \mc{I}_h}\widehat{d}_{h}^{\pi}(s, a)\cdot
 \frac{H\cn\iota}{K\EE_{\pi'\sim\mu^{\sf b}}\brac{\widehat{d}^{\pi'}_h\paren{s,a}}+1/H}}_{(\mathrm{II.b})}\notag\\
 &+
 \underbrace{\frac{\epsilon}{H}\sum_{h\in[H]}\sum_{(s,a)\in \mc{I}_h}\widehat{d}_{h}^{\pi}(s, a)\cdot\paren{1+\sqrt{\frac{\cn\iota}{K\EE_{\pi'\sim\mu^{\sf b}}\brac{\widehat{d}^{\pi'}_h\paren{s,a}}+1/H}}}}_{(\mathrm{II.c})}
\end{align}
where the (i) is by inequality $\min\set{a+b,c}\leq \min\set{a,c}+b$~($a,b,c\ge 0$), (ii) comes from inequality $\min\set{\frac{x}{y},\frac{z}{w}}\leq \frac{x+z}{y+w}$ and (iii) is valid since 
\[
\widehat{N}_h^b(s,a)=\brac{\frac{K}{4}, \mathop{\mathbb{E}}_{\pi\sim \mu^{\msf{b}}}[\widehat{d}_{h}^{\pi}(s,a)] - \frac{K\xi}{8N} - 3\log\frac{HSA}{\delta}}_+\gtrsim K\EE_{\pi'\sim\mu^{\sf b}}\brac{\widehat{d}^{\pi'}_h\paren{s,a}}+1/H
\]
holds for all $(s,a)\in \cI_h$ according to definition of $\cI$.
We then study the three terms separately. For the term (II.a), by the Cauchy-Schwarz inequality, we have
\begin{align*}
(\mathrm{II.a})&\leq \underbrace{\set{\sum_{h\in [H]}\sum_{(s,a)\in \s\times\a}\widehat{d}_{h}^{\pi}(s, a)\cdot \brac{\cn\iota\brac{{\v}_h{V}_{h+1}}\left(s,a\right)+H}}^{1/2}}_{(\mathrm{II.a.1})}\\
&~~\times \underbrace{\set{\sum_{h\in [H]}\sum_{(s,a)\in \s\times\a}\frac{\widehat{d}_{h}^{\pi}(s, a)}{K\EE_{\pi'\sim\mu^{\sf b}}\brac{\widehat{d}^{\pi'}_h\paren{s,a}}+1/H}}^{1/2}}_{(\mathrm{II.a.2})}.
\end{align*}
Observe that $\|V_{h+1}\|_{\infty}\leq H$, then the term (II.a.1) can be upper bounded by 
\begin{align}\label{eq:online_term_II_a_1}
    (\mathrm{II.a.1})&=\sqrt{\sum_{h\in [H]}\sum_{(s,a)\in \s\times\a}\widehat{d}_{h}^{\pi}(s, a)\cdot \brac{\cn\iota\brac{{\v}_h{V}_{h+1}}\left(s,a\right)+H}}\notag\\
    &\leq \sqrt{\brac{H^2\cn\iota+H}\cdot \sqrt{\sum_{h\in [H]}\sum_{(s,a)\in \s\times\a}\widehat{d}_{h}^{\pi}(s, a)}}\asymp \sqrt{H^3\cn\iota}.
\end{align}
For the term (II.a.2), we have
\begin{align}\label{eq:online_term_II_a_2}
    (\mathrm{II.a.2})&=\sqrt{\sum_{h\in [H]}\sum_{(s,a)\in \s\times\a}\frac{\widehat{d}_{h}^{\pi}(s, a)}{{K\EE_{\pi'\sim\mu^{\sf b}}\brac{\widehat{d}^{\pi'}_h\paren{s,a}}+1/H}}}\notag\\
    &=\sqrt{\frac{1}{K}\cdot\sum_{h\in [H]}\sum_{(s,a)\in \s\times\a}\frac{\widehat{d}_{h}^{\pi}(s, a)}{{\EE_{\pi'\sim\mu^{\sf b}}\brac{\widehat{d}^{\pi'}_h\paren{s,a}}+1/KH}}}\notag\\
    &\lesssim \sqrt{\frac{HSA}{K}},
\end{align}
which the last line comes from \Eqref{eq:stopping_rule}.
Combining \Eqref{eq:online_term_II_a_1} and \eqref{eq:online_term_II_a_2}, we conclude that
\begin{align}\label{eq:online_term_II_a}
    (\mathrm{II.a)}\lesssim \sqrt{\frac{H^4SA}{K}}.
\end{align}
For the term (II.b), by \Eqref{eq:stopping_rule}, we have
\begin{align}\label{eq:online_term_II_b}
   (\mathrm{II.b})&= \sum_{h\in [H]}\sum_{(s,a)\in \s\times\a}\widehat{d}_{h}^{\pi}(s, a)\cdot \frac{H\cn\iota}{K\EE_{\pi'\sim\mu^{\sf b}}\brac{\widehat{d}^{\pi'}_h\paren{s,a}}+1/H}\notag\\
   &=\frac{1}{K}\cdot\sum_{h\in [H]}\sum_{(s,a)\in \s\times\a}\widehat{d}_{h}^{\pi}(s, a)\cdot \frac{H\cn\iota}{\EE_{\pi'\sim\mu^{\sf b}}\brac{\widehat{d}^{\pi'}_h\paren{s,a}}+1/KH}\notag\\
   &\lesssim \frac{H^2SA\cn\iota}{K}.
\end{align}
For the term (II.c), we have
\begin{align}
    \mathrm{(II.c)}&=\frac{\epsilon}{H}\sum_{h\in[H]}\sum_{(s,a)\in \mc{I}_h}\widehat{d}_{h}^{\pi}(s, a)\cdot\paren{1+\sqrt{\frac{\cn\iota}{K\EE_{\pi'\sim\mu^{\sf b}}\brac{\widehat{d}^{\pi'}_h\paren{s,a}}+1/H}}}\notag\\
    &=\epsilon+\frac{\epsilon}{H}\sum_{h\in[H]}\sum_{(s,a)\in \mc{I}_h}\sqrt{\widehat{d}_{h}^{\pi}(s, a)}\cdot\sqrt{\frac{\widehat{d}_{h}^{\pi}(s, a)\cn\iota}{K\EE_{\pi'\sim\mu^{\sf b}}\brac{\widehat{d}^{\pi'}_h\paren{s,a}}+1/H}}\notag\\
    &\leq \epsilon+\frac{\epsilon}{H}
    \sqrt{\sum_{h\in[H]}\sum_{(s,a)\in \mc{I}_h} \widehat{d}_{h}^{\pi}(s, a)}\cdot\sqrt{\sum_{h\in[H]}\sum_{(s,a)\in \mc{I}_h}\frac{\widehat{d}_{h}^{\pi}(s, a)\cn\iota}{K\EE_{\pi'\sim\mu^{\sf b}}\brac{\widehat{d}^{\pi'}_h\paren{s,a}}+1/H}}\notag\\
    &\leq \epsilon(1+\sqrt{\frac{SA\cn\iota}{K}}),\label{eq:online_term_II_c} 
\end{align}
where the second last line is by the Cauchy-Schwarz inequality and the last line is due to \Eqref{eq:online_term_II_a_2}.

Then combining \Eqref{eq:off_upper_bound_8}, \Eqref{eq:online_term_II_a}, \Eqref{eq:online_term_II_b}, and \Eqref{eq:online_term_II_c}, we obtain the bound for the term (II)
\begin{align}\label{eq:online_term_II}
(\mathrm{II})&\lesssim(\mathrm{II.a})+(\mathrm{II.b})+(\mathrm{II.c})\notag\\
&\lesssim\sqrt{\frac{H^4SA\cn\iota}{K}} +\frac{H^2SA\cn\iota}{K}+\epsilon(1+\sqrt{\frac{SA\cn\iota}{K}})\notag\\
&\lesssim\sqrt{\frac{H^4SA\cn\iota}{K}} +\epsilon,
\end{align}
where the last line is from $\epsilon<1$. Finally, combining \Eqref{eq:off_upper_bound_7} and \eqref{eq:online_term_II_a}, we get the final bound 
\begin{align*}
    \DallTheta\paren{\RR^\star,\hRR}&= \sup_{\pi,\theta\in \para}d^{\pi}\paren{r_h^{\theta},\widehat{r}_h^\theta}\leq \mathrm{I}+\mathrm{II}\notag\\
    &\lesssim \frac{\xi H^2SA}{N}+\frac{H^2SA\eta\iota}{K}+\sqrt{\frac{H^4SA\cn\iota}{K}} +\epsilon.
\end{align*}
Hence, we can guarantee $ \DallTheta\paren{\RR^\star,\hRR}\leq 2\epsilon$, provided that
\begin{align*}
K\geq\widetilde{\cO}\paren{\frac{H^4SA\cn}{\epsilon^2}+\frac{H^2SA\eta}{\epsilon}}, \qquad KH \geq N \geq \widetilde{\cO}\paren{{\sqrt{H^9S^7A^7K}}}.
\end{align*} 
 Here $poly \log\paren{H,S,A,1/\delta}$ are omitted.

Suppose that $\epsilon\leq {H^{-9}(SA)^{-6}}$.
We set 
\begin{align}
N=\widetilde{\cO}\paren{H^9 S^7 A^7 K},\qquad K=\widetilde{\cO}\paren{{\frac{H^4SA\cn}{\epsilon^2}+\frac{H^2SA\eta}{\epsilon}}}.\label{eq:refine_rate_1}
\end{align}
When $\epsilon\leq {H^{-9}(SA)^{-6}}$, we have
\begin{align}
    KH&\geq \sqrt{K}H\cdot\widetilde{\cO}\paren{\sqrt{\frac{H^4SA\cn}{\epsilon^2}}}\notag\\
    &\geq \sqrt{K}H\cdot \widetilde{\cO}\paren{H^9S^6A^6\sqrt{{H^4SA}}}\notag\\
     &\geq \sqrt{K}H\cdot \widetilde{\cO}\paren{\sqrt{{H^9S^7A^7}}}\notag\\
     &\geq \widetilde{\cO}\paren{{\sqrt{H^9S^7A^7K}}}.\label{eq:refine_rate_2}
\end{align}
Combining \Eqref{eq:refine_rate_1} and \Eqref{eq:refine_rate_2}, we have
\begin{align}
    KH \geq N \geq \widetilde{\cO}\paren{{\sqrt{H^9S^7A^7K}}}.
\end{align}
Then, the total sample complexity is 
\begin{align}
    K+NH\ge 
    \widetilde{\cO}\paren{{\frac{H^4SA\cn}{\epsilon^2}+\frac{H^2SA\eta}{\epsilon}}+ \sqrt{\frac{H^{15}S^{8}A\cn}{\epsilon^2}}+\sqrt{\frac{H^{13}S^8A^8\eta}{\epsilon}}}.\label{eq:refine_rate_3}
\end{align}
When $\epsilon\leq {H^{-9}(SA)^{-6}}$, we have
\begin{align}
    \widetilde{\cO}\paren{\frac{H^4SA\cn}{\epsilon^2}}&\geq \widetilde{\cO}\paren{\frac{H^{13}S^7A^7\cn}{\epsilon}}\notag\\
    &=\widetilde{\cO}\paren{\sqrt{\frac{H^{26}S^{14}A^{14}\cn^2}{\epsilon^2}}}\tag{$\cn\geq 1$}\\
    &\geq \sqrt{\frac{H^{15}S^{8}A\cn}{\epsilon^2}}\label{eq:refine_rate_4}
\end{align}
and 
\begin{align}
   \widetilde{\cO}\paren{ \frac{H^2SA\eta}{\epsilon}}&\geq \widetilde{\cO}\paren{ \frac{H^2SA\eta}{\epsilon}}\notag\\
   &=\widetilde{\cO}\paren{ \sqrt{\frac{H^4S^2A^2\eta^2}{\epsilon^2}}}\notag\\
   &\geq \widetilde{\cO}\paren{ \sqrt{\frac{H^{13}S^8A^8\eta^2}{\epsilon}}}\notag\\
   &\ge 
   \widetilde{\cO}\paren{ \sqrt{\frac{H^{13}S^8A^8\eta}{\epsilon}}},\label{eq:refine_rate_5}
\end{align}
where the last line is due to $\eta\in \set{0}\cup[1,\infty)$.
Combining \Eqref{eq:refine_rate_3}, \Eqref{eq:refine_rate_4}, and \Eqref{eq:refine_rate_5}, we obtain that
\begin{align}
     K+NH\ge 
    \widetilde{\cO}\paren{{\frac{H^4SA\cn}{\epsilon^2}+\frac{H^2SA\eta}{\epsilon}}}
\end{align}
holds when $\epsilon\leq {H^{-9}(SA)^{-6}}$

\def \wav{\mathbf{w}\overset{a}{\leftarrow} v}
\def \waz{\mathbf{w}\overset{a}{\leftarrow} 0}
\def \waw{\mathbf{w}\overset{a}{\leftarrow} w}
\def \wavv{\mathbf{w}\overset{a}{\leftarrow} v^\star}
\def \gap{{\sf gap}}
\def \sst{s_{\textrm{start}}}
\def \srt{s_{\textrm{root}}}
\def \barw{\lcW}
\section{{Lower bound in the online setting}}\label{appendix:proof for lower bound online learning}
\def \stt{\mrm{start}}
\def \tep{\widetilde{\epsilon}}

\subsection{Lower bound of online IRL problems}
We focus on the case where $\para=\pV\times\pA$. In this case $\cn=\widetilde{\cO}(S)$, the upper bound of the sample complexity of Algorithm~\ref{alg:RLEF} becomes $\widetilde{\cO}\paren{{H^4S^2A}/{\epsilon^2}}$ (we hide the burn-in term).

Similar to the offline setting, we define \emph{$(\epsilon,
\delta)$-PAC algorithm for online IRL problems} for all $\epsilon,\delta\in (0,1)$ as follows.

\begin{definition}
Fix a parameter set $\para$, we say an online IRL algorithm $\mathfrak{A}$ is a $(\epsilon,
\delta)$-PAC algorithm for online IRL problems, if for any IRL problem $(\m,\piE)$, with probability $1-\delta$, 
$\mathfrak{A}$ outputs a reward mapping $\hRR$
such that
\[
D^{\sf all}_\para(\hRR,\sRR)\leq\epsilon.
\]
\end{definition}
\cblue
\begin{theorem}[Lower bound for online IRL problems]\label{thm:lower_bound_online}
Fix parameter set $\para=\pV\times\pA$ and let $\mathfrak{A}$ be an $(\epsilon,\delta)$-PAC algorithm for online IRL problems, where $\delta\leq 1/3$.
Then, there exists  an IRL problem $(\m,\piE)$ such that, if $H\geq 4,S\ge 130, A\ge 2$, there exists an absolute constant $c_0$ such that the expected sample complexity $N$ is lower bounded by

\[
{N}\geq \frac{c_0 H^3SA\min\set{S,A}}{\epsilon^2},
\]
where $0<\epsilon\leq \paren{H-2}/{1024}$;
\end{theorem}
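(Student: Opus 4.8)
The plan is to prove this by a reduction from reward-mapping recovery to a multiple-hypothesis-testing problem over a carefully constructed family of \MDPR{}s, following the blueprint of the offline lower bound (Theorem~\ref{thm:lower for offline}) but replacing its fixed-concentrability budget by an adaptive-visitation argument. First I would fix a \emph{single} expert policy $\piE$ shared by every instance in the family, so that the expert-feedback channel $e_h$ carries no information distinguishing the instances; all distinguishing information then lives in the transition kernels $\pp_h$. The family is indexed by a hidden parameter $\theta\in\set{\pm1}^m$, where each coordinate perturbs one ``direction'' of some transition distribution $\pp_h(\cdot\mid s,a)$ by a small bias $\Delta$. Concretely, at each hard pair $(h,s,a)$ the kernel spreads its mass over $\approx 2\min\set{S,A}$ successor states grouped into $\min\set{S,A}$ antipodal pairs, and the $j$-th coordinate flips the bias within the $j$-th pair; this packs $k\defeq\min\set{S,A}$ independent bits per state-action pair and is the source of the $\min\set{S,A}$ factor. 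Distributing hard pairs across all layers makes the total number of bits $m\asymp HSAk$.

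Second, I would show that any output $\hRR$ with $\DallTheta(\sRR,\hRR)\le\epsilon$ forces the algorithm to recover a constant fraction of the coordinates of $\theta$. Here I exploit that $\para=\pV\times\pA$ lets the adversary choose the value parameter freely: taking $A\equiv 0$, the reward mapping reduces to $V_h(s)-[\pp_hV_{h+1}](s,a)$, so the per-pair reward gap between two instances equals $\tfrac{\Delta}{k}\bigl(V_{h+1}(g^+_j)-V_{h+1}(g^-_j)\bigr)$, which the adversary aligns with the sign of every wrong bit and with the maximal admissible range $|V_{h+1}|\le H-h$. Choosing a policy $\pi$ that threads through one hard pair per layer makes these $\dpi$-value gaps \emph{accumulate} across all $\Theta(H)$ layers; a bias of order $\Delta\asymp\epsilon/H^{3/2}$ then calibrates things so that any constant fraction of uncorrected bits produces some policy witnessing an accumulated gap exceeding $\epsilon$ in $\DallTheta$. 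This accumulation over the effective horizon is exactly what upgrades the offline $H^2$ to $H^3$.

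Third comes the information-theoretic core. Because only transitions differ and the action choices and feedback law are shared, the chain rule yields the clean decomposition $D_{\mathrm{KL}}\bigl(\pp^{\mathfrak A}_\theta\,\|\,\pp^{\mathfrak A}_{\theta'}\bigr)=\sum_{h,s,a}\mathbb{E}_\theta[N_h(s,a)]\,D_{\mathrm{KL}}\bigl(\pp^{\theta}_h(\cdot\mid s,a)\,\|\,\pp^{\theta'}_h(\cdot\mid s,a)\bigr)$, where $N_h(s,a)$ is the random, adaptively determined visit count over the whole run and $\sum_{h,s,a}\mathbb{E}_\theta[N_h(s,a)]=NH$. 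A single-coordinate flip contributes per-visit divergence $\asymp\Delta^2/k$, the $1/k$ arising because each antipodal pair carries only a $1/k$ fraction of the mass. Running Assouad's lemma coordinate-by-coordinate and bounding each two-point sub-problem with Bretagnolle--Huber (Theorem~\ref{thm:BHIneq}), or alternatively a global Fano argument via Theorem~\ref{thm:Fano}, shows that recovering a constant fraction of the $m$ bits with probability $\ge 2/3$ requires total divergence $\gtrsim m$, i.e.\ $NH\cdot\Delta^2\gtrsim m\asymp HSAk$. Substituting $\Delta\asymp\epsilon/H^{3/2}$ and $k=\min\set{S,A}$ gives $N\gtrsim H^3SA\min\set{S,A}/\epsilon^2$, as claimed; the hypotheses $H\ge4$, $S\ge130$, $A\ge2$ and $0<\epsilon\le(H-2)/1024$ are precisely what is needed to embed the construction (enough successor states for the antipodal packing plus an absorbing ``garbage'' state) and to keep every reward and value parameter within its admissible range (recall Lemma~\ref{lemma:metric_1}).

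The main obstacle I anticipate is the joint calibration in the second and third steps. The online adaptivity means $N_h(s,a)$ is controlled by the learner, so I cannot assume uniform coverage and must instead argue that \emph{no} exploration strategy can beat the $NH$ divergence budget, while the separation analysis simultaneously demands accumulation over $\Theta(H)$ layers. Pinning the bias exponent ($\Delta\asymp\epsilon/H^{3/2}$) and the successor-state packing so that the final exponent on $H$ is exactly $3$ (rather than $2$ or $4$), and verifying that a constant fraction of uncorrected bits really forces a single witnessing policy past $\epsilon$ in $\DallTheta$ rather than merely in an averaged sense, is the delicate combinatorial heart of the argument; the $\min\set{S,A}$ artifact is the signature that the packing cannot be pushed past $A$ successor-directions per pair.
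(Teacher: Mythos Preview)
Your high-level framework (shared $\piE$, perturbations only in transitions, chain-rule KL decomposition, Fano/Assouad) is sound and matches the paper's spirit, but the specific mechanism you propose for the $H^3$ scaling is not the one the paper uses, and your calibration $\Delta\asymp\epsilon/H^{3/2}$ together with the ``accumulation across $\Theta(H)$ layers'' does not appear to be internally consistent. If a test policy can thread through one hard pair per layer and the per-layer reward gap is $\asymp\Delta\cdot(H-h)$ (from the value range $\|V_{h+1}\|_\infty\le H-h$), linear accumulation gives separation $\asymp\Delta H^2$, forcing $\Delta\asymp\epsilon/H^2$; if you drop the value scaling, accumulation gives $\Delta H$ and $\Delta\asymp\epsilon/H$. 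Neither yields $H^{3/2}$, and your exponent looks reverse-engineered to make the final arithmetic land on $H^3$. Moreover, a bit-by-bit Assouad needs a per-bit two-point separation (no $\hRR$ can be $\epsilon$-close to both single-bit neighbors), but a single antipodal-pair flip changes the reward by only $\asymp\Delta H/k$, far below $\epsilon$ for your parameter choices; so the reduction from ``constant fraction of bits wrong'' to ``$\DallTheta>\epsilon$'' does not go through in the direction you need.

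The paper's route is structurally different. The hard \MDPR{} is designed so that \emph{any} trajectory visits at most one ``hard'' triple $(h,s_i,a_k)$ before being absorbed, hence $\sum_{a\in\mathcal I}N_{h_a}(s_{i_a},a_{k_a})\le N$ (budget $N$, not $NH$). At each such location the perturbation is a full vector $w\in\overline{\mathcal W}\subset\{\pm1\}^S$ from a Gilbert--Varshamov packing with $\log|\overline{\mathcal W}|\gtrsim S$; a dedicated separation lemma (Lemma~\ref{lem_low_1}) shows that being $\epsilon$-close in $\DallTheta$ forces correct identification of the vector at every location, with per-location separation $\asymp H\epsilon'$ coming solely from the value magnitude at a \emph{single} step (no threading). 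Setting $\epsilon'\asymp\epsilon/H$ and applying Fano per location gives $\mathbb E[N_{h_a}(s_{i_a},a_{k_a})]\gtrsim S/(\epsilon')^2\asymp SH^2/\epsilon^2$; summing over the $|\mathcal I|=HKA$ locations (with $K=\min\{S,A\}$ counting the intermediate states reachable from $s_{\rm start}$, not successor directions) yields $N\gtrsim H^3 SA\min\{S,A\}/\epsilon^2$. So the extra $H$ relative to the offline bound comes from enlarging the index set $\mathcal I$ over layers, not from a test policy accumulating gaps along a trajectory. The delicate part you should not skip is the analogue of Lemma~\ref{lem_low_1}: since $\hRR$ is an \emph{arbitrary} reward mapping (not of the form $V_h-[\hat\pp_hV_{h+1}]$), the separation argument must represent $\hat r=\hRR(V^{\sf bad},A^{\sf bad})$ via Lemma~\ref{lemma:metric_1} with respect to the true kernel and construct two test policies to extract both the value term and the advantage ``gap''; your sketch does not contain this step.
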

\cblack
Note that when $S\leq A$, the lower bound scales with ${\Omega}\paren{S^2A}$, matching the $S^2A$ factor dependence observed in the upper bound (Theorem \ref{thm:online_main}).

\subsection{Hard instance construction}

\paragraph{Hard Instance Construction}
Our construction is a modification of the hard instance constructed in the proof of \citet[Theorem B.3]{metelli2023theoretical}.
We construct the hard instance with $2S+1$ states, $A+1$ actions, and $2H+2$ stages for any $H,~S,~A>0$. (This rescaling only affects $S,~H$ by at most a multiplicative constant and thus does not affect our result).
We then define an integer $K$ by
\$
K\defeq \min\set{S,A}.
\$
Each MDP $\m_{\mathbf{v}}$ is indexed by a vector $\mathbf{w}=\paren{w_h^{(i,j,k)}}_{h\in [H],i\in [K],j\in [S],k\in [K]}\in \real^{HSKA}$ and is specified as follows:
\begin{itemize}
    \item State space: $\s=\{\sst,\srt,s_{1},\ldots,s_S,\bar{s}_{1},\ldots,\bar{s}_S\}$.
    \item Action space: $\a=\set{a_0,a_1,...,a_A}$.
    \item Initial state: $s_{\mrm{start}}$, that is 
    \[
    \pp\paren{s_1=s_{\stt}}=1.
    \]
    \item Transitions: 
    \begin{itemize}
        \item At stage 1, $\sst$ can only transition to itself or $s_i$.
          The transition probabilities are given by 
          \begin{align*}
        \begin{cases}
         &\p_1(\sst\given s_{\stt},a_0)=1\\
            &\p_1(s_i\given s_{\stt},a_i)=1\qquad \text{for all}~i\in [K],\\
            &\p_1(s_j\given s_{\stt}, a_k)=\frac{1}{S} \qquad\text{for all}~j\in[S],~k\geq K+1,
         \end{cases}
     \end{align*}
    \item At each stage $h\in \set{2,\ldots,H+1}$, $\sst$ can only transition to itself or $s_i$, $s_i$ can only transition to absorbing state $\bar{s}_j$. The transition probabilities are given by
    \begin{align}\label{eq:def_hard_mdp}
        \begin{cases}
            &\p_h(\sst\given \sst,a_0)=1,\\
            &\p_h(s_i\given \sst,a_i)=1\qquad \text{for all}~i\in [K],\\
            &\p_h(s_j\given \sst, a_k)=\frac{1}{S} \qquad\text{for all}~j\in[S],~k\geq K+1,\\
            &\p_h(\bar{s}_j\given s_i,a_0)=\frac{1}{S}\qquad\text{for all}~i\ge K+1,~j\in[S],\\
            &\p_h(\bar{s}_j\given s_i,a_k)=\frac{1+\epsilon'\cdot w^{\paren{i,j,k}}_{h-1}}{S}\qquad\text{for all}~i\in [K],~j\in[S],~k\in[A],\\
            &\p_h(\bar{s}_j\given \bar{s}_j,a_k)=1\qquad \text{for all}~j\in [S],~k\ge 0.
        \end{cases}
    \end{align}
    \item At each stage $h\in \set{H+1,\ldots,2H+2}$ and  $\sst$ can only transition to $s_i$ and $s_i$ can only transition to absorbing state $\bar{s}_j$. The transition probabilities are given by
    \begin{align*}
        \begin{cases}
            &\p_h(s_i\given \sst,a_0)=\frac{1}{S}\qquad\text{for all}~i\in [S],\\
            &\p_h(s_i\given \sst,a_i)=1\qquad \text{for}~i\in [K],\\
            &\p_h(s_j\given \sst, a_k)=\frac{1}{S} \qquad\text{for all}~j\in[S],~k\geq K+1,~\\
            &\p_h(\bar{s}_j\given s_i,a_k)=\frac{1}{S}\qquad\text{for all}~i\in [K],~j\in[S],~k\geq 0,\\
            &\p_h(\bar{s}_j\given \bar{s}_j,a_k)=1\qquad \text{for all}~i\in [S],~k\geq 0.
        \end{cases}
    \end{align*}
    
    \end{itemize}
  \item Expert policy: expert policy $\piE$ plays action $a_0$ at every stage $h\in[H]$ and state $s\in \s$. That is
  \begin{align}
  \piE_h(a_0|s)=1,\qquad \text{for all}~h\in [2H+2],~s\in \s. \label{eq:low_b_exp_1}
 \end{align}
\end{itemize}
\def \bfw{\mathbf{w}}
In this case, $\Delta$ can be $1$, which means our lower bound is not derived from a large $\Omega\paren{1/\Delta}$ in our proof.
To ensure the definition of $\m_{\mathbf{w}}$ is valid, we enforce the following condition:
\[
\sum_{j\in [S]} w_{h}^{(i,j,k)}=0,
\]
for any $h\in [H]$, $i\in [K]$, $k\in [A]$. We define a vector space $\mc{W}$ by
\[
\mc{W}\defeq \set{{w}=(w_j)_{j\in [S]}\in \set{1,-1}^{S}:\sum_{j\in[S]}w_j=0}.
\]
Let $\mathcal{I}$ denote $[H]\times [K]\times [A]$, the \Eqref{eq:low_b_exp_1} is equivalent to 
\[
\bfw\in \cW^{\cI}.
\]
\def \lcW{{\overline{\cW}}}
\def \pbfW{{\paren{\bfw}}}
Further, we let $\p^{\pbfW}=\set{\p^{\pbfW}_h}_{h\in [H]}$ to be the transition kernel of \MDPR{} $\m_{\mathbf{w}}$. In addition, Given $\bfw\in \mathcal{W}^{\cI}, w\in \cW$ and index $a\in \mathcal{I}$, we use the notation $\mathbf{w}\overset{a}{\leftarrow} w$ to represent vector obtained by replacing $a$ component of $\mathbf{w}$ with $w$. For example, 
let $\mathbf{w}=(w^{(i,j,k)}_h)_{h\in [H],i\in[K],j\in [S],k\in [K]}$, $w=(w_j)_{j\in [S]}$, $a=(h_a,i_a,j_a)$ and $\overline{\mathbf{w}}=\mathbf{w}\overset{a}{\leftarrow} w$ and then $\overline{\mathbf{w}}$ can be expressed as follows:
\begin{align}
    \overline{w}^{(i,j,k)}_h=\begin{cases}
        w_j & (h,i,k)=(h_a,i_a,k_a),\\
{w}^{(i,j,k)}_h& \textrm{otherwise}.
    \end{cases}
\end{align}

By \citet[Lemma E.6]{metelli2023theoretical}, there exists a $\lcW\subseteq\mathcal{W}$ such that
   \#\label{eq:online_lb_corr}
   \sum_{i\in[n]}(w_i-v_i)^2\geq \frac{S}{8},\qquad \forall v,w\in \bar{\mc{W}},\qquad \log \abs{\lcW}\geq \frac{S}{10}.
   \#
   \paragraph{Notations.}
   To distinguish with different \MDPR{}s, we denote $V^{\pi}_h\paren{\cdot; r, \pp^{\pbfW}}$ be the value function of $\pi$ in MDP $\m_{\bfw}\cup r$. Given two rewards $r$ $r'$, we define $\dall(r,r';\pp^{\pbfW})$ to be the $\dall$ metric evaluated in $\m_{\bfw}$:
   \[
   \dall(r,r';\pp^{\pbfW})\defeq \sup_{\pi,h\in[H]}\EE_{\pp^{\pbfW},\pi} \abs{V^\pi_h(s_h;r,\pp^{\pbfW})-V^\pi_h(s_h;{r'},\pp^{\pbfW})}.
   \]
   Correspondingly, given a parameter set $\para$, two reward mappings $\RR$, $\RR'$, we can define $\DallTheta(\RR,\RR';\pp^{\pbfW})$ by
   \[
   \DallTheta(\RR,\RR';\pp^{\pbfW})\defeq
   \sup_{(V,A)\in \para} \dall\paren{\RR\paren{V,A},\RR'\paren{V,A};\pp^{\pbfW}}.
   \]
   In the following, we always assume that $\mathbf{w}\in \bar{\mathcal{W}}$. We then present the following lemma which shows the difference between two \MDPR{}s~ $\m_{\mathbf{w}\overset{a}{\leftarrow} v}$ and $\m_{\mathbf{w}\overset{a}{\leftarrow} w}$ for any ${\mathbf{w}}\in \lcW^{\cI}$ and $v\neq w\in \lcW$.
   \def \cWI{\lcW^{\cI}}
   \def \lep{\DallTheta\paren{\RR^{\paren{\waw}},\RR;\p^{\paren{\waw}}}}
   \begin{lemma}\label{lem_low_1}
   
   Given any $\mathbf{w}\in \cWI$, $w\neq v\in \lcW$, and index $a=(h_a, i_a, k_a)\in \mc{I}$, let 
   $\RR^{\paren{\waw}}$, $\RR^{\paren{\wav}}$ be the ground truth reward mapping induced by ${\m_{\waw}}$ , ${\m_{\wav}}$, respectively. Set $\para=\pV\times \pA$. For any $\epsilon'\in (0,1/2]$ and any reward mapping $\RR:\pV\times \pA\to \rew$, we have
   \$
   7\DallTheta\paren{\RR^{\paren{\waw}},\RR;\p^{\paren{\waw}}}+\DallTheta\paren{\RR^{\paren{\wav}},\RR;\pp^{\paren{\wav}}}\geq \frac{H\epsilon' }{16},
   \$
   where $\epsilon'$ is specified in \Eqref{eq:def_hard_mdp}.
   \end{lemma}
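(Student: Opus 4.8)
The plan is to reduce the stated inequality to a single-coordinate separation between $\RR^{\paren{\waw}}$ and $\RR^{\paren{\wav}}$, combined with a triangle inequality and a change-of-dynamics estimate; the asymmetric constant $7$ will come entirely from the last ingredient. First I would locate where the two mappings differ. The expert plays $a_0$ in both instances, so the support indicator in \eqref{def:rew-map} is identical and $\RR^{\paren{\waw}}(V,A)$ and $\RR^{\paren{\wav}}(V,A)$ differ only through $-[\pp_h V_{h+1}]$. Since $\pp^{\paren{\waw}}$ and $\pp^{\paren{\wav}}$ agree except in the row $\pp_{h^\star}(\cdot\mid s_{i_a},a_{k_a})$ with $h^\star := h_a+1$ (the transition at stage $h$ reads $w_{h-1}$), the two rewards agree off the single triple $(h^\star,s_{i_a},a_{k_a})$, where their gap is $-\tfrac{\epsilon'}{S}\sum_j (w_j-v_j) V_{h^\star+1}(\bar{s}_j)$.

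For the separation I would take $A\equiv 0$ and $V$ supported on $\{\bar{s}_j\}_j$ at stage $h^\star+1$ with $V_{h^\star+1}(\bar{s}_j)=H\,\mathrm{sign}(w_j-v_j)$; this is admissible because $h^\star+1\le H+2$ leaves room $2H-h_a\ge H$ in the $\pV$ constraint. Let $\pi$ be the deterministic policy that stays at $\sst$ through stage $h_a$ (playing $a_0$), moves to $s_{i_a}$ (playing $a_{i_a}$, valid as $i_a\in[K]$), and plays $a_{k_a}$ at $s_{i_a}$ at stage $h^\star$; then $(s_{i_a},a_{k_a})$ is reached at stage $h^\star$ with probability one under either kernel, as the prefix never uses the perturbed transition. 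Because the rewards agree off that triple, $V_1^\pi(\sst;\RR^{\paren{\waw}}(V,A),\pp^{\paren{\wav}})-V_1^\pi(\sst;\RR^{\paren{\wav}}(V,A),\pp^{\paren{\wav}})$ equals the reward gap, whose magnitude, using $\sum_j(w_j-v_j)^2\ge S/8$ from \eqref{eq:online_lb_corr} (hence $\sum_j|w_j-v_j|\ge S/16$), is at least $\tfrac{H\epsilon'}{16}$.

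Writing $\hat r:=\RR(V,A)$, $M_w:=\DallTheta\paren{\RR^{\paren{\waw}},\RR;\pp^{\paren{\waw}}}$, and $M_v:=\DallTheta\paren{\RR^{\paren{\wav}},\RR;\pp^{\paren{\wav}}}$, the triangle inequality applied to $V_1^\pi(\sst;\cdot,\pp^{\paren{\wav}})$ gives $\tfrac{H\epsilon'}{16}\le |V_1^\pi(\sst;\RR^{\paren{\waw}}(V,A)-\hat r,\pp^{\paren{\wav}})| + |V_1^\pi(\sst;\hat r-\RR^{\paren{\wav}}(V,A),\pp^{\paren{\wav}})|$, and the second term is at most $M_v$ directly (take $h=1$ in the definition of $\dall$). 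The crux is the first term, call it $T_1$: it pairs the fixed reward $g:=\RR^{\paren{\waw}}(V,A)-\hat r$ with the ``wrong'' kernel $\pp^{\paren{\wav}}$, whereas I may only spend $M_w$, which is measured under $\pp^{\paren{\waw}}$. Since the kernels differ only at $(h^\star,s_{i_a},a_{k_a})$ and $\bar{s}_j$ is absorbing (so its continuation value is kernel-independent), a one-step simulation-lemma expansion yields
\[
V_1^\pi(\sst;g,\pp^{\paren{\wav}})=V_1^\pi(\sst;g,\pp^{\paren{\waw}})+\tfrac{\epsilon'}{S}\sum_j (v_j-w_j)\,u_j,\qquad u_j:=V_{h^\star+1}^\pi(\bar{s}_j;g,\pp^{\paren{\waw}}).
\]
The first summand is $\le M_w$. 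To bound $\sum_j|u_j|$ I would use a spreading policy $\pi'$ that agrees with $\pi$ on every $\bar{s}_j$ but, before stage $h^\star$, drives the chain to a fixed $s_1$ and plays $a_1$ there; since $\pp_{h^\star}(\bar{s}_j\mid s_1,a_1)=\tfrac{1+\epsilon' w}{S}\ge\tfrac{1}{2S}$ for every $j$, we get $d^{\pi'}_{h^\star+1}(\bar{s}_j)\ge\tfrac1{2S}$, hence $\tfrac1{2S}\sum_j|u_j|\le \E_{\pi',\pp^{\paren{\waw}}}|V^{\pi'}_{h^\star+1}(\cdot;g,\pp^{\paren{\waw}})|\le M_w$, i.e. $\sum_j|u_j|\le 2SM_w$. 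With $\epsilon'\le 1/2$ the correction is at most $4\epsilon' M_w\le 2M_w$, so $T_1\le 3M_w\le 7M_w$.

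Combining the three steps gives $7M_w+M_v\ge 3M_w+M_v\ge \tfrac{H\epsilon'}{16}$, which is the claim. The main obstacle is the change-of-dynamics step: relating value functions under two distinct kernels without losing more than a constant factor. It is exactly the layered, absorbing geometry of the hard instance that makes this possible — only one transition row differs, all downstream states are absorbing so their continuation values transfer verbatim, and a single policy can reach every $\bar{s}_j$ with probability $\gtrsim 1/S$, which is what trades the $\ell_1$-mass $\sum_j|u_j|$ for one $M_w$ at the cost of an $S$ factor that the $\epsilon'/S$ prefactor cancels.
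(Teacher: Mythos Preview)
Your argument is correct and in fact yields the sharper inequality $3M_w + M_v \geq H\epsilon'/16$, which implies the claimed bound with constant $7$. The route, however, is genuinely different from the paper's. The paper does not work with the reward difference $g = \RR^{(\waw)}(V,A)-\hat r$ directly; instead it invokes the Ng--Russell representation to write the estimated reward $r = \RR(V^{\hard},A^{\hard})$ in the form $-A_h\,\indic{a\notin\supp(\pig_h(\cdot|s))} + V_h(s) - [\pp^{(\waw)}_h V_{h+1}](s,a)$ with respect to the greedy policy $\pig$ of $\m_{\waw}\cup r$, thereby introducing an unknown pair $(V,A)$ and an advantage-like $\gap$ term. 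A second test policy (which follows $\pig$ at stage $h_a+1$ rather than playing $a_{k_a}$) is then needed solely to show $\gap\le 2M_w$, after which a Bellman-equation manipulation plays the role of your simulation-lemma step and yields $5M_w+M_v\ge H\epsilon'/16$. Your approach sidesteps the representation entirely: linearity of $V_1^\pi$ in the reward, a one-step kernel-difference expansion, and the spreading policy $\pi'$ reduce everything to two uses of $M_w$ and one of $M_v$. The spreading-policy idea is essentially the same device the paper uses to bound $\sum_j|V^{\hard}_{h_a+2}(\bar s_j)-V_{h_a+2}(\bar s_j)|$ via the visitation lower bound $d^{\pitest}_{h_a+2}(\bar s_j)\ge 1/(2S)$, but you arrive there without the detour through $(V,A)$ and $\pig$. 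The net effect is a shorter argument with a slightly better constant; the paper's representation-based route is more tied to its reward-mapping framework but carries no additional payoff for this particular lemma.
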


   \def  \pitest{\pi^{\msf{test},\paren{1}}}
   \def  \pitestt{\pi^{\sf {test},\paren{2}}}
   \def \hard{\msf{bad}}
   \def \pig{\pi^\msf{g}}
   \begin{proof}
   {\bf Step 1: Construct the bad parameter $(V^{\hard},A^{\hard})$.} We construct the bad parameter $(V^{\hard},A^{\hard})\in \pV\times \pA $ as follows:
   \begin{itemize}
       \item We set $A^{\hard}_h(s,a)=0$ for all $(h,s,a)\in [2H+2]\times\cS\times \cA$.
       \item We set $V^{\hard}_h$ by
\begin{align}
V^{\hard}_h(s)\defeq
          \begin{cases}
          \frac{(2H+2-h)\cdot\paren{w_i-v_i}}{2}&\text{if}~s=\bar{s}_i, h=h_{a}+2,\\
           0&\text{other}.
       \end{cases}
\end{align}
   \end{itemize}

   Directly by the construction of $(V^{\hard}, A^{\hard})$, we obtain that
   \begin{align}
       \sum_{i\in [S]}{\paren{w_i-v_i}\cdot V^{\hard}_{h_a+2}(\bar{s}_i)}=\sum_{i\in [S]}\frac{\paren{2H-h_a}\paren{w_i-v_i}^2}{2}\geq \frac{H\ \paren{w_i-v_i}^2}{2} \geq \frac{H S}{16},\label{eq:online_lb_eq_1}
   \end{align}
   where the last inequality is due to \Eqref{eq:online_lb_corr}. We then denote $\RR^{\paren{\waw}}\paren{V^{\hard},A^{\hard}}$, $\RR^{\paren{\wav}}\paren{V^{\hard},A^{\hard}}$ as $r^{\hard}_w$, $r^{\hard}_v$, respectively. 
   
   Since $A^{\hard}\equiv \mathbf{0}$,  any policy $\pi\in \Pi^{\star}_{\m_{\waw}\cup r^{\hard}_w}, \Pi^{\star}_{\m_{\wav}\cup r^{\hard}_v}$. More explicitly, any policy is optimal in $\m_{\waw}\cup r^{\hard}_w$ and $\m_{\wav}\cup r^{\hard}_v$.

   {\bf Step 2: Construct test policies $\pitest$, $\pitestt$.} Let $r=\RR\paren{V^\textrm{bad},A^\textrm{bad}}$. Let $\pi^{\sf g}\in \Pi^{\det}$ be a optimal policy of $\m_{\waw}\cup r$. By Lemma~\ref{lemma:metric_1}, there exist a pair $\paren{V,A}\in \pV\times \pA$ such that
   \begin{align}
        r_h(s,a)= -A_h(s, a)\cdot\indic{a\notin \supp\paren{\pig_h(\cdot\given s)} } + V_h(s) - \brac{\p^{\paren{\waw}}_h V_{h+1}}(s,a),\label{eq:lower_offline_rep_r}
   \end{align}
   
   We then construct test policy $\pitest$ by
   \begin{align*}
   \begin{cases}
       \pitest_h(a_0\given s_{\textrm{start}})=1&h\leq h_{a}-1\\
        \pitest_h(a_{i_a}\given s_{\textrm{start}})=1&h=h_{a}\\
        \pitest_h(a_{k_a}\given s_{i_a})=1&h=h_{a}+1\\
        \pitest_h=\pig_h&h\ge h_a+2
    \end{cases}
   \end{align*}
   which implies that at stage $h\leq h_a-1$, $\pitest$ always plays $a_0$, at stage $h_a$, $\pitest$ plays~$a_{i_a}$, then transition to $s_{i_a}$, at stage $h_a+1$, $\pitest$ plays $a_{k_a}$, then at stage $h\geq h_a+2$, $\pitest$ is equal to the greedy policy $\pig$. By construction, we can conclude that
   \begin{align}
d^{\pitest}_{h_a+1}(s_{i_a};\pp^{(\waw)})=1,\qquad
V^{\pitest}_{h_a+2}(\cdot|r,\pp^{\paren{\waw}})=V_{h_a+2}(\cdot).\label{eq:pitest_1_occ},
   \end{align}
   the second equality is due to $\pitest_h=\pi^{\sf g}_h$ for any $h\geq h_a+2$.

   Further, we have
   \begin{align}
      &V_{h_a+1}^{\pitest}(s_{i_a};r,\pp^{\paren{\waw}})
      =r_{h_a+1}(s_{i_a}, a_{k_a})+\brac{\pp^{\paren{\waw}}_{h_a+1}V^{\pitest}_{h_a+2}(\cdot|r,\pp^{\paren{\waw}})}\paren{s_{i_a}, a_{k_a}}\notag\\
      &=-A_{h_a+1}(s_{i_a}, a_{k_a})\cdot\indic{ a_{k_a}\notin \supp\paren{\pig_{h_a+1}(\cdot\given s_{i_a})}}\notag\\
      &+ V_{h_a+1}(s) - \brac{\p^{\paren{\waw}}_{h_a+1} V_{h_a+2}}(s_{i_a},a_{k_a})+\brac{\pp^{\paren{\waw}}_{h_a+1}V_{h_a+2}}\notag\\
      &=V_{h_a+1}(s_{i_a})-\gap,\label{eq:lowbol_lem_1_1}
   \end{align}
   where the first line is by the Bellman equation, the second line is due to \Eqref{eq:lower_offline_rep_r} and \Eqref{eq:pitest_1_occ}.
   Here $\gap$ is the advantage function at $(h_{a}+1,s_{i_a},a_{k_a})$, i.e, $\gap\defeq A_{h_a+1}(s_{i_a}, a_{k_a})\cdot\indic{ a_{k_a}\in \supp\paren{\pig_{h_a+1}(\cdot\given s_{i_a})}}$.
   Then by definition of $\DallTheta(\RR^{\paren{\waw}},\RR;\pp^{(\waw)})$, we can obtain that
   \begin{align}\label{eq:online_lb_eq_3}
&\DallTheta\paren{\RR^{\paren{\waw}},\RR;\p^{\paren{\waw}}}\notag\\
&\geq
\dall\paren{\RR^{\paren{\waw}}\paren{V^\hard,A^\hard},\RR\paren{V^\hard,A^\hard};\p^{\paren{\waw}}}\notag\\
&=\dall\paren{r^\hard_w, r;\p^{\paren{\waw}}}\notag\\
&\geq\EE_{\pp^{\paren{\waw}},\pitest}\abs{V_{h_a+1}^{\pitest}(s;r^{\hard}_w,\pp^{\paren{\waw}})-V_{h_a+1}^{\pitest}(s;r,\pp^{\paren{\waw}})}\notag\\
       &=\abs{V_{h_a+1}^{\pitest}(s_{i_a};r_w^{\hard},\pp^{\paren{\waw}})-V_{h_a+1}^{\pitest}(s_{i_a};r,\pp^{\paren{\waw}})}\notag\\
       &=\abs{V_{h_a+1}^{\hard}(s_{i_a})-V_{h_a+1}(s_{i_a})+\gap},
   \end{align}
   where the second last line is due to \Eqref{eq:pitest_1_occ} and  the last line is by \Eqref{eq:lowbol_lem_1_1} and $\pitest\in \Pi^\star_{\m_{\waw}\cup r^{\hard}_w} \Rightarrow V_{h_a+1}^{\pitest}(s_{i_a};r_w^{\hard},\pp^{\paren{\waw}})=V_{h_a+1}^{\hard}(s_{i_a})$.
   
   Next, we construct another test policy $\pitestt$ as follows:
   \begin{align*}
   \begin{cases}
       \pitestt_h(a_0\given s_{\textrm{start}})=1&h\leq h_{a}-1\\
        \pitestt_h(a_{i_a}\given s_{\textrm{start}})=1&h=h_{a}\\
        \pitestt_h=\pig_h&h\ge h_a+1.
    \end{cases}
   \end{align*}
   The difference between $ \pitestt$ and $ \pitest$ is that at stage $h_a$ $ \pitestt$ play the $\pig_{h_a+1}(s_{i_a})$ instead of $a_{k_a}$. Similar to \Eqref{eq:pitest_1_occ}, we have
   \begin{align}\label{eq:online_lb_eq_4}
       d^{\pitestt}_{h_a+1}(s_{i_a};\pp^{(\waw)})=1,\qquad V_{h_a+1}^{\pitestt}(s_{i_a};r,\pp^{\paren{\waw}})=V_{h_a+1}(s_{i_a})
   \end{align}
   where the seconed equality is valid since $\pitestt_h=\pig_h$ for any $h\ge h_a+1$.

  Similar to \Eqref{eq:online_lb_eq_3}, we have
  \begin{align}\DallTheta\paren{\RR^{\paren{\waw}},\RR;\p^{\paren{\waw}}}&\geq\dall\paren{r^\hard_w, r;\p^{\paren{\waw}}}\notag\\
      &\geq \EE_{\pp^{\paren{\waw}},\pitestt}\abs{V_{h_a+1}^{\pitestt}(s;r^{\hard}_w,\pp^{\paren{\waw}})-V_{h_a+1}^{\pitestt}(s;r,\pp^{\paren{\waw}})}\notag\\
       &=\abs{V_{h_a+1}^{\pitestt}(s_{i_a};r^{\hard}_w,\pp^{\paren{\waw}})-V_{h_a+1}^{\pitestt}(s_{i_a};r,\pp^{\paren{\waw}})}\notag\\
       &=\abs{V_{h_a+1}^{\hard}(s_{i_a})-V_{h_a+1}(s_{i_a})},\label{eq:lower_bound_online_vv}
  \end{align}
  where the second last is due to \Eqref{eq:online_lb_eq_4}, the last line follows from $\pitestt\in \Pi^\star_{\m_{\waw}\cup r^{\hard}_w}$: $V_{h_a+1}^{\pitestt}(s_{i_a};r^{\hard}_w,\pp^{\paren{\waw}})=V_{h_a+1}^{\hard}(s_{i_a})$.
  Combing \Eqref{eq:online_lb_eq_3} and \Eqref{eq:lower_bound_online_vv}, we have 
  \begin{align}
      2\lep&\geq \abs{V_{h_a+1}^{\hard}(s_{i_a})-V_{h_a+1}(s_{i_a})}+\abs{V_{h_a}^{\hard}(s_{i_a})-V_{h_a+1}(s_{i_a})+\gap}\notag\\
      &\geq \gap,\label{eq:online_lb_eq_6}
  \end{align}
  where the second line comes from the triangle inequality.\\
   {{\bf Step 3: lower bound}
    $\DallTheta\paren{\RR^{\paren{\wav}},\RR; \p^{\paren{\wav}}}$.}
  We still use the test policy $\pitest$ in $\m_{(\wav)}$. Since $\p^{\paren{\wav}}_h=\p^{\paren{\waw}}_h$ for any $h\ge h_a+2$, we have
  \begin{align}
      V^{\pitest}_{h_a+2}(\bar{s}_i|r,\pp^{\paren{\wav}})=V^{\pitest}_{h_a+2}(\bar{s}_i|r,\pp^{\paren{\waw}})=V_{h_a+2}(\bar{s}_i),\qquad \text{for all}~i\in [S],
  \end{align}
  where the second equality comes from \Eqref{eq:pitest_1_occ}.

  By the definition of $\DallTheta\paren{\RR^{\paren{\wav}},\RR;\p^{\paren{\wav}}}$, we have
  \begin{align}\label{eq:online_lb_eq_11}
&\DallTheta\paren{\RR^{\paren{\wav}},\RR;\p^{\paren{\wav}}}\notag\\
&\ge \dall\paren{r^{\hard}_v,r;\pp^{\paren{\wav}}}\notag\\
&\geq \EE_{\pp^{\paren{\wav}},\pitest}\abs{V_{h_a+1}^{\pitest}(s;r^{\hard}_v,\pp^{\paren{\wav}})-V_{h_a+1}^{\pitest}(s;r,\pp^{\paren{\wav}})}\notag\\
      &=\abs{V_{h_a+1}^{\pitest}(s_{i_a};r^{\hard}_v,\pp^{\paren{\wav}})-V_{h_a+1}^{\pitest}(s_{i_a};r,\pp^{\paren{\wav}})}\tag{by construction of policy $\pitest$.}\\
      &=\abs{V_{h_a+1}^{\hard}(s_{i_a})-V_{h_a+1}^{\pitest}(s_{i_a};r,\pp^{\paren{\wav}})}\notag\\
      &\overset{\textrm{(i)}}{=}\abs{V_{h_a+1}^{\hard}\paren{s_{i_a}}-r_{h_a+1}\paren{s_{i_a},a_{k_a}}-\brac{\p^{{\paren{\wav}}}_{h_a+1}V_{h_a+2}}\paren{s_{i_a},a_{k_a}}}\notag\\
      &{=}\abs{V_{h_a+1}^{\hard}\paren{s_{i_a}}-r_{h_a+1}\paren{s_{i_a},a_{k_a}}-\brac{\p^{{\paren{\waw}}}_{h_a+1}V_{h_a+2}}\paren{s_{i_a},a_{k_a}}-\brac{\paren{\p_{h_a+1}^{{\paren{\wav}}}-\p^{{\paren{\waw}}}_{h_a+1}}V_{h_a+2}}\paren{s_{i_a},a_{k_a}}}\notag\\
      &\geq \abs{\brac{\paren{\p_{h_a+1}^{{\paren{\wav}}}-\p^{{\paren{\waw}}}_{h_a+1}}V_{h_a+2}}\paren{s_{i_a},a_{k_a}}}-
      \abs{V_{h_a+1}^{\hard}\paren{s_{i_a}}-r_{h_a+1}\paren{s_{i_a},a_{k_a}}-\brac{\p^{{\paren{\waw}}}_{h_a+1}V_{h_a+2}}\paren{s_{i_a},a_{k_a}}}\tag{by triangle inequality}\\
     &\overset{\textrm{(ii)}}{=}\abs{\brac{\paren{\p_{h_a+1}^{{\paren{\wav}}}-\p^{{\paren{\waw}}}_{h_a+1}}V_{h_a+2}}\paren{s_{i_a},a_{k_a}}}-\abs{V^{\hard}_{h_a+1}(s_{i_a})-V_{h_a+1}(s_{i_a})+\gap}\notag\\
     &\geq \abs{\brac{\paren{\p_{h_a+1}^{{\paren{\wav}}}-\p^{{\paren{\waw}}}_{h_a+1}}V_{h_a+2}}\paren{s_{i_a},a_{k_a}}}-\abs{V^{\hard}_{h_a+1}(s_{i_a})-V_{h_a+1}(s_{i_a})}-\gap\notag\\
     &\overset{\textrm{(iii)}}{\geq} \abs{\brac{\paren{\p_{s_a+1}^{{\paren{\wav}}}-\p^{{\paren{\waw}}}_{h_a+1}}V_{h_a+2}}\paren{s_{i_a},a_{k_a}}}-3\lep\label{eq:finalbound_lb_offline_1},
  \end{align}
  where (i) is by the Bellman equation, (ii) is valid since 
  \begin{align}
  &r_{h_a+1}(s_{i_a}, a_{k_a})+\brac{\p^{{\paren{\waw}}}_{h_a+1}V_{h_a+2}}\paren{s_{i_a},a_{k_a}}\notag\\
  &~=-A_{h_a+1}(s_{i_a}, a_{k_a})\cdot \indic{a_{k_a}\in \supp\paren{\pi^g_{h_a+1}\paren{\cdot|s_{i_a}}}}+V_{h_a+1}(s_{i_a})\notag\\
  &-\brac{\p^{{\paren{\waw}}}_{h_a+1}V_{h_a+2}}\paren{s_{i_a},a_{k_a}}+\brac{\p^{{\paren{\waw}}}_{h_a+1}V_{h_a+2}}\paren{s_{i_a},a_{k_a}}\tag{by \Eqref{eq:lower_offline_rep_r}}\\
  &~=-\gap+V_{h_a+1}(s_{i_a})\notag
   \end{align}
   and (iii) is due to \Eqref{eq:lower_bound_online_vv} and \Eqref{eq:online_lb_eq_6}.
  We next analyse $\abs{\brac{\paren{\p_{h_a+1}^{{\paren{\wav}}}-\p_{h_a+1}^{{\paren{\waw}}}}V_{h_a+2}}\paren{s_{i_a},a_{k_a}}}$.
   We move back to $\pitest$. By the construction of $\pitest$ and the transition probabilities of $\m_{\waw}$, we have
   \begin{align}\label{eq:online_lb_eq_7}
       d^{\pitest}_{h_a+2}(\bar{s}_i;\pp^{(\waw)})=\frac{1+\epsilon' w_{i}}{S},\qquad V_{h_a+2}^{\pitest}(\bar{s}_i;r,\pp^{(\waw)})= V_{h_a+2}(\bar{s}_i),\qquad \forall i\in[S].
   \end{align}
   By definition of $\DallTheta(\RR^{\paren{\waw}},\RR;\pp^{\paren{\waw}})$, we have
   \begin{align}\label{eq:online_lb_eq_8}
&\DallTheta\paren{\RR^{\paren{\waw}},\RR;\pp^{\paren{\waw}}}\notag\\
&\geq \EE_{\pp^{\paren{\waw}},\pitest}\abs{V_{h_a+2}^{\pitest}(s;r^{\hard}_w, \pp^{\paren{\waw}})-V_{h_a+2}^{\pitest}(s;r,\pp^{\paren{\waw}})}\notag\\
       &\geq\sum_{i\in[S]} d^{\pitest}_{h_a+2}(\bar{s}_i;\pp^{(\waw)})\cdot \abs{V_{h_a+2}^{\pitest}(\bar{s}_{i};r^{\hard}_w,\pp^{\paren{\waw}})-V_{h_a+2}^{\pitest}(\bar{s}_{i};r,\ \pp^{\paren{\waw}})}\notag\\  
      &=\sum_{i\in[S]} \frac{1+\epsilon' w_{i}}{S}\cdot \abs{V_{h_a+2}^{\hard}(\bar{s}_{i})-V_{h_a+2}(\bar{s}_{i})}\notag\\
       &\geq  \sum_{i\in[S]} \frac{1}{2S}\cdot \abs{V_{h_a+2}^{\hard}(\bar{s}_{i})-V_{h_a+2}(\bar{s}_{i})},
   \end{align}
   where the last second is by \Eqref{eq:online_lb_eq_7} and the last line comes from $\epsilon'\in (0, 1/2]$.
   Applying \Eqref{eq:online_lb_eq_8}, we obtain that
   \begin{align}\label{eq:online_lb_eq_9}
       &\abs{\brac{\paren{\p^{{\paren{\wav}}}-\p^{{\paren{\waw}}}}V_{h_a+2}}\paren{s_{i_a},a_{k_a}}}\notag\\
       &=\abs{\frac{\epsilon'}{S}\cdot\sum_{i\in [S]}V_{h_a+2}(\bar{s}_i)\cdot (w_i-v_i)}\notag\\
       &\geq \abs{\frac{\epsilon'}{S}\cdot\sum_{i\in [S]}V^{\hard}_{h_a+2}(\bar{s}_i)\cdot (w_i-v_i)}-\frac{\epsilon'}{S}\cdot\sum_{i\in [S]}\abs{V^{\hard}_{h_a+2}(\bar{s}_i)-V_{h_a+2}(\bar{s}_i)}\cdot \abs{(w_i-v_i)}\tag{by triangle inequality}\\
       &\geq \abs{\frac{\epsilon'}{S}\cdot\sum_{i\in [S]}V^{\hard}_{h_a+2}(\bar{s}_i)\cdot (w_i-v_i)}-\frac{2\epsilon'}{S}\cdot\sum_{i\in [S]}\abs{V^{\hard}_{h_a+2}(\bar{s}_i)-V_{h_a+2}(\bar{s}_i)}\notag\\
       &\geq \frac{H\epsilon'}{16}-2\lep,\label{eq:offlime_lb_final_2}
   \end{align}
   where the second line is by the triangle inequality and the last line comes from \Eqref{eq:online_lb_eq_1}and \Eqref{eq:online_lb_eq_8}. Combining \Eqref{eq:finalbound_lb_offline_1} and \Eqref{eq:offlime_lb_final_2}, we complete the proof.

   \end{proof}
\def \kA{\mathfrak{A}}
   
   \subsection{Proof of Theorem \ref{thm:lower_bound_online}}
   \begin{proof}[Proof of Theorem \ref{thm:lower_bound_online}]
   Our method is similar to the one used for the proof of \citet[Theorem~B.3]{metelli2023theoretical}.
   For any $\epsilon\in (0,1/2]$, $\delta\in (0,1)$,
       we consider an online algorithm $\mathfrak{A}$ such that for any IRL problem $(\m,\piE)$, we have
       \begin{align}
       \underset{(\m,\piE),\mathfrak{A}}{\pp}\paren{\DallTheta\paren{\sRR, \hRR}\leq \epsilon}\geq 1-\delta, \label{eq:ollowb_main_proof_1}
      \end{align}
       where $\underset{(\m,\piE),\mathfrak{A}}{\pp}$ denotes the probability measure induced  by executing the algorithm $\kA$ in the IRL problem $(\m ,\piE)$,
 $\sRR$ is the ground truth reward mapping and $\hRR$ is the estimated reward mapping outputted by executing $\kA$ in $(\m ,\piE)$.
        We define the the identification function for any $(a,\mathbf{w})\in \mc{I}\times \overline{\mc{W}}^{{\mc{I}}}$ by
       \$
       \mathbf{\Phi}_{a,\mathbf{w}}\defeq \argmin_{{v}\in \barw} \DallTheta\paren{\RR^{\paren{\wav}},\widehat{\RR};\pp^{\paren{\wav}}},
       \$
       where $\RR^{\paren{\bfw}}$ is the ground truth reward mapping induced by $(\m_{\bfw},\piE)$.
       Let $v^\star=\mathbf{\Phi}_{a,\mathbf{w}}$.
       For any $v\neq v^{\star}\in \lcW$, by definition of $v^{\star}$, we have
       \[
       \DallTheta\paren{\RR^{\paren{\wavv}},\widehat{\RR};\pp^{\paren{\wavv}}}
       \leq \DallTheta\paren{\RR^{\paren{\wav}},\widehat{\RR};\pp^{\paren{\wav}}}.
       \]
      By applying Lemma \ref{lem_low_1}, we obtain that  
      \$
      \frac{H\epsilon'}{16}\leq \DallTheta\paren{\RR^{\paren{\wav}},\widehat{\RR};\pp^{\paren{\wav}}}+7\DallTheta\paren{\RR^{\paren{\wavv}},\widehat{\RR};\pp^{\paren{\wavv}}}
      \leq 8\DallTheta\paren{\RR^{\paren{\wav}},\widehat{\RR};\pp^{\paren{\wav}}}.
      \$
      \cblue
      Next, we set $\epsilon'=\frac{256\epsilon}{H}$ which implies that
      \#\label{eq:online_lb_main_1}
   \frac{H\epsilon' }{16}\geq 16\epsilon .
     \#
     Here, to employ Lemma~\ref{lem_low_1},  we need $\epsilon'\in (0,1/2]$ which is equivalent to $0<\epsilon\leq H/512$.
     \cblack
     Then, it holds that
     \[
     \DallTheta\paren{\RR^{\paren{\wav}},\widehat{\RR};\pp^{\paren{\wav}}}\geq 2\epsilon>\epsilon,
     \]
     which implies that
     \begin{align}
\set{v\neq\mathbf{\Phi}_{a,\mathbf{w}}}\subseteq \set{\DallTheta\paren{\RR^{\paren{\wav}},\widehat{\RR};\pp^{\paren{\wav}}}> \epsilon}.\label{low_main_online_1}
 \end{align}
    By \Eqref{low_main_online_1}, we have the following lower bound for the probability
    \#\label{eq:online_lb_main_2}
    \delta&\geq \sup_{v\in \lcW}\underset{(\m_{\wav},\piE),\mathfrak{A}}{\pp}\paren{\DallTheta\paren{\RR^{\paren{\wav}},\widehat{\RR};\pp^{\paren{\wav}}}> \epsilon}\notag\\
    &\geq\sup_{v\in \lcW}\underset{(\m_{\wav},\piE),\mathfrak{A}}{\pp}\paren{v\neq\mathbf{\Phi}_{a,\mathbf{w}}}\notag\\
    &\geq \frac{1}{|\lcW|}\sum_{v\in \lcW}\underset{(\m_{\wav},\piE),\mathfrak{A}}{\pp}\paren{v\neq\mathbf{\Phi}_{a,\mathbf{w}}},
    \#
   By applying Theorem~\ref{thm:Fano} with $\pp_0=\underset{(\m_{\waz},\piE),\mathfrak{A}}{\pp}$, $\pp_w=\underset{(\m_{\waw},\piE),\mathfrak{A}}{\pp}$, we have
     \#\label{eq:online_lb_main_100}
     \frac{1}{|\lcW|}\sum_{(\m_\wav,\piE),\mathfrak{A}}\paren{v\neq\mathbf{\Phi}_{a,\mathbf{w}}}
     \geq 1-\frac{1}{\log|\barw|}\paren{\frac{1}{|\barw|}\sum_{v\in \barw}D_{\textrm{KL}}(\underset{(\m_{\wav},\piE),\mathfrak{A}}{\pp},\underset{(\m_{\waz},\piE),\mathfrak{A}}{\pp})-\log 2}.
     \#
    Our next step is to bound the KL divergence. Using the same scheme in the proof \citet[Theorem B.3]{metelli2021provably}, we can compute the KL-divergence as follows:
    \#
   &D_{\textrm{KL}}(\underset{(\m_{\wav},\piE),\mathfrak{A}}{\pp},\underset{(\m_{\waz},\piE),\mathfrak{A}}{\pp})\notag\\
    &~=\EE_{(\m_\wav,\piE),\mathfrak{A}}\brac{ \sum_{t=1}^N D_{\textrm{KL}}\paren{\pp^{\paren{\waw}}_{h_t}(\cdot\given s_t, a_t),\pp^{\paren{\waz}}_{h_t}(\cdot\given s_t, a_t)}}\notag\\
    &~\leq \EE_{(\m_\wav,\piE),\mathfrak{A}}\brac{N_{h_a}(s_{i_a},a_{k_a}) } D_{\textrm{KL}}\paren{\pp^{\paren{\paren{\wav}}}_{h_a}(\cdot\given s_{i_a},a_{k_a}),\pp^{\paren{\waz}}_{h_a}(\cdot\given s_{i_a},a_{k_a})}\notag\\
    &~\leq 2(\epsilon')^2\EE_{(\m_\wav,\piE),\mathfrak{A}}\brac{N_{h_a}(s_{i_a},a_{k_a}) },
    \#
    where $N_h(s,a)\defeq\sum_{t=1}^N\indic{(h_t,s_t,a_t)=(h,s,a)}$ for any given $(h,s,a)\in [H]\times\cS\times \cA$ and the last inequality comes from \citet[Lemma E.4]{metelli2021provably}. Combining \Eqref{eq:online_lb_main_2} and \Eqref{eq:online_lb_main_100}, we have 
    \$
    \delta\geq 1-\frac{1}{\log(|\barw|)}\paren{\frac{1}{|\barw|}\sum_{v\in \barw}2(\epsilon')^2\EE_{(\m_\wav,\piE),\mathfrak{A}}\brac{N_{h_a}(s_{i_a},a_{k_a}) }-\log 2}
    \$
    for any $\bfw$.
     It also holds for any $a\in \cI$ that
    \#\label{eq:online_lb_main_99}
    \frac{1}{|\barw|}\sum_{v\in \barw}\EE_{(\m_\wav,\piE),\mathfrak{A}}\brac{N_{h_a}(s_{i_a},a_{k_a}) }
    \geq
    \frac{(1-\delta)\log|\barw|-\log 2}{2(\epsilon')^2}.
    \#
    By summing \Eqref{eq:online_lb_main_99} over all $\mathbf{w}$, we obtain that
    \#\label{eq:online_lb_main_98}
    &\sum_{a\in \mathcal{I}}\frac{1}{|\barw^{\mathcal{I}}|}\sum_{\mathbf{w}\in \barw^{{\mc{I}}}} \frac{1}{|\barw|}\sum_{v\in \barw}\EE_{(\m_\wav,\piE),\mathfrak{A}}\brac{N_{h_a}(s_{i_a},a_{k_a})}\notag\\
    &~=\frac{1}{|\barw^{\mathcal{I}}|}\sum_{\mathbf{w}\in \barw^{{\mc{I}}}}\sum_{a\in \mathcal{I}} \EE_{(\m_\mathbf{w},\piE),\mathfrak{A}}\brac{N_{h_a}(s_{i_a},a_{k_a})}\notag\\
    &~\geq HKA\frac{(1-\delta)\log|\barw|-\log 2}{2(\epsilon')^2}.
    \#
    \cblue
    Hence, there exists a $\mathbf{w}^{\textrm{bad}}\in \overline{\cW}^{\cI}$ such that
    \#
    \EE_{(\m_{\mathbf{w}^{\textrm{bad}}},\pi^{\msf{E}}),\mathfrak{A}}\brac{N}&\geq
    \sum_{a\in \mathcal{I}} \EE_{(\m_{\mathbf{w}^{\textrm{bad}}},\pi^{\msf{E}}),\mathfrak{A}}\brac{N_{h_a}^t(s_{i_a},a_{k_a})}
    \geq 
     HKA\frac{(1-\delta)\log|\barw|-\log 2}{2(\epsilon')^2}\notag\\
     &=
     H^3KA\frac{(1-\delta)\log|\barw|-\log 2}{131072\epsilon^2},
     \#
     where the last line is by $\epsilon'=\frac{\epsilon}{256 H}$.
By taking $\delta=1/3$, we obtain that
\begin{align}
    \EE_{(\m_{\mathbf{w}^{\textrm{bad}}},\pi^{\msf{E}}),\mathfrak{A}}\brac{N}&\geq 
     H^3KA\frac{(1-\delta)\log|\barw|-\log 2}{131072\epsilon^2}=
     H^3KA\frac{2\log|\barw|-3\log 2}{393216\epsilon^2}\notag\\
     &=\Omega\paren{\frac{H^3SKA}{\epsilon^2}}=\Omega\paren{\frac{H^3SA\min\set{S,A}}{\epsilon^2}},
\end{align}
where the last line follows from \Eqref{eq:online_lb_main_1} and $\log|\barw|\geq \frac{S}{10}$.
\cblack
   \end{proof}

\section{Lower bound in the offline setting}\label{appendix:proof for low bound of offline learning}
\subsection{Lower bound of offline IRL problems}

We direct our attention towards the lower bound analysis of the offline IRL problems, particularly in scenarios where $\para=\pV\times \pA$. 
In this case $\cn$ is upper-bounded by $\widetilde{\cO}(S)$, and the corresponding upper bound of the sample complexity becomes $\widetilde{\cO}\left(\frac{C^\star H^4S^2A}{\epsilon^2}\right)$.

Following \citet{metelli2023theoretical} we define the $(\epsilon,\delta)$-PAC algorithm for offline IRL problems for all $\epsilon, \delta\in (0,1)$.

\begin{definition}[$(\epsilon,\delta)$-PAC algorithm for offline IRL problems]
We say an offline IRL algorithm $\mathfrak{A}$ is an $(\epsilon,\delta)$-PAC algorithm for offline IRL problems if for any offline IRL problem $(\cM,\piE,\pib,\pival)$ and any parameter set $\Theta$, with probability $1-\delta$, 
$\mathfrak{A}$ outputs a reward mapping $\hRR$
such that
\[
D^{\pival}_\para(\hRR,\sRR)\leq\epsilon.
\]
\end{definition}
\cblue
\begin{theorem}[Lower bound for offline IRL problems]\label{thm:lower for offline}
Fix $\para=\pV\times\pA$ and let $\mathfrak{A}$ be an $(\epsilon,\delta)$-PAC algorithm for offline IRL problems, where $\delta\leq 1/3$.
Then, there exists  an offline IRL problem $(\m,\piE, \pib, \pival)$ such that, if $H,S\geq 4, A\ge 2, C^\star\geq 2$, there exists an absolute constant $c_0$ such that the sample complexity $N$ is lower bounded by
\[
{N}\geq \frac{c_0 H^2SC^\star\min\set{S,A}}{\epsilon^2}.
\]
where $0<\epsilon\leq \paren{H-2}/1024$.
\end{theorem}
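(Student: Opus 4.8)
The plan is to run the online lower-bound machinery (Theorem~\ref{thm:lower_bound_online} and Lemma~\ref{lem_low_1}) on the same perturbed family $\set{\m_{\mathbf{w}}}_{\mathbf{w}\in\overline{\mathcal{W}}^{\mc{I}}}$, but to supply the two extra ingredients the offline model needs: a fixed behavior policy $\pib$ that generates the $N$ i.i.d.\ trajectories of the dataset, and an evaluation policy $\pival$ that defines the metric $\Dpivalt$. I would choose $\pib$ to drive mass onto the critical pairs $(s_{i_a},a_{k_a})$ as thinly as the concentrability budget allows, exploiting the ``spreading'' actions $a_k$ ($k\ge K+1$) of the construction that send $\sst$ to the uniform distribution over $\set{s_j}$; and choose $\pival$ so that it reaches the critical pair(s) with non-negligible probability and then follows the greedy continuation $\pig$ used in Lemma~\ref{lem_low_1}. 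The pair $(\pival,\pib)$ is calibrated so that the average single-policy concentrability $\frac{1}{HS}\sum_{h,s,a}\dpival_h(s,a)/d^{\pib}_h(s,a)$ equals $\Theta(C^\star)$; this is precisely the knob that turns the online factor $A$ into $C^\star$.

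Next I would port Lemma~\ref{lem_low_1} to the weaker metric $\Dpivalt$. Because $\dpival$ records value discrepancies only through $\pival$'s visitation, the chain of inequalities in Lemma~\ref{lem_low_1} carries over once the test policies $\pitest,\pitestt$ are replaced by (the relevant restriction of) $\pival$: the exact reachability $d^{\pitest}_{h_a+1}(s_{i_a})=1$ is relaxed to $d^{\pival}_{h_a+1}(s_{i_a})=:p>0$, and the separation becomes $7\Dpivalt\paren{\RR^{\paren{\waw}},\RR}+\Dpivalt\paren{\RR^{\paren{\wav}},\RR}\gtrsim p\,H\epsilon'$ for $w\ne v\in\overline{\mathcal{W}}$, with $\epsilon'$ the perturbation magnitude of~\eqref{eq:def_hard_mdp}. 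Choosing $\epsilon'=\Theta(\epsilon/(pH))$ (the usual $\Theta(\epsilon/H)$ scaling when $p=\Theta(1)$, consistent with the hypothesis $0<\epsilon\le(H-2)/1024$) makes this exceed $2\epsilon$, so any $(\epsilon,\delta)$-PAC algorithm must correctly identify the block of $\mathbf{w}$ at every critical pair that $\pival$ reaches with probability $\gtrsim p$.

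The information-theoretic step is cleaner than online. Since the $N$ trajectories are i.i.d.\ from the fixed $\pib$, the KL divergence between the dataset laws under $\m_{\waw}$ and the baseline $\m_{\waz}$ is deterministic in $N$, namely $N\,d^{\pib}_{h_a}(s_{i_a},a_{k_a})\,D_{\mathrm{KL}}\paren{\pp^{\paren{\waw}}_{h_a}(\cdot\mid s_{i_a},a_{k_a}),\,\pp^{\paren{\waz}}_{h_a}(\cdot\mid s_{i_a},a_{k_a})}\lesssim N\,d^{\pib}_{\mathrm{crit}}\,(\epsilon')^2$ by \citet[Lemma~E.4]{metelli2021provably}; no expectation over algorithm-dependent visit counts (the online $N_{h_a}(\cdot)$ terms) is needed. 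Applying Fano (Theorem~\ref{thm:Fano}) over $\overline{\mathcal{W}}$ with $\log\abs{\overline{\mathcal{W}}}\gtrsim S$ then forces $N\,d^{\pib}_{\mathrm{crit}}\,(\epsilon')^2\gtrsim S$, i.e.\ $N\gtrsim S/(d^{\pib}_{\mathrm{crit}}(\epsilon')^2)$. Substituting $\epsilon'=\Theta(\epsilon/(pH))$ and the smallest $d^{\pib}_{\mathrm{crit}}$ admissible under the $C^\star$ calibration, optimizing the free parameters $(p,\,d^{\pib}_{\mathrm{crit}})$, and averaging over $\mathbf{w}$ and the index set to extract a worst-case $\m_{\mathbf{w}^{\msf{bad}}}$ (as in the online proof), should yield $N\gtrsim \Omega\paren{H^2 S C^\star\min\set{S,A}/\epsilon^2}$.

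The main obstacle is the joint design and analysis of $\pib$ and $\pival$ in the first step. In the online argument the learner freely allocates its samples among the distinct critical pairs, so $\sum_a N_{h_a}(s_{i_a},a_{k_a})\le N$ holds for free; offline, a single fixed $\pib$ must simultaneously starve every critical block of samples (small $d^{\pib}_{\mathrm{crit}}$), keep the average concentrability against $\pival$ at most $C^\star$, and still leave enough $\pival$-mass on those blocks for the separation lemma to bite — and a single $\pival$ cannot keep $p=\Theta(1)$ on all $\min\set{S,A}$ blocks at once, so the tradeoff between $p$ (which enters $\epsilon'$) and coverage must be balanced carefully. Because the average-form concentrability sums $\dpival/d^{\pib}$ over all $(h,s,a)$ and divides by $HS$, the precise powers in the final rate — in particular the appearance of $\min\set{S,A}$ rather than $S$, exactly mirroring Theorem~\ref{thm:lower_offline_informal} — are decided by a delicate accounting of the contributions of the non-critical pairs and of the absorbing states $\bar s_j$ to this sum. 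Controlling that sum tightly, so that the starvation of the critical blocks is maximized without inflating $C^\star$, is the crux, and is the part most likely to require the two-phase ($2H+2$-stage) structure inherited from \citet{metelli2023theoretical}.
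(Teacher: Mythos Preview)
Your high-level plan is right: reuse the family $\{\m_{\mathbf{w}}\}$, engineer $(\pib,\pival)$, prove a separation lemma in the $\Dpivalt$ metric, then apply Fano with the deterministic KL count. Where you go astray is in how you expect the factor $K=\min\{S,A\}$ to arise. You try to replicate the online mechanism---summing the Fano inequality over all critical indices $a\in\mc{I}$---and this forces you to spread $\pival$ over many blocks, creating the $p$-versus-coverage tension you flag as the main obstacle.

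The paper avoids this entirely: it uses a \emph{single} critical index $a=(i^\star,H{+}1,1)$, and $\pival$ is the deterministic policy that reaches $(s_{i^\star},a_1)$ at step $H{+}1$ with probability one (so $p=1$, and the separation Lemma~\ref{lem_low_off_1} is actually cleaner than Lemma~\ref{lem_low_1}---no greedy continuation $\pig$, no \textsf{gap} term). The factor $K$ enters through $\pib$ instead: at step $H$, $\pib$ plays $a_i$ uniformly over $i\in[K]$, and at step $H{+}1$ at $s_{i^\star}$ it plays $a_1$ with probability $1/C^\star$, so $d^{\pib}_{H+1}(s_{i^\star},a_1)=1/(C^\star K)$. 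A direct check (Lemma~\ref{lem:lowbb_off}) shows $\sum_{h,s,a}d^{\pival}_h/d^{\pib}_h\le 3C^\star(H{+}2)S$, so the average concentrability is $\Theta(C^\star)$. Plugging into your own Fano display with $p=1$ and $d^{\pib}_{\mathrm{crit}}=1/(C^\star K)$ gives
\[
N\gtrsim \frac{S}{d^{\pib}_{\mathrm{crit}}\,(\epsilon')^2}=\frac{S\,C^\star K}{(\epsilon/H)^2}=\frac{H^2 S\,C^\star\min\{S,A\}}{\epsilon^2},
\]
which is the claim. So the ``delicate accounting'' you anticipate is unnecessary: one block, $p=1$, and dilute $\pib$.
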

\cblack
The hard instance construction and
the proof of Theorem~\ref{thm:lower for offline} can be found to Section~\ref{app:hard-offline} and  Section~\ref{app:lower_offline}, respectively. Our proof involves a modification of the challenging instance constructed in \citet{metelli2023theoretical}. Specifically, when $S\leq A$, the lower bound scales with ${\Omega}\paren{C^\star S^2}$, matching the $C^\star S^2$ factor dependence observed in the upper bound (Theorem~\ref{thm:offline_main}).

\subsection{Hard instance construction}\label{app:hard-offline}

We consider the \MDPR{} $\m_{\bfw}$ indexed by vector ${\bfw}\in \lcW^{\cI}$, defined in Section~\ref{appendix:proof for lower bound online learning}. We assume $C^\star\geq 2$.
Fix $i^{\star}\in [K]$,
we construct the behavior policy $\pi^{\msf{b}}$ as follows: 
\begin{align}
 \begin{cases}
 &\pib_h(a_0|s_{\mrm{start}})=1\qquad \text{for all}~i\in [K]~\text{and}~h\in [H-1],\\
 &\pi^{\msf{b}}_H(a_i|s_{\mrm{start}})=\frac{1}{K}\qquad \text{for all}~i\in [K],\\
 &\pib_{H+1}(a_0|{s}_i)=1\qquad \text{for all}~i\neq i^{\star},\\
 &\pib_{H+1}(a_0|{s}_{i^\star})=1-\frac{1}{C^\star},\qquad\pib_{H+1}(a_1|{s}_{i^\star})=\frac{1}{C^\star},\\
 &\pib_{h}(a_0|\bar{s}_i)=1\qquad \text{for all}~i\in [S]~\text{and}~h\ge H+2.\\
 \end{cases}
\end{align}
And evaluation policy $\pival$ is defined by
\begin{align}
 \begin{cases}
 &\pival_h(a_0|s_{\mrm{start}})=1\qquad \text{for all}~h\in [H-1],\\
 &\pival_H(a_{i^\star}|s_{\mrm{start}})=1,\\
 &\pival_{H+1}(a_0|{s}_i)=1\qquad \text{for all}~i\neq i^{\star},\\
 &\pival_{H+1}(a_1|{s}_{i^\star})=1,\\
 &\pival_{h}(a_0|\bar{s}_i)=1\qquad \text{for all}~i\in [S]~\text{and}~h\ge H+2.
 \end{cases}
\end{align}
For all $\bfw\in \lcW^{\cI}$, we can show that $\pival$ has $C^\star$-concentrability in $\cM_{\mathbf{w}}$.
\begin{lemma}\label{lem:lowbb_off}
Suppose that $\epsilon'\in (0,1/2]$. 
    For any $\mathbf{w}\in \lcW^{\cI}$, it holds that
    \[
    \sum_{(h,s,a)\in [2H+2]\times \cS\times \cA} \frac{d^{\pival}(s,a)}{d^{\pib}(s,a)}\leq 3C^\star (H+2)S.
    \]
\end{lemma}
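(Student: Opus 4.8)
The plan is to compute the state--action occupancy measures $d^{\pival}_h$ and $d^{\pib}_h$ explicitly at every stage $h\in[2H+2]$, form the ratios $d^{\pival}_h(s,a)/d^{\pib}_h(s,a)$ (with the convention $0/0=0$, so that only state--actions in the support of $\pival$ contribute), and add them up stage by stage. The construction is layered, so each stage is handled separately. In the first $H-1$ stages both $\pival$ and $\pib$ play $a_0$ at $\sst$, and since $\p_h(\sst\given\sst,a_0)=1$ both policies sit at $\sst$ with probability one; hence $d^{\pival}_h(\sst,a_0)=d^{\pib}_h(\sst,a_0)=1$ and the ratio is exactly $1$, contributing $H-1$ in total.

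The two dominant contributions come from the ``branching'' stages. At stage $H$ both policies are at $\sst$: $\pival$ plays $a_{i^\star}$ deterministically while $\pib$ plays each $a_i$, $i\in[K]$, with probability $1/K$, so the only surviving term is $d^{\pival}_H(\sst,a_{i^\star})/d^{\pib}_H(\sst,a_{i^\star})=1/(1/K)=K$. At stage $H+1$, $\pival$ sits at $s_{i^\star}$ and plays $a_1$ with probability one, whereas under $\pib$ we reach $s_{i^\star}$ with probability $1/K$ and then play $a_1$ with probability $1/C^\star$; thus the surviving ratio is $1/(\tfrac1K\cdot\tfrac1{C^\star})=KC^\star$. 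These two stages contribute $K+KC^\star$.

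The main obstacle is the absorbing phase $h\in\{H+2,\dots,2H+2\}$, where both policies occupy the states $\bar s_j$ playing $a_0$, and $\p_h(\bar s_j\given\bar s_j,a_0)=1$ freezes the occupancy so that every such stage carries identical ratios. A naive bound would use only the $a_1$-branch, $d^{\pib}_{H+2}(\bar s_j)\ge \tfrac1{KC^\star}\cdot\tfrac1S$, yielding a ratio of order $KC^\star$ and an unacceptable extra factor of $K$. The key point is that averaging $\pib$'s mass over the $K$ intermediate states $s_i$ almost exactly reconstitutes the uniform law over $\bar s_j$: the baseline action $a_0$ at each $s_i$ ($i\in[K]$) transitions uniformly, $\p_{H+1}(\bar s_j\given s_i,a_0)=1/S$ (the natural reading, matching the explicitly-stated $i\ge K+1$ case), and only the $1/C^\star$-probability $a_1$-branch at $s_{i^\star}$ carries the perturbation. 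A direct computation then gives $d^{\pib}_{H+2}(\bar s_j)=\tfrac1S+\tfrac{\epsilon' w^{(i^\star,j,1)}_H}{SKC^\star}$ while $d^{\pival}_{H+2}(\bar s_j)=\tfrac{1+\epsilon' w^{(i^\star,j,1)}_H}{S}$. Since $w\in\{\pm1\}$, $\epsilon'\le 1/2$, and $KC^\star\ge1$, the ratio is at most $(1+\tfrac12)/(1-\tfrac12)=3$ for every $j$; because the occupancies are frozen this bound holds at all $H+1$ absorbing stages, contributing at most $3S(H+1)$.

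Collecting the four pieces, the total is at most $(H-1)+K+KC^\star+3S(H+1)$. Using $K=\min\set{S,A}\le S$ and $C^\star\ge2$, I would bound $K+KC^\star\le 2SC^\star$ and then verify $(H-1)+2SC^\star+3S(H+1)\le 3C^\star(H+2)S$ via the elementary inequality $3SH(C^\star-1)+4C^\star S\ge H+3S$, which holds since $C^\star\ge2$ and $S\ge1$ (indeed the difference is $H(3S-1)+5S>0$). This yields the claimed bound $3C^\star(H+2)S$.
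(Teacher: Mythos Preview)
Your proof is correct and follows the same route as the paper's: compute $d^{\pival}_h$ and $d^{\pib}_h$ stage by stage, bound the absorbing-stage ratios by $3$ via $d^{\pib}_{h}(\bar s_j,a_0)\ge 1/(2S)$ and $d^{\pival}_h(\bar s_j,a_0)\le 3/(2S)$, and sum. Your final bookkeeping is in fact tighter than the paper's written argument, which concludes with $C^\star(2H+2)(2S+1)$ rather than the stated $3C^\star(H+2)S$; you have effectively repaired that minor inconsistency.
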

\begin{proof}
    By the construction of behavior policy $\pib$, we have
    \[
   \supp\paren{ d^{\pival}_h(\cdot,\cdot)}\subseteq\set{(s_{\mrm{start}}, a_0), (s_{\mrm{start}}, a_{k^\star}) ,(s_{i^\star}, a_1), (\bar{s}_1, a_0),\ldots, ({\bar{s}_S}, a_0)}.
    \]
    Since $\pib_h=\pival_h$ for all $h\in [H-1]$, then 
    \begin{align}  d^{\pib}_h(s_{\mrm{start}},a_0)=d^{\pival}_h(s_{\mrm{start}},a_0)=1\label{eq:lb_offline_1}
    \end{align}
    for all $h\in [H-1]$.

    At stage $h=H$, we have
    \begin{align}
    d^{\pib}_H(s_{\mrm{start}},a_{i^{\star}})=\frac{1}{K},\qquad d^{\pival}_H(s_{\mrm{start}},a_{i^{\star}})=1.\label{eq:lb_offline_2}
    \end{align}
    At stage $h= H+1$, we have
    \begin{align}
    d^{\pib}_{H+1}(s_{i^{\star}},a_1)=\frac{1}{C^\star K},\qquad d^{\pival}_{H+1}(s_{i^{\star}},a_1)=1.\label{eq:lb_offline_3}
    \end{align}
    At stage $h\in \set{H+2,\ldots,2H+2}$, by direct computation, we obtain that
    \begin{align}
    d^{\pib}_{h}(\bar{s}_{j},a_0)=\frac{C^{\star}K-1}{C^{\star}SK}+\frac{1+\epsilon'w^{(i^{\star},j,1)}_{H}}{C^{\star}SK},\qquad d^{\pival}_{h}(\bar{s}_{j},a_1)=\frac{1+\epsilon'w^{(i^{\star},j,1)}_{H}}{S},
    \end{align}
    for all $j\in [S]$.
    Since $0<\epsilon\leq 1/2$ and $C^{\star}\geq 1 $, we have
    \begin{align}
        d^{\pib}_{h}(\bar{s}_{j},a_0)&=\frac{C^{\star}K-1}{C^\star SK}+\frac{1+\epsilon'w^{(i^{\star},j,1)}_{H}}{C^{\star}SK}\notag\\
        &\geq \frac{C^{\star}K-1}{C^\star SK}+\frac{1}{2C^{\star}SK}=\frac{1}{S}(1-\frac{1}{2C^{\star}K})\ge \frac{1}{2S}\label{eq:of_lowb_lem_11}
    \end{align}
    and 
    \begin{align}
        d^{\pival}_{h}(s_{i^{\star}},a_1)=\frac{1+\epsilon'w^{(i^{\star},j,1)}_{H+1}}{S}\le \frac{3}{2S},\label{eq:of_lowb_lem_12}
    \end{align}
    for all $h\ge H+2$.
    By \Eqref{eq:of_lowb_lem_11} and \eqref{eq:of_lowb_lem_11}, we obtain that
    \begin{align}\label{eq:of_lowb_lem_13}
        \frac{d^{\pival}_{h}(\bar{s}_{j},a_0)}{d^{\pib}_{h}(\bar{s}_{j},a_0)}\leq 3,
    \end{align}
    for all $h\ge H+2$.

    Combining \Eqref{eq:lb_offline_1}, \Eqref{eq:lb_offline_2} and \Eqref{eq:lb_offline_3}, we have
    \begin{align}
        \sum_{h=1}^{2H+2}\sum_{(s,a)\in \cS\times \cA} \frac{d^{\pival}(s,a)}{d^{\pib}(s,a)}&=\sum_{h\in [H-1]}\frac{d^{\pival}_{h}(s_{\mrm{start}},a_0)}{d^{\pib}_{h}(s_{\mrm{start}},a_0)}
        +\frac{d^{\pival}_{H}(s_{\mrm{start}},a_{i^\star})}{d^{\pival}_{H}(s_{\mrm{start}},a_{i\star})}\\
        &+
        \frac{d^{\pival}_{H+1}(s_{i^\star},a_1)}{d^{\pib}_{H+1}(s_{i^\star},a_1)}
        +
        \sum_{h\geq H+2}\sum_{i\in [S]}\frac{d^{\pival}_{h}(\bar{s}_{i},a_0)}{d^{\pib}_{h}(\bar{s}_{i},a_0)}\\
        &=H-1+K+C^\star K+\sum_{h\geq H+2}\sum_{i\in [S]}\frac{d^{\pival}_{h}(\bar{s}_{i},a_0)}{d^{\pib}_{h}(\bar{s}_{i},a_0)}\\
        &\leq H-1+K+C^\star K+3(H+1)S\leq C^{\star}(2H+2)(2S+1),
    \end{align}
    where the last second inequality is by \Eqref{eq:of_lowb_lem_13} and the last inequality is by $C^\star\geq 2$. This completes the proof.
\end{proof}
Lemma~\ref{lem:lowbb_off} demonstrate that $\pib$ and $\pival$ satisfies $C^\star$-concentrability (Assumption~\ref{ass:off_1}) in any $\cM_{\mathbf{w}}$.
\paragraph{Notations.}
   To distinguish with different \MDPR{}s, we still use $V^{\pi}_h\paren{\cdot; r, \pp^{\pbfW}}$ to denote the value function of $\pi$ in MDP $\m_{\bfw}\cup r$. Given two rewards $r$ $r'$ and $\bfw\in \lcW^{\cI}$, we define $\dpival(r,r';\pp^{\pbfW})$ by:
   \[
   \dpival(r,r';\pp^{\pbfW})\defeq \sup_{\pi,h\in[H]}\EE_{\pp^{\pbfW}} \abs{V^{\pival}_h(s_h;r,\pp^{\pbfW})-V^{\pival}_h(s_h;{r'},\pp^{\pbfW})}.
   \]
   Correspondingly, given a parameter set $\para$, two reward mappings $\RR$, $\RR'$, we define $\Dpivalt(\RR,\RR';\pp^{\pbfW})$ by
   \[
   \Dpivalt(\RR,\RR';\pp^{\pbfW})\defeq
   \sup_{(V,A)\in \para} \dpival\paren{\RR\paren{V,A},\RR'\paren{V,A};\pp^{\pbfW}}.
   \]
   In this section, we only consider the case that $\Theta=\pV\times \pA$.

\begin{lemma}\label{lem_low_off_1} 
   Given any $\mathbf{w}\in \cWI$, $w\neq v\in \lcW$, and $i^\star\in [K]$. Let 
   $\RR^{\paren{\waw}}$, $\RR^{\paren{\wav}}$ be the ground truth reward mappings induced by ${\m_{\waw}}$ , ${\m_{\wav}}$ where $a=(i^{\star
   },H+1,1)\in \cI$. Set 
$\para=\pV\times \pA$. For any rewarding mapping $\RR$ and $\epsilon'\in (0,1/2]$, we have
   \[
   7\Dpivalt\paren{\RR^{\paren{\waw}},\RR;\pp^{\waw}}+\Dpivalt\paren{\RR^{\paren{\wav}},\RR;\pp^{\wav}}\geq \frac{H\epsilon'}{16}.
   \]
   \end{lemma}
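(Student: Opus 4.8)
The plan is to mirror the template of the online Lemma~\ref{lem_low_1}, but to exploit a structural simplification special to the offline setting: the evaluation policy $\pival$ already traverses the critical path $\sst \to s_{i^\star} \to \bar{s}_j$ (it plays $a_{i^\star}$ at stage $H$ and $a_1$ at stage $H+1$), which is exactly where the perturbed coordinate $a$ lives. Consequently I will not need the auxiliary test policies $\pitest,\pitestt$ from the online proof; the lower bound can be read off from two Bellman backups along $\pival$. \textbf{Step 1 (bad parameter).} I reuse the online construction, setting $A^{\hard}\equiv 0$ and
\[
V^{\hard}_h(s)\defeq \begin{cases} \tfrac{H(w_i-v_i)}{2} & s=\bar{s}_i,\ h=H+2,\\ 0 & \text{otherwise},\end{cases}
\]
which lies in $\pV\times\pA$ since $|V^{\hard}_{H+2}(\bar{s}_i)|\le H$. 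Writing $r^{\hard}_w\defeq\RR^{\paren{\waw}}(V^{\hard},A^{\hard})$, $r^{\hard}_v\defeq\RR^{\paren{\wav}}(V^{\hard},A^{\hard})$ and $r\defeq\RR(V^{\hard},A^{\hard})$, the vanishing advantage makes each $r^{\hard}$ a pure Bellman residual, so a telescoping argument gives $V^{\pi}_h(s;r^{\hard}_w,\pp^{\paren{\waw}})=V^{\hard}_h(s)$ for \emph{every} policy $\pi$ (and similarly for $v$); in particular $V^{\pival}_{H+1}(s_{i^\star};r^{\hard}_w,\pp^{\paren{\waw}})=0$ and $V^{\pival}_{H+2}(\bar{s}_j;r^{\hard}_w,\pp^{\paren{\waw}})=V^{\hard}_{H+2}(\bar{s}_j)$.

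\textbf{Step 2 (two Bellman backups and cancellation).} Since $\pival$ reaches $s_{i^\star}$ at stage $H+1$ with probability one and $\pp_{H+1}(\bar{s}_j\given s_{i^\star},a_1)=\tfrac{1+\epsilon' w_j}{S}$, a single backup yields
\[
V^{\pival}_{H+1}(s_{i^\star};r,\pp^{\paren{\waw}})=r_{H+1}(s_{i^\star},a_1)+\sum_{j\in[S]}\tfrac{1+\epsilon' w_j}{S}\,g_j,\qquad g_j\defeq V^{\pival}_{H+2}(\bar{s}_j;r,\pp^{\paren{\waw}}).
\]
Because the perturbation is confined to stages $\le H+1$ and $\{\bar{s}_j\}$ are absorbing, both $r_{H+1}(s_{i^\star},a_1)$ and $g_j$ are independent of the index $\mathbf{w}$ (so $g_j$ is the same in $\m_{\waw}$ and $\m_{\wav}$), and the only $\mathbf{w}$-dependence is the term $\tfrac{\epsilon'}{S}\sum_j w_j g_j$. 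Evaluating $\dpival$ at stage $H+1$ gives $\Dpivalt(\RR^{\paren{\waw}},\RR;\pp^{\paren{\waw}})\ge|r_{H+1}(s_{i^\star},a_1)+\sum_j\tfrac{1+\epsilon' w_j}{S}g_j|$, and the analogous bound in $\m_{\wav}$ replaces $w_j$ by $v_j$. Subtracting via the triangle inequality across the two models cancels the common part and leaves
\[
\Dpivalt(\RR^{\paren{\waw}},\RR;\pp^{\paren{\waw}})+\Dpivalt(\RR^{\paren{\wav}},\RR;\pp^{\paren{\wav}})\ge \tfrac{\epsilon'}{S}\Big|\sum_{j\in[S]}(w_j-v_j)g_j\Big|.
\]

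\textbf{Step 3 (isolate the signal and conclude).} I split $g_j=V^{\hard}_{H+2}(\bar{s}_j)+(g_j-V^{\hard}_{H+2}(\bar{s}_j))$. The main term is $\sum_j(w_j-v_j)V^{\hard}_{H+2}(\bar{s}_j)=\tfrac{H}{2}\sum_j(w_j-v_j)^2\ge \tfrac{HS}{16}$ by the packing bound~\eqref{eq:online_lb_corr}. For the error term, evaluating $\dpival$ at stage $H+2$, where $d^{\pival}_{H+2}(\bar{s}_j;\pp^{\paren{\waw}})=\tfrac{1+\epsilon' w_j}{S}\ge\tfrac{1}{2S}$ since $\epsilon'\le 1/2$, gives $\sum_j|g_j-V^{\hard}_{H+2}(\bar{s}_j)|\le 2S\,\Dpivalt(\RR^{\paren{\waw}},\RR;\pp^{\paren{\waw}})$, so the error term is at most $4S\,\Dpivalt(\RR^{\paren{\waw}},\RR;\pp^{\paren{\waw}})$. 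Combining yields $(1+4\epsilon')\Dpivalt(\RR^{\paren{\waw}},\RR;\pp^{\paren{\waw}})+\Dpivalt(\RR^{\paren{\wav}},\RR;\pp^{\paren{\wav}})\ge \tfrac{H\epsilon'}{16}$, and since $\epsilon'\le 1/2$ forces $1+4\epsilon'\le 3\le 7$, this is stronger than the claimed inequality with coefficient $7$.

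\textbf{Main obstacle.} The delicate point is justifying the $\mathbf{w}$-independence of $r_{H+1}(s_{i^\star},a_1)$ and $g_j$ for a completely arbitrary comparison mapping $\RR$, which is what legitimizes the cross-model cancellation in Step 2; this rests on the facts that $\RR$ (hence $r$) is a \emph{fixed} reward function within the lemma (its data-dependence is deferred to the identification function $\mathbf{\Phi}_{a,\mathbf{w}}$ in the proof of Theorem~\ref{thm:lower for offline}) and that the perturbed coordinate only affects the single absorbing-bound transition at stage $H+1$. Everything else reduces to the clean telescoping of Step 1, which hinges entirely on $A^{\hard}\equiv 0$.
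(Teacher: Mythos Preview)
Your proof is correct. It shares the same scaffolding as the paper's argument—the same bad parameter $(V^{\hard},A^{\hard})$, the same two evaluations of $\dpival$ at stages $H+1$ and $H+2$, and the same packing bound~\eqref{eq:online_lb_corr}—but organizes the steps differently. The paper first establishes the reward-level gap $\abs{r^{\hard}_{w,H+1}(s_{i^\star},a_1)-r^{\hard}_{v,H+1}(s_{i^\star},a_1)}\ge H\epsilon'/16$ and then separately bounds $\abs{r^{\hard}_{w,H+1}-r_{H+1}}$ and $\abs{r^{\hard}_{v,H+1}-r_{H+1}}$ by $4D_w$ and $D_v+3D_w$ respectively, finishing with a triangle inequality on rewards. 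You instead apply the triangle inequality directly at the value-function level, so that the model-independent pieces $r_{H+1}(s_{i^\star},a_1)$ and $g_j$ cancel across $\m_{\waw}$ and $\m_{\wav}$ in a single stroke. Your ordering is slightly cleaner and yields the sharper coefficient $1+4\epsilon'\le 3$ rather than $7$; the paper's route is perhaps more transparent about \emph{where} the signal $H\epsilon'/16$ originates (namely in the reward difference itself). Either way, the argument is the same in substance.
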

   \begin{proof}
   We consider similar construction of bad parameter  $V^{\mrm{bad}}$, $A^{\mrm{bad}}$ in the Proof of Lemma~\ref{lem_low_1}. To summarize, $\paren{V^{\hard},A^{\hard}}$ is given by
       \begin{itemize}
       \item We set $A^{\hard}_h(s,a)=0$ for all $(h,s,a)\in [2H+2]\times\cS\times \cA$.
       \item We set $V^{\hard}_h$ by
\begin{align}
V^{\hard}_h(s)\defeq
          \begin{cases}
          \frac{(2H+2-h)\cdot\paren{w_i-v_i}}{2}&\text{if}~s=\bar{s}_i, h=H+2,\\
           0&\text{otherwise}.
       \end{cases}
\end{align}
   \end{itemize}
   Similarly, we define $r^{\hard}_w$, $r^{\hard}_v$ and $r$ by
   \[
   r^{\hard}_w\defeq \RR^{\paren{\waw}}\paren{V^{\hard},A^{\hard}},\qquad r^{\hard}_w\defeq \RR^{\paren{\wav}}\paren{V^{\hard},A^{\hard}},
   \qquad r\defeq \RR\paren{V^{\hard},A^{\hard}}.
   \]
   By definition of $\RR^{\paren{\waw}}$, $\RR^{\paren{\wav}}$, we have
   \begin{align}
       \abs{r^{\hard}_{w,H+1}(s_{i^\star},a_1)-r^{\hard}_{v,H+1}(s_{i^\star},a_1)}&=\abs{\brac{\paren{\pp^{\paren{\waw}}_{H+2}-\pp^{\paren{\wav}}_{H+2}}V^{\hard}_{H+1}}(s_{i^\star},a_1)}\notag\\
       &=\epsilon'\cdot\abs{\sum_{i\in [S]}\frac{\paren{w_i-v_i}V^{\hard}_{H+2}}{S}}\notag\\
       &=\frac{H\epsilon'}{2S}\cdot\sum_{i\in [S]}{\paren{w_i-v_i}^2}\geq \frac{H\epsilon'}{16},\label{eq:main_low_B_offline}
   \end{align}
   where the last inequality follows from \Eqref{eq:online_lb_corr}.
   By definition of $\Dpivalt$, we have
 \begin{align}
  \Dpivalt\paren{\RR^{\paren{\waw}},\RR;\pp^{\paren{\waw}} }&\geq \dpival\paren{r^{\hard}_w,r;\pp^{\paren{\waw}}}\notag\\
  &\geq\EE_{\pp^{\paren{\waw}},\pival}\abs{V^{\pival}_{H+2}(s;r^{\hard}_w,\pp^{\paren{\waw}})-V^{\pival}_{H+2}(s;r, \pp^{\paren{\waw}})}\notag\\
  &=\sum_{i\in [S]} \frac{1+\epsilon'\cdot w_i}{S}\abs{V^{\pival}_{H+2}(\bar{s}_i;r^{\hard}_w,\pp^{\paren{\waw}})-V^{\pival}_{H+2}(\bar{s}_i;r, \pp^{\paren{\waw}})}\notag\\
          &\geq \sum_{i\in [S]} \frac{1}{2S}\abs{V^{\pival}_{H+2}(s;r^{\hard}_w,\pp^{\paren{\waw}})-V^{\pival}_{H+2}(s;r, \pp^{\paren{\waw}})},\label{eq:lb_offline_ttt}
 \end{align}
 where the last line is due to $\epsilon'\in (0,1/2]$.
By construction of $\pival$, in \MDPR{} $\m_{\wav}$, the visiting probability $d^{\pival}_{H+1}$ is given by 
\[
d^{\pival}_{H+1}\paren{s_{i^{\star}},a_1;\pp^{(\waw)}}=1.
\]
 
 For $\Dpivalt\paren{\RR^{\paren{\wav}},\RR;\pp^{\paren{\wav}} }$, we also have
 \begin{align}
&\Dpivalt\paren{\RR^{\paren{\wav}},\RR;\pp^{\paren{\wav}} }\geq \dpival\paren{r^{\hard}_v,r;\pp^{\paren{\wav}}}\notag\\
     &~\ge  \EE_{\pp^{\paren{\wav}}, \pival}\abs{V^{\pival}_{H+1}(s;r^{\hard}_v,\pp^{\paren{\wav}})-V^{\pival}_{H+1}(s;r, \pp^{\paren{\wav}})}\notag\\
   &~=\abs{V^{\pival}_{H+1}(s_{i^\star};r^{\hard}_v,\pp^{\paren{\wav}})-V^{\pival}_{H+1}(s_{i^\star};r, \pp^{\paren{\wav}})}\notag\\
   &~=\Bigg|r^{\hard}_{v,H+1}(s_{i^\star},a_1)-r_{H+1}(s_{i^\star},a_1)\notag\\
   &-\sum_{i\in[S]}\pp^{\paren{\wav}}_{H+1}(\bar{s}_i|s_{i^\star},a_1)\cdot\paren{V^{\pival}_{H+2}(\bar{s}_i;r^{\hard}_{v},\pp^{\paren{\wav}})-V^{\pival}_{H+2}(\bar{s}_i;r, \pp^{\paren{\wav}})}\Bigg|\notag\\
   &~\geq \abs{r^{\hard}_{v,H+1}(s_{i^\star},a_1)-r_{H+1}(s_{i^\star},a_1)}\notag\\
   &-\sum_{i\in [S]}\frac{1+\epsilon'\cdot v_i}{S}\cdot\abs{{V^{\pival}_{H+2}(\bar{s}_i;r^{\hard}_{v},\pp^{\paren{\wav}})-V^{\pival}_{H+2}(\bar{s}_i;r, \pp^{\paren{\wav}})}},\label{eq:off_low_b_lem_2_2}
 \end{align}
 where the second last line is by the bellman equation and the last line is due to the triangle inequality.
 Since $\pp^{\paren{\waw}}_h=\pp^{\paren{\waw}}_h$ and $r^{\hard}_{w,h}=r^{\hard}_{v,h}$ for all $h\geq H+2$, we have
 \#\label{eq:off_low_b_lem_2_1}
 V^{\pival}_{H+2}(\bar{s}_i;r, \pp^{\paren{\wav}})=V^{\pival}_{H+2}(\bar{s}_i;r, \pp^{\paren{\waw}}),\qquad V^{\pival}_{H+2}(\bar{s}_i;r^{\hard}_v, \pp^{\paren{\wav}})=V^{\pival}_{H+2}(\bar{s}_i;r^{\hard}_w, \pp^{\paren{\waw}}).
 \#
Apply \Eqref{eq:off_low_b_lem_2_1} to \Eqref{eq:off_low_b_lem_2_2}, we have
\begin{align}
&\Dpivalt\paren{\RR^{\paren{\wav}},\RR;\pp^{\paren{\wav}} }\notag\\
&~\geq \abs{r^{\hard}_{v,H+1}(s_{i^\star},a_1)-r_{H+1}(s_{i^\star},a_1)}\notag\\
&-\sum_{i\in [S]}\frac{1+\epsilon'\cdot v_i}{S}\cdot\abs{{V^{\pival}_{H+2}(\bar{s}_i;r^{\hard}_{v},\pp^{\paren{\wav}})-V^{\pival}_{H+2}(\bar{s}_i;r, \pp^{\paren{\wav}})}}\notag\\
&~= \abs{r^{\hard}_{v,H+1}(s_{i^\star},a_1)-r_{H+1}(s_{i^\star},a_1)}\notag\\
&-\sum_{i\in [S]}\frac{1+\epsilon'\cdot v_i}{S}\cdot\abs{{V^{\pival}_{H+2}(\bar{s}_i;r^{\hard}_{w},\pp^{\paren{\waw}})-V^{\pival}_{H+2}(\bar{s}_i;r, \pp^{\paren{\waw}})}}\notag\\
&~\geq \abs{r^{\hard}_{v,H+1}(s_{i^\star},a_1)-r_{H+1}(s_{i^\star},a_1)}\notag\\
&-\sum_{i\in [S]}\frac{3}{2S}\cdot\abs{{V^{\pival}_{H+2}(\bar{s}_i;r^{\hard}_{w},\pp^{\paren{\waw}})-V^{\pival}_{H+2}(\bar{s}_i;r, \pp^{\paren{\waw}})}}\notag\\
&~\geq \abs{r^{\hard}_{v,H+1}(s_{i^\star},a_1)-r_{H+1}(s_{i^\star},a_1)}-3\Dpivalt\paren{\RR^{\paren{\waw}},\RR;\pp^{\paren{\waw}} }\label{eq:off_low_b_lem_2_5},
\end{align}
where the last second inequality comes from $\epsilon'\in (0,1/2]$ and the last inequality comes from \Eqref{eq:lb_offline_ttt}.

We next bound $\abs{r^{\hard}_{w,H+1}(s_{i^\star},a_1)-r_{H+1}(s_{i^\star},a_1)}$ by $\Dpivalt\paren{\RR^{\paren{\waw}},\RR;\pp^{\paren{\waw}} }$.
\begin{align}
    &\Dpivalt\paren{\RR^{\paren{\waw}},\RR;\pp^{\paren{\waw}} }\geq \dpival\paren{r^{\hard}_w,r;\pp^{\paren{\waw}}}\notag\\
     &~\ge  \EE_{\pp^{\paren{\waw}}, \pival}\abs{V^{\pival}_{H+1}(s;r^{\hard}_v,\pp^{\paren{\waw}})-V^{\pival}_{H+1}(s;r, \pp^{\paren{\waw}})}\notag\\
   &~=\abs{V^{\pival}_{H+1}(s_{i^\star};r^{\hard}_w,\pp^{\paren{\waw}})-V^{\pival}_{H+1}(s_{i^\star};r, \pp^{\paren{\waw}})}\notag\\
   &~=\Bigg|r^{\hard}_{w,H+1}(s_{i^\star},a_1)-r_{H+1}(s_{i^\star},a_1)\notag\\
   &-\sum_{i\in[S]}\pp^{\paren{\waw}}_{H+1}(\bar{s}_i|s_{i^\star},a_1)\cdot\paren{V^{\pival}_{H+2}(\bar{s}_i;r^{\hard}_{w},\pp^{\paren{\waw}})-V^{\pival}_{H+2}(\bar{s}_i;r, \pp^{\paren{\waw}})}\Bigg|\notag\\
   &~\geq \abs{r^{\hard}_{w,H+1}(s_{i^\star},a_1)-r_{H+1}(s_{i^\star},a_1)}\notag\\
   &-\sum_{i\in [S]}\frac{1+\epsilon'\cdot w_i}{S}\cdot\abs{{V^{\pival}_{H+2}(\bar{s}_i;r^{\hard}_{w},\pp^{\paren{\waw}})-V^{\pival}_{H+2}(\bar{s}_i;r, \pp^{\paren{\waw}})}}\notag\\
   &~\geq \abs{r^{\hard}_{w,H+1}(s_{i^\star},a_1)-r_{H+1}(s_{i^\star},a_1)}\notag\\
   &-\sum_{i\in [S]}\frac{3}{2S}\cdot\abs{{V^{\pival}_{H+2}(\bar{s}_i;r^{\hard}_{w},\pp^{\paren{\waw}})-V^{\pival}_{H+2}(\bar{s}_i;r, \pp^{\paren{\waw}})}}\notag\\
   &~\geq \abs{r^{\hard}_{w,H+1}(s_{i^\star},a_1)-r_{H+1}(s_{i^\star},a_1)}-3\Dpivalt\paren{\RR^{\paren{\waw}},\RR;\pp^{\paren{\waw}} },\label{eq:lb_offline_tsd}
\end{align}
where the last second inequality comes from $\epsilon'\in (0,1/2]$ and  the last inequality is by \Eqref{eq:lb_offline_ttt}. \Eqref{eq:lb_offline_tsd} is equivalent to
\begin{align}\label{eq:off_low_b_lem_2_3}
4\Dpivalt\paren{\RR^{\paren{\waw}},\RR;\pp^{\paren{\waw}} }\geq \abs{r^{\hard}_{w,H+1}(s_{i^\star},a_1)-r_{H+1}(s_{i^\star},a_1)}.
\end{align}
Combining \Eqref{eq:off_low_b_lem_2_5} and \Eqref{eq:off_low_b_lem_2_3}, we conclude that
\begin{align}
&7\Dpivalt\paren{\RR^{\paren{\waw}},\RR;\pp^{\paren{\waw}} }+\Dpivalt\paren{\RR^{\paren{\wav}},\RR;\pp^{\paren{\wav}} }\notag\\
&~\geq \abs{r^{\hard}_{w,H+1}(s_{i^\star},a_1)-r_{H+1}(s_{i^\star},a_1)} +\abs{r^{\hard}_{v,H+1}(s_{i^\star},a_1)-r_{H+1}(s_{i^\star},a_1)}\notag\\
&~\geq \abs{r^{\hard}_{v,H+1}(s_{i^\star},a_1)-r^{\hard}_{w,H+1}(s_{i^\star},a_1)}\geq \frac{H\epsilon'}{16},
\end{align}
where the last inequality comes from \Eqref{eq:main_low_B_offline}.
This completes the proof.

\end{proof}
\subsection{Proof for Theorem~\ref{thm:lower for offline}}\label{app:lower_offline}
Our proof is similar to the proof of Theorem~\ref{thm:lower_bound_online} in Section~\ref{appendix:proof for lower bound online learning}. 
\def \kA{\mathfrak{A}}
   \begin{proof}[Proof of Theorem \ref{thm:lower for offline}]
   For any $\epsilon\in (0,1/2]$, $\delta\in (0,1)$,
       We consider an offline IRL algorithm $\mathfrak{A}$ such that for any IRL problem $(\m,\piE)$, we have
       \begin{align}
       \underset{(\m,\piE),\mathfrak{A}}{\pp}\paren{\Dpivalt\paren{\sRR, \hRR}\leq \epsilon}\geq 1-\delta, \label{eq:offlowb_main_proof_1}
      \end{align}
       where $\underset{(\m,\piE),\mathfrak{A}}{\pp}$ denotes the probability measure induced  by executing the algorithm $\kA$ in the IRL problem $(\m ,\piE)$,
 $\sRR$ is the ground truth reward mapping and $\hRR$ is the estimated reward mapping outputted by executing $\kA$ in $(\m ,\piE)$.
        Fix $i^\star\in [S]$, We define the the identification function for any $\bfw\in \lcW$  by
       \$
       \mathbf{\Phi}_{\mathbf{w}}\defeq \argmin_{{v}\in \barw} \Dpivalt\paren{\RR^{\paren{\wav}},\widehat{\RR};\pp^{\paren{\wav}}},
       \$
       where $a=(i^\star,H+1,1)$, $\RR^{\paren{\wav}}$ is the ground truth reward mapping induced by $(\m_{\wav},\piE)$.
       Let $v^\star=\mathbf{\Phi}_{a,\mathbf{w}}$.
       For any $v\neq v^{\star}\in \lcW$, by definition of $v^{\star}$, we have
       \[
       \Dpivalt\paren{\RR^{\paren{\wavv}},\widehat{\RR};\pp^{\paren{\wavv}}}
       \leq \Dpivalt\paren{\RR^{\paren{\wav}},\widehat{\RR};\pp^{\paren{\wav}}}.
       \]
      By applying Lemma \ref{lem_low_1}, we obtain that  
      \$
      \frac{H\epsilon'}{16}\leq \Dpivalt\paren{\RR^{\paren{\wav}},\widehat{\RR};\pp^{\paren{\wav}}}+7\Dpivalt\paren{\RR^{\paren{\wavv}},\widehat{\RR};\pp^{\paren{\wavv}}}
      \leq 8\DallTheta\paren{\RR^{\paren{\wav}},\widehat{\RR};\pp^{\paren{\wav}}}.
      \$
      \cblue
      Next, we set $\epsilon'=\frac{256\epsilon}{H}$ which implies that
      \#
   \frac{H\epsilon' }{16}\geq 16\epsilon .
     \#
     Here, to employ Lemma~\ref{lem_low_off_1},  we need $\epsilon'\in (0,1/2]$ which is equivalent to $0<\epsilon\leq H/512$.
     \cblack
     Then, it holds that
     \[
     \Dpivalt\paren{\RR^{\paren{\wav}},\widehat{\RR};\pp^{\paren{\wav}}}\geq 2\epsilon> \epsilon,
     \]
     which implies that
     \[
     \set{v\neq\mathbf{\Phi}_{\mathbf{w}}}\subseteq \set{\Dpivalt\paren{\RR^{\paren{\wav}},\widehat{\RR};\pp^{\paren{\wav}}}\geq \epsilon}.
     \]
    By \Eqref{eq:offlowb_main_proof_1}, we have the following lower bound for the probability
    \#\label{eq:offline_lb_main_2}
    \delta&\geq \sup_{v\in \lcW}\underset{(\m_{\wav},\piE),\mathfrak{A}}{\pp}\paren{\Dpivalt\paren{\RR^{\paren{\wav}},\widehat{\RR};\pp^{\paren{\wav}}}\geq \epsilon}\notag\\
    &\geq\sup_{v\in \lcW}\underset{(\m_{\wav},\piE),\mathfrak{A}}{\pp}\paren{v\neq\mathbf{\Phi}_{\mathbf{w}}}\notag\\
    &\geq \frac{1}{|\lcW|}\sum_{v\in \lcW}\underset{(\m_{\wav},\piE),\mathfrak{A}}{\pp}\paren{v\neq\mathbf{\Phi}_{\mathbf{w}}}.
    \#
   By applying Theorem~\ref{thm:Fano} with $\pp_0=\underset{(\m_{\waz},\piE),\mathfrak{A}}{\pp}$, $\pp_w=\underset{(\m_{\waw},\piE),\mathfrak{A}}{\pp}$, we have
      \#\label{eq:offline_lb_main_100}
      \frac{1}{|\lcW|}\sum_{v\in \lcW}\underset{(\m_{\waw},\piE),\mathfrak{A}}{\pp}\paren{v\neq\mathbf{\Phi}_{\mathbf{w}}}
     \geq 1-\frac{1}{\log|\barw|}\paren{\frac{1}{|\barw|}\sum_{v\in \barw}D_{\textrm{KL}}\paren{\underset{(\m_{\wav},\piE),\mathfrak{A}}{\pp},\underset{(\m_{\waz},\piE),\mathfrak{A}}{\pp}}-\log 2}.
      \#
    Our next step is to bound the KL divergence. Using the same scheme in the proof \citet[Theorem B.3]{metelli2021provably}, we can compute the KL-divergence as follows:
    \#
   &D_{\textrm{KL}}(\underset{(\m_{\wav},\piE),\mathfrak{A}}{\pp},\underset{(\m_{\waz},\piE),\mathfrak{A}}{\pp})\notag\\
    &~=\EE_{(\m_\wav,\piE),\mathfrak{A}}\brac{ \sum_{t=1}^N D_{\textrm{KL}}\paren{\pp^{\paren{\wav}}{h_t}(\cdot\given s_t, a_t),\pp^{\paren{\waz}}_{h_t}(\cdot\given s_t, a_t)}}\notag\\
    &~\leq \EE_{(\m_\wav,\piE),\mathfrak{A}}\brac{N_{h_a}(s_{i_a},a_{k_a}) } D_{\textrm{KL}}\paren{\pp^{\paren{\wav}}_{H+1}(\cdot\given s_{i^\star},a_{1}),\pp^{\paren{\waz}}_{H+1}(\cdot\given s_{i^\star},a_{1})}\notag\\
    &~\leq 2(\epsilon')^2\EE_{(\m_\wav,\piE),\mathfrak{A}}\brac{N_{H+1}(s_{i^\star},a_{1}) },
    \#
    where $N_h(s,a)\defeq\sum_{t=1}^N\indic{(h_t,s_t,a_t)=(h,s,a)}$ for any given $(h,s,a)\in [2H+2]\times\cS\times \cA$ and the last inequality comes from \citet[Lemma E.4]{metelli2021provably}. Combining \Eqref{eq:offline_lb_main_2} and \eqref{eq:offline_lb_main_100}, we have 
    \$
    \delta\geq 1-\frac{1}{\log(|\barw|)}\paren{\frac{1}{|\barw|}\sum_{v\in \barw}2(\epsilon')^2\EE_{(\m_\wav,\piE),\mathfrak{A}}\brac{N_{h_a}(s_{i^\star},a_{1}) }-\log 2}
    \$
    for any $\bfw$.
     It also holds that
    \#\label{eq:offline_lb_main_99}
    \frac{1}{|\barw|}\sum_{v\in \barw}\EE_{(\m_\wav,\piE),\mathfrak{A}}\brac{N_{H+1}(s_{i^\star},a_{1}) }
    \geq
    \frac{(1-\delta)\log|\barw|-\log 2}{2(\epsilon')^2}.
    \#
    \cblue
    Hence, there exists a $\mathbf{w}^{\textrm{hard}}\in \lcW$ such that
    \#
    \EE_{(\m_{\mathbf{w}^{\textrm{hard}}},\piE),\mathfrak{A}}\brac{N_{H+1}(s_{i^\star},a_1)}
    &\geq 
     \frac{(1-\delta)\log|\barw|-\log 2}{2(\epsilon')^2}.
     \#
     By taking $\delta=1/3$, we have 
     \begin{align}
         \EE_{(\m_{\mathbf{w}^{\textrm{hard}}},\piE),\mathfrak{A}}\brac{N_{H+1}(s_{i^\star},a_1)}\geq 
     \frac{(1-\delta)\log|\barw|-\log 2}{2(\epsilon')^2}= 
     \frac{2\log|\barw|-3\log 2}{6(\epsilon')^2}=\Omega\paren{\frac{H^2S}{\epsilon^2}},
     \end{align}
where the last equality follows from $\epsilon'=\frac{256\epsilon}{H}$ and $\log\abs{\lcW}\geq \frac{S}{10}$. By construction of $\pib$, it holds that
     $
     N_{H+1}(s_{i^\star},a_1)\sim \Bin\paren{K,\frac{1}{C^\star K}},
     $
     which implies that
     \[
     \EE_{(\m_{\mathbf{w}^{\textrm{hard}}},\piE),\mathfrak{A}}\brac{N}\geq C^\star K\cdot\Omega\paren{\frac{H^2S}{\epsilon}}=\Omega\paren{\frac{C^\star H^2SK}{\epsilon^2}}=\Omega\paren{\frac{C^\star H^2S\min\set{S,A}}{\epsilon^2}}.
     \]
     \cblack
   \end{proof}

\section{Transfer learning}
\label{appendix:proof for transfer learning}
In this section, we explore the application of IRL in the context of transfer learning. Specifically, we apply the rewards learned by Algorithm~\ref{alg:RLPfull} and Algorithm~\ref{alg:RLEF} to do RL in a \emph{different} environment.

To distinguish different environments, given a transition dynamics $\pp$ and policy $\pi$, we introduce the following notations: $\set{d^{\pp,\pi}_h}_{h\in [H]}$ represents the visitation probability induced by $\pp$ and $\pi$, $d^{\pp,\pi}$ signifies the metric $d^{\pi}$ evaluated on $\pp$, and correspondingly $D_\Theta^{ \pp,\pi}$ denotes the metric $D_\Theta^{\pi}$ evaluated on $\pp$.

\subsection{Transfer learning between IRL problems}

We introduce the transfer learning setting outlined in \citet{metelli2021provably}, where they consider two IRL problems: $(\cM,\piE)$ (the source IRL problem), $(\cM',\paren{\pi'}^{\sf E})$ (the target IRL problem). Here, $\cM$, $\cM'$ share the same state space and action space, but different dynamics. Suppose that we can learn the source IRL problem and obtain a solution $r$. However, $r$ is not necessarily a solution for $(\cM',\paren{\pi'}^E)$, hence, in order to facilitate the transfer learning, we enforce the following assumption.
\begin{assumption}\label{ass:trans_met}
   If $r$ represents a solution to the source IRL problem $(\cM,\piE)$, it also stands as a solution to the target IRL problem $(\cM',\paren{\pi'}^E)$.
\end{assumption}
Assumption~\ref{ass:trans_met} is also supposed in \citet{metelli2021provably}.
We remark that in numerous practical scenarios, Assumption~\ref{ass:trans_met} may not be precisely met, but could be approximated: when the two IRL problems are very close\footnote{Here, we say $(\cM,\piE)$ and $(\cM',\paren{\pi'}^E)$ are very close if the transitions of the two IRL problems are close under certain metric.} to each other, the solutions to the two IRL problems exhibit a high degree of similarity..

\subsection{Transfer learning between two \MDPR{}s}
In this section, we consider a more general setting, where we focus solely on a source IRL problem and a target \MDPR{}.

We consider two \MDPR{}s $\cM=(\cS,\cA, H,\pp)$ (source \MDPR), $\cM'=(\cS,\cA, H,\pp')$ (the target \MDPR{}), which share the same state space and action space, but different dynamics, and an expert policy $\piE$.  Let $\sRR$ be the ground truth reward mapping of the IRL problem $(\cM,\piE)$ and $\hRR$ be the estimated reward mapping learned from $(\cM,\piE)$. In  this setting, we evaluate $\hRR$ in $\cM'$. 

As we see in Section~\ref{sec:inro}, Inverse reinforcement learning (IRL) and behavioral cloning (BC) are highly related.  As mentioned in \cite{metelli2021provably}, transfer learning makes IRL more powerful than BC, and a lot of literature has used IRL to do transfer learning \citep{syed2007game,metelli2021provably,abbeel2004apprenticeship,fu2017learning,levine2011nonlinear}.

Inspired by the single policy concentrability of policies, we propose the following transferability assumption.
\begin{definition}[Weak transferability]\label{def:week-Transferability}
Given transitions $(\pp, \pp')$, and policies $(\pi, \pi')$, we say $(\pp', \pi')$ is $C^{\msf{wtran}}$-weakly transferable from $(\pp, \pi)$ if it holds that
    \[
    \sup_{s,a} \frac{d_h^{\pp',\pi'}(s,a)}{d_h^{\pp, \pi}(s,a)}\leq \wCtran. 
    \]
\end{definition}
\begin{definition}[Transferability]\label{def:Transferability}
Given source and target transitions $\pp$, $\pp'$, and target policy $\pi'$, we say $\pi'$ is $C^{\msf{tran}}$-transferable from $\pp$ to $\pp'$ if it holds that
    \[
    \inf_{\pi}\sup_{s,a} \frac{d_h^{\pp',\pi'}(s,a)}{d_h^{\pp, \pi}(s,a)}\leq \Ctran.
    \]
\end{definition}

 We remark that given a policy $\pi$ and a dynamics $(\pp,\pp')$, transferability measures how hard one can learn the states 
$\pi'$ frequently goes to in $\pp$ in a different environment $\pp'$ while given a policy pair $(\pi,\pi')$ and a dynamics pair $(\pp,\pp')$, weak-transferability measures how hard one learn the states $\pi$ frequently visits in $\pp$ via policy  $\pi'$ in $\pp'$. Without transferability, we can't obtain information on the policy of interest in the target MDP, which makes transfer learning hard to perform.

\subsection{Theoretical guarantee}
We then present the main theorems in this section.
\begin{theorem}[Transfer learning in the offline setting]\label{thm:trans_offline}
Suppose $(\pp',\pival)$ is $\wCtran$-weakly transferable from $(\pp,\pib)$ (Definition~\ref{def:week-Transferability}).
In addition, we assume $\piE$ is well-posed (Definition~\ref{def:well_pose}) when we receive feedback \textrm{in option} $1$. 
Then for both options, with probability at least $1-\delta$, \RLP{} (Algorithm~\ref{alg:RLPfull}) outputs a reward mapping $\hRR$ such that 
\begin{align*}
    D^{\pp',\pival}_{\Theta}\paren{\sRR,\hRR}\leq \epsilon, \brac{\hRR(V,A)}_{h}(s,a)\leq \brac{\sRR(V,A)}_{h}(s,a)
\end{align*}
for all $(V,A)\in \para$ and $(h,s,a)\in [H]\times \cS\times \cA$, as long as the number of episodes
\begin{align*}
  K\geq \widetilde{\cO}\paren{\frac{H^4S\wCtran A\cns}{\epsilon^2}+\frac{ H^2S\wCtran A\eta}{\epsilon}}. 
\end{align*}
Above, $\cns\defeq \cn$, $\eta\defeq \Delta^{-1}\indic{{\rm option}~1}$, and $\tO(\cdot)$ hides ${\rm polylog}\paren{H,S,A,1/\delta}$ factors.
\end{theorem}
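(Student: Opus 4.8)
The plan is to re-run the argument behind Theorem~\ref{thm:offline_main} essentially verbatim, changing only the distribution against which rewards are averaged: the \emph{source} evaluation occupancy $d^{\pp,\pival}_h$ is replaced by the \emph{target} occupancy $d^{\pp',\pival}_h$, and the single-policy concentrability coefficient $C^\star$ is replaced by the weak-transferability coefficient $\wCtran$ of Definition~\ref{def:week-Transferability}. The conceptual point that makes this work is a clean decoupling: the output $\hRR$, the bonus $b^\theta_h$, the empirical transition $\widehat{\pp}_h$ and the empirical expert policy $\widehat{\pi}^{\msf{E}}$ are all built from the \emph{source} dataset, so the high-probability concentration event of Lemma~\ref{lem:concentration_1}, the pointwise reward-error bound \eqref{eq:offline_decomposition_bound_r}, and the monotonicity conclusion all hold with no modification. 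The target environment $\pp'$ enters the analysis only through the occupancy measure used to average these quantities.

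First I would dispatch monotonicity. The inequality $[\hRR(V,A)]_h(s,a)\le[\sRR(V,A)]_h(s,a)$ compares two \emph{source} reward mappings pointwise and does not reference $\pp'$ at all; it is exactly the pessimism conclusion of Theorem~\ref{thm:offline_main}. On the event of Lemma~\ref{lem:concentration_1} one has $b^\theta_h(s,a)\ge\abs{[(\pp_h-\widehat{\pp}_h)V_{h+1}](s,a)}$ and $\supp(\widehat{\pi}^{\msf{E}}_h(\cdot|s))\subseteq\supp(\piE_h(\cdot|s))$, which forces both correction terms in $[\hRR(V,A)]_h-[\sRR(V,A)]_h$ to be nonpositive. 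Nothing changes here.

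For the metric bound I would run the decomposition of Lemma~\ref{lem:offine_performance_decomposition} in the target MDP. Writing $r^\theta=\sRR(V,A)$ and $\widehat{r}^\theta=\hRR(V,A)$, the difference $V^{\pival}_h(\cdot;r^\theta,\pp')-V^{\pival}_h(\cdot;\widehat{r}^\theta,\pp')$ telescopes into the expected cumulative reward gap along $(\pp',\pival)$, giving
\[
d^{\pp',\pival}\paren{r^\theta,\widehat{r}^\theta}\le\sum_{h\in[H]}\EE_{(s,a)\sim d^{\pp',\pival}_h}\abs{r^\theta_h(s,a)-\widehat{r}^\theta_h(s,a)}.
\]
The source pointwise bound \eqref{eq:offline_decomposition_bound_r} then replaces the integrand by $2H\cdot\indic{d^{\pp,\pib}_h(s,a)<C_2\eta\iota/K,\,a\in\supp(\piE_h(\cdot|s))}+2b^\theta_h(s,a)$. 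The indicator (burn-in) term is controlled by applying Definition~\ref{def:week-Transferability} to pull out one factor $\wCtran$ and then using $\sum_{s,a}d^{\pp,\pib}_h(s,a)\indic{d^{\pp,\pib}_h(s,a)<C_2\eta\iota/K}\le SA\cdot C_2\eta\iota/K$, which yields a contribution of order $\tO(H^2SA\wCtran\eta/K)$, matching the burn-in term of the theorem.

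The main obstacle is the bonus term $\sup_\theta\sum_h\EE_{(s,a)\sim d^{\pp',\pival}_h}[b^\theta_h]$, where the target transferability must enter as $\sqrt{\wCtran}$ rather than the crude $\wCtran$. The plan is to reproduce the term-by-term analysis $(\mathrm{I.a})$--$(\mathrm{I.d})$ from the proof of Theorem~\ref{thm:offline_main}, but, when invoking concentration (ii) of Lemma~\ref{lem:concentration_1} and Cauchy--Schwarz, to split $d^{\pp',\pival}_h=\sqrt{d^{\pp',\pival}_h}\cdot\sqrt{d^{\pp',\pival}_h}$ and bound $\sum_{h,s,a}d^{\pp',\pival}_h(s,a)/d^{\pp,\pib}_h(s,a)\le\wCtran HSA$ (each ratio is at most $\wCtran$, summed over $HSA$ triples), exactly as the factor $C^\star HS$ appeared before. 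The dominant contribution is then $\tO(\sqrt{H^4SA\wCtran\cns/K})$, while the remaining terms are of strictly lower order in $1/K$ and are absorbed once $K$ is large. Combining the two pieces and solving $\tO(\sqrt{H^4SA\wCtran\cns/K}+H^2SA\wCtran\eta/K)\le\epsilon$ for $K$ gives the stated complexity. I note that the appearance of the factor $A$ (in place of the averaged $C^\star HS$ of Theorem~\ref{thm:offline_main}) is intrinsic: weak transferability is a worst-case supremum over $(s,a)$, so the summed density ratio necessarily carries the full $HSA$ count rather than an averaged one.
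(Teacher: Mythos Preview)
Your proposal is correct and mirrors the paper's own proof essentially step for step: the paper likewise observes that the concentration event of Lemma~\ref{lem:concentration_1} and the pointwise reward-error bound \eqref{eq:offline_decomposition_bound_r} are purely source-side and hence unchanged, then replays the decomposition of Lemma~\ref{lem:offine_performance_decomposition} and the $(\mathrm{I.a})$--$(\mathrm{I.d})$ analysis with the single substitution $\sum_{h,s,a} d^{\pp',\pival}_h/d^{\pp,\pib}_h \le \wCtran HSA$ (your equation, their \eqref{eq:transfer_1}) in place of $C^\star HS$. Your remark that the extra factor $A$ is forced by the sup-form of Definition~\ref{def:week-Transferability} is exactly the content of the paper's inequality chain in \eqref{eq:transfer_1}.
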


\begin{theorem}[Transfer learning in the online setting]\label{thm:trans_online}
Suppose $\piE$ is well-posed (Definition~\ref{eq:def_well_pose}) when we receive feedback \textrm{in option $1$}. Let $\sRR$ be the ground truth reward mapping of IRL problem $(\m,\piE)$.
Then for the online setting, for sufficiently small $\epsilon\le H^{-9}(SA)^{-6}$, with probability at least $1-\delta$, \RLE{} (Algorithm~\ref{alg:RLEF}) with $N=\tO(\sqrt{H^9S^7A^7K})$ outputs a reward mapping $\hRR$ such that
\begin{align*}
     &{\sup_{\pival \textrm{is}~\Ctran\text{-}\textrm{transferable from~} \pp~ \textrm{to}~\pp'}}D^{\pp',\pival}_{\Theta}(\sRR,{\hRR})\leq\epsilon
     ,\qquad \brac{\hRR(V,A)}_{h}(s,a)\leq \brac{\sRR(V,A)}_{h}(s,a)
\end{align*}
for all $(V,A)\in \para$ and $(h,s,a)\in [H]\times \cS\times \cA$, as long as the total the number of episodes
\begin{align*}
   K + NH \ge \widetilde{\cO}\paren{\frac{ HS\Ctran A\paren{\Ctran+H^3\cns}}{\epsilon^2}+\frac{ H^2S\Ctran A\eta}{\epsilon}}.
\end{align*}

\end{theorem}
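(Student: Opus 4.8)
The plan is to run \RLE{} exactly as in Theorem~\ref{thm:online_main} on the \emph{source} problem $(\m,\piE)$ (so the exploration subroutine and the choice $N=\tO(\sqrt{H^9S^7A^7K})$ are unchanged), and to reduce the target-evaluated error $D^{\pp',\pival}_{\Theta}(\sRR,\hRR)$ to the source-side quantities already controlled in the proof of Theorem~\ref{thm:online_main}. The monotonicity claim $[\hRR(V,A)]_h(s,a)\le[\sRR(V,A)]_h(s,a)$ is a \emph{pointwise} statement about the two rewards and never references the evaluation transition, so it is inherited verbatim from the pessimism argument (Condition~\ref{ass:frmwork} and the concentration event $\mc{E}$ of Lemma~\ref{lem:concentration_2}); nothing new is required there.

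For the metric bound, fix $\theta=(V,A)\in\para$, write $r^\theta=\sRR(V,A)$ and $\hat r^\theta=\hRR(V,A)$, and fix any $\pival$ that is $\Ctran$-transferable from $\pp$ to $\pp'$. First I would apply the performance decomposition of Lemma~\ref{lem:offine_performance_decomposition} in the \emph{target} MDP: since $r^\theta$ and $\hat r^\theta$ differ only as rewards while $\pival$ and $\pp'$ are common to both value functions,
\[
d^{\pp',\pival}\!\paren{r^\theta,\hat r^\theta}\le\sum_{h\in[H]}\sum_{(s,a)}d^{\pp',\pival}_h(s,a)\,\abs{r^\theta_h(s,a)-\hat r^\theta_h(s,a)}.
\]
On $\mc{E}$ the integrand is bounded by $H\cdot\indic{\text{support mismatch}}+2b^\theta_h(s,a)$, just as in the proof of Theorem~\ref{thm:online_main}. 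The crucial move is to invoke Definition~\ref{def:Transferability}, which supplies a source policy $\pi$ with $d^{\pp',\pival}_h(s,a)\le\Ctran\,d^{\pp,\pi}_h(s,a)$ for all $(h,s,a)$; this converts the target visitation into $\Ctran$ times a source visitation, after which the machinery of Theorem~\ref{thm:online_main} (the split into $\mc{I}_h$ and its complement, Lemma~\ref{lemma:occupancy}, and the concentrability property~\eqref{eq:stopping_rule}, which holds for \emph{every} source policy) applies with $d^{\pp,\pi}_h$ in place of $d^\pi_h$.

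The delicate part, and the step I expect to be the main obstacle, is placing the factor $\Ctran$ so as to recover the stated rate rather than a crude $\Ctran^2$ on every term. For the bonus (variance) term I would keep $\Ctran$ inside a Cauchy--Schwarz: after $d^{\pp',\pival}_h\le\Ctran d^{\pp,\pi}_h\lesssim\Ctran\,\hat d^{\pi}_h$, splitting $d^{\pp',\pival}_h\,b^\theta_h$ as $\sqrt{d^{\pp',\pival}_h\,[\hat\v_hV_{h+1}](s,a)}\cdot\sqrt{d^{\pp',\pival}_h/(\hat N^b_h\vee 1)}$ yields a first factor of order $\sqrt{H^3\cns\iota}$ and a second factor controlled by $\sqrt{\Ctran HSA/K}$ through~\eqref{eq:stopping_rule}, so this contribution is $\lesssim\sqrt{\Ctran H^4SA\cns\iota/K}$ --- only one power of $\Ctran$, matching the $\Ctran H^4SA\cns/\epsilon^2$ term. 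By contrast, the support-mismatch and occupancy-error contributions (built from $\sum_{(s,a)\notin\mc{I}_h}d^{\pp,\pi}_h$ and from the $e^\pi_h$ of Lemma~\ref{lemma:occupancy}) receive $\Ctran$ \emph{linearly}, e.g.\ $\Ctran\sqrt{HSA/K}$ and $\Ctran H^2SA\eta\iota/K$; demanding these be $\le\epsilon$ produces the $\Ctran^2HSA/\epsilon^2$ and the burn-in $\Ctran H^2SA\eta/\epsilon$ terms. Since the source-side estimates are uniform in the choice of $\pi$, taking the supremum over $\theta$ and over all $\Ctran$-transferable $\pival$ is free; collecting the three contributions and re-deriving the admissibility $KH\ge N\ge\tO(\sqrt{H^9S^7A^7K})$ under $\epsilon\le H^{-9}(SA)^{-6}$ exactly as in Theorem~\ref{thm:online_main} then yields the claimed sample complexity $K+NH\ge\tO\paren{HSA\Ctran(\Ctran+H^3\cns)/\epsilon^2+H^2SA\Ctran\eta/\epsilon}$.
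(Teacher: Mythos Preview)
Your overall plan is sound — the reduction via transferability to a source policy $\pi$, the inheritance of monotonicity from pessimism, and the Cauchy--Schwarz trick that keeps only $\sqrt{\Ctran}$ on the leading bonus term are all exactly what the paper does. There is, however, a genuine gap in one step: you write $d^{\pp',\pival}_h\le\Ctran d^{\pp,\pi}_h\lesssim\Ctran\,\hat d^{\pi}_h$, but Lemma~\ref{lemma:occupancy} only gives $d^{\pp,\pi}_h\le 2\hat d^{\pi}_h+2e^{\pi}_h+\xi/(4N)$, not $d^{\pp,\pi}_h\lesssim\hat d^{\pi}_h$ pointwise. Inside your Cauchy--Schwarz second factor this residual $e^\pi_h+\xi/N$ does not vanish: bounding $1/(\hat N^b_h\vee 1)\le H$ on it yields a contribution of order $\sqrt{\Ctran H\sqrt{HSA/K}}$, which after multiplication by the first factor decays only like $K^{-1/4}$ and destroys the stated rate. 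Alternatively, pulling $\Ctran$ out \emph{before} Cauchy--Schwarz so that the residual can be handled by $b^\theta_h\le H$ (as in the proof of Theorem~\ref{thm:online_main}) puts $(\Ctran)^2$ rather than $\Ctran$ on the main $H^4SA\cns/\epsilon^2$ term.

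The paper resolves this by introducing a \emph{second} index set $\bar{\cI}_h\defeq\{(s,a):\hat d^{\pi}_h(s,a)\ge \xi/N+e^{\pi}_h(s,a)\}$ for the source witness policy $\pi$, and decomposes over $\cI_h\cap\bar{\cI}_h$ rather than just $\cI_h$. On this intersection, both $\hat N^b_h\gtrsim K\EE_{\pi'\sim\mu^{\sf b}}[\hat d^{\pi'}_h]$ (from $\cI_h$) and $d^{\pp,\pi}_h\lesssim\hat d^{\pi}_h$ (from $\bar{\cI}_h$) hold, so your Cauchy--Schwarz goes through cleanly with a single $\sqrt{\Ctran}$. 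The complement of $\bar{\cI}_h$ becomes an additional ``off-set'' term (I.b) on which $d^{\pp',\pival}_h\le\Ctran d^{\pp,\pi}_h\lesssim\Ctran(e^\pi_h+\xi/N)$, contributing $\Ctran\sqrt{HSA/K}$ linearly and landing in the $(\Ctran)^2 HSA/\epsilon^2$ part of the rate --- exactly the piece you already anticipated for the support-mismatch/occupancy-error terms. With this one additional index set your argument matches the paper's.
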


\cblue
\paragraph{Application: Performing RL algorithms in different environments}

With Theorem~\ref{thm:trans_offline} and Theorem~\ref{thm:trans_online} in place, as a concrete application, we consider utilizing rewards learned by IRL algorithms to execute RL algorithms in a different environment ($\cM'$). The following two corollaries provide guarantees for the performance of learned rewards in executing RL algorithms in the offline and the online setting, respectively. Both of these corollaries are direct consequences of Proposition~\ref{prop:mon_use_reward}.

\begin{corollary}[Performing RL algorithms with learned rewards in the offline setting]\label{coro:perform_RL_offline}
Fix $\theta=(V,A)\in \para$, 
let $r^\theta\defeq \sRR(V,A)$ and $\widehat{r}^\theta\defeq \hRR(V,A)$, where $\hRR$ are recovered reward mapping outputted by Algorithm~\ref{alg:RLPfull}.
Suppose that there exists a policy $\pi$ such that  $\pi$ is $\bar{\epsilon}$-optimal in MDP $\cM'\cup r^{\theta}$ and 
$(\pp',\pi)$ is $\wCtran$-weakly transferable from $(\pp,\pib)$ 
(Definition~\ref{def:week-Transferability}). Let $\widehat{\pi}$ be an $\epsilon'$-optimal policy in $\cM'\cup \widehat{r}^\theta$ (learned by some RL algorithms with $\widehat{r}^\theta$).
Under the same assumption of Theorem~\ref{thm:trans_offline}
, for both options, we have $V_1^{\star}(s_1;\cM'\cup r^\theta)-V_1^{\widehat{\pi}}(s_1;\cM'\cup r^\theta)\leq \epsilon+\epsilon'+2\bar{\epsilon}$, 
as long as the number of episodes
\begin{align*}
  K\geq \widetilde{\cO}\paren{\frac{ H^4S\wCtran A\cns}{\epsilon^2}+\frac{H^2S\wCtran A\eta}{\epsilon}}.
\end{align*}
Above, $\cns\defeq \cn$, $\eta\defeq \Delta^{-1}\indic{{\rm option}~1}$, and $\tO(\cdot)$ hides ${\rm polylog}\paren{H,S,A,1/\delta}$ factors.

\end{corollary}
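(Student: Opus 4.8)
The plan is to apply Proposition~\ref{prop:mon_use_reward} in the \emph{target} environment $\cM'$, with the role of $\m$ played by $\cM'$, the true reward taken to be $r^\theta$, the estimated reward taken to be $\widehat{r}^\theta$, and the near-optimal policy taken to be exactly the policy $\pi$ supplied in the hypothesis (which is $\bar{\epsilon}$-optimal in $\cM'\cup r^\theta$). Once the two hypotheses of Proposition~\ref{prop:mon_use_reward} are verified for this instantiation, its conclusion $V_1^\star(s_1;\cM'\cup r^\theta)-V_1^{\widehat{\pi}}(s_1;\cM'\cup r^\theta)\le \epsilon+\epsilon'+2\bar{\epsilon}$ is precisely the claim, since $\widehat{\pi}$ is assumed $\epsilon'$-optimal in $\cM'\cup \widehat{r}^\theta$. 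The bound on $K$ carries over verbatim from Theorem~\ref{thm:trans_offline}, so the only real work is checking the two hypotheses.

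For the monotonicity hypothesis, I would invoke the monotonicity guarantee of Theorem~\ref{thm:trans_offline}, which gives $\brac{\hRR(V,A)}_h(s,a)\le\brac{\sRR(V,A)}_h(s,a)$, i.e.\ $\widehat{r}^\theta_h(s,a)\le r^\theta_h(s,a)$, for all $(h,s,a)$. This is a pointwise inequality between two fixed reward functions and is therefore independent of which environment one evaluates in; in particular it is exactly the monotonicity condition required by Proposition~\ref{prop:mon_use_reward} applied to $\cM'$.

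For the closeness hypothesis, the key step is to instantiate Theorem~\ref{thm:trans_offline} with the evaluation policy $\pival\defeq\pi$. The corollary assumes that $(\pp',\pi)$ is $\wCtran$-weakly transferable from $(\pp,\pib)$ (Definition~\ref{def:week-Transferability}), which is exactly the condition under which Theorem~\ref{thm:trans_offline} certifies $D^{\pp',\pi}_{\Theta}(\sRR,\hRR)\le\epsilon$ for $K$ as large as stated. Since $D^{\pp',\pi}_{\Theta}$ is a supremum of $d^{\pp',\pi}$ over all $(V,A)\in\para$, restricting to our fixed $\theta=(V,A)$ yields $d^{\pp',\pi}(r^\theta,\widehat{r}^\theta)=d^{\pp',\pi}(\sRR(V,A),\hRR(V,A))\le D^{\pp',\pi}_{\Theta}(\sRR,\hRR)\le\epsilon$, which is exactly the base-metric closeness between $r^\theta$ and $\widehat{r}^\theta$ evaluated in $\cM'$ along the near-optimal policy $\pi$.

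With both hypotheses in hand, Proposition~\ref{prop:mon_use_reward} (applied to $\cM'$) closes the argument. The one conceptual subtlety worth flagging is the interplay of two environments: the rewards $r^\theta,\widehat{r}^\theta$ are produced from the \emph{source} problem $(\cM,\piE)$, yet both monotonicity and the planning guarantee are asserted in the \emph{target} $\cM'$. The resolution is that monotonicity transfers trivially because it is a statement about reward functions alone, while closeness measured in $\cM'$ is supplied precisely by the transfer guarantee of Theorem~\ref{thm:trans_offline} under the weak-transferability assumption on $(\pp',\pi)$; no additional estimation or concentration beyond Theorem~\ref{thm:trans_offline} is needed. Thus the main (and only mild) obstacle is simply aligning the evaluation policy $\pi$ of the corollary with the $\pival$ in Theorem~\ref{thm:trans_offline} and confirming that its transferability precondition matches the corollary's hypothesis.
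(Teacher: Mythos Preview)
Your proposal is correct and matches the paper's approach: the paper states that this corollary is a direct consequence of Proposition~\ref{prop:mon_use_reward}, and you have correctly identified that one applies the proposition in the target environment $\cM'$ with the two hypotheses (closeness in $d^{\pp',\pi}$ and pointwise monotonicity) supplied by Theorem~\ref{thm:trans_offline} instantiated with $\pival=\pi$.
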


\begin{corollary}[Performing RL algorithms with learned rewards in the online setting]\label{coro:performing RL online}
Fix $\theta=(V,A)\in \para$, 
let $r^\theta\defeq \sRR(V,A)$ and $\widehat{r}^\theta\defeq \hRR(V,A)$, where $\hRR$ are recovered reward mapping outputted by Algorithm~\ref{alg:RLEF} with $N=\tO(\sqrt{H^9S^7A^7K})$.Suppose that there exists a policy $\pi$ such that  $\pi$ is $\bar{\epsilon}$-optimal in MDP $\cM'\cup r^{\theta}$ and $\pi$ is
$\Ctran$-transferable from $\pp$ to $\pp'$ (Definition~\ref{def:Transferability}). Let $\widehat{\pi}$ be an $\epsilon'$-optimal policy in $\cM'\cup \widehat{r}^\theta$ (learned by some RL algorithms with $\widehat{r}^\theta$), then for the online setting, for sufficiently small $\epsilon\le H^{-9}(SA)^{-6}$, we have $V_1^{\star}(s_1;\cM'\cup r^\theta)-V_1^{\widehat{\pi}}(s_1;\cM'\cup r^\theta)\leq \epsilon+\epsilon'+2\bar{\epsilon}$, as long as the number of episodes
\begin{align*}
   K + NH \geq\widetilde{\cO}\paren{\frac{ HS\Ctran A\paren{\Ctran+H^3\cns}}{\epsilon^2}+\frac{ H^2S\Ctran A\eta}{\epsilon}}.
\end{align*}
\end{corollary}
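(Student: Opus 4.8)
The plan is to obtain this corollary as a direct composition of the transfer-learning guarantee (Theorem~\ref{thm:trans_online}) with the planning-with-estimated-reward guarantee (Proposition~\ref{prop:mon_use_reward}), where the latter is instantiated with the \emph{target} \MDPR{} $\cM'$ playing the role of $\m$. First I would invoke Theorem~\ref{thm:trans_online} with $N=\tO(\sqrt{H^9S^7A^7K})$ under the stated total sample budget for $K+NH$. On its high-probability event this simultaneously yields (i) the monotonicity condition $[\hRR(V,A)]_h(s,a)\le[\sRR(V,A)]_h(s,a)$ for all $(V,A)\in\para$ and all $(h,s,a)$, and (ii) the uniform bound $\sup_{\pival~\Ctran\text{-transferable from }\pp\text{ to }\pp'} D^{\pp',\pival}_{\Theta}(\sRR,\hRR)\le\epsilon$.

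Next I would specialize these two facts to the fixed parameter $\theta=(V,A)$ and to the specific policy $\pi$ supplied in the hypothesis. The monotonicity at $\theta$ reads $\widehat{r}^\theta_h(s,a)\le r^\theta_h(s,a)$ for all $(h,s,a)$, which is exactly condition (b) of Proposition~\ref{prop:mon_use_reward} for the reward pair $(r^\theta,\widehat{r}^\theta)$. For condition (a), since $\pi$ is assumed $\Ctran$-transferable from $\pp$ to $\pp'$, fact (ii) gives $D^{\pp',\pi}_{\Theta}(\sRR,\hRR)\le\epsilon$; and because $D^{\pp',\pi}_{\Theta}(\sRR,\hRR)=\sup_{(V,A)\in\Theta}d^{\pp',\pi}(\sRR(V,A),\hRR(V,A))\ge d^{\pp',\pi}(r^\theta,\widehat{r}^\theta)$, I obtain $d^{\pp',\pi}(r^\theta,\widehat{r}^\theta)\le\epsilon$. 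Since the same $\pi$ is also assumed $\bar\epsilon$-optimal in $\cM'\cup r^\theta$, this is precisely condition (a) of Proposition~\ref{prop:mon_use_reward} evaluated on the transition $\pp'$.

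Finally I would apply Proposition~\ref{prop:mon_use_reward} verbatim with $\m=\cM'$, the reward pair $(r^\theta,\widehat{r}^\theta)$, the near-optimal reference policy $\pi$, and the $\epsilon'$-optimal policy $\widehat{\pi}$ for $\cM'\cup\widehat{r}^\theta$. Its conclusion is exactly $V_1^{\star}(s_1;\cM'\cup r^\theta)-V_1^{\widehat{\pi}}(s_1;\cM'\cup r^\theta)\le\epsilon+\epsilon'+2\bar\epsilon$, while the sample complexity is inherited unchanged from Theorem~\ref{thm:trans_online}.

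The argument is essentially bookkeeping, so there is no substantial obstacle; the only point requiring care is that every value function, the base metric $d^{\pp',\pi}$, and the optimality gaps must be evaluated on the \emph{target} transition $\pp'$ rather than the source $\pp$, which is why the transfer-learning bound (ii) is stated in the $\pp'$-indexed metric in the first place. In particular, one must verify that the single policy $\pi$ witnessing $\bar\epsilon$-optimality in $\cM'\cup r^\theta$ is the \emph{same} policy witnessing $\Ctran$-transferability, so that conditions (a) and (b) of Proposition~\ref{prop:mon_use_reward} are met by one common reference policy—this is guaranteed by the hypothesis of the corollary.
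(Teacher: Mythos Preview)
Your proposal is correct and matches the paper's approach: the paper explicitly states that this corollary is a direct consequence of Proposition~\ref{prop:mon_use_reward}, and you have correctly filled in the details by first invoking Theorem~\ref{thm:trans_online} to obtain monotonicity and the $D^{\pp',\pi}_\Theta$ bound, specializing the latter to the transferable policy $\pi$ to get $d^{\pp',\pi}(r^\theta,\widehat{r}^\theta)\le\epsilon$, and then applying Proposition~\ref{prop:mon_use_reward} with $\m=\cM'$. Your care in noting that all value functions and metrics must be evaluated on the target transition $\pp'$, and that the same $\pi$ witnesses both $\bar\epsilon$-optimality and $\Ctran$-transferability, is exactly the bookkeeping needed.
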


\cblack

\paragraph{Application: learning IRL problems by transfer learning}
We return to the topic of transfer learning between IRL problems.
We note that our findings related to transfer learning between \MDPR{}s can also be employed in the context of transfer learning between IRL problems. As the illustrated in Theorem~\ref{thm:trans_offline} and Theorem~\ref{thm:trans_online}, we can efficiently learn a ${\hRR}$ such that the distance $\Dpivalt(\hRR,\sRR)\leq 2\epsilon$, where $\sRR$ is the ground truth reward mapping of $(\cM,\piE)$. By Assumption~\ref{ass:trans_met}, the rewards induced by $\sRR$ are solutions of $\paren{\cM',\paren{\pi'}^{\sf E}}$, hence the rewards induced by $\hRR$ also approximate the solutions of $\paren{\cM',\paren{\pi'}^{\sf E}}$.

\subsection{Proof of Theorem~\ref{thm:trans_offline}}
Note that under the same assumptions in Theorem~\ref{thm:trans_offline}, the concentration event $\cE$ defined in Lemma~\ref{lem:concentration_1} still holds with $1-\delta$.
By the week-transferablity of $(\pival, \pib)$, we have
\begin{align}
\sum_{h\in [H]}\sum_{(s,a)\in \cS\times \cA}\frac{d^{\pp',\pival}_h(s,a)}{d^{\pp,\pib}_h(s,a)}
&\leq \wCtran\sum_{h\in [H]}\sum_{(s,a)\in\cS\times \cA}\indic{d^{\pp',\pival}_h(s,a)\neq 0}\notag\\
&\leq \wCtran\sum_{h\in [H]}\sum_{(s,a)\in\cS\times \cA}\indic{a\in \pival_h(\cdot|s)}
\leq \wCtran HSA.\label{eq:transfer_1}
\end{align}
For any $\theta=(V,A)\in \para$, define $r^\theta=\sRR(V,A)$, and $\widehat{r}^\theta=\hRR(V,A)$.
With \Eqref{eq:transfer_1} at hand, we can  repeat the proof of Lemma~\ref{lem:offine_performance_decomposition}, thereby obtaining that
\begin{align}
    d^{\pp',\pival}\paren{r^\theta,\widehat{r}^\theta}\lesssim \frac{\wCtran H^2SA\eta\iota}{K}+\underbrace{\sum_{h\in[H]}\sum_{(s,a)\in \s\times\a}d^{\pp',\pival}_{h}(s,a)b^\theta_h(s,a)}_{\mrm{(I)}},\label{eq:tran_off_II}
\end{align}
holds on the event $\cE$.
where $\eta$, $b^\theta_h(s,a)$ are specified in Lemma~\ref{lem:concentration_1}.

Furthermore, similar to \Eqref{offline_main_term_I_decom}, and through the application of the triangle inequality, we can decompose $\sum_{(s,a)\in \s\times\a}d^{\pp',\pival}_{h}(s,a)b^\theta_h(s,a)$ as follows:
\begin{align}
        (\mathrm{I})&=\sum_{h\in[H]}\sum_{(s,a)\in \s\times\a}d^{\pp',\pival}_{h}(s,a)b^\theta_h(s,a)\notag\\
        &\lesssim \sum_{h\in [H]}\sum_{(s,a)\in \s\times\a}d^{\pival}_{h}(s,a)\cdot \Bigg\{\sqrt{\frac{\cn\iota}{{N}_h^b(s,a)\vee 1}\brac{\widehat{\v}_h{V}_{h+1}}(s,a)}+
 \frac{H\cn\iota}{{N}_h^b(s,a)\vee 1}\Bigg\}\notag\\
 &+\sum_{h\in [H]}\sum_{(s,a)\in \s\times\a}d^{\pival}_{h}(s,a)\cdot\frac{\epsilon}{H}\paren{1+\sqrt{\frac{\cn\iota}{{N}_h^b(s,a)\vee 1}}}\notag\\
        &\leq \underbrace{\sum_{h\in [H]}\sum_{(s,a)\in \s\times\a}d^{\pp',\pival}_{h}(s,a)\cdot \sqrt{\frac{\cn\iota}{{N}_h^b(s,a)\vee 1}\brac{{\v}_h{V}_{h+1}}(s,a)}}_{(\mathrm{I.a})}\notag\\
        &+\underbrace{\sum_{h\in [H]}\sum_{(s,a)\in \s\times\a}d^{\pp',\pival}_{h}(s,a)\cdot \sqrt{\frac{\cn\iota}{{N}_h^b(s,a)\vee 1}\brac{\paren{\widehat{\v}_h-\v_h}{V}_{h+1}}(s,a)}}_{(\mathrm{I.b})}\notag\\
        &+
 \underbrace{\sum_{h\in [H]}\sum_{(s,a)\in \s\times\a}d^{\pp',\pival}_{h}(s,a)\cdot\frac{H\cn\iota}{{N}_h^b(s,a)\vee 1}}_{(\mathrm{I.c})}\notag\\
 &
+\underbrace{\epsilon\cdot\sum_{h\in [H]}\sum_{(s,a)\in \s\times\a}d^{\pp',\pival}_{h}(s,a)\cdot\paren{\frac{1}{H}+\sqrt{\frac{\cn\iota}{H^2\cdot{N}_h^b(s,a)\vee 1}}}}_{(\mathrm{I.d})}.
   \label{tran_off_main_term_I_decom}   
   \end{align}
   Thanks to \Eqref{eq:transfer_1}, we can employ a similar argument as in the proof of \Eqref{eq:offline_main_term_I_a}, \Eqref{eq:offline_main_term_I_b}, \Eqref{eq:offline_main_term_I_c}, and \Eqref{eq:offline_main_term_I_d}, which allows us to deduce that
   \begin{align}
           &(\mrm{I.a})\lesssim \sqrt{\frac{\wCtran H^4SA\wellsupp\cn\iota}{K}},\notag\\
           &(\mrm{I.b})\lesssim\sqrt{\frac{\wCtran H^2SA\cn}{K}}+{\frac{\wCtran H^3SA\cn\iota^{5/2}}{K}},\notag\\
           &(\mrm{I.c})\lesssim {\frac{\wCtran H^2SA\cn\iota}{K}},
\epsilon\cdot(1+\sqrt{\frac{\wCtran  SA\cn\iota}{K}}),\label{tran_off_main_term_I_I}
   \end{align}
   Combining \Eqref{eq:tran_off_II}, \Eqref{tran_off_main_term_I_decom}
and \Eqref{tran_off_main_term_I_I}, we conclude that
\begin{align}
D^{\pp',\pival}_{\para}\paren{\sRR,\hRR}=\sup_{\theta\in \para}d^{\pp',\pival}\paren{r^\theta,\widehat{r}^\theta}&\lesssim \frac{\wCtran H^2SA\eta\iota}{K}+\sqrt{\frac{\wCtran H^4SA\wellsupp\cn\iota}{K}}\notag\\
&+{\frac{\wCtran H^3SA\wellsupp\cn\iota^{5/2}}{K}}+\epsilon.
\end{align}
The right-hand-side is upper bounded by $2\eps$ as long as
   \[
   K\geq\widetilde{\cO}\paren{\frac{\wCtran H^4SA\cn}{\epsilon^2}+\frac{\wCtran H^2SA\eta}{\epsilon}}.
   \]
 Here $poly \log\paren{H,S,A,1/\delta}$ are omitted.

\subsection{Proof of Theorem~\ref{thm:trans_online}}
Under the assumptions in Theorem~\ref{thm:trans_online}, the concentration event $\cE$ defined in Lemma~\ref{lem:concentration_2} still holds with $1-\delta$.
\def \bih{\bar{\cI}_h}
\def \ih{\cI_h}
Fix $\pi$ such that $\pi$ satisfies $\Ctran$-concentrability from $\pp$ to $\pp'$.
We define
\[
\bar{\cI}_h\defeq \set{(s,a)\in \cS\times \cA\mid \widehat{d}^{\pp',\pi}_h(s,a)\geq \frac{\xi}{N}+e^{\pi}_h(s,a)},
\]
for all $h\in [H]$.
Similar to \Eqref{eq:off_upper_bound_6}, we have the following decomposition:
\begin{align}\label{eq:tran_on_1}
    &d^{\pp',\pi}\paren{r_h^{\theta},\widehat{r}_h^\theta}\leq \sum_{(h,s,a)\in \hsa}d^{\pp',\pi}_h(s,a)\cdot\abs{r_h^{\theta}(s,a)-\widehat{r}_h^\theta(s,a)}\notag\\
    &\leq \underbrace{\sum_{h\in[H]}\sum_{(s,a)\notin \ih\cup\bih}d^{\pp',\pi}_h(s,a)\cdot \abs{r^{\theta}_h(s,a)-\widehat{r}^\theta_h(s,a)}}_{{(\mathrm{I})}}
    +\underbrace{\sum_{h\in[H]}\sum_{(s,a)\in \ih\cup\bih}d^{\pp',\pi}_h(s,a)\cdot \abs{r^{\theta}(s,a)-\widehat{r}_h^\theta(s,a)}}_{(\mathrm{II})},
\end{align}
where set $\cI_h$ is defined in \Eqref{eq:def_ICH}.

We further decompose the term (I) as follows:
\begin{align}
    \mrm{(I)}\leq \underbrace{\sum_{h\in[H]}\sum_{(s,a)\notin \ih}d^{\pp',\pi}_h(s,a)\cdot \abs{r^{\theta}_h(s,a)-\widehat{r}^\theta_h(s,a)}}_{\mrm{(I.a)}}+\underbrace{\sum_{h\in[H]}\sum_{(s,a)\notin \bih}d^{\pp',\pi}_h(s,a)\cdot \abs{r^{\theta}_h(s,a)-\widehat{r}^\theta_h(s,a)}}_{\mrm{(I.b)}}.
\end{align}
By the definition of transferability, there exists a policy $\pi'$ such that
\[
d^{\pp',\pi}_h(s,a)\leq 2\Ctran d^{\pp,\pi'}_h(s,a),
\]
for any $(h,s,a)\in \hsa$.
For the term (I.a), we have
\begin{align}
    (\mrm{I.a})=\sum_{h\in[H]}\sum_{(s,a)\notin \ih\cup\bih}d^{\pp',\pi}_h(s,a)\cdot \abs{r^{\theta}_h(s,a)-\widehat{r}^\theta_h(s,a)}\leq 2\Ctran\sum_{(s,a)\notin \ih\cup\bih}d^{\pp,\pi'}_h(s,a)\cdot \abs{r^{\theta}_h(s,a)-\widehat{r}^\theta_h(s,a)}
\end{align}
Similar to \Eqref{eq:off_upper_bound_7}, on the event $\cE$, we have
\begin{align*}
 \sum_{h\in[H]}\sum_{(s,a)\notin \ih\cup\bih}d^{\pp,\pi'}_h(s,a)\cdot \abs{r^{\theta}_h(s,a)-\widehat{r}^\theta_h(s,a)} 
& \leq \sum_{h\in[H]}\sum_{(s,a)\notin \ih}d^{\pp,\pi'}_h(s,a)\cdot \abs{r^{\theta}_h(s,a)-\widehat{r}^\theta_h(s,a)}\notag\\
 &\lesssim \frac{\xi H^2SA}{N}+{\frac{H^2SA\eta}{K}}+\sqrt{\frac{HSA}{K}},
\end{align*}
which allows us to bound the term (I.a) as follows:
\begin{align}
    (\mrm{I.a})\lesssim  \frac{\Ctran \xi H^2SA}{N}+{\frac{\Ctran H^2SA\eta}{K}}+\Ctran\sqrt{\frac{HSA}{K}}.\label{eq:trans_offlne_I_a}
\end{align}
For the term (I.b), on the event $\cE$, we have
\begin{align}
   &{(\mathrm{I.b})}=\sum_{h\in[H]}\sum_{(s,a)\notin \bih}d^{\pp',\pi}_h(s,a)\cdot \abs{r^{\theta}(s,a)-\widehat{r}^\theta(s,a)}\notag\\
    &=\sum_{h\in[H]}\sum_{(s,a)\notin \bih}d^{\pp',\pi}_h(s,a)\cdot\Bigg|-A_{h}(s,a)\paren{\indic{a\in \supp\paren{\piE_h(\cdot|s)}}-\indic{a\in \supp\paren{\widehat{\pi}^{\sf E}_h(\cdot|s)}}}\notag\\
    &-\brac{\paren{\p_h-\widehat{\p}_h}V_{h+1}}(s,a)
    (s,a)-b^\theta_h(s,a)\Bigg|\notag\\
    &\leq  \sum_{h\in[H]}\sum_{(s,a)\notin \bih}d^{\pp',\pi}_h(s,a)\cdot\Bigg\{\abs{{A_{h}(s,a)\cdot\paren{\indic{a\in \supp\paren{\widehat{\pi}^{\sf E}_h(\cdot|s)}}-\cdot\indic{a\in\supp\paren{\piE_h(\cdot|s)}}}}}\notag\\
&+\abs{\brac{(\p_h-\widehat{\p}_h)V_{h+1}}(s,a)}+b^\theta_h(s,a)\Bigg\}\tag{by triangle inequality}\\
&\overset{\text{(i)}}{\lesssim}H\cdot \sum_{h\in[H]}\sum_{(s,a)\notin \bih}d^{\pp',\pi}_h(s,a)\notag\\
    &\overset{\text{(ii)}}{\leq} 2\Ctran H\cdot \sum_{h\in[H]}\sum_{(s,a)\notin \bih}\paren{\widehat{d}^{\pp,\pi'}_h(s,a)+e^{\pi'}_h(s,a)+\frac{\xi}{N}}\notag\\
    &\overset{\text{(iii)}}{\lesssim} \Ctran H\cdot \sum_{h\in[H]}\sum_{(s,a)\notin \bih}\paren{e^{\pi'}_h(s,a)+\frac{\xi}{N}}\notag\\
    &\lesssim \frac{\Ctran\xi H^2SA}{N}+\Ctran\sqrt{\frac{HSA}{K}},\label{eq:trans_offlne_I_b}
\end{align}
 where (i) is by $\|A_h\|_{\infty}$, $\|V_{h+1}\|_{\infty},b^\theta_h(s,a)\leq H$, (ii) comes from \Eqref{eq:d-error} and the concentration event
 $\cE(ii)$, and (iii) follows from the definition of $\bih$.
 \def \bcih{\ih\cup\bih}

 Combining \Eqref{eq:trans_offlne_I_a} and \Eqref{eq:trans_offlne_I_b}, we can conclude that
 \begin{align}
     \mrm{(I)}\lesssim \frac{\Ctran\xi H^2SA}{N}+{\frac{\Ctran H^2SA\eta}{K}}+\Ctran\sqrt{\frac{HSA}{K}}.\label{eq:trans_online_term_I}
 \end{align}

 For the term (II), following a similar approach as in \Eqref{eq:off_upper_bound_8}, we have
 \begin{align}
     \mrm{(II)}=&\sum_{h\in[H]}\sum_{(s,a)\in \bcih}{{d}_{h}^{\pp',\pi}(s, a)\cdot b^{\theta}_h(s,a)}\notag\\
     &=\sum_{h\in[H]}\sum_{(s,a)\in \bcih}{d}_{h}^{\pp',\pi}(s, a)\cdot\min\Bigg\{\sqrt{\frac{\cn\iota}{\widehat{N}_h^b(s,a)\vee 1}\brac{\widehat{\v}_h{V}_{h+1}}(s,a)}+
  \frac{H\cn\iota}{\widehat{N}_h^b(s,a)\vee 1}\notag\\
  &+\frac{\epsilon}{H}\paren{1+\sqrt{\frac{\cn\iota}{\widehat{N}_h^b(s,a)\vee 1}}}, H\Bigg\}\notag\\
     &\overset{\text{(i)}}{\leq} \sum_{h\in[H]}\sum_{(s,a)\in \bcih}{d}_{h}^{\pp',\pi}(s, a)\cdot\Bigg\{\min\set{\sqrt{\frac{\cn\iota}{\widehat{N}_h^b(s,a)\vee 1}\brac{\widehat{\v}_h{V}_{h+1}}(s,a)}, H}+
  \frac{H\cn\iota}{\widehat{N}_h^b(s,a)\vee 1}\\
  &+\frac{\epsilon}{H}\paren{1+\sqrt{\frac{\cn\iota}{\widehat{N}_h^b(s,a)\vee 1}}}\Bigg\}\notag\\
 &\overset{\text{(ii)}}{\leq}  \sum_{h\in[H]}\sum_{(s,a)\in \bcih}{d}_{h}^{\pp',\pi}(s, a)\cdot\Bigg\{{\sqrt{\frac{\cn\iota\brac{\widehat{\v}_h{V}_{h+1}}(s,a)+H}{\widehat{N}_h^b(s,a)\vee 1+1/H}}}+
  \frac{H\cn\iota}{\widehat{N}_h^b(s,a)\vee 1}\notag\\
  &+\frac{\epsilon}{H}\paren{1+\sqrt{\frac{\cn\iota}{\widehat{N}_h^b(s,a)\vee 1}}}\Bigg\}\notag\\
 &\overset{\text{(iii)}}{=}\underbrace{\sum_{h\in[H]}\sum_{(s,a)\in \bcih}{d}_{h}^{\pp',\pi}(s, a)\cdot{\sqrt{\frac{\cn\iota\brac{\widehat{\v}_h{V}_{h+1}}(s,a)+H}{K\EE_{\pi'\sim\mu^{\sf b}}\brac{\widehat{d}^{\pi'}_h\paren{s,a}}+1/H}}}}_{(\mathrm{II.a})}\notag\\
  &+
 \underbrace{\sum_{h\in[H]}\sum_{(s,a)\in \bcih}{d}_{h}^{\pp',\pi}(s, a)\cdot
  \frac{H\cn\iota}{K\EE_{\pi'\sim\mu^{\sf b}}\brac{\widehat{d}^{\pi'}_h\paren{s,a}}+1/H}}_{(\mathrm{II.b})}\notag\\
  &+
 \underbrace{\frac{\epsilon}{H}\sum_{h\in[H]}\sum_{(s,a)\in \bcih}{d}_{h}^{\pp',\pi}(s, a)\cdot\paren{1+\sqrt{\frac{\cn\iota}{K\EE_{\pi'\sim\mu^{\sf b}}\brac{\widehat{d}^{\pi'}_h\paren{s,a}}+1/H}}}}_{(\mathrm{II.c})}.
 \end{align}
 For the term (II.a), by the Cauchy-Schwarz inequality, we have
 \begin{align*}
 &(\mathrm{II.a})\leq \sqrt{\Ctran}\sum_{h\in [H]}\sum_{(s,a)\in \bcih}\sqrt{{d}_{h}^{\pp',\pi}(s, a)\cdot \paren{\cn\iota\brac{{\v}_h{V}_{h+1}}\left(s,a\right)+H}}\notag\\
 &\cdot \sqrt{\frac{d^{\pp,\pi'}_h(s,a)}{K\EE_{\pi'\sim\mu^{\sf b}}\brac{\widehat{d}^{\pi'}_h\paren{s,a}}+1/H}}\notag\\
 &\lesssim 
 \sqrt{\Ctran}\sum_{h\in [H]}\sum_{(s,a)\in \bcih}\sqrt{{d}_{h}^{\pp',\pi}(s, a)\cdot \paren{\cn\iota\brac{{\v}_h{V}_{h+1}}\left(s,a\right)+H}}\notag\\
 &\cdot \sqrt{\frac{2\widehat{d}^{\pp,\pi'}_h(s,a)+2e^{\pi'}_h(s,a)+\frac{\xi}{2N}}{K\EE_{\pi'\sim\mu^{\sf b}}\brac{\widehat{d}^{\pi'}_h\paren{s,a}}+1/H}}\notag\\
 &\lesssim 
 \sqrt{\Ctran}\sum_{h\in [H]}\sum_{(s,a)\in \bcih}\sqrt{{d}_{h}^{\pp',\pi}(s, a)\cdot \paren{\cn\iota\brac{{\v}_h{V}_{h+1}}\left(s,a\right)+H}}\notag\\
& \cdot \sqrt{\frac{\widehat{d}^{\pp,\pi'}_h(s,a)}{K\EE_{\pi'\sim\mu^{\sf b}}\brac{\widehat{d}^{\pi'}_h\paren{s,a}}+1/H}}\notag\\
 &\leq \sqrt{\Ctran}\underbrace{\set{\sum_{h\in [H]}\sum_{(s,a)\in \s\times\a}{d}_{h}^{\pp',\pi}(s, a)\cdot \paren{\cn\iota\brac{{\v}_h{V}_{h+1}}\left(s,a\right)+H}}^{1/2}}_{(\mathrm{II.a.1})}\\
 &~~\times \underbrace{\set{\sum_{h\in [H]}\sum_{(s,a)\in \s\times\a}\frac{\widehat{d}^{\pp,\pi'}_h(s,a)}{K\EE_{\pi'\sim\mu^{\sf b}}\brac{\widehat{d}^{\pi'}_h\paren{s,a}}+1/H}}^{1/2}}_{(\mathrm{II.a.2})}.
 \end{align*}
Following similar approaches as in \Eqref{eq:online_term_II_a_1} and \Eqref{eq:online_term_II_a_2}, we have
 \begin{align}
     (\mrm{II.a.1})\lesssim\sqrt{H^3\cn\iota},\qquad (\mrm{II.a.2})\lesssim \sqrt{\frac{HSA}{K}},
 \end{align}
 which implies that
 \begin{align}
     (\mrm{II.a})\lesssim \sqrt{\frac{\Ctran H^4SA\cn\iota}{K}}.\label{eq:trans_online_I_a}
 \end{align}
For the term (II.b), by \Eqref{eq:stopping_rule}, we have
 \begin{align}\label{eq:trans_online_I_b}
    (\mathrm{II.b})&= \sum_{h\in [H]}\sum_{(s,a)\in \s\times\a}{d}_{h}^{\pp',\pi}(s, a)\cdot \frac{H\cn\iota}{K\EE_{\pi'\sim\mu^{\sf b}}\brac{\widehat{d}^{\pi'}_h\paren{s,a}}+1/H}\notag\\
    &=\frac{\Ctran}{K}\cdot\sum_{h\in [H]}\sum_{(s,a)\in \s\times\a}{d}_{h}^{\pp,\pi'}(s, a)\cdot \frac{H\cn\iota}{\EE_{\pi'\sim\mu^{\sf b}}\brac{\widehat{d}^{\pi'}_h\paren{s,a}}+1/KH}\notag\\
    &\lesssim\frac{\Ctran}{K}\cdot\sum_{h\in [H]}\sum_{(s,a)\in \s\times\a}\widehat{d}_{h}^{\pp,\pi'}(s, a)\cdot \frac{H\cn\iota}{\EE_{\pi'\sim\mu^{\sf b}}\brac{\widehat{d}^{\pi'}_h\paren{s,a}}+1/KH}\notag\\
    &\lesssim \frac{\Ctran H^2SA\cn\iota}{K}.
 \end{align}
For the term (II.c), we have
 \begin{align}
     \mathrm{(II.c)}&=\frac{\epsilon}{H}\sum_{h\in[H]}\sum_{(s,a)\in \bcih}{d}_{h}^{\pp,\pi}(s, a)\cdot\paren{1+\sqrt{\frac{\cn\iota}{K\EE_{\pi'\sim\mu^{\sf b}}\brac{\widehat{d}^{\pi'}_h\paren{s,a}}+1/H}}}\notag\\
     &=\epsilon+\frac{\sqrt{\Ctran}\epsilon}{H}\sum_{h\in[H]}\sum_{(s,a)\in \bcih}\sqrt{{d}_{h}^{\pp',\pi}(s, a)}\cdot\sqrt{\frac{{d}_{h}^{\pp,\pi'}(s, a)\cn\iota}{K\EE_{\pi'\sim\mu^{\sf b}}\brac{\widehat{d}^{\pi'}_h\paren{s,a}}+1/H}}\notag\\
     &\lesssim \epsilon+\frac{\sqrt{\Ctran}\epsilon}{H}\sum_{h\in[H]}\sum_{(s,a)\in \bcih}\sqrt{{d}_{h}^{\pp',\pi}(s, a)}\cdot\sqrt{\frac{\widehat{d}_{h}^{\pp,\pi'}(s, a)\cn\iota}{K\EE_{\pi'\sim\mu^{\sf b}}\brac{\widehat{d}^{\pi'}_h\paren{s,a}}+1/H}}\notag\\
     &\leq \epsilon+\frac{\epsilon}{H}
     \sqrt{\sum_{h\in[H]}\sum_{(s,a)\in \bcih} {d}_{h}^{\pp',\pi}(s, a)}\cdot\sqrt{\sum_{h\in[H]}\sum_{(s,a)\in \mc{I}_h}\frac{\widehat{d}_{h}^{\pp,\pi'}(s, a)\cn\iota}{K\EE_{\pi'\sim\mu^{\sf b}}\brac{\widehat{d}^{\pi'}_h\paren{s,a}}+1/H}}\notag\\
     &\leq \epsilon\paren{1+\sqrt{\frac{\Ctran SA\cn\iota}{HK}}},\label{eq:trans_online_I_c} 
 \end{align}
where the second last line is by the Cauchy-Schwarz inequality and the last line is by \Eqref{eq:stopping_rule}.

 Then combining \Eqref{eq:trans_online_I_a} \Eqref{eq:trans_online_I_b}, and \Eqref{eq:trans_online_I_c}, we obtain the bound for the term (II)
 \begin{align}\label{eq:trans_online_term_II}
 (\mathrm{II})&\lesssim(\mathrm{II.a})+(\mathrm{II.b})+(\mathrm{II.c})\notag\\
 &\lesssim\sqrt{\frac{\Ctran H^4SA\cn\iota}{K}} +\frac{\Ctran H^2SA\cn\iota}{K}+\epsilon(1+\sqrt{\frac{\Ctran\cn\iota}{HK}})\notag\\
 &\lesssim\sqrt{\frac{\Ctran H^4SA\cn\iota}{K}}+\epsilon,
 \end{align}
 where the last line is from $\epsilon<1$.

 Finally, combining \Eqref{eq:trans_online_term_I} and \Eqref{eq:trans_online_term_II}, we get the final bound 
 \begin{align*}
     \DallTheta\paren{\RR^\star,\hRR}&= \sup_{\pi,\theta\in \para}d^{\pi}\paren{r_h^{\theta},\widehat{r}_h^\theta}\leq (\mathrm{I})+(\mathrm{II})\notag\\
     &\lesssim \frac{\Ctran \xi H^2SA}{N}+\sqrt{\frac{\Ctran H^4SA\cn\iota}{K}} +{\frac{\Ctran H^2SA\eta}{K}}+\Ctran\sqrt{\frac{HSA}{K}}+\epsilon
 \end{align*}
Hence, we can guarantee $ \DallTheta\paren{\RR^\star,\hRR}\leq 2\epsilon$, provided that
\begin{align}
&KH \geq N \geq \widetilde{\cO}\paren{{\sqrt{H^9S^7A^7K}}},\notag\\
&K\geq\widetilde{\cO}\paren{\frac{\Ctran HSA\paren{\Ctran+H^3\cn}}{\epsilon^2}+\frac{\Ctran H^2SA\eta}{\epsilon}}\label{eq:rarte_1}
\end{align} 
 Here $poly \log\paren{H,S,A,1/\delta}$ are omitted.
 Similar to the proof of Theorem~\ref{thm:online_main}, suppose $\epsilon\leq H^{-9}(SA)^{-6}$, set $N=\widetilde{\cO}$, when 
 \begin{align}
     K\geq\widetilde{\cO}\paren{\frac{\Ctran HSA\paren{\Ctran+H^3\cn}}{\epsilon^2}+\frac{\Ctran H^2SA\eta}{\epsilon}},
 \end{align}
\Eqref{eq:rarte_1} holds. And at this time, the total sample complexity is 
\begin{align}
    K+NH\geq \widetilde{\cO}\paren{\frac{\Ctran HSA\paren{\Ctran+H^3\cn}}{\epsilon^2}+\frac{\Ctran H^2SA\eta}{\epsilon}}.
\end{align}

 \end{document}